\documentclass[a4paper,12pt]{article}
\usepackage[utf8]{inputenc}
\usepackage{amsmath,amsfonts,stmaryrd,amssymb,bbm,mathrsfs,mathabx}

\usepackage{comment}
\usepackage{aligned-overset}
\usepackage{amsthm}
\usepackage{mathtools}
\usepackage{xcolor}
\usepackage{enumitem}
\usepackage{hyperref}
\usepackage{nicematrix}
\usepackage{todonotes}
\usepackage{verbatim}
\usepackage{caption}
\usepackage{subfigure}
\usepackage{graphicx}
\usepackage{bm}
\usepackage{setspace}
\usepackage{float}
\usepackage[toc,page]{appendix}
\usepackage{xparse}

\usepackage{tcolorbox}
\newif\ifhighlighton
\highlightonfalse

\let\underbrace\LaTeXunderbrace

\ifhighlighton
  \newenvironment{highlight}
  {\begin{tcolorbox}[colback=yellow!10, colframe=black, sharp corners]}
  {\end{tcolorbox}}
\else
  \newenvironment{highlight}{}{}
\fi

\usepackage{pdfrender}

\usepackage{geometry}
\hypersetup{
    colorlinks,
    citecolor=blue,
    linkcolor=blue,
    urlcolor=blue
}

\usepackage{notation}

\usepackage[backend=biber, style=numeric,citestyle=numeric]{biblatex}
\definecolor{cc}{RGB}{0,0,0} 
\definecolor{subj}{RGB}{0,0,0} 
\definecolor{rev1}{RGB}{0,0,0} 

\newtheorem{lemma}{Lemma}[section]
\newtheorem{theorem}{Theorem}[section]
\newtheorem{corollary}{Corollary}[section]
\newtheorem{remark}{Remark}[section]
\newtheorem{definition}{Definition}[section]

\newtheorem{example}{Example}[section]

\makeatletter
\DeclareRobustCommand{\varvdots}{%
  \vbox{
    \baselineskip4\p@\lineskiplimit\z@
    \kern-\p@
    \hbox{.}\hbox{.}\hbox{.}
  }
}
\makeatother

\title{Recovering Governing Equations from Solution Data: Identifiability Bounds for Linear and Nonlinear ODEs}
\author{Yang Pan \\ yangpan@mins.ee.ethz.ch \and Helmut Bölcskei\\ hboelcskei@ethz.ch}
\date{\today}

\begin{document} 

\maketitle
\begin{abstract}
\textcolor{rev1}{Learning governing equations from observed solution data is a fundamental challenge in scientific machine learning \cite{bruntonDiscoveringGoverningEquations2016,kovachkiNeuralOperatorLearning2023,longPDENetLearningPDEs2018,rudyDatadrivenDiscoveryPartial2017,raonicConvolutionalNeuralOperators2023}, yet the theoretical conditions under which a ground-truth ODE can be uniquely and stably identified from multiple solution observations remain largely undeveloped, and no quantitative analysis of the sample complexity of such learning tasks exists in the literature. To address this gap, we introduce the Hausdorff distance on solution sets as the natural metric for comparing differential equations, since it captures the worst-case separation between two equations over all admissible initial conditions and thus encodes the minimax structure of the identification problem. We establish identifiability bounds for governing ODEs across a wide class of structure equations — ranging from linear ODEs to nonlinear classes with Lipschitz (Hölder)-continuous vector fields — characterizing precisely when two distinct equations can be distinguished from solution data. Using this metric, we derive metric entropy estimates for the relevant ODE classes and analyze sample complexity bounds, quantifying how many solution observations are needed to reliably recover the governing equation.}
\end{abstract}

\section{Introduction}
Learning physics (PDEs) and discovering governing equations from \textcolor{cc}{solution data} is a topic of significant interest \cite{bruntonDiscoveringGoverningEquations2016,kovachkiNeuralOperatorLearning2023,longPDENetLearningPDEs2018,rudyDatadrivenDiscoveryPartial2017,raonicConvolutionalNeuralOperators2023}. The current literature focuses mostly on algorithms to achieve this goal \cite{bruntonDiscoveringGoverningEquations2016,rudyDatadrivenDiscoveryPartial2017,longPDENetLearningPDEs2018}. Uniqueness issues in finding the equations are discussed in \cite{schollUniquenessProblemPhysical2023,hollerUniquenessStructuredModel2024,schollWelldefinednessPhysicalLaw2023} to a limited extent. Specifically, \cite{bruntonDiscoveringGoverningEquations2016,rudyDatadrivenDiscoveryPartial2017,hollerUniquenessStructuredModel2024} exploit regularization minimization to search for the structure equation with ``minimal energy'' that fits the solution data, with existence and uniqueness analyzed in \cite{hollerUniquenessStructuredModel2024}. Equivalent and practical conditions for one function (solution) to uniquely determine its underlying differential equation are derived in \cite{schollUniquenessProblemPhysical2023,schollWelldefinednessPhysicalLaw2023} for linear/polynomial/algebraic PDEs. However, \textcolor{rev1}{theoretical foundations--under what conditions the ground-truth governing equation can be uniquely and stably identified from multiple solution observations--remain undeveloped.}
Moreover, a quantitative analysis on the complexity in learning equations, e.g., \textcolor{black}{sample complexity} \textcolor{rev1}{(the amount of solution observations)}, does not exist in the literature. A theoretical foundation is thus still missing.

 This paper intends to \textcolor{cc}{fill the gaps mentioned above} for a wide class of ODEs as a starting point for more work to be done in the future for general PDEs. Specifically, we manage to find out when uniqueness is guaranteed in discovering governing equations and provide a quantitative analysis of complexity in such learning tasks. \textcolor{rev1}{To make the identification problem precise, we measure the distance between two ODEs by the Hausdorff distance of their solution sets over a prescribed set of initial conditions. This is the natural choice: it captures the worst-case separation between solution trajectories, reflecting the minimax structure of the identification problem.}

    \textcolor{rev1}{Based on the prescribed distance measure, we set out to consider specific classes of ODEs--linear, Lipschitz, Hölder, and polynomial ODEs, establish identification bounds by proving upper and lower bounds on Hausdorff distance between ODEs by the distance between the underlying structure equations, and then use identification bounds to derive complexity results in learning equations based on sample complexity (i.e., how many samples we need to learn the equation) and metric entropy \cite{tikhomirovEEntropyECapacitySets1993} (how ``rich'' the class of ODEs is in terms of the Hausdorff distance). We summarize in Table \ref{tbl:hd_ex} representative examples of lower and upper bounds on the Hausdorff distance between ODEs for several ODE classes considered in this work.}

\renewcommand{\arraystretch}{2.5}
\begin{table}[h]
\centering
\resizebox{\columnwidth}{!}{
\begin{tabular}{|l|l|l|l|}
    \hline
 ODE class & Hausdorff distance between & Lower bound & Upper bound\\
 \hline
 Linear ODEs  & $\dot{x}=Ax$ ~~~~\&~~ $\dot{x}=\tilde{A}x$ & $\bigo{\norm{A-\tilde{A}}_2}$  & $\bigo{\norm{A-\tilde{A}}_2}$\\
 Lipschitz ODEs & $\dot{x}=f(x)$ ~~\&~~ $\dot{x}=\tilde{f}(x)$ & $\bigo{\normFun[2]{f-\tilde{f}}{L^{\infty}}^2}$&$\bigo{\normFun[2]{f-\tilde{f}}{L^{\infty}}}$ \\
       \hline
\end{tabular}
}
\vspace{0.5em}
\caption{Lower and upper bounds on Hausdorff distance within linear ODE class or Lipschitz ODE class.}
\label{tbl:hd_ex}
\end{table}
 

\textit{Organization of the paper.} The paper is organized as follows. Section \ref{sec:literature} reviews related work in the literature. In Section \ref{sec:setup}, we formalize the setup of learning structure equations of ODEs and introduce Hausdorff distance for ODEs. Section \ref{sec:hd_bound_ode} is devoted to identification results for a wide class of ODEs based on Hausdorff distance. In Section \ref{sec:complexity}, we analyze the complexity of learning ODEs in terms of sample complexity and metric entropy.

\subsection{Notation}

$\Nzero$ and $\Nplus$ denote the set of natural numbers including and excluding $0$, respectively. $\R^{d\times d}$ stands for the set of $d$ by $d$ matrices with real entries.
The transpose of the matrix $A$ is $A^T$. The $N\times N$ identity matrix is $\mathbb{I}_N$. 
For the vector $x\in \R^d$, we let $\norm{x}_2 \defeq \sqrt{\sum_{i=1}^d|x_i|^2}$. For the matrix $A\in \R^{d\times d}$, define $\norm{A}_2 = \sup_{\norm{x}_2=1}\norm{Ax}_2$ and $\norm{A}_F = \sqrt{\sum_{i=1}^d\sum_{j=1}^d |A_{ij}|^2}$. Additionally, denote $\sig_1(A)\geq \sig_2(A)\geq \dots\geq \sig_n(A)$ as the singular values of $A$. Moreover, $\eig_1(A),\eig_2(A),\dots, \eig_n(A)$ are the eigenvalues of $A$ ordered such that $|\eig_1(A)|\geq |\eig_2(A)|\geq \dots\geq |\eig_n(A)|$. $\specRad{A} \defeq |\eig_1(A)|$ is the spectral radius of $A$. For a nonsingular matrix $A$, the condition number induced by $\norm{\cdot}$ is $\conNum{A}{\norm{\cdot}}\defeq \norm{A} \cdot\norm{A^{-1}}$.
    $\log(\cdot)$ refers to the logarithm to base $2$, $\log^{(n)} = \log\circ \cdots \circ \log$ is the $n$-fold iterated logarithm, and $\log^\tau(\cdot)=\left(\log(\cdot)\right)^\tau$, for $\tau\in \R$. 
     For a set $X\subset \R$, we define $C^{k}(X)=\{f:X\rightarrow \R\mid f \text{ has } k\text{-th continuous derivatives}\}$ and $\normFun{f}{C^{k}(X)} = \sum_{i=0}^k \sup_{x\in X} |f^{(i)}(x)|$, \textcolor{cc}{where $f^{(i)}$ denotes the $i$-th derivative of $f$}. When $k=0$, we use $\normFun{f}{L^{\infty}(X)}$ instead of $\normFun{f}{C^{0}(X)}$. For a function $f:\R^d\rightarrow\R$, denote $\operatorname{supp}(f)=\{x\in \R^d\mid f(x)\neq 0\}$ as the support set of $f$. Let $f(\epsilon)$ and $g(\epsilon)$, in both cases for $\epsilon > 0$, be strictly positive for all small enough values of $\epsilon$. We use $f(\epsilon) = \smallo{g(\epsilon)}$ to indicate that $\lim_{\epsilon \rightarrow 0} \frac{f(\epsilon)}{g(\epsilon)}=0$ and we express $\limsup_{\epsilon \rightarrow 0} \frac{f(\epsilon)}{g(\epsilon)} < \infty$ by $f(\epsilon) = \bigo{g(\epsilon)}$. Moreover, we write $f(\epsilon)\lesssim g(\epsilon)$ when $f(\epsilon)\leq \bigo{g(\epsilon)}$, and we write $f(\epsilon)\asymp g(\epsilon)$ when both $f(\epsilon) = \bigo{g(\epsilon)}$ and $g(\epsilon) = \bigo{f(\epsilon)}$. Constants are always understood to be in $\R$ unless explicitly stated otherwise.
    

\section{\textcolor{cc}{Related literature}}\label{sec:literature}
Across science and engineering, ``identifying a system'' can be viewed as the task of turning observations into a usable representation of how a phenomenon behaves.
A common way to frame this topic is the white/grey/black-box spectrum \cite{Wikipedia_si}: white-box models aim to express the internal mechanisms explicitly, grey-box models assume partial mechanistic structure but leave key components or parameters to be learned from data, and black-box models
instead learn an input–output relationship that is aimed for prediction or control.

A wide range of research topics and literature concerns the problem of ``identifying a system'', which can be broadly categorized according to the white-/grey-/black-box spectrum introduced above:

\begin{itemize}
    \item The white-box modeling includes 
    \begin{itemize}
        \item \emph{Sparse equation discovery} methods, such as SINDy \cite{bruntonDiscoveringGoverningEquations2016,rudyDatadrivenDiscoveryPartial2017}, which seek to recover explicit governing equations directly from data by constructing a candidate feature library and promoting sparsity in the identified dynamics.
        \item \emph{Symbolic regression} approaches \cite{udrescu2020aifeynman}, including equation-learning systems such as EQL \cite{martius2016extrapolationEQL,sahoo2018learningEQL}, which search over spaces of analytic expressions to produce closed-form, human-interpretable models.
    \end{itemize}

    \item The grey-box modeling includes 
    \begin{itemize}
        \item \emph{Inverse problems based on physics-informed neural networks} (PINNs) \cite{raissi2019physics,chen2021physics}, which incorporate known physical laws—typically expressed as differential-equation constraints or residuals—into the learning objective while estimating unknown functions or parameters from data.
        \item \emph{Classical system identification} \cite{ljung1998systemind} in its structured or parametric formulations (often referred to as grey-box or semi-physical identification), where the model structure is derived from domain knowledge and only a subset of parameters is learned from data.
    \end{itemize}

    \item The black-box modeling includes 
    \begin{itemize}
        \item \emph{Classical system identification} \cite{ljung1998systemind} in its black-box form, which aims at predictions through flexible input–output models without enforcing mechanistic interpretability.
        \item \emph{Operator learning} \cite{kovachkiNO2023neural,li2020fourier,raonic2024convolutional}, which aims to learn mappings between function spaces (e.g., from input coefficients, forcing terms, or boundary conditions to solution fields) in PDE settings.
    \end{itemize}
\end{itemize}
Although white-, grey-, and black-box models differ in how much structure is prescribed a priori, they share a common objective: to identify system behavior from measurements.
In this work, we operate in a hybrid viewpoint: we assume an ODE form 
\begin{equation*}
    \dot{x}=f(x),
\end{equation*} 
which is a ``white-box'' at the level of state-space ODE structure, while the choice of hypothesis class for the unknown vector field $f$ could be either a white-, grey-, or black-box model. Moreover, rather than designing a particular white-, grey-, or black-box parametrization for $f$, our focus is on the underlying identification problem itself: we analyze the uniqueness and stability of the identification process, namely, under what conditions $f$ can be uniquely recovered from solution data, and how sensitive this recovery is to perturbations in the data.


We address these questions by developing a theoretical framework that characterizes identifiability and stability independently of any specific parameterization of 
$f$. Based on this framework, we further analyze the complexity in the identification problem and motivate a loss function that could be used in identifying governing differential equations based on our frameworks. Depending on the structural assumptions imposed on $f$, this philosophy naturally applies to white-box, grey-box, or black-box modeling approaches. 

We hasten to add that questions of uniqueness and stability in system identification have been studied extensively in several mature theoretical fields, most notably in control theory and inverse problems for ODEs and PDEs. 

In control theory, identifiability is typically defined relative to a prespecified model structure \cite{grewal2003identifiability,ljung1994globalidentifiability}: one assumes that the dynamics (often an input–output or a state-space model) belong to a known parametric family—often linear time-invariant systems, or nonlinear systems with fixed functional form—and asks whether the unknown parameters can be uniquely determined from input–output or state–trajectory data under suitable conditions. In this setting, identifiability
is not formulated as the recovery of an arbitrary vector field $f$ from solutions of $$\dot{x}=f(x),$$but rather as parameter estimation within an assumed class of dynamics.

Likewise, the inverse problems for ODEs and PDEs
aim to identify the unknown coefficients, source terms, boundary or initial conditions, or other finite- or infinite-dimensional parameters appearing in a known governing differential equation \cite{isakov2006inverse,stuart2010inverse,nickl2023bayesian} and rigorous identifiability results in inverse problems almost always assume that the form of the differential operator is known a priori. 

However, to the best of our knowledge, there is relatively little theory addressing the identifiability and stability of recovering a general vector field $f$ directly from solution trajectories without assuming a fixed parameterization or model class. Our work is intended to complement these existing theories by studying identifiability at this more intrinsic, representation-independent level.



\section{Problem Setup}\label{sec:setup}
In the paper, we consider the problem of identifying the structure $f$ of an ODE 
\begin{equation}\label{eq:intro_ode}
    x^{(\odeorder)} = f\left(x^{(\odeorder-1)},\dots,x^{(1)},x\right)
\end{equation}
given access to its solution data $x(t)$ or a set of solutions $\{x(t)\mid x \text{ satisfies \eqref{eq:intro_ode}}\}$. The question is, if the solution data is given, is it possible to uniquely and stably identify $f$? Heuristically, if the solution data satisfy different ODEs in nature, then we cannot find an algorithm that recovers the unique ground-truth equation without further restrictions.

To formalize the matter, \textcolor{cc}{we first introduce the classes of differential equations considered. Then, we define the distance between different differential equations based on the Hausdorff distance between their solution sets, a key notion that lays the foundation for the analysis of the uniqueness and complexity of equation learning for the rest of this paper.}

\subsection{Classes of Ordinary differential equations}
In this paper, we focus on autonomous ODEs of the form 
\begin{equation}\label{eq:general_ode}
    x^{(\odeorder)} = f\left(x^{(\odeorder-1)},\dots,x^{(1)},x\right).
\end{equation}
To this end, we consider the class of autonomous ODEs in the form of \eqref{eq:general_ode} characterized by the class of functions.

\begin{definition}\label{def:general_ode_class}
    Given $\odedim,\odeorder \in \Nplus$, let $\mathcal{F}$ be a class of functions $f:\R^{\odeorder \odedim}\rightarrow\R^\odedim$, and let $x^{(\odeorder)} = f\left(x^{(\odeorder-1)},\dots,x^{(1)},x\right)$ be a differential equation of order $\odeorder$ \textcolor{cc}{and dimension $\odedim$}. 
    We define the corresponding class of ODEs, denoted as $\odeClass[\odeorder]{\mathcal{F}}$, to be the set of all such equations where $f\in \mathcal{F}$, that is 
    \begin{equation}\label{eq:general_ode_F_m}
        \odeClass[\odeorder]{\mathcal{F}}\defeq \left\{x^{(\odeorder)} = f\left(x^{(\odeorder-1)},\dots,x^{(1)},x\right) \mid f\in\mathcal{F} \right\}.
    \end{equation}
    When $\odeorder=1$, we simply write $\odeClass{\mathcal{F}}$. For notational simplicity, we write $\diffeq[f]\in \odeClass{\mathcal{F}}$ as an ODE of the form \eqref{eq:general_ode} with structure $f\in \mathcal{F}$, unless otherwise specified.
\end{definition}
There are different types of ODEs depending on the structure $f$. In this paper, we focus on the following classes of ODEs.
\begin{definition}\label{def:ode_classes}
    Given $\odedim,\odeorder\in \Nplus$, we consider the class of $\odeorder$-th order, $\odedim$-dimensional 
    \begin{enumerate}
        \item linear ODEs: 
        $\odeClass[\odeorder ]{\lin[\R^{\odeorder \odedim},\R^\odedim ]{K}}$, for $K>0$, where 
        \begin{equation}
            \begin{aligned}
               \lin[\R^{\odeorder \odedim},\R^\odedim ]{K}\defeq \{f:\R^{\odeorder \odedim}\rightarrow\R^\odedim \mid &
               f(x)=Ax=
               \begin{pmatrix}
                   A_{\odeorder -1}&\dots&A_1&A_0
               \end{pmatrix}x\\
               &A_i\in \R^{\odedim\times \odedim}, \norm{A_i}_2\leq K, i=0,1,\dots,\odeorder-1\}
            \end{aligned}
        \end{equation}
        Specifically, we write $\diffeq[A]\in \odeClass[\odeorder ]{\lin[\R^{\odeorder \odedim},\R^\odedim ]{K}}$ as an ODE of the form \eqref{eq:general_ode} with structure $f\in \mathcal{F}$ being $f(x)=Ax$.
        \item Lipschitz ODEs: $\odeClass[\odeorder]{\lip[\R^{\odeorder \odedim},\R^\odedim ]{L,K}}$, for $L,K>0$, where 
        \begin{equation}
        \begin{aligned}
            \lip[\R^{\odeorder \odedim},\R^\odedim ]{L,K}\defeq \{f:\R^{\odeorder \odedim}\rightarrow\R^\odedim \mid &\operatorname{Lip}(f)\defeq \sup_{x\neq y\in \R^{\odeorder \odedim}}\frac{\norm{f(x)-f(y)}_2}{\norm{x-y}_2}\leq L, \\
            &\norm{f(0)}_2\leq K\}.
        \end{aligned}
        \end{equation}    
        \end{enumerate}
    Moreover, we consider the class of first order, $1$-dimensional 
    \begin{enumerate}
        \item Hölder ODEs: $\odeClass{\holder{k,\alpha}{L,K}}$, for $k\in \Nzero$, $\alpha\in (0,1]$, $L,K>0$, where 
        \begin{equation}
        \begin{aligned}
            \holder{k,\alpha}{L,K}\defeq \{f:\R\rightarrow\R \mid f\in C^k(\R), \sup_{x\neq y\in \R}\frac{|f^{(k)}(x)-f^{(k)}(y)|}{|x-y|^{\alpha}}\leq L, \sum_{i=0}^k|f^{(i)}(0)|\leq K\};
        \end{aligned}
        \end{equation} 
        \item polynomial ODEs: $\odeClass{\poly{q,K}}$ where 
        \begin{equation}
            \poly{q,K}\defeq \{p:\R\rightarrow\R\mid p(x) = x^q +a_{q-1}x^{q-1}+\dots+a_0,~|a_i|\leq K \text{ for }i=0,1,\dots,q-1 \}.
        \end{equation}
    \end{enumerate}
\end{definition}

\subsection{Hausdorff distance for differential equation identification}\label{sec:hd_def}
Now, given prior knowledge about the type of ordinary differential equations (ODEs) under consideration, and assuming access to solution data $\{(t_i,x_k(t_i))_{i=1}^{M}\}_{k=1}^{N}$, can we uniquely and stably identify the governing equation \eqref{eq:general_ode}? If identifiability is achievable, what is the sample complexity, i.e., how many data points $M,N$ are required to learn $f$ within a prescribed accuracy $\epsilon$?

The first question has been discussed in \cite{rudyDatadrivenDiscoveryPartial2017, schollUniquenessProblemPhysical2023} in the PDE setting, where non-uniqueness issues are found. Specifically, the KdV evolution and the one-way wave equation share a set of solutions.
\begin{example}\label{ex:kdv}
    Consider the Korteweg–De Vries (KdV) equation
    \begin{equation*}
\signal_t + 6\signal\signal_x + \signal_{xxx}=0.
    \end{equation*}
    It is solved by $\signal(t,x) = c/2\operatorname{sech}^2(\sqrt{c}/2(x-ct-a))$. However, it also solves the \textcolor{cc}{one-way wave equation} $\signal_t+c\signal_x=0$.
\end{example}
This means that inferring the structure of a PDE from one solution or a set of solutions is potentially an ill-posed problem. The same issue lies in identifying the structure equation of an ODE, as can be seen in the following examples.
\begin{example}[Linear ODE]
    Let 
    \begin{equation}
        \begin{aligned}
            A &= \begin{pmatrix}
                1&0\\
                1&1
            \end{pmatrix},\\
            B &= \begin{pmatrix}
                1&0\\
                5&1
            \end{pmatrix}.
        \end{aligned}
    \end{equation}
    For $x_0 = c\begin{pmatrix}
        0&1
    \end{pmatrix}$, where $c\in \R$, we have that the solutions of $\dot{x}=Ax$ and $\dot{x}=Bx$ with $x(0)=x_0$ satisfy
    \begin{equation}
        \sup_{t\in [0,\horizon]} \norm{e^{At}x_0-e^{Bt}x_0}_2 = \sup_{t\in [0,\horizon]} \norm{e^{t}x_0-e^{t}x_0}_2\equiv 0.
    \end{equation}
    On the other hand, we have
    \begin{equation}
        \norm{A-B}_2 = 4.
    \end{equation}
\end{example}

\begin{example}
    Consider Lipschitz functions $f,\tilde{f}$ depicted in Figure \ref{fig:lip_ex}. 
    \begin{figure}[H]
    \centering
    \includegraphics[width=0.6\linewidth]{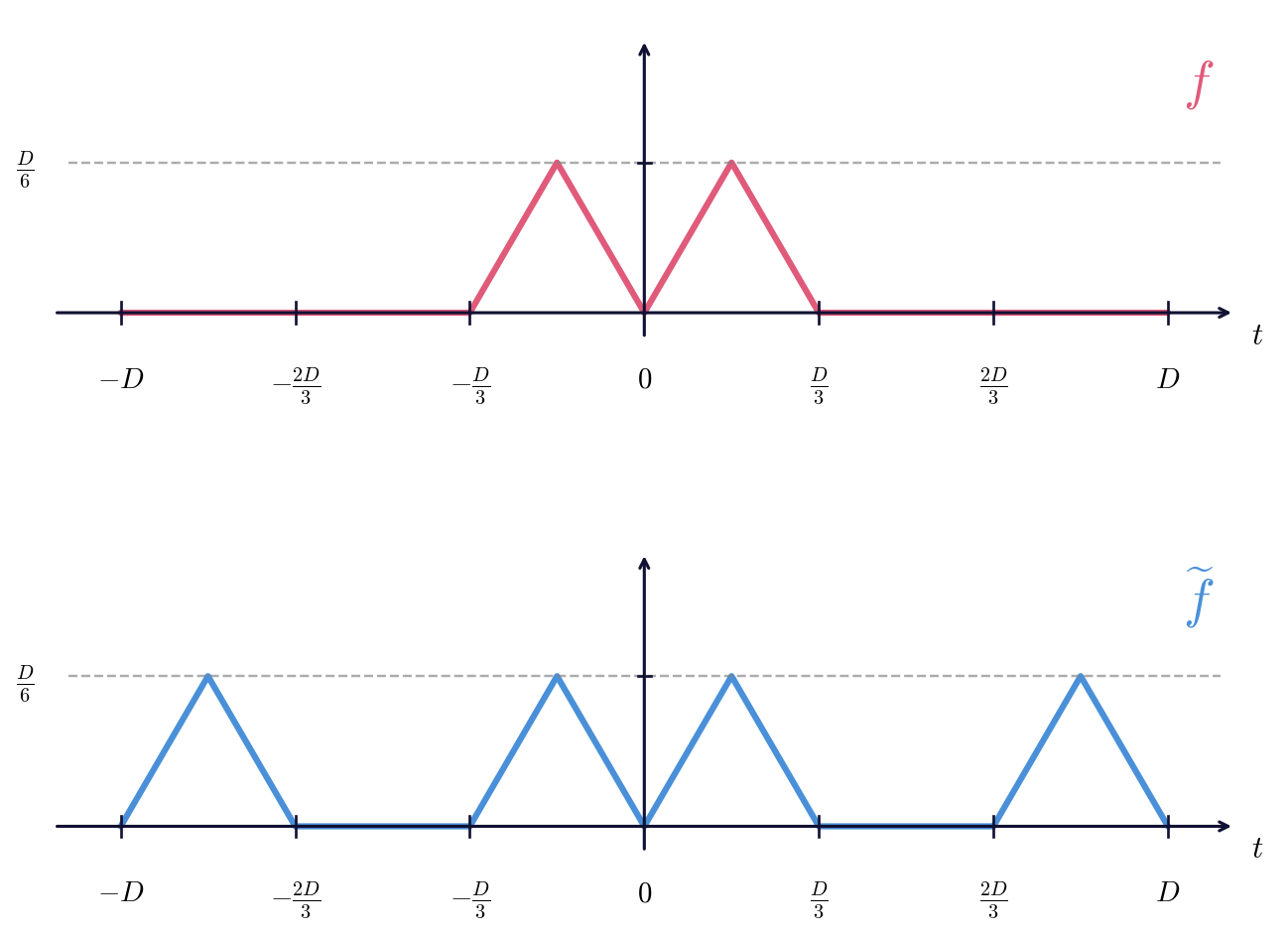}
    \caption{Lipschitz functions $f$ and $\tilde{f}$}\label{fig:lip_ex}
    \end{figure}
    We have $\normFun{f-\tilde{f}}{L^\infty([-D,D])} = D/6$. However, for all $|x_0|\leq D/3$, we have $x=\tilde{x}$, where $x$ and $\tilde{x}$ satisfy the ODEs
    \begin{equation}
        \begin{aligned}
            &\dot{x} = f(x),\quad x(0) = x_0,\\
            &\dot{\tilde{x}} = f(\tilde{x}),\quad \tilde{x}(0) = x_0.
        \end{aligned}
    \end{equation}
\end{example}

The above examples show that identifying the governing equation from solution data \textcolor{cc}{in a unique and stable manner} is generally ill-posed, even for linear ODEs. The essence of this issue lies in the fact that two differential equations could share a set of solutions.

This leads to the question that, if two differential equations share exactly the same solution sets, or their ``entire'' solution sets are close in terms of some specified distance measure, what is the implication for the structure equation? 

\textcolor{cc}{To address these questions, it is helpful to start with linear autonomous ODEs and ask when the structure matrix $A$ in $$\dot{x}=Ax$$ can be identified from solution trajectories. Assume, for simplicity, that $A$ is diagonalizable. Then $A$ is fully determined by its eigenvalues and eigenvectors. Concretely, suppose
\begin{equation*}
    A = V \Lambda V^{-1},~~V = \begin{pmatrix}
    v_1&v_2&\dots&v_n
\end{pmatrix},~~\Lambda = \operatorname{diag}(\lambda_1,\lambda_2,\dots,\lambda_n).
\end{equation*}
For an initial condition $x(0)=x_0$, the corresponding solution is
\begin{equation*}
    x(t) = Ve^{\Lambda t}V^{-1}x_0.
\end{equation*}
From the perspective of identification, recovering $A$ reduces to recovering $V$ and $\Lambda$. If $V$ were known, then each eigenvalue $\lambda_i$ could be revealed by initializing the system along the $i$-th eigenvector direction. Indeed, choosing $x_0=v_i=Ve_i$ yields
\begin{equation*}
    x(t) = Ve^{\Lambda t}e_i=e^{\lambda_i t}v_i.
\end{equation*}
So the trajectory is a single exponential mode, and the mode $\lambda_i$ can be read off directly. Therefore, if one can “excite” (i.e., initialize) the system at $\{v_1,v_2,\dots,v_n\}$, the matrix $A$ is identified. This observation also supports a natural uniqueness intuition: the solution set with initial conditions $\{v_1,v_2,\dots,v_n\}$ spans the full solution set of the linear ODE, and so it is reasonable to conjecture that the entire solution set strongly constrains the governing equation of the ODE. 
A concrete example is the spring (second-order linear) equation
\begin{equation*}
    \ddot{x}+2\zeta\omega_n\dot{x}+\omega_n^2x=0,
\end{equation*}
where $\omega_n$ is the undamped natural frequency and $\zeta$ is the damping ratio. Clearly, the solution to the spring equation can be written as superpositions of complex exponentials, i.e.,
$$x=Ae^{-\omega_nt\left(\zeta+\sqrt{\zeta^2-1}\right)}+Be^{-\omega_nt\left(\zeta-\sqrt{\zeta^2-1}\right)}.$$ Equivalently, by introducing the state $z = (x,\dot{x})^T$, one can rewrite the dynamics as a two-dimensional linear ODE $\dot{z}=Az$. The two exponential modes above are precisely the eigenvalues of that matrix $A$, so estimating those exponentials from trajectory data identifies $\omega_n$ and $\zeta$. In practice, one way to estimate such exponentials from sampled trajectories is to use ESPRIT \cite{roy2002esprit}.}

\textcolor{cc}{Nevertheless, this procedure is fundamentally structure-dependent: it relies on knowing a priori that the system is a linear, autonomous ODE of a fixed dimension, for which the solution decomposes into exponential modes. For general nonlinear ODEs, one typically lacks such a linear modal decomposition, so the questions are: how to characterize and compare the solution sets induced by different (nonlinear) ODEs, and what implication of this comparison has on the governing differential equations of the corresponding ODEs.
}

To this end, we first formally define the notion of solution set (behavior) for ODEs.



    \begin{definition}[Solution set (behavior)]
    Given $\odedim,\odeorder \in \Nplus$, let $\mathcal{F}$ be a class of functions $f:\R^{\odeorder \odedim}\rightarrow\R^\odedim$, and let 
    $\odeClass[\odeorder]{\mathcal{F}}$ be the corresponding class of ODEs. Fix $\horizon>0$. For 
    $f\in \mathcal{F}$ and the corresponding ODE $\diffeq[f] \in \odeClass[\odeorder ]{\mathcal{F}}$, we define its behavior as 
    \begin{equation*}
        \behavior{\diffeq[f]} = \{x\in (C^\odeorder ([0,\horizon]))^\odedim \mid x^{(\odeorder)} = f\left(x^{(\odeorder-1)},\dots,x^{(1)},x\right)\}.
    \end{equation*}
    Moreover, for the rest of this paper, we consider initial value problems (IVPs) for ODEs and define the behavior with respect to a set of initial conditions $\initset\subset \R^{\odeorder\odedim}$ according to 
    \begin{equation*}
    \begin{aligned}
        \behavior[\initset]{\diffeq[f]} = \{x\in (C^\odeorder ([0,\horizon]))^\odedim \mid &x^{(\odeorder)} = f\left(x^{(\odeorder-1)},\dots,x^{(1)},x\right),\\
        &\left((x(0))^T,\left(x^{(1)}(0)\right)^T,\dots, \left(x^{(\odeorder-1)}(0)\right)^T\right)^T\in \initset\}.
    \end{aligned}
    \end{equation*}
    \end{definition}

The following presents an example of behavior in terms of linear ODEs.
\begin{example}[Linear ODEs]
    Consider $A\in \R^{\odedim\times \odedim}$ and a first-order linear ODE $\diffeq[A]\in \odeClass{\lin[\R^d,\R^d]{K}}$. Then, its behavior with respect to initial conditions $\initset$ is 
    \begin{equation}
    \begin{aligned}
        \behavior[\initset]{\diffeq[A]} &= \{x\in (C([0,\horizon]))^\odedim \mid \dot{x} = Ax, x(0)\in \initset\}\\
        &=\{x\in (C([0,\horizon]))^\odedim \mid x(t)=e^{At}x_0, x_0\in \initset\}.
    \end{aligned}
    \end{equation}
\end{example}
Now, we define the Hausdorff distance for sets and the Hausdorff distance for ODEs based on their behavior as follows.
\begin{definition}[Hausdorff distance between sets]\label{def:hd_set}
    Given a metric space $(X,\normFun{\cdot}{X})$ and the sets $U,V \subset X$, we define the Hausdorff distance between $U$ and $V$ as 
    \begin{equation}
        \hdset[\normFun{\cdot}{X}]{U}{V} = \max\left\{\sup_{u\in U}\inf_{v\in V} \normFun{u-v}{X},\sup_{v\in V}\inf_{u\in U} \normFun{u-v}{X}\right\}.
    \end{equation}
\end{definition}

\begin{definition}[Hausdorff distance between ODEs]\label{def:hd}
    Given $\odedim,\odeorder \in \Nplus$, let $\mathcal{F}$ be a class of functions $f:\R^{\odeorder \odedim}\rightarrow\R^\odedim$, and let 
    $\odeClass[\odeorder]{\mathcal{F}}$ be the corresponding class of ODEs. Fix $\horizon>0$. For every $f_1,f_2 \in \mathcal{F}$ and their corresponding differential equations $\diffeq[f_1],\diffeq[f_2]\in \odeClass[\odeorder]{\mathcal{F}}$,
    we define the solution variation with respect to initial conditions $\initset$ as 
    \begin{equation}\label{eq:diff_eq_var}
    \sv[\initset][\normFun{\cdot}{C^{\odeorder-1}}]{\diffeq[f_i]}{\diffeq[f_j]} \defeq \sup_{\tilde{x}\in \behavior[\initset]{\diffeq[f_j]}}\inf_{x\in \behavior[\initset]{\diffeq[f_i]}}\normFun[2]{x-\tilde{x}}{C^{\odeorder-1}([0,\horizon])}\quad \text{for } \{i,j\}=\{1,2\}.
\end{equation}
The Hausdorff distance between these two equations with respect to initial conditions $\initset$ is
\begin{equation}\label{eq:diff_eq_hd}
    \hd[\initset][\normFun{\cdot}{C^{\odeorder-1}}]{\diffeq[f_1]}{\diffeq[f_2]}\defeq \max\{\sv[\initset][\normFun{\cdot}{C^{\odeorder-1}}]{\diffeq[f_1]}{\diffeq[f_2]},\sv[\initset][\normFun{\cdot}{C^{\odeorder-1}}]{\diffeq[f_2]}{\diffeq[f_1]}\}.
\end{equation}
One can think of this Hausdorff distance between ODEs as the Hausdorff distance between their behaviors (solution sets), i.e.,
\begin{equation}
    \hd[\initset][\normFun{\cdot}{C^{\odeorder-1}}]{\diffeq[f_1]}{\diffeq[f_2]}  = \hd[][\normFun{\cdot}{C^{\odeorder-1}}]{\behavior[\initset]{\diffeq[f_1]}}{\behavior[\initset]{\diffeq[f_2]}}.
\end{equation}
For ease of notation, we abuse the notation of Hausdorff distance for ODEs and Hausdorff distance for sets.
\end{definition}
In the rest of this paper, we set $\horizon>0$ to be a universal constant characterizing the horizon for ODEs. For simplicity, domain $[0,\horizon]$ and vector norm $\norm{\cdot}_2$
is omitted in the notation of Hausdorff distance and we simply write $\hd[][\normFun{\cdot}{L^{\infty}}]{\cdot}{\cdot}$ or $\hd[][\normFun{\cdot}{C^{k}}]{\cdot}{\cdot}$ instead of $\hd[][\normFun[2]{\cdot}{L^{\infty}([0,\horizon])}]{\cdot}{\cdot}$ or $\hd[][\normFun[2]{\cdot}{C^{k}([0,\horizon])}]{\cdot}{\cdot}$.


\section{Identification results for classes of ODEs}\label{sec:hd_bound_ode}
In this section, we examine the relationship between the Hausdorff distance in ODEs and distances in structure equations. Practically speaking, if small Hausdorff distances in ODEs imply small distances in structure equations, i.e.,
\begin{equation}
    \norm{f_1-f_2}\leq \delta(\hd{\diffeq[f_1]}{\diffeq[f_2]})\text{ where }\delta(0)=0\text{ and }\lim_{a\rightarrow 0^+}\delta(a)=0,
\end{equation}
we can develop an algorithm that minimizes the Hausdorff distances between the solution sets of the \textcolor{cc}{model ODE $\diffeq[f_2]$} and the ground-truth ODE $\diffeq[f_1]$, which then ensures the possibility of a unique and stable identification of the structure $f_1$.
\subsection{First-order linear ODEs}\label{sec:1st_lin_ode}
We first consider the class of $1$st order, $\odedim$-dimensional linear ODEs $\odeClass{\lin[\R^{ \odedim},\R^\odedim ]{K}}$ as per Definition \ref{def:ode_classes}. Note that \textcolor{cc}{the solutions of} linear ODEs are generally stable with respect to \textcolor{cc}{the perturbations in} structure matrices. This leads to the following upper bound on the Hausdorff distance between linear ODEs.

\begin{lemma}[Upper bound]\label{lm:upper_lin_ode}
    Fix an arbitrary $D>0$ and consider $\ball{D}\defeq \{x\in \R^{\odedim}\mid \norm{x}_2\leq D\}$. For every two linear ODEs $\diffeq[A],\diffeq[\tilde{A}]\in \odeClass{\lin[\R^{ \odedim},\R^\odedim ]{K}}$ with corresponding $A, \tilde{A}\in \R^{\odedim\times \odedim}$, we have that 
    \begin{equation}
        \hd[\ball{D}][\normFun{\cdot}{L^{\infty}}]{\diffeq[A]}{\diffeq[\tilde{A}]}\leq D\horizon e^{K\horizon}\norm{A-\tilde{A}}_2.
    \end{equation}
\end{lemma}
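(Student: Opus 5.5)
The plan is to bound each of the two solution variations in \eqref{eq:diff_eq_hd} by pairing trajectories with the same initial condition. Given any $\tilde{x}\in\behavior[\ball{D}]{\diffeq[\tilde{A}]}$, write $\tilde{x}(t)=e^{\tilde{A}t}x_0$ with $x_0\in\ball{D}$. As the competitor in the infimum, take $x(t)=e^{At}x_0\in\behavior[\ball{D}]{\diffeq[A]}$. This gives
\begin{equation*}
\inf_{x\in \behavior[\ball{D}]{\diffeq[A]}}\normFun[2]{x-\tilde{x}}{L^\infty}\le \sup_{t\in[0,\horizon]}\norm{(e^{At}-e^{\tilde{A}t})x_0}_2\le D\sup_{t\in[0,\horizon]}\norm{e^{At}-e^{\tilde{A}t}}_2 .
\end{equation*}
The argument is symmetric in $A$ and $\tilde{A}$, so it suffices to show that $\norm{e^{At}-e^{\tilde{A}t}}_2\le t e^{Kt}\norm{A-\tilde{A}}_2$ for $t\in[0,\horizon]$. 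Taking suprema over $t$ and over $x_0$ then yields the claim.

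For the matrix exponential estimate I would use the Duhamel (variation of constants) identity. Set $E(t)=e^{At}-e^{\tilde{A}t}$. Differentiating gives $E'(t)=AE(t)+(A-\tilde{A})e^{\tilde{A}t}$ with $E(0)=0$, hence
\begin{equation*}
e^{At}-e^{\tilde{A}t}=\int_0^t e^{A(t-s)}(A-\tilde{A})e^{\tilde{A}s}\,ds .
\end{equation*}
Equivalently, differentiate $s\mapsto e^{A(t-s)}e^{\tilde{A}s}$ and integrate from $0$ to $t$.

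Next I would use $\norm{e^{Ms}}_2\le e^{\norm{M}_2 s}\le e^{Ks}$ for $\norm{M}_2\le K$, $s\ge 0$, which follows from the power series. The integrand is then bounded by $e^{K(t-s)}\norm{A-\tilde{A}}_2e^{Ks}=e^{Kt}\norm{A-\tilde{A}}_2$. Integrating over $[0,t]$ gives $te^{Kt}\norm{A-\tilde{A}}_2\le \horizon e^{K\horizon}\norm{A-\tilde{A}}_2$. Finally, I would note that the admissibility constraint $\norm{A}_2,\norm{\tilde A}_2\le K$ comes from the definition of $\lin[\R^{\odedim},\R^\odedim]{K}$.

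The only point that needs care is justifying the Duhamel identity. This is routine: $\frac{d}{ds}\big(e^{A(t-s)}e^{\tilde{A}s}\big)=e^{A(t-s)}(\tilde{A}-A)e^{\tilde{A}s}$, with a sign flip. The main conceptual observation is that the Hausdorff infimum is handled by the simple choice of an identical initial condition, which is permitted because both behaviors are taken over the same set $\ball{D}$.
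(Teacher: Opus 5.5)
Your proof is correct and follows the same route as the paper. Both pair each trajectory with the solution of the other equation from the same initial condition, bound $\|e^{At}-e^{\tilde{A}t}\|_2\le t e^{Kt}\|A-\tilde{A}\|_2$, and conclude by symmetry. The only difference is that you obtain this exponential estimate from the Duhamel identity, whereas the paper applies the power-series telescoping bound of Lemma~\ref{lm:exp_diff_norm_upper} to $At$ and $\tilde{A}t$; both yield the same constant.
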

\begin{proof}
    By applying the matrix exponent inequality from Lemma \ref{lm:exp_diff_norm_upper},
    we have that 
    \begin{equation}
        \norm{e^{\tilde{A}t}\tilde{x}_0-e^{At}\tilde{x}_0}_2\leq \norm{A-\tilde{A}}_2D\horizon e^{K\horizon} 
    \end{equation}
    for all $t\in [0,\horizon]$ and all $\tilde{x}_0$ such that $\norm{\tilde{x}_0}\leq D$. This implies that 
    \begin{equation}
        \begin{aligned}
         \sv[\ball{D}][\normFun{\cdot}{L^{\infty}}]{\diffeq[A]}{\diffeq[\tilde{A}]}  &= \sup_{\tilde{x}\in \behavior[\ball{D}]{\diffeq[\tilde{A}]}}\inf_{x\in \behavior[\ball{D}]{\diffeq[A]}}\normFun[2]{x-\tilde{x}}{L^{\infty}([0,\horizon])}\\
        &= \sup_{\norm{\tilde{x}_0}_2\leq D}\inf_{\norm{x_0}_2\leq D}\normFun[2]{e^{A\cdot}x_0-e^{\tilde{A}\cdot}\tilde{x}_0}{L^{\infty}([0,\horizon])}\\
        &\leq \sup_{\norm{\tilde{x}_0}\leq D}\normFun[2]{e^{A\cdot}\tilde{x}_0-e^{\tilde{A}\cdot}\tilde{x}_0}{L^{\infty}([0,\horizon])}\\
        &\leq D\horizon e^{K\horizon}\norm{A-\tilde{A}}_2.
        \end{aligned}
    \end{equation}
    By symmetry, this gives the desired bound for $\hd[\ball{D}][\normFun{\cdot}{L^{\infty}}]{\diffeq[A]}{\diffeq[\tilde{A}]}$.
\end{proof}
Next, we obtain a lower bound on the Hausdorff distance between linear ODEs with respect to the distance between structure matrices, which, according to the beginning of Section \ref{sec:hd_bound_ode}, implies identification results.
\begin{lemma}[Lower bound]\label{lm:lower_lin_eq}
    Fix an arbitrary $D>0$ and consider $\ball{D}\defeq \{x\in \R^{\odedim}\mid \norm{x}_2\leq D\}$. For every two linear ODEs $\diffeq[A],\diffeq[\tilde{A}]\in \odeClass{\lin[\R^{ \odedim},\R^\odedim ]{K}}$ with corresponding $A, \tilde{A}\in \R^{d\times d}$, we have that 
    \begin{equation}\label{eq:lower_bound_hd}
        \hd[\ball{D}][\normFun{\cdot}{L^{\infty}}]{\diffeq[A]}{\diffeq[\tilde{A}]}\geq \frac{\left(2-e^{\frac{1}{2}}\right)D\min\left\{\horizon,\frac{1}{2K}\right\}}{\sqrt{2}} \left(\frac{e^{-5\min\left\{\horizon,\frac{1}{2K}\right\}K}}{1+e^{-5\min\left\{\horizon,\frac{1}{2K}\right\}K}}\right)^{\frac{1}{2}}\norm{A-\tilde{A}}_2
    \end{equation}
\end{lemma}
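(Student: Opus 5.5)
The plan is to exhibit one trajectory of $\diffeq[\tilde{A}]$ that stays far from every trajectory of $\diffeq[A]$. By the definition of the Hausdorff distance, it suffices to lower-bound $\sv[\ball{D}][\normFun{\cdot}{L^{\infty}}]{\diffeq[A]}{\diffeq[\tilde{A}]}$. Set $\tau\defeq\min\{\horizon,\frac{1}{2K}\}$, so that $K\tau\le\frac12$ and $\tau\le\horizon$. Choose a unit vector $v$ with $\norm{(A-\tilde{A})v}_2=\norm{A-\tilde{A}}_2$, and take the trajectory $\tilde{x}(t)=e^{\tilde{A}t}Dv\in\behavior[\ball{D}]{\diffeq[\tilde{A}]}$. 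For an arbitrary competitor $x(t)=e^{At}x_0$ with $\norm{x_0}_2\le D$, I will only use the two time instants $t=0$ and $t=\tau$:
\begin{equation*}
\normFun[2]{x-\tilde{x}}{L^{\infty}([0,\horizon])}\ \ge\ \max\bigl\{\norm{w}_2,\ \norm{e^{A\tau}w+c}_2\bigr\},\qquad w\defeq x_0-Dv,\quad c\defeq (e^{A\tau}-e^{\tilde{A}\tau})Dv .
\end{equation*}

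\textbf{Step 1: lower-bound $\norm{c}_2$.} Write $e^{A\tau}-e^{\tilde{A}\tau}=\tau(A-\tilde{A})+R$, where $R$ collects the terms of order $\ge2$ in the power series. The $k$-th term $(A^k-\tilde{A}^k)/k!$ telescopes into $k$ summands, each of norm at most $K^{k-1}\norm{A-\tilde{A}}_2$. Summing gives
\begin{equation*}
\norm{R}_2\le\tau\norm{A-\tilde{A}}_2\,(e^{K\tau}-1)\le \tau\norm{A-\tilde{A}}_2\,(e^{1/2}-1).
\end{equation*}
This is the same computation underlying Lemma \ref{lm:exp_diff_norm_upper}, kept one order finer. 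By the reverse triangle inequality and the choice of $v$,
\begin{equation*}
\norm{c}_2\ge D\tau\norm{A-\tilde{A}}_2\bigl(1-(e^{1/2}-1)\bigr)=(2-e^{1/2})D\tau\norm{A-\tilde{A}}_2 .
\end{equation*}
This produces the prefactor $(2-e^{1/2})D\min\{\horizon,\frac{1}{2K}\}$ in \eqref{eq:lower_bound_hd}.

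\textbf{Step 2: remove the infimum over $x_0$.} Use $\max\{a,b\}^2\ge\frac12(a^2+b^2)$, which accounts for the $1/\sqrt2$. Then minimize the quadratic $\norm{w}_2^2+\norm{Mw+c}_2^2$ over all $w\in\R^\odedim$, with $M=e^{A\tau}$; dropping the constraint $\norm{x_0}_2\le D$ only decreases the infimum. The minimum equals
\begin{equation*}
c^T(\mathbb{I}_\odedim+MM^T)^{-1}c\ \ge\ \frac{\norm{c}_2^2}{1+\sig_1(M)^2}.
\end{equation*}
Dividing numerator and denominator by $\sig_1(M)^2$ and using $\sig_1(M)\le e^{K\tau}$ gives a factor of the form $e^{-2K\tau}/(1+e^{-2K\tau})$, up to the precise exponent. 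The stated form with $e^{-5\tau K}$ is weaker, since the function $s\mapsto s/(1+s)$ is increasing and $e^{-5K\tau}\le e^{-2K\tau}$. So whatever exponent my bounds produce, as long as it is at most $5$, it implies the claim. Taking square roots and then the supremum over $\tilde{x}$ yields \eqref{eq:lower_bound_hd}.

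\textbf{Main obstacle.} The delicate part is Step 1: the second-order remainder must be controlled sharply enough that it cannot cancel the first-order term $\tau(A-\tilde{A})Dv$. This is exactly why $\tau$ is capped at $1/(2K)$, which gives the positive constant $2-e^{1/2}$. If my telescoping estimate produces a slightly different exponential factor, I would instead compare with Lemma \ref{lm:exp_diff_norm_upper} via the integral identity
\begin{equation*}
e^{A\tau}-e^{\tilde{A}\tau}=\int_0^\tau e^{A(\tau-s)}(A-\tilde{A})e^{\tilde{A}s}\,ds,
\end{equation*}
and subtract $\tau(A-\tilde{A})$ inside the integral, estimating $\norm{e^{A(\tau-s)}-\mathbb{I}_\odedim}_2$ and $\norm{e^{\tilde{A}s}-\mathbb{I}_\odedim}_2$ by $e^{Ks}-1$-type bounds. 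This route should give the same constant $2-e^{1/2}$.
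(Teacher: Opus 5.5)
Your proof is correct and follows the paper's argument essentially step for step. Like the paper, you evaluate only at $t=0$ and $t_0=\min\{\horizon,\frac{1}{2K}\}$, replace the maximum by $\frac{1}{\sqrt{2}}$ times the root of the sum of squares, drop the constraint on $x_0$ and minimize the resulting quadratic exactly, and bound the power-series remainder by $t_0\norm{A-\tilde{A}}_2(e^{Kt_0}-1)$ to obtain the factor $2-e^{1/2}$. The differences are cosmetic: you fix $\tilde{x}_0=Dv$ with $v$ a top right singular vector of $A-\tilde{A}$ instead of taking the supremum over $\tilde{x}_0$ to produce $\norm{e^{At_0}-e^{\tilde{A}t_0}}_2$, and your direct estimate $\sig_1(e^{At_0})\le e^{Kt_0}$ gives the sharper factor $e^{-2Kt_0}/(1+e^{-2Kt_0})$, whereas the paper goes through Lemma~\ref{lm:bound_sigN_EATEA} and settles for $e^{-5t_0K}$.
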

\begin{proof}
    For arbitrarily fixed $t_0\in (0,\horizon]$ and $x_0,\tilde{x}_0\in \ball{D}$, set $B\defeq At_0$, $\tilde{B}\defeq\tilde{A}t_0$, $y\defeq x_0-\tilde{x}_0$ and $b\defeq (e^{B}-e^{\tilde{B}})\tilde{x}_0$. Then, $\norm{B}_2=\norm{B^T}_2\leq t_0K$. Note that for arbitrarily fixed $\tilde{x}_0\in \ball{D}$, we have  
    \begin{equation}\label{eq:lower_lti_1}
        \begin{aligned}
        &\inf_{\norm{x_0}\leq D}\normFun[2]{e^{A\cdot}x_0-e^{\tilde{A}\cdot}\tilde{x}_0}{L^\infty}\geq \inf_{\norm{x_0}\leq D} \max\left\{\norm{x_0-\tilde{x}_0}_2, \norm{e^{At_0}x_0-e^{\tilde{A}t_0}\tilde{x}_0}_2\right\}\\
        &\geq \frac{1}{\sqrt{2}}\inf_{x_0\in \R^{\odedim}}\left(\norm{x_0-\tilde{x}_0}_2^2+\norm{e^{B}x_0-e^{\tilde{B}}\tilde{x}_0}_2^2\right)^{\frac{1}{2}}\\
        &= \frac{1}{\sqrt{2}}\inf_{y\in \R^{\odedim}}\left(\norm{y}_2^2+\norm{e^{B}y+b}_2^2\right)^{\frac{1}{2}}\\
        &= \frac{1}{\sqrt{2}} \inf_{y\in \R^{\odedim}}\left(y^T(I+e^{B^T}e^{B})y+2b^Te^B y + b^Tb\right)^{\frac{1}{2}}\\
        &= \frac{1}{\sqrt{2}}\left( b^T \left(I-e^B(I+e^{B^T}e^{B})^{-1}e^{B^T}\right) b\right)^{\frac{1}{2}},
        \end{aligned}
    \end{equation}
    where the last equality holds since we are minimizing a quadratic form and $I+e^{B^T}e^{B}$ is a symmetric positive definite matrix. Note that $$I-e^B(I+e^{B^T}e^{B})^{-1}e^{B^T} = I - \left(I+e^{-B^T}e^{-B}\right)^{-1}$$ is a symmetric matrix and thus only has real eigenvalues. Moreover, $e^{-B^T}e^{-B}$ is a symmetric positive definite matrix and by Lemma
    \ref{lm:bound_sigN_EATEA} (noting that $\norm{A^T}_2 = \norm{A}_2\leq K$ and replacing $A$ with $-B$), 
    \begin{equation}\label{eq:lower_sigN_EBTEB}
        \eig_n(e^{-B^T}e^{-B}) \geq e^{-5t_0K}.
    \end{equation}
    Now, note that 
    \begin{equation*}
        \eig_n\left(I -\left(I+e^{-B^T}e^{-B}\right)^{-1}\right) = 1-\left(1+\eig_i\left(e^{-B^T}e^{-B}\right)\right)^{-1}\quad \text{for some }i\in \{1,2,\dots,n\},
    \end{equation*}
    which means
    \begin{equation}\label{eq:lower_lti_2}
    \begin{aligned}
        \eig_n\left(I -\left(I+e^{-B^T}e^{-B}\right)^{-1}\right)&\geq 1-\left(1+\eig_n\left(e^{-B^T}e^{-B}\right)\right)^{-1}\\
        &\overset{\eqref{eq:lower_sigN_EBTEB}}{\geq} 1-\left(1+e^{-5t_0K}\right)^{-1} \\
        & = \frac{e^{-5t_0K}}{1+e^{-5t_0K}}.
    \end{aligned}
    \end{equation}
    Thus, we have 
    \begin{equation}\label{eq:lower_lti_3}
        \begin{aligned}
        \sv[\ball{D}][\normFun{\cdot}{L^{\infty}}]{\diffeq[A]}{\diffeq[\tilde{A}]} &= \sup_{\tilde{x}\in \behavior[\ball{D}]{\diffeq[\tilde{A}]}}\inf_{x\in \behavior[\ball{D}]{\diffeq[A]}}\normFun[2]{x-\tilde{x}}{L^{\infty}([0,\horizon])}\\
        &= \sup_{\norm{\tilde{x}_0}_2\leq D}\inf_{\norm{x_0}_2\leq D}\normFun[2]{e^{A\cdot}x_0-e^{\tilde{A}\cdot}\tilde{x}_0}{L^{\infty}([0,\horizon])}\\
        & \overset{\eqref{eq:lower_lti_1}}{\geq} \frac{1}{\sqrt{2}}\sup_{\norm{\tilde{x}_0}_2\leq D}\left( b^T \left(I-e^B(I+e^{B^T}e^{B})^{-1}e^{B^T}\right) b\right)^{\frac{1}{2}}\\
        &\overset{\eqref{eq:lower_lti_2}}{\geq} \frac{1}{\sqrt{2}} \left(\frac{e^{-5t_0K}}{1+e^{-5t_0K}}\right)^{\frac{1}{2}}\sup_{\norm{\tilde{x}_0}_2\leq D}\norm{(e^{At_0}-e^{\tilde{A}t_0})\tilde{x}_0}_2\\
        &= \frac{D}{\sqrt{2}} \left(\frac{e^{-5t_0K}}{1+e^{-5t_0K}}\right)^{\frac{1}{2}}\sup_{\norm{\tilde{x}_0}_2\leq 1}\norm{(e^{At_0}-e^{\tilde{A}t_0})\tilde{x}_0}_2\\
        & =  \frac{D}{\sqrt{2}} \left(\frac{e^{-5t_0K}}{1+e^{-5t_0K}}\right)^{\frac{1}{2}}\norm{e^{At_0}-e^{\tilde{A}t_0}}_2.
        \end{aligned}
    \end{equation}
    Next, we set out to find a $t_0$ such that $\norm{e^{At_0}-e^{\tilde{A}t_0}}_2$ can be lower-bounded by a positive scaling of $\norm{A-\tilde{A}}_2$. Note that 
    \begin{equation}
    \begin{aligned}
        \norm{e^{At_0}-e^{\tilde{A}t_0}}_2 &= \norm{\sum_{k=0}^{\infty} \frac{1}{k!}\left((At_0)^k -(\tilde{A}t_0)^k \right)}_2   \\
        &\geq \norm{A-\tilde{A}}_2t_0 - \sum_{k=2}^{\infty} \norm{A^k-\tilde{A}^k}_2 t_0^k\\
        \overset{\text{Lemma \ref{lm:power_diff_norm_upper}}}&{\geq} \norm{A-\tilde{A}}_2t_0 - \sum_{k=2}^{\infty} \frac{k}{k!}\norm{A-\tilde{A}}_2\max\left\{\norm{A}_2,\norm{\tilde{A}}_2\right\}^{k-1}t_0^k\\
        &\geq \norm{A-\tilde{A}}_2t_0 - \norm{A-\tilde{A}}_2t_0\sum_{k=1}^{\infty} \frac{1}{k!}K^{k}t_0^k\\
        & = \norm{A-\tilde{A}}_2t_0 \left(2-e^{Kt_0}\right)
    \end{aligned}
    \end{equation}
    If we choose $t_0 = \min\left\{\horizon,\frac{1}{2K}\right\}$, then 
    \begin{equation}\label{eq:lower_lti_4}
        \norm{e^{At_0}-e^{\tilde{A}t_0}}_2 \geq \min\left\{\horizon,\frac{1}{2K}\right\} \left(2-e^{\frac{1}{2}}\right)\norm{A-\tilde{A}}_2.
    \end{equation}
    Combining \eqref{eq:lower_lti_3} and \eqref{eq:lower_lti_4}, we have 
    \begin{equation*}
        \sv[\ball{D}][\normFun{\cdot}{L^{\infty}}]{\diffeq[A]}{\diffeq[\tilde{A}]} \geq \frac{\left(2-e^{\frac{1}{2}}\right)D\min\left\{\horizon,\frac{1}{2K}\right\}}{\sqrt{2}} \left(\frac{e^{-5\min\left\{\horizon,\frac{1}{2K}\right\}K}}{1+e^{-5\min\left\{\horizon,\frac{1}{2K}\right\}K}}\right)^{\frac{1}{2}}\norm{A-\tilde{A}}_2.
    \end{equation*}
    By symmetry, this gives the desired bound for $\hd[\ball{D}][\normFun{\cdot}{L^{\infty}}]{\diffeq[A]}{\diffeq[\tilde{A}]}$.
\end{proof}

In summary, we get the following bounds for Hausdorff distance between linear ODEs.
\begin{theorem}\label{th:dist_lu_linearODE_const}
    Fix an arbitrary $D>0$ and consider $\ball{D}\defeq \{x\in \R^{\odedim}\mid \norm{x}_2\leq D\}$. For every two linear ODEs $\diffeq[A],\diffeq[\tilde{A}]\in \odeClass{\lin[\R^{ \odedim},\R^\odedim ]{K}}$ with corresponding $A, \tilde{A}\in \R^{d\times d}$, we have that 
    \begin{equation}
        \underbrace{c\norm{A-\tilde{A}}_2\leq}_{\text{Identification stability}} \hd[\ball{D}][\normFun{\cdot}{L^{\infty}}]{\diffeq[A]}{\diffeq[\tilde{A}]}\underbrace{\leq C\norm{A-\tilde{A}}_2}_{\text{Structural stability}},
    \end{equation}
    where $c,C>0$ are constants that only depend on $\horizon,K,D$.
\end{theorem}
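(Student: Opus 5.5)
This theorem is a direct combination of the two preceding lemmas, so the plan is to read off the two constants from them and check that they depend only on $\horizon$, $K$, and $D$.

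For the upper bound (structural stability), I will invoke Lemma~\ref{lm:upper_lin_ode}. It gives
\[
\hd[\ball{D}][\normFun{\cdot}{L^{\infty}}]{\diffeq[A]}{\diffeq[\tilde{A}]}\leq D\horizon e^{K\horizon}\norm{A-\tilde{A}}_2 .
\]
So I set $C\defeq D\horizon e^{K\horizon}$. This is strictly positive because $D,\horizon>0$, and it depends only on $\horizon,K,D$.

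For the lower bound (identification stability), I will invoke Lemma~\ref{lm:lower_lin_eq}. Write $t_0\defeq\min\{\horizon,\frac{1}{2K}\}$, and set
\[
c\defeq \frac{(2-e^{1/2})\,D\,t_0}{\sqrt{2}}\left(\frac{e^{-5t_0K}}{1+e^{-5t_0K}}\right)^{1/2}.
\]
I then need to verify that $c>0$:
\begin{itemize}
  \item $2-e^{1/2}>0$, since $e^{1/2}\approx 1.65$;
  \item $t_0>0$, because both $\horizon>0$ and $K>0$;
  \item the factor under the square root lies in $(0,1/2]$.
\end{itemize}
Hence $c$ is a strictly positive constant depending only on $\horizon,K,D$.

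Both lemmas hold for every pair $A,\tilde{A}$ with blocks bounded by $K$ in $\norm{\cdot}_2$. Chaining the two inequalities therefore gives the two-sided estimate claimed in the theorem.

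There is no real obstacle here; the substantive work is contained in the two lemmas. The only point needing care is confirming that $c$ and $C$ are independent of $\odedim$ and of the particular matrices $A,\tilde{A}$, which holds by the explicit formulas above.
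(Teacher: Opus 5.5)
Your proposal is correct and follows the same route as the paper: the theorem is presented there as a direct summary of Lemma~\ref{lm:upper_lin_ode}, which supplies the upper constant, and Lemma~\ref{lm:lower_lin_eq}, which supplies exactly the explicit lower constant you wrote down. Your check that this lower constant is strictly positive and depends only on $\horizon, K, D$ is the only extra step needed.
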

The lower bound in Theorem \ref{th:dist_lu_linearODE_const} \textcolor{cc}{implies that the closeness in solution sets in terms of Hausdorff distance guarantees closeness in the corresponding structure equation, which, in turn, means that we can identify the governing equation of an ODE uniquely and stably based on the full knowledge of its solution set.}

Note that there are generally no conditions on the size of the ball $\ball{D}$, this means even a small perturbation region in initial conditions is enough for us to identify the linear ODE based on the information of the solution set. Nevertheless, a small ball in $\R^{\odedim}$ is still a set with infinite cardinality. The question is, what is the least amount of \textcolor{cc}{solution data} we need to identify the structure?

\textcolor{cc}{Recall that in Section \ref{sec:hd_def} an example was given in which the structure matrix of a linear ODE is diagonalizable. In that setting, having initial conditions that span the state space is sufficient to identify the system. This motivates the assumption that, for linear ODEs with general structure matrices, “completeness” of the initial-condition set (in the sense of spanning the state space) should likewise be sufficient for identification. To allow additional flexibility, it is convenient to work not only with bases but also with spanning, possibly redundant generating sets —namely, a frame—and, without disrupting the flow of the present paper, refer the reader to a short overview of basic frame theory in \cite{bolcskei2020lecturemoi}. The following lemmata formalize and justify the above intuition.}
\begin{lemma}\label{lm:lower_lin_eq_local}
    Suppose $G=\{g_1,g_2,\dots,g_N\}$ is a frame for $\R^{\odedim}$ and $\{\tilde{g}_1,\tilde{g}_2,\dots,\tilde{g}_N\}$ is its corresponding canonical dual frame, such that there exist $F_1,F_2$ with $0<F_1\leq F_2$,
    \begin{equation*}
        \begin{aligned}
        &F_1\norm{x}_2^2\leq\sum_{k=1}^{N}\left|\langle x,g_k\rangle\right|^2\leq F_2\norm{x}_2^2,\\
        &\frac{1}{F_2}\norm{x}_2^2\leq\sum_{k=1}^{N}\left|\langle x,\tilde{g}_k\rangle\right|^2\leq\frac{1}{F_1}\norm{x}_2^2.
        \end{aligned}
    \end{equation*}
    For every two linear ODEs $\diffeq[A],\diffeq[\tilde{A}]\in \odeClass{\lin[\R^{ \odedim},\R^\odedim ]{K}}$ with corresponding $A, \tilde{A}\in \R^{d\times d}$, we have that
    \begin{equation}
        \hd[G][\normFun{\cdot}{L^{\infty}}]{\diffeq[A]}{\diffeq[\tilde{A}]}\geq \frac{\left(2-e^{\frac{1}{2}}\right)\sqrt{F_1}\min\left\{\horizon,\frac{1}{2K}\right\}}{\sqrt{2N}} \left(\frac{e^{-5\min\left\{\horizon,\frac{1}{2K}\right\}K}}{1+e^{-5\min\left\{\horizon,\frac{1}{2K}\right\}K}}\right)^{\frac{1}{2}}\norm{A-\tilde{A}}_2.
    \end{equation}
\end{lemma}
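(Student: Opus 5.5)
The plan is to repeat the argument of Lemma \ref{lm:lower_lin_eq}, with the supremum over the ball $\ball{D}$ replaced by a maximum over the finitely many frame elements, and then use the frame inequality to pass from the frame vectors to the operator norm.

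\textbf{Step 1 (one trajectory).} Fix $t_0\in(0,\horizon]$ and write $B=At_0$, $\tilde B=\tilde A t_0$. Take a trajectory of $\diffeq[\tilde A]$ with initial condition $\tilde{x}_0=g_k\in G$. Any competing trajectory of $\diffeq[A]$ starts at some $x_0\in G\subset\R^{\odedim}$, so the infimum over $x_0\in G$ is at least the infimum over all of $\R^{\odedim}$. Then the chain \eqref{eq:lower_lti_1}--\eqref{eq:lower_lti_2} applies verbatim: evaluate at times $0$ and $t_0$, minimise the resulting quadratic form, and bound its smallest eigenvalue using Lemma \ref{lm:bound_sigN_EATEA}. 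With $\kappa\defeq \left(\frac{e^{-5t_0K}}{1+e^{-5t_0K}}\right)^{1/2}$ this gives, for every $k$,
\begin{equation*}
\sv[G][\normFun{\cdot}{L^{\infty}}]{\diffeq[A]}{\diffeq[\tilde A]}\geq \frac{\kappa}{\sqrt2}\,\norm{(e^{B}-e^{\tilde B})g_k}_2 .
\end{equation*}

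\textbf{Step 2 (frame inequality).} Set $M=e^{B}-e^{\tilde B}$. The supremum over $k$ is at least the root mean square, so
\begin{equation*}
\max_k\norm{Mg_k}_2\geq \Big(\tfrac1N\sum_{k=1}^N\norm{Mg_k}_2^2\Big)^{1/2}.
\end{equation*}
Next, $\sum_k\norm{Mg_k}_2^2=\sum_k\sum_i|\langle g_k,M^Te_i\rangle|^2=\norm{MG}_F^2$, where $G$ also denotes the matrix with columns $g_k$. This equals $\operatorname{tr}(M\,GG^T M^T)$. The lower frame bound says $GG^T\succeq F_1 I$, hence
\begin{equation*}
\sum_k\norm{Mg_k}_2^2\geq F_1\norm{M}_F^2\geq F_1\norm{M}_2^2 .
\end{equation*}
Equivalently, pick a unit $u$ with $\norm{M^Tu}_2=\norm{M}_2$; then $\sum_k|\langle M^Tu,g_k\rangle|^2\geq F_1\norm{M^Tu}_2^2$. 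Combining with Step 1 yields
\begin{equation*}
\sv[G][\normFun{\cdot}{L^{\infty}}]{\diffeq[A]}{\diffeq[\tilde A]}\geq \kappa\sqrt{\tfrac{F_1}{2N}}\,\norm{e^{At_0}-e^{\tilde At_0}}_2 .
\end{equation*}
The dual frame bounds are not needed.

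\textbf{Step 3 (choice of $t_0$).} Choose $t_0=\min\{\horizon,\frac1{2K}\}$ and invoke \eqref{eq:lower_lti_4}, which gives $\norm{e^{At_0}-e^{\tilde At_0}}_2\geq (2-e^{1/2})\,t_0\norm{A-\tilde A}_2$. This yields exactly the stated constant. The Hausdorff distance dominates either one-sided variation, so the bound transfers to $\hd[G][\normFun{\cdot}{L^{\infty}}]{\diffeq[A]}{\diffeq[\tilde{A}]}$.

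The only new ingredient relative to Lemma \ref{lm:lower_lin_eq} is Step 2, converting the maximum over frame vectors into the operator norm with the factor $\sqrt{F_1/N}$, so I expect it to be the main point to get right. It is handled by the transpose trick, which requires care because the frame inequality controls $M^T u$ tested against the $g_k$, not $M g_k$ directly.
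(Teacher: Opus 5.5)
Your proof is correct, and Steps 1 and 3 are the paper's argument. The paper reuses the chain from Lemma~\ref{lm:lower_lin_eq} with $\inf_{x_0\in G}\ge\inf_{x_0\in\mathbb{R}^d}$, the same eigenvalue bound, and the same choice of $t_0$.

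Where you differ is Step 2, the inequality $\max_k\|Mg_k\|_2\ge\sqrt{F_1/N}\,\|M\|_2$, which the paper states separately as Lemma~\ref{lm:frame_lin}.

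\textbf{The paper's route.} It works on the input side. It expands an arbitrary $x$ in the canonical dual frame, $x=\sum_k\langle x,\tilde g_k\rangle g_k$. It then bounds $\|Mx\|_2\le\max_i\|Mg_i\|_2\sum_k|\langle x,\tilde g_k\rangle|$ and applies Cauchy--Schwarz together with the \emph{upper} dual-frame bound $1/F_1$.

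\textbf{Your route.} You work on the output side and use only the \emph{lower} frame bound of $G$ itself. You write $\sum_k\|Mg_k\|_2^2=\operatorname{tr}(MSM^T)$ with frame operator $S=\sum_k g_kg_k^T\succeq F_1 I$, and then use that the maximum dominates the root mean square.

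\textbf{Comparison.}
\begin{itemize}
\item The two inputs carry the same information: the canonical dual frame has frame operator $S^{-1}$, so its upper bound $1/F_1$ is equivalent to $S\succeq F_1 I$.
\item Both arguments lose the same factor $\sqrt{N}$, so the final constant is identical.
\item Your version is shorter, avoids the dual frame entirely, and gives a stronger intermediate bound with $\|M\|_F$ in place of $\|M\|_2$ (not needed here).
\item The paper's reconstruction argument does not use the Hilbert structure of the output, so it would go through for any norm measuring $Mx$. That generality is irrelevant in this Euclidean setting.
\end{itemize}
In your ``equivalently'' variant, you still need one sentence to close it: $\sum_k|\langle u,Mg_k\rangle|^2\le\sum_k\|Mg_k\|_2^2$ since $\|u\|_2=1$. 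The trace version is already complete.
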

\begin{proof}
    The proof follows similar lines as that of Lemma \ref{lm:lower_lin_eq}. We defer it to Appendix \ref{app:proof_lin_local}.
\end{proof}

From Lemma \ref{lm:lower_lin_eq_local}, \textcolor{cc}{it's enough to identify the structure matrix of a linear ODE if the set of initial values is complete in the sense that it contains a basis of $\R^\odedim$}. The following lemma shows that a basis is, in turn, necessary in identifying structure matrices for linear ODEs. 

\begin{highlight}
\begin{lemma}
    Given $G=\{g_1,g_2\dots,g_N\}$ such that $\operatorname{span}\{g_1,g_2,\dots,g_N\}\neq \R^{\odedim}$, there exist $A,B\in \R^{\odedim\times \odedim}$, such that $\norm{A-B}_2>0$ and the corresponding linear ODEs $\diffeq[A],\diffeq[B]\in\odeClass{\lin[\R^{ \odedim},\R^\odedim ]{K}}$ satisfy $\hd[G][\normFun{\cdot}{L^{\infty}}]{\diffeq[A]}{\diffeq[B]}=0$.
\end{lemma}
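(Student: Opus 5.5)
The idea is to build two distinct matrices that act identically on the subspace $V\defeq\operatorname{span}\{g_1,\dots,g_N\}$, which is proper by assumption. Then the solutions starting in $G$ coincide exactly. Since $V\neq\R^{\odedim}$, the orthogonal complement $V^\perp$ is nontrivial, so we can pick a unit vector $u\in V^\perp$. Let $P\defeq uu^T$ denote the orthogonal projection onto $\operatorname{span}\{u\}$. Note that $Pg_k=0$ for every $k$.

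We take $A\defeq 0$ and $B\defeq K P$, where $K>0$ is the constant from the class. Then $\norm{A}_2=0\leq K$ and $\norm{B}_2=K\norm{u}_2^2=K$, so both $\diffeq[A]$ and $\diffeq[B]$ lie in $\odeClass{\lin[\R^{ \odedim},\R^\odedim ]{K}}$. Moreover, $\norm{A-B}_2=K>0$.

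Next we check that the solution sets coincide. Since $P^2=P$, the series expansion gives
\begin{equation*}
    e^{Bt}=\mathbb{I}_{\odedim}+\left(e^{Kt}-1\right)P .
\end{equation*}
Hence $e^{Bt}g_k=g_k+(e^{Kt}-1)Pg_k=g_k=e^{At}g_k$ for all $t\in[0,\horizon]$ and all $k$. The behaviors with respect to the initial set $G$ are therefore
\begin{equation*}
    \behavior[G]{\diffeq[A]}=\{t\mapsto e^{At}g_k\}_{k=1}^N,\qquad \behavior[G]{\diffeq[B]}=\{t\mapsto e^{Bt}g_k\}_{k=1}^N .
\end{equation*}
These are literally the same set of functions, namely the constant trajectories $t\mapsto g_k$. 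Both solution variations in Definition \ref{def:hd} are then $0$: for each $\tilde{x}$ in one set we can choose $x=\tilde{x}$ in the other. Hence $\hd[G][\normFun{\cdot}{L^{\infty}}]{\diffeq[A]}{\diffeq[B]}=0$.

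There is no real obstacle in this argument. The only point needing care is that both matrices respect the norm constraint $\norm{\cdot}_2\leq K$. Scaling the rank-one perturbation by $K$, or by any value in $(0,K]$, handles this. Using $B=cP$ rather than an arbitrary matrix vanishing on $V$ makes the closed form of $e^{Bt}$ immediate.
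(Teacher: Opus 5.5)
Your proof is correct. It shares the paper's core idea, namely that two matrices agreeing on $\operatorname{span}\{g_1,\dots,g_N\}$ produce identical solution sets from $G$, but it uses a different construction, and that construction covers a point the paper's proof leaves open.

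The paper's route is as follows. It first reduces by linearity to $g_1,\dots,g_N$ linearly independent with $N<d$. It extends them to a basis $T=(g_1\ \cdots\ g_d)$ and takes $A=T\Lambda T^{-1}$ with $\Lambda=\operatorname{diag}(\mathbb{I}_N,2\mathbb{I}_{d-N})$, and $B=\mathbb{I}_d$, so that $Ag_i=Bg_i=g_i$. It then gets $\|A-B\|_2>0$ from a trace bound, $\|A-B\|_2\ge \frac{1}{d}|\operatorname{tr}(A-B)|=\frac{d-N}{d}$.

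Your route takes a unit vector $u\perp\operatorname{span} G$ and the rank-one projection $uu^T$. This needs no reduction to an independent subfamily, gives $\|A-B\|_2=K$ at once, and yields an explicit closed form for $e^{Bt}$.

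More importantly, you check that both matrices lie in the class $\|\cdot\|_2\le K$, which the statement requires and the paper's proof never verifies. There $\|B\|_2=1$, and $\|A\|_2\ge\rho(A)=2$ (in fact $\|A\|_2$ can be as large as $2\|T\|_2\|T^{-1}\|_2$). So the paper's pair belongs to the class only for sufficiently large $K$. Covering every $K>0$ would require rescaling both matrices to $cA,cB$ with small $c>0$, which preserves agreement on $G$. Your choice $A=0$, $B=K\,uu^T$ handles every $K>0$ directly.
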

\end{highlight}
\begin{proof}
    Since the solutions of linear ODEs with constant coefficient in linear with respect to initial values, we can assume, without loss of generality, that $N<\odedim$ and $g_1,g_2\dots,g_N$ is linearly independent. Now, we extend $g_1,g_2\dots,g_N$ to $g_1,g_2,\allowbreak \dots,g_N,g_{N+1},\dots, g_\odedim$ such that $\operatorname{span}\{g_1,g_2,\dots,g_\odedim\}= \R^{\odedim}$. Moreover, define 
    \begin{equation*}
        \begin{aligned}
            T &= \begin{pmatrix}g_1&g_2&\dots&g_\odedim \end{pmatrix},\\
            \Lambda &= \begin{pmatrix}\indmat{N}& \\  &2\indmat{\odedim-N} \end{pmatrix}.
        \end{aligned}
    \end{equation*}
    Now, consider $A = T\Lambda T^{-1}$ and $B=\indmat{\odedim}$. We have 
    \begin{equation*}
        \begin{aligned}
            A g_i = B g_i = g_i, \quad \text{for all }i\leq N,
        \end{aligned}
    \end{equation*}
    which implies that $\hd[G][\normFun{\cdot}{L^{\infty}}]{\diffeq[A]}{\diffeq[B]}=0$. In the meanwhile, we have 
    \begin{equation*}
        \begin{aligned}
            \norm{A-B}_2 & \overset{\text{Lemma \ref{lm:mat_F_2}}}{\geq} \frac{1}{\sqrt{\odedim}} \norm{A-B}_F\\
            &\geq \frac{1}{\odedim}\left|\operatorname{tr}(A-B)\right|\\
            & =\frac{1}{\odedim}\left|\operatorname{tr}(T(\Lambda-\indmat{\odedim})T^{-1})\right|\\
            &= \frac{1}{\odedim}\left|\operatorname{tr}(\Lambda-\indmat{\odedim})\right| \\
            & = \frac{\odedim-N}{\odedim}>0,
        \end{aligned}
    \end{equation*}
    which concludes the proof.
\end{proof}

\subsection{First-order nonlinear ODEs: Lipschitz ODEs}\label{sec:1st_lip_ode}
In this section, we consider the most regular nonlinear ODEs---Lipschitz ODEs. In particular, we first consider the class of 1st order, $\odedim$-dimensional Lipschitz ODEs $\odeClass{\lip[\R^{\odedim},\R^\odedim ]{L,K}}$. Since \textcolor{cc}{the solutions of} Lipschitz ODEs are generally stable with respect to \textcolor{cc}{the perturbations in} structure equations, we immediately have the following upper bound on the Hausdorff distance between Lipschitz ODEs.
\begin{lemma}[Upper bound]\label{lm:lip_ode_upper}
    Fix an arbitrary $D>0$ and a compact set $\compactInit\subset \ball{D} \defeq  \{x\in \R^{\odedim}\mid \norm{x}_2\leq D\}$. Let $\widehat{\compactInit} \defeq \{x\in \R^{\odedim}\mid \norm{x}_2\leq (K\horizon+D) e^{L\horizon}\}$. For every two Lipschitz ODEs $\diffeq[f],\diffeq[\tilde{f}]\in \odeClass{\lip[\R^{\odedim},\R^\odedim ]{L,K}}$ with corresponding $f, \tilde{f}\in \lip[\R^{\odedim},\R^\odedim ]{L,K}$, we have that 
    \begin{equation}
        \hd[\compactInit][\normFun{\cdot}{L^{\infty}}]{\diffeq[f]}{\diffeq[\tilde{f}]}\leq \horizon e^{L\horizon}\normFun[2]{f-\tilde{f}}{L^{\infty}(\widehat{\compactInit})}.
    \end{equation}
\end{lemma}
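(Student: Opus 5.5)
We will mirror the proof of Lemma \ref{lm:upper_lin_ode}: bound each solution variation by pairing trajectories that share the same initial condition, and then control their gap with a Grönwall argument. Fix $\tilde{x}_0\in\compactInit$. Let $x$ and $\tilde{x}$ solve $\dot{x}=f(x)$ and $\dot{\tilde{x}}=\tilde{f}(\tilde{x})$ with $x(0)=\tilde{x}(0)=\tilde{x}_0$. Both exist uniquely on $[0,\horizon]$ by Picard--Lindelöf, since $f,\tilde{f}$ are globally Lipschitz. Because $x\in\behavior[\compactInit]{\diffeq[f]}$, we have
\begin{equation*}
\inf_{y\in \behavior[\compactInit]{\diffeq[f]}}\normFun[2]{y-\tilde{x}}{L^{\infty}([0,\horizon])}\leq \normFun[2]{x-\tilde{x}}{L^{\infty}([0,\horizon])}.
\end{equation*}
It therefore suffices to bound $\norm{x(t)-\tilde{x}(t)}_2$ uniformly in $t\in[0,\horizon]$ and $\tilde{x}_0\in\compactInit$.

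\textbf{Step 1: a priori bound on trajectories.} We first show that all trajectories starting in $\ball{D}$ stay inside $\widehat{\compactInit}$. For every $g\in \lip[\R^{\odedim},\R^\odedim]{L,K}$ we have $\norm{g(z)}_2\leq \norm{g(0)}_2+L\norm{z}_2\leq K+L\norm{z}_2$. Hence $\norm{x(t)}_2\leq D+Kt+L\int_0^t\norm{x(s)}_2\,ds$. Grönwall's inequality, applied with the nondecreasing term $D+Kt$, then gives $\norm{x(t)}_2\leq (D+K\horizon)e^{L\horizon}$. The same bound holds for $\tilde{x}$. This is exactly why the sup norm in the statement is only taken over $\widehat{\compactInit}$.

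\textbf{Step 2: Grönwall for the difference.} Set $e(t)=x(t)-\tilde{x}(t)$. Then
\begin{equation*}
\norm{e(t)}_2\leq \int_0^t \norm{f(x(s))-f(\tilde{x}(s))}_2+\norm{f(\tilde{x}(s))-\tilde{f}(\tilde{x}(s))}_2\,ds\leq L\int_0^t\norm{e(s)}_2\,ds+t\,\normFun[2]{f-\tilde{f}}{L^{\infty}(\widehat{\compactInit})}.
\end{equation*}
The second term uses $\tilde{x}(s)\in\widehat{\compactInit}$ from Step 1. Grönwall's inequality with the nondecreasing forcing term $t\,\normFun[2]{f-\tilde{f}}{L^{\infty}(\widehat{\compactInit})}$ yields $\norm{e(t)}_2\leq \horizon e^{L\horizon}\normFun[2]{f-\tilde{f}}{L^{\infty}(\widehat{\compactInit})}$ for all $t\in[0,\horizon]$.

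\textbf{Step 3: conclusion.} Taking the supremum over $\tilde{x}_0\in\compactInit$ bounds $\sv[\compactInit][\normFun{\cdot}{L^{\infty}}]{\diffeq[f]}{\diffeq[\tilde{f}]}$. Exchanging the roles of $f$ and $\tilde{f}$ bounds the other solution variation, and taking the maximum gives the claim. No step is a real obstacle. The only point needing care is Step 1, which ensures the $L^\infty$ norm is evaluated only on the enlarged ball $\widehat{\compactInit}$ rather than on all of $\R^\odedim$.
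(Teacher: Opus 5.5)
Your proposal is correct and follows essentially the same route as the paper's proof. Both use an a priori Grönwall bound placing every trajectory from the initial set inside the enlarged ball, pair each solution of $\dot{\tilde{x}}=\tilde{f}(\tilde{x})$ with the solution of $\dot{x}=f(x)$ from the same initial condition, apply Grönwall to the difference via the same triangle-inequality split, and conclude by symmetry.
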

\begin{proof}
    Let $\tilde{x}\in \behavior[\compactInit]{\diffeq[\tilde{f}]}$ be a solution to $\diffeq[\tilde{f}]$. We then have 
    \begin{equation}
        \tilde{x}(t) = x_0+\int_{0}^{t} \tilde{f}(\tilde{x}(s))ds,
    \end{equation}
    where $x_0\in \compactInit$. Taking the norm, noting that $\operatorname{Lip}(f)\leq L,
    \norm{f(0)}_2\leq K$, $\norm{x_0}_2\leq D$ we have 
    \begin{equation}
        \norm{\tilde{x}(t)}_2\leq L\int_{0}^{t} \norm{\tilde{x}(s)}_2ds + Kt+D.
    \end{equation}
    Appling Grönwall's inequality, we obtain 
    \begin{equation}
        \norm{\tilde{x}(t)}_2\leq (Kt+D) e^{Lt}\leq (K\horizon+D) e^{L\horizon}
    \end{equation}
    Now, consider the solution $x$ to $\diffeq[f]$ with $x(0)=\tilde{x}(0)\in \compactInit$ and define $y \defeq x-\tilde{x}$. Then, $y$ satisfy the ODE
    \begin{equation}
        \dot{y}(t) = f(y(t)+\tilde{x}(t)) - \tilde{f}(\tilde{x}(t))
    \end{equation}
    with $y(0)=0$. Taking the integral and bounding the norm, we have 
    \begin{equation}
        \begin{aligned}
            \norm{y(t)}_2&\leq \int_{0}^{t} \norm{f(y(s)+\tilde{x}(s)) - \tilde{f}(\tilde{x}(s))}_2 ds\\
            & \leq \int_{0}^{t} \norm{f(y(s)+\tilde{x}(s)) - f(\tilde{x}(s))}_2 ds\\
            & \quad+\int_{0}^{t} \norm{f(\tilde{x}(s)) - \tilde{f}(\tilde{x}(s))}_2 ds\\
            & \leq L\int_{0}^{t} \norm{y(s)}_2ds + \normFun[2]{f-\tilde{f}}{L^{\infty}(\widehat{\compactInit})}t.
        \end{aligned}
    \end{equation}
    Applying Grönwall's lemma and take supreme over $t$, we obtain 
    \begin{equation}
        \normFun[2]{y}{L^\infty([0,\horizon])} \leq \horizon e^{L\horizon}\normFun[2]{f-\tilde{f}}{L^{\infty}(\widehat{\compactInit})}.
    \end{equation}
    This implies that 
    \begin{equation}
        \inf_{x\in \behavior[\compactInit]{\diffeq[f]}}\normFun[2]{x-\tilde{x}}{L^{\infty}([0,\horizon])}\leq \normFun[2]{y}{L^\infty([0,\horizon])}\leq \horizon e^{L\horizon}\normFun[2]{f-\tilde{f}}{L^{\infty}(\widehat{\compactInit})}.
    \end{equation}
    Since this holds for arbitrarily chosen $\tilde{x}\in\behavior[\compactInit]{\diffeq[\tilde{f}]}$, we have 
    \begin{equation}
        \sv[\compactInit][\normFun{\cdot}{L^{\infty}}]{\diffeq[f]}{\diffeq[\tilde{f}]}\leq \horizon e^{L\horizon}\normFun[2]{f-\tilde{f}}{L^{\infty}(\widehat{\compactInit})} ,
    \end{equation}
    and thus, by symmetry, the desired result.
\end{proof}

Next, we obtain a lower bound on the Hausdorff distance between Lipschitz ODEs, which will then be used to obtain identification results.
\begin{lemma}\label{lm:dist_lower_LipODE_local}
Fix an arbitrary $D>0$ and a compact set $\compactInit\subset \ball{D} \defeq  \{x\in \R^{\odedim}\mid \norm{x}_2\leq D\}$. For every two Lipschitz ODEs $\diffeq[f],\diffeq[\tilde{f}]\in \odeClass{\lip[\R^{\odedim},\R^\odedim ]{L,K}}$ with corresponding $f, \tilde{f}\in \lip[\R^{\odedim},\R^\odedim ]{L,K}$, we have 
    \begin{equation}
        \hd[\compactInit][\normFun{\cdot}{L^{\infty}}]{\diffeq[f]}{\diffeq[\tilde{f}]}\geq \min\left\{\frac{\normFun[2]{f-\tilde{f}}{L^{\infty}(\compactInit)}}{2\sqrt{\odedim}L\left(2D L\horizon e^{L\horizon}  + KL\horizon^2 e^{L\horizon} + 2K\right)}, \horizon\right\}\frac{\normFun[2]{f-\tilde{f}}{L^{\infty}(\compactInit)}}{2\sqrt{\odedim}(2+L\horizon)}.
    \end{equation} 
\end{lemma}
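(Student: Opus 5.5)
The plan is to fix a single initial condition $x_0\in\compactInit$ at which $f$ and $\tilde f$ differ the most. Run the $\tilde f$-trajectory from $x_0$. Then show that every $f$-trajectory starting in $\compactInit$ stays far from it in $L^\infty([0,\horizon])$, by comparing either at time $0$ or at a short time $t_0$.

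\emph{Step 1: choosing $x_0$ and a bound on trajectories.} By compactness of $\compactInit$ and continuity, pick $x_0\in\compactInit$ with $\norm{f(x_0)-\tilde f(x_0)}_2=\normFun[2]{f-\tilde f}{L^\infty(\compactInit)}$. Set $\delta\defeq \normFun[2]{f-\tilde f}{L^\infty(\compactInit)}/\sqrt{\odedim}$. Lower-bounding the Euclidean norm by $\delta$ is all that is needed; the $\sqrt{\odedim}$ factor is harmless and matches the statement. As in the proof of Lemma \ref{lm:lip_ode_upper}, Grönwall gives $\norm{x(t)}_2\le (K\horizon+D)e^{L\horizon}$ for every solution of either equation starting in $\ball{D}$. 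Hence the velocity obeys $\norm{\dot x(t)}_2\le L(K\horizon+D)e^{L\horizon}+K\eqqcolon M$, and consequently $\norm{x(s)-x(0)}_2\le Ms$. I will allow $M$ to be enlarged to the paper's constant $\tfrac12\big(2DL\horizon e^{L\horizon}+KL\horizon^2e^{L\horizon}+2K\big)$ times whatever is needed; only an upper bound on $M$ is used.

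\emph{Step 2: the comparison estimate.} Let $\tilde x$ solve $\dot{\tilde x}=\tilde f(\tilde x)$ with $\tilde x(0)=x_0$. Let $x$ be any solution of $\diffeq[f]$ with $x(0)=y_0\in\compactInit$, and put $a\defeq\norm{y_0-x_0}_2$. Write
\[
x(t_0)-\tilde x(t_0)=(y_0-x_0)+t_0\big(f(x_0)-\tilde f(x_0)\big)+\int_0^{t_0}\Big(\big[f(x(s))-f(x_0)\big]-\big[\tilde f(\tilde x(s))-\tilde f(x_0)\big]\Big)\,ds .
\]
The Lipschitz bounds give $\norm{f(x(s))-f(x_0)}_2\le L(a+Ms)$ and $\norm{\tilde f(\tilde x(s))-\tilde f(x_0)}_2\le LMs$. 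Therefore
\[
\norm{x(t_0)-\tilde x(t_0)}_2\ \ge\ t_0\delta-a(1+Lt_0)-LMt_0^2 .
\]
The sup-norm distance is at least $\max\{a,\ t_0\delta-a(1+Lt_0)-LMt_0^2\}$. Minimising this over $a\ge0$ (the minimum is where the two terms are equal) yields
\[
\normFun[2]{x-\tilde x}{L^\infty}\ \ge\ \frac{t_0(\delta-LMt_0)}{2+Lt_0}\ \ge\ \frac{t_0(\delta-LMt_0)}{2+L\horizon}.
\]

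\emph{Step 3: choosing $t_0$ and concluding.} Take $t_0=\min\{\delta/(2LM),\horizon\}$, so that $\delta-LMt_0\ge\delta/2$. This gives the lower bound $\min\{\delta/(2LM),\horizon\}\,\delta/(2(2+L\horizon))$, uniformly over all $x\in\behavior[\compactInit]{\diffeq[f]}$. Taking the infimum over $x$ bounds $\sv[\compactInit][\normFun{\cdot}{L^\infty}]{\diffeq[f]}{\diffeq[\tilde f]}$ from below, and hence the Hausdorff distance as well.

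The main obstacle is bookkeeping rather than ideas. The constant $M$ and the placement of the $\sqrt{\odedim}$ factors must match the stated expression exactly, including the extra factor of $2$ in the time scale and where $\sqrt{\odedim}$ enters the first argument of the min. I would first carry out the argument with the crude $M$ above. If the constants do not line up, I would then replace $\norm{x(s)-x_0}_2\le a+Ms$ by its Grönwall-refined form to reproduce the precise constant.
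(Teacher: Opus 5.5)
Your comparison argument is correct and follows the paper's route. You use the same maximizing $x_0$, the same $\tilde f$-trajectory from $x_0$, and a short-time comparison that yields the factor $1/(2+LT)$. The only variation is technical. You linearize both fields at $x_0$, compare full vectors at $t_0$, and optimize over $a=\|x(0)-x_0\|_2$. The paper instead keeps $f(x)-f(\tilde x)$ as a Lipschitz term in $y$ and uses sign persistence of one coordinate $h_j(\tilde x(t))$. Your version needs no coordinate selection and could even drop the $\sqrt d$. What you have actually proved is the stated inequality with $M=L(D+KT)e^{LT}+K$ in place of $P:=2DLTe^{LT}+KLT^2e^{LT}+2K$.

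The gap is the final constant-matching step. Replacing $M$ by $P$ is an enlargement only if $M\le P$. That holds for $T\ge 1$, but for small $T$ one has $M\approx LD+K$ while $P\approx 2K$. No Gr\"onwall refinement can close this, since $\|\dot{\tilde x}(0)\|_2=\|\tilde f(x_0)\|_2$ can be of order $LD$. In fact the inequality as stated is false for small $T$.

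Take $d=1$, $L=2$, $D=1$, initial set $[-1,1]$, $\tilde f(x)=x$, and $f=\tilde f+h$ with $h(x)=\max\{\eta-|x-1|,0\}$, $0<\eta<1$. Both fields are $2$-Lipschitz and vanish at $0$, so they are admissible for every $K>0$. Also $\|f-\tilde f\|_{L^\infty([-1,1])}=\eta$. For solutions $x,\tilde x$ from a common $z\in[-1,1]$, the difference $y=x-\tilde x$ solves $\dot y=y+h(x)$ with $y(0)=0$. Since $x$ is monotone and $\dot x\ge 1-\eta$ whenever $x\in\operatorname{supp}h$, you get $\int_0^T h(x(s))\,ds\le \eta^2/(1-\eta)$. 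Hence $0\le y(t)\le e^{T}\eta^2/(1-\eta)$, and the Hausdorff distance is at most $e^{T}\eta^2/(1-\eta)$. With $T=10^{-3}$, $K=10^{-6}$, $\eta=10^{-5}$, the right-hand side of the lemma is about $1.6\cdot10^{-9}$. The Hausdorff distance, however, is at most about $1.0\cdot 10^{-10}$.

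The paper's proof contains the matching slip in the step
\[
(e^{Lt}-1)\|x_0\|_2+(e^{Lt}-1-Lt)\tfrac{K}{L}+Kt\le\Big(DLTe^{LT}+\tfrac12 KLT^2e^{LT}+K\Big)t .
\]
This is off by a factor $T$ in the first two terms; it is not even dimensionally consistent. The correct bound is $\big(DLe^{LT}+\tfrac12 KLTe^{LT}+K\big)t$. With this correction, the constant in the lemma becomes $2DLe^{LT}+KLTe^{LT}+2K\ge M$. So your argument proves the corrected lemma, even with the sharper constant $M$, and it proves the stated version whenever $T\ge 1$. Theorem~\ref{th:dist_lu_LipODE}, which only asserts the existence of constants, is unaffected.
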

\begin{proof}
    First, we pick $x_0 \in \compactInit$ such that $$\norm{f(x_0)-\tilde{f}(x_0)}_2 = \normFun[2]{f-\tilde{f}}{L^{\infty}(\compactInit)}$$
    thanks to the continuity of $f,\tilde{f}$ and compactness of $\compactInit$. Now, pick $\tilde{x}\in \behavior[\compactInit]{\diffeq[\tilde{f}]}$ such that $\tilde{x}(0)=x_0$. We then have 
    \begin{equation}
        \tilde{x}(t) = x_0+\int_{0}^{t} \tilde{f}(\tilde{x}(s))ds.
    \end{equation}
    Taking the norm, noting that $\operatorname{Lip}(f)\leq L,
    \norm{f(0)}_2\leq K$, we have 
    \begin{equation}\label{eq:proof_x_x0}
        \norm{\tilde{x}(t)-x_0}_2\leq L\int_{0}^{t} \norm{\tilde{x}(s)}_2ds + Kt.
    \end{equation}
    Defining 
    \begin{equation}\label{eq:proof_y}
        y(t) = e^{-Lt}\int_{0}^{t} \norm{\tilde{x}(s)}_2ds,
    \end{equation}
    we have that $y(0)=0$ and 
    \begin{equation}
        \dot{y}(t)=e^{-Lt}\left(\norm{\tilde{x}(t)}_2-L\int_{0}^{t} \norm{\tilde{x}(s)}_2ds\right)\leq e^{-Lt}\left(\norm{x_0}_2+Kt\right).
    \end{equation}
    Integrating from $0$ to $t$, we obtain 
    \begin{equation}
        y(t) \leq \frac{1-e^{-Lt}}{L}\norm{x_0}_2  + \frac{K\left(1-e^{-Lt}(1+Lt)\right)}{L^2},
    \end{equation}
    which, together with \eqref{eq:proof_x_x0} and \eqref{eq:proof_y}, implies that 
    \begin{equation}
    \begin{aligned}
        \norm{\tilde{x}(t)-x_0}_2&\leq Le^{Lt}\left(\frac{1-e^{-Lt}}{L}\norm{x_0}_2  + \frac{K\left(1-e^{-Lt}(1+Lt)\right)}{L^2}\right) +Kt\\
        &\leq  (e^{Lt}-1)\norm{{x}_0}_2 + (e^{Lt}-1-Lt)\frac{K}{L} + Kt\\
        &\leq \left(D L\horizon e^{L\horizon}  + \frac{KL\horizon^2e^{L\horizon}}{2} + K\right)t.
    \end{aligned}
    \end{equation}
    Setting 
    \begin{equation}
        t_0\defeq \min\left\{\frac{\normFun[2]{f-\tilde{f}}{L^{\infty}(\compactInit)}}{2\sqrt{\odedim}L\left(2D L\horizon e^{L\horizon}  + KL\horizon^2 e^{L\horizon} + 2K\right)}, \horizon\right\},
    \end{equation}
    we obtain 
    \begin{equation}
        \norm{\tilde{x}(t)-x_0}_2\leq \frac{\normFun[2]{f-\tilde{f}}{L^{\infty}(\compactInit)}}{4\sqrt{\odedim}L}, \quad \text{for }t\in [0,t_0].
    \end{equation}
    Define $h\defeq f-\tilde{f}$. Noting that $x_0$ is picked such that $\norm{h(x_0)}_2 = \normFun[2]{f-\tilde{f}}{L^{\infty}(\compactInit)}$, there must exits $j\in \{1,2,\dots,\odedim\}$ satisfying 
    \begin{equation}
        \left| (h(x_0))_j \right| \geq \frac{\normFun[2]{f-\tilde{f}}{L^{\infty}(\compactInit)}}{\sqrt{\odedim}}.
    \end{equation}
    Since $\operatorname{Lip}(h) \leq 2L$, we have 
    \begin{equation*}
        \left| (h(\tilde{x}(t)))_j-(h(x_0))_j \right|\leq \norm{h(\tilde{x}(t))- h(x_0)}_2\leq 2L\norm{\tilde{x}(t)-x_0}_2,
    \end{equation*}
    and thus
        \begin{equation}
    \begin{aligned}\label{eq:hxt_lower_ext}
        \left| (h(\tilde{x}(t)))_j\right|&\geq \left| (h(x_0))_j \right| - 2L\norm{\tilde{x}(t)-x_0}_2\\
        &\geq \frac{\normFun[2]{f-\tilde{f}}{L^{\infty}(\compactInit)}}{\sqrt{\odedim}} -2L\frac{\normFun[2]{f-\tilde{f}}{L^{\infty}(\compactInit)}}{4\sqrt{\odedim}L}\\
        &= \frac{\normFun[2]{f-\tilde{f}}{L^{\infty}(\compactInit)}}{2\sqrt{\odedim}},\quad \text{for }t\in [0,t_0].
    \end{aligned}    
    \end{equation}
    Now, for every $x\in \behavior[\compactInit]{\diffeq[f]}$, define $y = x-\tilde{x}$. Then, $y$ satisfy the ODE
    \begin{equation}
        \dot{y}(t) = f(y(t)+\tilde{x}(t)) - \tilde{f}(\tilde{x}(t)).
    \end{equation}
    Integrating from $0$ to $t$, we have 
    \begin{equation}
        y(t) = y(0) + \int_{0}^{t}\left(f(y(s)+\tilde{x}(s)) - f(\tilde{x}(s))\right)ds + \int_{0}^{t}\left(f(\tilde{x}(s)) - \tilde{f}(\tilde{x}(s))\right)ds.
    \end{equation}
    Taking the absolute value, applying triangle inequality, and noticing $\operatorname{Lip}(f)\leq L$ gives 
    \begin{equation}
    \begin{aligned}
        \normFun[2]{y}{L^\infty}&\geq \norm{y(t)}_2\geq \norm{\int_{0}^{t}\left(f(\tilde{x}(s)) - \tilde{f}(\tilde{x}(s))\right)ds}_2 - \norm{y(0)}_2 - L\int_{0}^{t}\norm{y(s)}_2ds\\
        &\geq \norm{\int_{0}^{t}\left(f(\tilde{x}(s)) - \tilde{f}(\tilde{x}(s))\right)ds}_2- (1+LT)\normFun[2]{y}{L^\infty},
    \end{aligned}
    \end{equation}
    which in turn gives 
    \begin{equation}\label{eq:proof_y_inf_lower}
        \normFun[2]{y}{L^\infty} \geq  \frac{1}{2+LT}\norm{\int_{0}^{t}h(\tilde{x}(s))ds}_2,\quad \text{for all }t\in[0,\horizon].
    \end{equation}
    By \eqref{eq:hxt_lower_ext}, since $(h\circ \tilde{x})_j$ is continuous, $(h(\tilde{x}(t)))_j$ must remain positive or negative for $t\in [0,t_0]$. In either case, we have 
    \begin{equation*}
    \begin{aligned}
         \normFun[2]{x-\tilde{x}}{L^\infty}&=\normFun[2]{y}{L^\infty} \overset{\eqref{eq:proof_y_inf_lower},t=t_0}{\geq} \frac{1}{2+L\horizon}\norm{\int_{0}^{t_0}h(\tilde{x}(s))ds}_2\\
         &\geq  \frac{1}{2+L\horizon} \left|\left(\int_{0}^{t_0}h(\tilde{x}(s))ds\right)_j\right|\\
         \overset{\eqref{eq:hxt_lower_ext}}&{\geq} \frac{t_0}{2+L\horizon}\frac{\normFun[2]{f-\tilde{f}}{L^{\infty}(\compactInit)}}{2\sqrt{\odedim}} \\
         & \geq \min\left\{\frac{\normFun[2]{f-\tilde{f}}{L^{\infty}(\compactInit)}}{2\sqrt{\odedim}L\left(2D L\horizon e^{L\horizon}  + KL\horizon^2 e^{L\horizon} + 2K\right)}, \horizon\right\}\frac{\normFun[2]{f-\tilde{f}}{L^{\infty}(\compactInit)}}{2\sqrt{\odedim}(2+L\horizon)} .
    \end{aligned}
    \end{equation*}
    Since this holds for arbitrary $x\in \behavior[\compactInit]{\diffeq[f]}$, we obtain 
    \begin{equation*}
        \inf_{x\in \behavior[\compactInit]{\diffeq[f]}}\normFun[2]{x-\tilde{x}}{L^\infty}\geq \min\left\{\frac{\normFun[2]{f-\tilde{f}}{L^{\infty}(\compactInit)}}{2\sqrt{\odedim}L\left(2D L\horizon e^{L\horizon}  + KL\horizon^2 e^{L\horizon} + 2K\right)}, \horizon\right\}\frac{\normFun[2]{f-\tilde{f}}{L^{\infty}(\compactInit)}}{2\sqrt{\odedim}(2+L\horizon)}.
    \end{equation*}
    Taking the supremum over $\behavior[\compactInit]{\diffeq[\tilde{f}]}$ and by symmetry, we arrive at the desired result.
\end{proof}

In summary, we get the following bounds on the Hausdorff distance between Lipschitz ODEs.
\begin{theorem}\label{th:dist_lu_LipODE}
Fix an arbitrary $D>0$ and a compact set $\compactInit\subset \ball{D} \defeq  \{x\in \R^{\odedim}\mid \norm{x}_2\leq D\}$. Let $\widehat{\compactInit} \defeq \{x\in \R^{\odedim}\mid \norm{x}_2\leq (K\horizon+D) e^{L\horizon}\}$. For every two Lipschitz ODEs $\diffeq[f],\diffeq[\tilde{f}]\in \odeClass{\lip[\R^{\odedim},\R^\odedim ]{L,K}}$ with corresponding $f, \tilde{f}\in \lip[\R^{\odedim},\R^\odedim ]{L,K}$, we have that 
    \begin{equation}
    \begin{aligned}
        \underbrace{\min\left\{C_1\normFun[2]{f-\tilde{f}}{L^{\infty}(\compactInit)}^2, C_2\normFun[2]{f-\tilde{f}}{L^{\infty}(\compactInit)}\right\}
        \leq}_{\text{Identification stability}}& \hd[\compactInit][\normFun{\cdot}{L^{\infty}}]{\diffeq[f]}{\diffeq[\tilde{f}]}\\&\underbrace{\leq C_3\normFun[2]{f-\tilde{f}}{L^{\infty}(\widehat{\compactInit})}}_{\text{Structural stability}},
    \end{aligned}
    \end{equation}
    where $C_1, C_2,C_3>0$ are constants that only depend on $\horizon,L,D,K,\odedim$. 
\end{theorem}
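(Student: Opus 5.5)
The theorem combines the two lemmas just proved, so the plan is to read off the constants from each and check that they depend only on $\horizon,L,D,K,\odedim$.

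\textbf{Upper bound.} This is Lemma \ref{lm:lip_ode_upper} verbatim. It gives
\begin{equation*}
\hd[\compactInit][\normFun{\cdot}{L^{\infty}}]{\diffeq[f]}{\diffeq[\tilde{f}]}\leq \horizon e^{L\horizon}\normFun[2]{f-\tilde{f}}{L^{\infty}(\widehat{\compactInit})}.
\end{equation*}
So we set $C_3\defeq \horizon e^{L\horizon}$, which depends only on $\horizon$ and $L$.

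\textbf{Lower bound.} Lemma \ref{lm:dist_lower_LipODE_local} bounds the Hausdorff distance from below by $\min\{a\epsilon,\horizon\}\, b\epsilon$, where $\epsilon\defeq\normFun[2]{f-\tilde{f}}{L^{\infty}(\compactInit)}$ and
\begin{equation*}
a\defeq \frac{1}{2\sqrt{\odedim}L\left(2DL\horizon e^{L\horizon}+KL\horizon^2e^{L\horizon}+2K\right)},\qquad b\defeq\frac{1}{2\sqrt{\odedim}(2+L\horizon)}.
\end{equation*}
Since $b\epsilon\geq 0$, we can distribute the factor across the minimum:
\begin{equation*}
\min\{a\epsilon,\horizon\}\, b\epsilon=\min\{ab\,\epsilon^2,\ \horizon b\,\epsilon\}.
\end{equation*}
We then set $C_1\defeq ab$ and $C_2\defeq \horizon b$. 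Both constants are positive and depend only on $\horizon,L,D,K,\odedim$. The case $\epsilon=0$ is trivial, since both sides vanish. Combining this with the upper bound gives the theorem.

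The argument is pure bookkeeping; the only point to check is that distributing the nonnegative factor over the minimum is legitimate.
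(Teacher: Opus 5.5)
Your proposal is correct and matches the paper, which presents this theorem as a direct summary of Lemma \ref{lm:lip_ode_upper}, giving $C_3=\horizon e^{L\horizon}$, and Lemma \ref{lm:dist_lower_LipODE_local}, distributing the nonnegative factor over the minimum exactly as you do. One caveat is inherited from the paper: in the proof of Lemma \ref{lm:dist_lower_LipODE_local}, the bound of $(e^{Lt}-1)\norm{x_0}_2+(e^{Lt}-1-Lt)K/L+Kt$ by $\bigl(DL\horizon e^{L\horizon}+\tfrac12 KL\horizon^2e^{L\horizon}+K\bigr)t$ is justified only when $\horizon\geq 1$ (the correct bound is $\bigl(DLe^{L\horizon}+\tfrac12 KL\horizon e^{L\horizon}+K\bigr)t$), so your explicit $C_1=ab$ and $C_2=\horizon b$ may need this corrected denominator in $a$; since the theorem only asserts the existence of constants depending on $\horizon,L,D,K,\odedim$, your argument is otherwise unaffected.
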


The lower bound in Theorem \ref{th:dist_lu_LipODE} shows that if we are given information on the solution data with initial values in $\compactInit$, we are able to uniquely and stably identify the structure equation $f$ within $\compactInit$. Now, a natural question is, can we identify $f$ outside $\compactInit$ if we are only given solution information with initial values in $\compactInit$? The following lemma gives a negative answer when $\odedim=1$.

\begin{highlight}
\begin{lemma}
    Let $\odedim=1$. Given a compact set $\compactInit\subset \R$, for arbitrarily fixed $0<\delta<2K$, there exist $f,\tilde{f}\in\lip[\R,\R]{L,K}$, such that 
    $$|f(x)-\tilde{f}(x)|=\delta/2 ~\text{  for all  } ~x\in \left\{z\mid \inf_{w\in\compactInit}|z-w|> \delta\right\}\eqdef V(\compactInit,\delta),$$ 
    while the corresponding $\diffeq[f],\diffeq[\tilde{f}]\in \odeClass{\lip[\R,\R]{L,K}}$ satisfy $$\hd[\compactInit][\normFun{\cdot}{L^{\infty}}]{\diffeq[f]}{\diffeq[\tilde{f}]}=0.$$
\end{lemma}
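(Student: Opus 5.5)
The plan is to make every point of $\compactInit$ an equilibrium of both equations. Then every solution with initial value in $\compactInit$ is constant and is the same for $\diffeq[f]$ and $\diffeq[\tilde f]$, so the two behaviors coincide. Meanwhile $f$ and $\tilde f$ are chosen to separate quickly as one moves away from $\compactInit$. To do this, let $d(x)\defeq\inf_{w\in\compactInit}|x-w|$ be the distance to $\compactInit$. This function is $1$-Lipschitz, vanishes exactly on $\compactInit$ (which is closed), and satisfies $d(x)>\delta$ exactly on $V(\compactInit,\delta)$.

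The construction is as follows.
\begin{equation*}
g(x)\defeq \min\left\{\tfrac{\delta}{4},\, L\, d(x)\right\},\qquad f\defeq g,\qquad \tilde f\defeq -g .
\end{equation*}
The steps, in order, are these.

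\textbf{Step 1: membership in the class.} Since $d$ is $1$-Lipschitz, $\operatorname{Lip}(g)\le L$, and the same holds for $-g$. Also $|f(0)|=|\tilde f(0)|\le\delta/4<K$ because $\delta<2K$. Hence $f,\tilde f\in\lip[\R,\R]{L,K}$.

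\textbf{Step 2: the behaviors coincide.} Take any $x_0\in\compactInit$. Then $f(x_0)=\tilde f(x_0)=0$, so the constant function $x\equiv x_0$ solves both IVPs. The right-hand sides are Lipschitz, so Picard--Lindelöf uniqueness shows this is the only solution of each IVP. Therefore $\behavior[\compactInit]{\diffeq[f]}=\behavior[\compactInit]{\diffeq[\tilde f]}=\{x\equiv x_0 : x_0\in\compactInit\}$. Both solution variations in \eqref{eq:diff_eq_var} vanish, and so $\hd[\compactInit][\normFun{\cdot}{L^{\infty}}]{\diffeq[f]}{\diffeq[\tilde f]}=0$.

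\textbf{Step 3: separation on $V(\compactInit,\delta)$.} We have $|f(x)-\tilde f(x)|=2g(x)$. This equals $\delta/2$ as soon as $L\,d(x)\ge\delta/4$, that is, as soon as $d(x)\ge\delta/(4L)$. If $L\ge 1/4$, then every $x\in V(\compactInit,\delta)$ has $d(x)>\delta\ge\delta/(4L)$, so $|f(x)-\tilde f(x)|=\delta/2$ there, as claimed.

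The main obstacle is the Lipschitz budget in Step 3. The difference $f-\tilde f$ vanishes on $\compactInit$ and is $2L$-Lipschitz, so it can reach $\delta/2$ within distance $\delta$ of $\compactInit$ only if $2L\delta\ge\delta/2$. In general this forces $L\ge1/4$, unless one enlarges the buffer and uses $V(\compactInit,\delta/(4L))$ instead of $V(\compactInit,\delta)$. I would first check whether the statement implicitly intends $L$ to be normalised, for example $L\ge 1$. If not, I would state the result under the condition $L\ge 1/4$, or with the buffer width $\max\{\delta,\delta/(4L)\}$. The equilibrium trick in Step 2 is unaffected by this choice.
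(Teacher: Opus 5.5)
Your argument is correct under the hypothesis $L\ge 1/4$ that you flag. It uses the same mechanism as the paper: make every point of $\compactInit$ an equilibrium of both fields, so that, by uniqueness for Lipschitz IVPs, both behaviours are exactly the constant trajectories $x\equiv x_0$ with $x_0\in\compactInit$.

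Where you differ is in the choice of the pair. The paper fixes $f\equiv 0$ and takes $\tilde f=\sup_{y\in V(\compactInit,\delta)}H_y$. This is an envelope of hats of height $\delta/2$ and half-width $\delta/(2L)$ centred on $V(\compactInit,\delta)$, i.e.\ $\tilde f(x)=\max\{0,\delta/2-L\,\operatorname{dist}(x,V(\compactInit,\delta))\}$. You instead truncate the distance to $\compactInit$ and split it symmetrically as $\pm g$.

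Your version is more explicit and needs only $\delta<4K$. Because $f-\tilde f=2g$ draws on the Lipschitz budget of both functions, it works exactly when $L\ge 1/4$, and this threshold is sharp. By Lemma \ref{lm:dist_lower_LipODE_local}, zero Hausdorff distance forces $f=\tilde f$ on $\compactInit$, so $|f-\tilde f|\le 2L\,d$. That is incompatible with $|f-\tilde f|=\delta/2$ at points with $\delta<d(x)<\delta/(4L)$, which exist when $L<1/4$. A one-sided construction with $f\equiv 0$ cannot do better than $L\ge 1/2$, since $\tilde f$ alone must rise from $0$ on $\compactInit$ to $\delta/2$ within distance $\delta$.

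The hidden hypothesis you found is also present, unstated and in stronger form, in the paper's proof. Its claim that $\tilde f$ vanishes on $\{d\le\delta/2\}$ needs the hat half-width $\delta/(2L)$ to be at most $\delta/2$, i.e.\ $L\ge 1$. Vanishing on $\compactInit$ itself, which is all the argument actually needs, holds for $L\ge 1/2$. So the statement as written is false for $L<1/4$. Your fix is the right one: either assume $L\ge 1/4$, or widen the buffer to $\max\{\delta,\delta/(4L)\}$.
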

\end{highlight}
\begin{proof}
Fix $f\equiv 0$. Define the hat function  
    \begin{equation}
        H(x)=\left\{\begin{array}{cc}
            1-Lx &  x\in [0,1/L]\\
            1+Lx &  x\in [-1/L,0]\\
            0 & \text{else}
        \end{array}\right. 
    \end{equation}
    and consider the translated and dilated version 
    \begin{equation}
        H_{y}(x) = \frac{\delta}{2}H\left(\frac{2}{\delta}(x-y)\right).
    \end{equation}
    Then, define 
    \begin{equation}
        \tilde{f}(x) = \sup_{y\in V(\compactInit,\delta)} H_y(x).
    \end{equation}
    It is straightforward that 
    \begin{equation}
        \begin{aligned}
            &|\tilde{f}(0)| = \left|\sup_{y\in V(\compactInit,\delta)} H_y(0)\right|\leq \frac{\delta}{2}\leq K,\\
            &\operatorname{Lip}(\tilde{f}) \leq \sup_{y\in V(\compactInit,\delta)} \operatorname{Lip}(H_y) \leq  L.
        \end{aligned}
    \end{equation}
    Moreover, we see that for all $x\in \{z\mid \inf_{w\in\compactInit}|z-w|\leq  \delta/2\}$, $\tilde{f}(x)=f(x)=0$, which implies for all $x\in \behavior[\compactInit]{\diffeq[f]}, \tilde{x}\in \behavior[\compactInit]{\diffeq[\tilde{f}]}$, we have $x=\tilde{x}\equiv 0$ (due to uniqueness of solutions to Lipschitz ODEs, \cite[][Theorem 6.I, Theorem 10.VII]{walterOrdinaryDifferentialEquations1998}), and thus
    \begin{equation*}
    \hd[\compactInit][\normFun{\cdot}{L^{\infty}}]{\diffeq[f]}{\diffeq[\tilde{f}]}=0.
    \end{equation*}
    In the meanwhile, for all $x\in V(\compactInit,\delta)$, we have
    \begin{equation}
    \begin{aligned}
        |\tilde{f}(x)-f(x)| &= \left|\sup_{y\in V(\compactInit,\delta)} H_y(x)\right|\\
        &= \left|H_x(x)\right| = \frac{\delta}{2}.    
    \end{aligned}
    \end{equation}
\end{proof}

\subsection{Higher-order linear and Lipschitz ODEs}\label{sec:higher_order}
In this section, we generalize the results from Section \ref{sec:1st_lin_ode} and \ref{sec:1st_lip_ode} to higher-order linear and Lipschitz ODEs. This generalization is straightforward via state-space reduction, i.e., if we are given a higher-order ODE of the form
\begin{equation}\label{eq:mth_ode}
    x^{(\odeorder)} = f\left(x^{(\odeorder-1)},\dots,x^{(1)},x\right),\quad x(t)\in \R^\odedim,
\end{equation}
we can apply state transformation $y = \begin{pmatrix}
    x^T&(x^{(1)})^T&\dots&(x^{(\odeorder-1)})^T\end{pmatrix}^T$ such that $y$ satisfy the first-order ODE
\begin{equation}\label{eq:1st_y_ode}
\begin{aligned}
    &\dot{y} = F\left(y\right), \quad y = \begin{pmatrix}
        y_1^T&y_2^T&\dots&y_\odeorder^T
    \end{pmatrix}^T\in \R^{\odeorder\odedim} \\
    \text{where }~&F(y) = \begin{pmatrix}
        y_2^T&y_3^T&\dots&y_\odeorder^T&\left(f(y_\odeorder,y_{\odeorder-1},\dots, y_1)\right)^T
    \end{pmatrix}^T.
\end{aligned}
\end{equation}
\textcolor{cc}{First, we generalize the results from Theorem \ref{th:dist_lu_linearODE_const} to higher-order linear ODEs.}
\begin{theorem}\label{th:dist_lu_linearODE_high}
    Fix an arbitrary $D>0$ and consider $\ball{D}\defeq \{y\in \R^{\odeorder\odedim}\mid \norm{y}_2\leq D\}$. For every two $\odeorder$-th order linear ODEs $\diffeq[A],\diffeq[\tilde{A}]\in \odeClass[\odeorder ]{\lin[\R^{\odeorder \odedim},\R^\odedim ]{K}}$ with corresponding
    \begin{equation}
    \begin{aligned}
        &A=\begin{pmatrix}
            A_{\odeorder -1}&\dots&A_1&A_0
        \end{pmatrix} \in \R^{\odeorder\odedim\times \odedim},\\
        &\tilde{A}=\begin{pmatrix}
            \tilde{A}_{\odeorder -1}&\dots&\tilde{A}_1&\tilde{A}_0
        \end{pmatrix} \in \R^{\odeorder\odedim\times \odedim},       
    \end{aligned}
    \end{equation}
    we have that 
    \begin{equation}
        \underbrace{c\sum_{i=0}^{\odeorder-1}\norm{A_i-\tilde{A}_i}_2\leq}_{\text{Identification stability}} \hd[\ball{D}][\normFun{\cdot}{C^{\odeorder-1}}]{\diffeq[A]}{\diffeq[\tilde{A}]}\underbrace{\leq C\sum_{i=0}^{\odeorder-1}\norm{A_i-\tilde{A}_i}_2}_{\text{Structural stability}},
    \end{equation}
    where $c,C>0$ are constants that only depend on $\horizon,K,D,\odeorder,\odedim$.
\end{theorem}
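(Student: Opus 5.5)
We reduce to the first-order case via the state-space transformation \eqref{eq:1st_y_ode}. For the linear ODE $\diffeq[A]$ the state $y=(x^T,(x^{(1)})^T,\dots,(x^{(\odeorder-1)})^T)^T$ satisfies $\dot y=\mathcal{A}y$ with the block companion matrix
\begin{equation*}
\mathcal{A}=\begin{pmatrix} 0&\indmat{\odedim}&&\\ &\ddots&\ddots&\\ &&0&\indmat{\odedim}\\ A_0&A_1&\cdots&A_{\odeorder-1}\end{pmatrix}\in\R^{\odeorder\odedim\times\odeorder\odedim},
\end{equation*}
with the last block row ordered to match how $y_1,\dots,y_\odeorder$ enter $f$. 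The companion matrix $\tilde{\mathcal{A}}$ of $\diffeq[\tilde A]$ is defined in the same way. The map $x\mapsto y$ is a bijection between $\behavior[\ball{D}]{\diffeq[A]}$ and the behavior of $\dot y=\mathcal{A}y$ with initial set $\ball{D}\subset\R^{\odeorder\odedim}$. The norms $\normFun{x}{C^{\odeorder-1}}$ and $\normFun[2]{y}{L^\infty}$ are equivalent up to constants depending only on $\odeorder$: $\normFun{x}{C^{\odeorder-1}}=\sum_i\sup_t\norm{x^{(i)}(t)}_2$ lies between $\sup_t\norm{y(t)}_2$ and $\odeorder\sup_t\norm{y(t)}_2$. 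Hence the Hausdorff distance in the theorem is comparable, up to a factor $\odeorder$, to the first-order Hausdorff distance between $\dot y=\mathcal{A}y$ and $\dot y=\tilde{\mathcal{A}}y$ with initial set $\ball{D}$.

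Next, we check that both companion matrices belong to a bounded class. We have $\norm{\mathcal{A}}_2\le \norm{\text{shift part}}_2+\norm{(A_0\ \cdots\ A_{\odeorder-1})}_2\le 1+\odeorder K\eqdef K'$, since the shift part has norm at most $1$ and the last block row has norm at most $\sum_i\norm{A_i}_2$. So $\diffeq[\mathcal{A}],\diffeq[\tilde{\mathcal{A}}]\in\odeClass{\lin[\R^{\odeorder\odedim},\R^{\odeorder\odedim}]{K'}}$. Theorem \ref{th:dist_lu_linearODE_const} (that is, Lemmas \ref{lm:upper_lin_ode} and \ref{lm:lower_lin_eq}) then gives constants $c',C'$ depending only on $\horizon,K',D$ such that
\begin{equation*}
c'\norm{\mathcal{A}-\tilde{\mathcal{A}}}_2\le \hd[\ball{D}][\normFun{\cdot}{L^\infty}]{\diffeq[\mathcal{A}]}{\diffeq[\tilde{\mathcal{A}}]}\le C'\norm{\mathcal{A}-\tilde{\mathcal{A}}}_2 .
\end{equation*}

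Finally, we relate $\norm{\mathcal{A}-\tilde{\mathcal{A}}}_2$ to $\sum_i\norm{A_i-\tilde A_i}_2$. The difference $\mathcal{A}-\tilde{\mathcal{A}}$ is zero except in its last block row, which is $(A_0-\tilde A_0,\dots,A_{\odeorder-1}-\tilde A_{\odeorder-1})$. For the upper bound, the spectral norm of a block row is at most the sum of the block norms. For the lower bound, the norm of a block row dominates the norm of each block, since each block is obtained by restricting to a subspace. Thus
\begin{equation*}
\max_i\norm{A_i-\tilde A_i}_2\le\norm{\mathcal{A}-\tilde{\mathcal{A}}}_2\le\sum_i\norm{A_i-\tilde A_i}_2,
\end{equation*}
and $\max_i\ge \frac1\odeorder\sum_i$. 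Combining this with the two previous steps gives the theorem with $c=c'/\odeorder^2$, absorbing the factor $\odeorder$ from the norm equivalence, and $C=\odeorder C'$.

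The main subtlety is the norm comparison between $C^{\odeorder-1}$ and the state-space $L^\infty$ norm, which must be applied consistently in both directions of the Hausdorff distance. It goes through because the correspondence $x\leftrightarrow y$ is a bijection of behaviors that preserves the initial-condition set, so the $\sup\inf$ structure transfers directly. The dependence of the constants on $\odedim$ enters only if one chooses to bound block norms by Frobenius norms, which is optional.
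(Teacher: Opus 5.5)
Your proposal is correct and is essentially the paper's own proof. It reduces to the block companion system in dimension $\odeorder\odedim$, compares the $C^{\odeorder-1}$ norm with the state-space $L^\infty$ norm up to a factor $\odeorder$ through the bijection of behaviors, bounds the companion norm by a constant depending on $K$ and $\odeorder$, and applies Theorem \ref{th:dist_lu_linearODE_const}.

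Two small points. Your bound $\norm{\mathcal{A}-\tilde{\mathcal{A}}}_2\ge\max_i\norm{A_i-\tilde{A}_i}_2$, obtained by restricting to one block, is cleaner than the paper's detour through Frobenius norms, which brings in a $\odedim$-dependence. Also, in the lower direction the $C^{\odeorder-1}$ distance dominates the state-space distance with constant one, so $c=c'/\odeorder$ already suffices; your $c'/\odeorder^2$ is still valid, just not needed.
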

\begin{proof}
    See Appendix \ref{app:dist_lu_linearODE_high}.
\end{proof}
\textcolor{cc}{In a similar spirit, we can generalize the results from Theorem \ref{th:dist_lu_LipODE} to higher-order Lipschitz ODEs.}
\begin{theorem}\label{th:dist_lu_lipODE_high}
    Fix an arbitrary $D>0$ and a compact set $\compactInit\subset \ball{D} \defeq  \{x\in \R^{\odeorder\odedim}\mid \norm{y}_2\leq D\}$. Let $\widehat{\compactInit} \defeq \{x\in \R^{\odedim}\mid \norm{x}_2\leq (K\horizon+D) e^{\sqrt{L^2+1}\horizon}\}$. For every two $\odeorder$-th order Lipschitz ODEs $\diffeq[f],\diffeq[\tilde{f}]\in \odeClass[\odeorder]{\lip[\R^{\odeorder\odedim},\R^\odedim ]{L,K}}$ with corresponding $f, \tilde{f}\in \lip[\R^{\odeorder\odedim},\R^\odedim ]{L,K}$, we have that 
    \begin{equation}
    \begin{aligned}
        \underbrace{\min\left\{C_1\normFun[2]{f-\tilde{f}}{L^{\infty}(\compactInit)}^2, C_2\normFun[2]{f-\tilde{f}}{L^{\infty}(\compactInit)}\right\}
        \leq}_{\text{Identification stability}}& \hd[\compactInit][\normFun{\cdot}{C^{\odeorder-1}}]{\diffeq[f]}{\diffeq[\tilde{f}]}\\&\underbrace{\leq C_3\normFun[2]{f-\tilde{f}}{L^{\infty}(\widehat{\compactInit})}}_{\text{Structural stability}},
    \end{aligned}
    \end{equation}
    where $C_1, C_2,C_3>0$ are constants that only depend on $\horizon,L,D,K,\odeorder,\odedim$. 
\end{theorem}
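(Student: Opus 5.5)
The plan is to reduce to the first-order case via the state-space transformation \eqref{eq:1st_y_ode} and then rerun the arguments of Lemma \ref{lm:lip_ode_upper} and Lemma \ref{lm:dist_lower_LipODE_local} in the lifted variable $y\in\R^{\odeorder\odedim}$. First I note that for $x\in (C^{\odeorder}([0,\horizon]))^\odedim$ the norm $\normFun[2]{x}{C^{\odeorder-1}}$ is equivalent, with constants depending only on $\odeorder$, to $\normFun[2]{y}{L^\infty}$ where $y=(x^T,(x^{(1)})^T,\dots,(x^{(\odeorder-1)})^T)^T$. Hence $\hd[\compactInit][\normFun{\cdot}{C^{\odeorder-1}}]{\diffeq[f]}{\diffeq[\tilde{f}]}$ is comparable to the $L^\infty$ Hausdorff distance between the first-order systems $\dot y=F(y)$ and $\dot y=\tilde F(y)$ with initial set $\compactInit$. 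The lifted field satisfies $F-\tilde F=(0,\dots,0,f-\tilde f)$, so $\normFun[2]{F-\tilde F}{L^\infty(S)}=\normFun[2]{f-\tilde f}{L^\infty(S)}$ on any set $S$ (up to the reordering of blocks in the argument of $f$, which is an isometry). Moreover, $\norm{F(y)-F(z)}_2^2\le \norm{y-z}_2^2+L^2\norm{y-z}_2^2$, so $\operatorname{Lip}(F)\le\sqrt{L^2+1}$ and $\norm{F(0)}_2=\norm{f(0)}_2\le K$. Thus $F,\tilde F\in\lip[\R^{\odeorder\odedim},\R^{\odeorder\odedim}]{\sqrt{L^2+1},K}$, which matches the radius $(K\horizon+D)e^{\sqrt{L^2+1}\horizon}$ appearing in $\widehat{\compactInit}$.

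For the upper bound I would apply Lemma \ref{lm:lip_ode_upper} directly to $F,\tilde F$ with Lipschitz constant $\sqrt{L^2+1}$. This gives $\horizon e^{\sqrt{L^2+1}\horizon}\normFun[2]{F-\tilde F}{L^\infty(\widehat{\compactInit})}$, which equals the same expression with $f-\tilde f$; here $\widehat{\compactInit}$ should be read as the ball in $\R^{\odeorder\odedim}$. Converting the lifted $L^\infty$ norm back to $C^{\odeorder-1}$ costs a factor of at most $\sqrt{\odeorder}$, since the sum of $\odeorder$ sup-norms of blocks is bounded by $\sqrt{\odeorder}$ times the sup of the Euclidean norm.

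For the lower bound I would invoke Lemma \ref{lm:dist_lower_LipODE_local} for $F,\tilde F$ in dimension $\odeorder\odedim$ with Lipschitz constant $\sqrt{L^2+1}$. The lower bound it gives is stated for $\normFun[2]{F-\tilde F}{L^\infty(\compactInit)}=\normFun[2]{f-\tilde f}{L^\infty(\compactInit)}$. The $\min\{a\,\|\cdot\|,\horizon\}\cdot b\,\|\cdot\|$ structure then splits into the $\min\{C_1\|\cdot\|^2,C_2\|\cdot\|\}$ form. Finally, the lifted $L^\infty$ norm is bounded above by the $C^{\odeorder-1}$ norm, so the lower bound transfers with constant $1$.

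The main obstacle is that the Hausdorff distance in the lifted space is not literally a Hausdorff distance on $C^{\odeorder-1}$ functions unless the correspondence between solutions and lifted trajectories is bijective. It is, because the initial data in $\compactInit$ determine $(x(0),\dots,x^{(\odeorder-1)}(0))$ exactly and solutions are unique by the Lipschitz condition. I would check this carefully, including the block-ordering convention in $f(y_\odeorder,\dots,y_1)$ and in the definition of $\compactInit$.
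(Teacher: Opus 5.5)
Your proposal is correct and follows the paper's proof essentially step by step: lift to $\dot y=F(y)$ with $\operatorname{Lip}(F)\le\sqrt{L^2+1}$, $\norm{F(0)}_2\le K$ and $F-\tilde F=(0,\dots,0,f-\tilde f)$; apply Lemma~\ref{lm:lip_ode_upper} and Lemma~\ref{lm:dist_lower_LipODE_local} to the lifted systems; then transfer the bounds back by comparing $\normFun[2]{x}{C^{\odeorder-1}}$ with $\normFun[2]{y}{L^{\infty}}$. One small slip, which does not affect the result since $C_3$ may depend on $\odeorder$: the inequality $\sum_i\sup_t\norm{x^{(i)}(t)}_2\le\sqrt{\odeorder}\,\sup_t\norm{y(t)}_2$ fails in general, because different blocks can attain their suprema at different times; the correct factor is $\odeorder$, as in the paper.
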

\begin{proof}
    See Appendix \ref{app:dist_lu_lipODE_high}.
\end{proof}
One should note that $C^{\odeorder-1}$ norm instead of $L^{\infty}$ in Hausdorff distance for $m$-th order ODEs is necessary for the above results to hold, as shown by the following example.
\begin{example}
    Consider the ODE ($0<\epsilon<1$)
    \begin{equation*}
        \begin{aligned}
            \diffeq^\epsilon:  x^{(3)} + \frac{1}{\epsilon^2}x=0
        \end{aligned}
    \end{equation*}
    with intial conditions $x(0)=x_0$, $x^{(1)}(0)=x_1$, $x^{(2)}(0)=x_2$, $(x_0,x_1,x_2)^T\in [0,1]^3$. Solving this ODE, we obtain 
    \begin{equation*}
        x_\epsilon(t) = (x_0+\epsilon x_2) + \epsilon x_1 \sin\left(\frac{t}{\epsilon}\right) - \epsilon^2 x_2 \cos\left(\frac{t}{\epsilon}\right).
    \end{equation*}
    It is straight forward that 
    \begin{equation*}
        \hd[[0,1]^3][\normFun{\cdot}{L^{\infty}}]{\diffeq^\epsilon}{\diffeq^{2\epsilon}}\leq 5 \epsilon\rightarrow 0, \quad \epsilon\rightarrow0.
    \end{equation*}
    On the other hand, the distance in coefficients diverges to infinity
    \begin{equation*}
        \left| \frac{1}{\epsilon} - \frac{1}{2\epsilon}\right| = \frac{1}{2\epsilon} \rightarrow\infty, \quad \epsilon\rightarrow0.
    \end{equation*}
\end{example}

\subsection{First-order nonlinear ODEs: Hölder ODEs with applications to polynomial ODEs}
In this section, we extend the results to a wider class of ODEs--Hölder ODEs ($\odeClass{\holder{k,\alpha}{L,K}}$ as per Definition \ref{def:general_ode_class}) for dimension $\odedim=1$ and order $\odeorder=1$. Common examples of Hölder ODEs include ODEs with polynomial structures or sublinear structures. These ODEs exhibit behavior that is different from that of Lipschitz ODEs, with some not having a global solution, and some having multiple solutions for one initial value.
\begin{example}[Finite escaping time]\label{ex:FET}
Consider the ODE
\begin{equation*}
    \dot{x}=x^2
\end{equation*}
with $x(0)=x_0>0$. The solution is 
\begin{equation*}
    x(t) = \frac{x_0}{1-x_0t}.
\end{equation*}
and it is only defined for $t\in [0,1/x_0)$
\end{example}

\begin{example}[Multiple solutions]\label{ex:multi_sol}
    Consider the ODE
\begin{equation*}
    \dot{x}=2|x|^{\frac{1}{2}}
\end{equation*}
with $x(0)=0$. For every $t_0\geq 0$, the function 
\begin{equation*}
    x(t) = \left\{\begin{array}{cc}
        0 & t\leq t_0 \\
        (t-t_0)^2 & t>t_0
    \end{array}\right.
\end{equation*}
is a solution.
\end{example}
Despite these irregularities, the Hausdorff distance is still well-defined because it does not rely on the uniqueness or global existence of solutions with respect to initial value problems. Moreover, the identification results in the previous sections carry over to general Hölder ODEs. To this end, we first distinguish between three types of Hölder ODEs based on Definition \ref{def:general_ode_class}:
\begin{itemize}
    \item sublinear Hölder ODEs (Example \ref{ex:multi_sol}): $k=0$, $\alpha\in (0,1)$;
    \item Lipschitz ODEs: $k=0$, $\alpha=1$;
    \item superlinear Hölder ODEs (Example \ref{ex:FET}): $k\in \Nplus$, $\alpha\in (0,1]$.
\end{itemize}
Since the second type has been studied in the previous sections, we focus on sublinear (Section \ref{sec:sub_holder}) and superlinear Hölder ODEs in the following sections. Moreover, as we are more interested in identifying structures based on solution data, we exclusively study the lower bound for Hausdorff distances.

\subsubsection{Sublinear Hölder ODEs}\label{sec:sub_holder}
We first study sublinear Hölder ODEs. In this case, we have the following identification results by slightly adjusting the proof of Lemma \ref{lm:dist_lower_LipODE_local}.
\begin{lemma}\label{lm:dist_lower_sublinear_ODE_local}
    Assume $\alpha\in (0,1)$. Fix an arbitrary $D>0$ and a compact set $\compactInit\subset \ball{D} \defeq  \{x\in \R\mid |x|\leq D\}$. For every two sublinear Hölder ODEs $\diffeq[f],\diffeq[\tilde{f}]\in \odeClass{\holder{0,\alpha}{L,K}}$ with corresponding $f, \tilde{f}\in \holder{0,\alpha}{L,K} $, we have that 
    \begin{equation}
    \begin{aligned}
        &\hd[\compactInit][\normFun{\cdot}{L^{\infty}}]{\diffeq[f]}{\diffeq[\tilde{f}]}\\
        &\geq C\min\left\{\normFun{f-\tilde{f}}{L^{\infty}(\compactInit)},\normFun{f-\tilde{f}}{L^{\infty}(\compactInit)}^{\frac{\alpha+1}{\alpha}},\normFun{f-\tilde{f}}{L^{\infty}(\compactInit)}^{\frac{1}{\alpha}},\normFun{f-\tilde{f}}{L^{\infty}(\compactInit)}^{\frac{\alpha+1}{\alpha^2}}\right\},
    \end{aligned}
    \end{equation}
    where $C>0$ is a constant depending only on $\alpha, L,K,\horizon,D$.
\end{lemma}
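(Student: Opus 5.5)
We follow the proof of Lemma \ref{lm:dist_lower_LipODE_local} step by step, replacing every Lipschitz estimate by a Hölder estimate. Set $\eta\defeq\normFun{f-\tilde{f}}{L^{\infty}(\compactInit)}$ and $h\defeq f-\tilde f$. Since $f,\tilde f$ are continuous and $\compactInit$ is compact, we can pick $x_0\in\compactInit$ with $|h(x_0)|=\eta$. Next we take any solution $\tilde x\in\behavior[\compactInit]{\diffeq[\tilde f]}$ with $\tilde x(0)=x_0$. Such a solution exists by Peano's theorem on a short interval and extends to $[0,\horizon]$ by the a priori bound below. Solutions need not be unique, but the Hausdorff argument only requires the existence of one $\tilde x$.

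\textbf{Step 1 (a priori bounds).} From $|\tilde f(x)|\le K+L|x|^\alpha\le K+L+L|x|$ we get, via Grönwall exactly as before, a bound $|\tilde x(t)|\le R$ on $[0,\horizon]$, where $R$ depends only on $D,K,L,\horizon$. It follows that $|\dot{\tilde x}|\le M\defeq K+LR^\alpha$, so $|\tilde x(t)-x_0|\le Mt$. The same bound $R$ holds for every $x\in\behavior[\compactInit]{\diffeq[f]}$.

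\textbf{Step 2 (persistence of the discrepancy).} Since $h$ is $\alpha$-Hölder with constant $2L$, we have $|h(\tilde x(t))|\ge \eta-2L(Mt)^\alpha$. Choose $t_0\defeq\min\{\horizon,\,M^{-1}(\eta/(4L))^{1/\alpha}\}$. On $[0,t_0]$ this gives $|h(\tilde x(t))|\ge\eta/2$, and $h\circ\tilde x$ keeps a constant sign there, so $\bigl|\int_0^{t_0}h(\tilde x)\bigr|\ge t_0\eta/2$. This produces the factors $\eta$ and $\eta^{1+1/\alpha}=\eta^{(\alpha+1)/\alpha}$.

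\textbf{Step 3 (the main obstacle).} In the Lipschitz proof we absorbed $\int|f(y+\tilde x)-f(\tilde x)|\le L\horizon\norm{y}_\infty$, which is linear in $\delta\defeq\normFun{x-\tilde x}{L^\infty}$. Now this term is only bounded by $L\horizon\delta^\alpha$, which is not linear in $\delta$. For any $x\in\behavior[\compactInit]{\diffeq[f]}$ we obtain
\begin{equation*}
\frac{t_0\eta}{2}\le\Bigl|\int_0^{t_0}h(\tilde x)\Bigr|\le |y(t_0)|+|y(0)|+L t_0\delta^\alpha\le 2\delta+L\horizon\delta^\alpha .
\end{equation*}
Hence either $\delta\ge t_0\eta/8$ or $\delta^\alpha\ge t_0\eta/(4L\horizon)$, that is, $\delta\ge (t_0\eta/(4L\horizon))^{1/\alpha}$. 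In both cases we substitute the two possible values of $t_0$. With $t_0=\horizon$ we get the terms $\eta$ and $\eta^{1/\alpha}$. With $t_0\propto\eta^{1/\alpha}$ we get the terms $\eta^{(\alpha+1)/\alpha}$ and $(\eta^{1+1/\alpha})^{1/\alpha}=\eta^{(\alpha+1)/\alpha^2}$.

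The bound is therefore $\delta\ge C\min$ of these four powers, with $C$ depending only on $\alpha,L,K,\horizon,D$. Since $x$ was arbitrary, we take the infimum over $x$ and then the supremum over $\tilde x$, which lower-bounds the solution variation and hence the Hausdorff distance. The delicate point is precisely this nonlinear absorption in Step 3: the Hölder term cannot be moved to the left-hand side as in the Lipschitz case, so we must split into the two cases (linear term dominates versus Hölder term dominates). Carrying out this case split carefully is what yields exactly the four exponents in the statement.
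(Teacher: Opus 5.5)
Your proposal is correct and follows the paper's proof essentially step for step. It uses the same choice of $x_0$ maximizing $|h|$ on $\compactInit$ and the same type of $t_0$ forcing $|h(\tilde x)|\ge \eta/2$ with constant sign. It then makes the same two-case split of $t_0\eta/2\le 2\delta+L\horizon\delta^\alpha$ into $\min\{t_0\eta/8,(t_0\eta/(4L\horizon))^{1/\alpha}\}$, which yields the four exponents. The one difference is how you get the displacement bound $|\tilde x(t)-x_0|\le ct$. The paper uses the Gronwall--Bellman--Pachpatte inequality plus a mean-value argument, while you use $|x|^\alpha\le 1+|x|$, linear Gr\"onwall and the velocity bound $M=K+LR^\alpha$; your route is more elementary and gives a cruder but equally admissible constant.
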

\begin{proof}
    See Appendix \ref{app:dist_lower_sublinear_ODE_local}.
\end{proof}
One should note that for sublinear ODEs, solutions are generally non-unique for initial value problems (Example \ref{ex:multi_sol}). However, the Hausdorff distance is well-defined as it depends on the solution set instead of a single solution with respect to a fixed initial value. Moreover, one can see that the uniqueness of solutions is not required in the proof in Appendix \ref{app:dist_lower_sublinear_ODE_local}.

\subsubsection{Superlinear Hölder ODEs}
Next, we study superlinear Hölder ODEs. In this scenario, solutions could explode within a finite time (Example \ref{ex:FET}). Thus, to study the class of superlinear Hölder ODEs, the horizon $\horizon$ should be suitably chosen such that the solutions of each ODE in this class is at least well-defined on $[0,\horizon]$. 

In the following lemma, we provide conditions on $\horizon$ and prove the identification results by noting that the superlinear Hölder ODEs we consider are nothing but Lipschitz ODEs for $t\in[0,\horizon]$.
\begin{lemma}\label{lm:dist_lower_superHolderODE}
For $D,L,K>0$, $k\in \Nplus$, $\alpha\in (0,1]$, let $$\horizon< ((K+L)e)^{-1}\phi_{D,k,\alpha}^{-1}\left(\frac{1}{k+\alpha-1}\right),$$ where $\phi_{D,k,\alpha}(r) = r(r+D)^{k+\alpha-1}$, for $r\in \R_+$. Now, given a compact set $\compactInit\subset \ball{D} \defeq  \{x\in \R\mid |x|\leq D\}$, for every two superlinear Hölder ODEs $\diffeq[f],\diffeq[\tilde{f}]\in \odeClass{\holder{k,\alpha}{L,K}}$ with corresponding $f, \tilde{f}\in \holder{k,\alpha}{L,K} $, we have that 
    \begin{equation}
        \hd[\compactInit][\normFun{\cdot}{L^{\infty}}]{\diffeq[f]}{\diffeq[\tilde{f}]}\geq \min\left\{C_1\normFun[2]{f-\tilde{f}}{L^{\infty}(\compactInit)}^2, C_2\normFun[2]{f-\tilde{f}}{L^{\infty}(\compactInit)}\right\},
    \end{equation}  
    where $C_1,C_2>0$ are constants depending only on $k,\alpha,L,K,\horizon,D$.
\end{lemma}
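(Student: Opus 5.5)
The plan is to reduce to the Lipschitz case, Lemma \ref{lm:dist_lower_LipODE_local}, by showing that on the relevant time horizon every solution starting in $\compactInit$ stays in a fixed bounded interval $[-(R+D),R+D]$. On that interval each $f\in\holder{k,\alpha}{L,K}$ is Lipschitz with a uniform constant.

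\textbf{Step 1: growth bound on $f$.} For $f\in \holder{k,\alpha}{L,K}$, use Taylor's formula with the Hölder remainder on $f^{(k)}$, together with $\sum_{i\le k}|f^{(i)}(0)|\le K$. This gives $|f(x)|\le (K+L)\max\{1,|x|\}^{k+\alpha}$, up to harmless constants. Doing the same for $f'$ gives a bound of order $(K+L)(1+|x|)^{k+\alpha-1}$. Hence on $[-\rho,\rho]$ the function $f$ is Lipschitz with constant $L_\rho\lesssim (K+L)(1+\rho)^{k+\alpha-1}$, and $|f(0)|\le K$.

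\textbf{Step 2: a priori bound on solutions (main obstacle).} Let $x$ solve $\dot x=f(x)$ with $|x(0)|\le D$, and set $r(t)=|x(t)-x(0)|$. Then $\dot r\lesssim (K+L)(r+D)^{k+\alpha-1}(r+D)$. I would run a continuity (bootstrap) argument:
\begin{itemize}
\item assume $r\le r^*$ on $[0,t]$, where $r^*=\phi_{D,k,\alpha}^{-1}\!\left(\frac{1}{k+\alpha-1}\right)$;
\item deduce $r(t)\le (K+L)t\,(r^*+D)^{k+\alpha}$, and the Lipschitz constant along the trajectory is at most $(K+L)(r^*+D)^{k+\alpha-1}$;
\item the assumption $\horizon<((K+L)e)^{-1}r^*$ should then close the bootstrap strictly, via a Grönwall factor $e^{(K+L)(r^*+D)^{k+\alpha-1}\horizon}$. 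By the definition of $\phi$, this exponent is at most $1$, which is where the $e$ in the threshold comes from.
\end{itemize}
This step also shows that solutions exist on all of $[0,\horizon]$ and never leave $[-(r^*+D),r^*+D]$. I expect matching the exact constants in the hypothesis on $\horizon$ to need care. If the clean bootstrap does not match them, I would compare with the scalar comparison ODE $\dot r=(K+L)(r+D)^{k+\alpha}$ and bound its blow-up time from below.

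\textbf{Step 3: reduction.} Let $\rho=r^*+D$. Replace $f,\tilde f$ by their extensions $g,\tilde g$ from $[-\rho,\rho]$ that are constant outside this interval. These are globally Lipschitz with constant $L'=L_\rho$ and satisfy $|g(0)|\le K$, so $g,\tilde g\in\lip[\R,\R]{L',K}$. By Step 2, all solutions of $\diffeq[f]$ and $\diffeq[g]$ from $\compactInit$ coincide on $[0,\horizon]$, so $\behavior[\compactInit]{\diffeq[f]}=\behavior[\compactInit]{\diffeq[g]}$. For $f$ with $k\ge1$ solutions are unique, being locally Lipschitz. The same holds for $\tilde f,\tilde g$. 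Moreover $g-\tilde g=f-\tilde f$ on $\compactInit\subset[-\rho,\rho]$.

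\textbf{Step 4: conclude.} Apply Lemma \ref{lm:dist_lower_LipODE_local} with $\odedim=1$ and constants $L',K,D,\horizon$. This yields the bound $\min\{C_1\|f-\tilde f\|^2,C_2\|f-\tilde f\|\}$. The constants depend only on $k,\alpha,L,K,\horizon,D$, since the $\min\{\cdot,\horizon\}$ in that lemma splits into exactly these two regimes.
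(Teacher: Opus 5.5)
Your architecture (a priori bound on $[0,T]$, a uniform Lipschitz bound for $f$ on the resulting interval, then Lemma~\ref{lm:dist_lower_LipODE_local}) is the paper's. Your Step~3 is in fact more careful than the paper: extending $f,\tilde f$ by constants makes them globally Lipschitz with the same behaviors, whereas the paper applies the Lipschitz lemma to $f$ directly although $f$ is only Lipschitz on a bounded interval.

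The gap is your primary Step~2. The quantity $r^*=\phi_{D,k,\alpha}^{-1}(1/(k+\alpha-1))$ is not a spatial radius, and the bootstrap claim $|x(t)-x(0)|\le r^*$ is false. Take $k=\alpha=1$, $L=2$, $K=1$, $D=10$ and $f(x)=x^2$, which lies in the class. Then $r^*=(\sqrt{104}-10)/2\approx 0.099$, and the hypothesis admits $T=0.012<r^*/(3e)$. Yet the solution from $x_0=10$ is $10/(1-10t)$, which reaches about $11.36$ at $t=T$. More generally, when $|f(x_0)|$ is of order $(K+L)D^{k+\alpha}$, the displacement over a time $r^*/((K+L)e)$ is of order $D^{k+\alpha}r^*/e$. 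This exceeds $r^*$ once $D^{k+\alpha}>e$, and no Grönwall factor repairs that. Separately, your exponent need not be $\le 1$: it is only bounded by $\phi_{D,k,\alpha}(r^*)/e=1/(e(k+\alpha-1))$, which exceeds $1$ when $k+\alpha-1<1/e$.

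What the hypothesis actually controls is $\tilde C T$ with $\tilde C=(K+L)e$. Your fallback comparison argument is the right fix, and it is exactly the paper's proof.

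First, you need the bound $|f(x)|\le \tilde C(1+|x|^{k+\alpha})$, where the $e$ comes from $\sum_{i\le k}1/i!\le e$. The additive $1$ (or a $\max\{1,\cdot\}$) is necessary: for small $D$, $f\equiv K$ already violates $\dot r\le (K+L)(r+D)^{k+\alpha}$ at $r=0$.

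From that bound, $|x(t)|\le |x_0|+\tilde C t+\tilde C\int_0^t|x(s)|^{k+\alpha}\,ds$. The Bihari--Pachpatte inequality (Lemma~\ref{lm:gronwall_pach}) then yields
\[
|x(t)|\le (D+\tilde C T)\bigl(1-(k+\alpha-1)\tilde C T\,(D+\tilde C T)^{k+\alpha-1}\bigr)^{-\frac{1}{k+\alpha-1}}=:B \quad\text{on } [0,T].
\]
Its finiteness condition $\phi_{D,k,\alpha}(\tilde C T)<1/(k+\alpha-1)$ is precisely the hypothesis on $T$, and the blow-up alternative then gives existence on $[0,T]$. Note that $B$ can be far larger than $r^*+D$.

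With $\rho=B$ in place of $r^*+D$, your Steps~1, 3 and 4 go through as written; Step~1 is the paper's Lemma~\ref{lm:holder2lip} in another form. The resulting constants depend only on $k,\alpha,L,K,T,D$.
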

\begin{proof}
    See Appendix \ref{app:dist_lower_superHolderODE}
\end{proof}

\subsubsection{Polynomial ODEs}
As per Definition \ref{def:general_ode_class}, it is straightforward that $\odeClass{\poly{q,K}}\subset \odeClass{\holder{q-1,1}{q!,K\sum_{i=0}^{q-1}  i!}}$. Thus results in Lemma \ref{lm:dist_lower_superHolderODE} can be applied to $\odeClass{\poly{q,K}}$. Moreover, due to the properties of polynomials, we obtain the following results.
\begin{corollary}\label{col:poly_ode_id}
    Let $I_1\subset \ball{D} \defeq  \{x\in \R\mid |x|\leq D\}$ be a compact set with non-empty interior and let $I_2=\{x_1,x_2,\dots,x_q\}\in \R$ be a set of $q$ distinct real numbers. Then, for every two polynomial ODEs $\diffeq[p],\diffeq[\tilde{p}]\in \odeClass{\poly{q,K}}$ with corresponding $p,\tilde{p}\in \poly{q,K}$, the following holds:
    \begin{enumerate}
        \item[(i).]  $\hd[I_1][\normFun{\cdot}{L^{\infty}}]{\diffeq[p]}{\diffeq[\tilde{p}]}=0$ implies $p=\tilde{p}$.
        \item[(ii).] $\hd[I_2][\normFun{\cdot}{L^{\infty}}]{\diffeq[p]}{\diffeq[\tilde{p}]}=0$ implies $p=\tilde{p}$.
    \end{enumerate}
\end{corollary}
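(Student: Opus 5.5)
Part (i) reduces to Lemma \ref{lm:dist_lower_superHolderODE} plus the fact that a polynomial vanishing on a set with interior is zero. Part (ii) needs a separate argument: on $I_2$ only finitely many initial points are available, so the lemma says nothing off $I_2$ and we must extract more than values of $p-\tilde p$.

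\emph{Step 1 (part (i)).} Since $\odeClass{\poly{q,K}}\subset\odeClass{\holder{q-1,1}{q!,K\sum_{i=0}^{q-1}i!}}$, and assuming (as the paper implicitly does) that $\horizon$ satisfies the smallness condition of Lemma \ref{lm:dist_lower_superHolderODE}, the lemma gives $\min\{C_1\normFun{p-\tilde p}{L^\infty(I_1)}^2,C_2\normFun{p-\tilde p}{L^\infty(I_1)}\}\le 0$. Hence $p=\tilde p$ on $I_1$. The difference $p-\tilde p$ has degree at most $q-1$, since the leading monomials $x^q$ cancel, and it vanishes on a set with non-empty interior. Therefore it is identically zero. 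If one prefers not to rely on the horizon condition, the argument of Step 2 applied at a single point $x_0\in I_1$ already gives $p(x_0)=\tilde p(x_0)$, and hence equality on $I_1$.

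\emph{Step 2 (part (ii)).} Fix $x_i\in I_2$. Because polynomials are locally Lipschitz, solutions exist uniquely on some interval $[0,\tau)$, possibly shorter than $[0,\horizon]$ if blow-up occurs. I interpret the behavior as the set of solutions defined on $[0,\horizon]$, or argue locally near $t=0$. The hypothesis $\hd[I_2]{\diffeq[p]}{\diffeq[\tilde p]}=0$ with $I_2$ finite means each $\tilde p$-trajectory starting at $x_i$ coincides with some $p$-trajectory. That trajectory starts at some $x_j\in I_2$, and evaluating at $t=0$ forces $x_j=x_i$. So the two trajectories from $x_i$ agree, and we call the common trajectory $x(t)$.

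Differentiating $x$ at $t=0$ gives $p(x_i)=\dot x(0)=\tilde p(x_i)$. This holds for all $q$ distinct points. The difference $p-\tilde p$ has degree at most $q-1$ and has $q$ roots, so it vanishes identically.

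\emph{Main obstacle.} The delicate point is well-posedness. One must ensure that the solution sets are nonempty, that they are compared on a common interval, and that the infimum in the Hausdorff distance is attained. Finiteness of $I_2$ handles attainment: the infimum is over finitely many trajectories, so distance zero forces exact equality with one of them. For the horizon, I would use the assumption on $\horizon$ from Lemma \ref{lm:dist_lower_superHolderODE} to guarantee that solutions exist on $[0,\horizon]$. If a solution from $x_i$ is trivial, the argument still works, since then $p(x_i)=0=\tilde p(x_i)$. Equality of trajectories on any small interval around $t=0$ suffices for the derivative argument.
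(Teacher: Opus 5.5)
Your proof is correct, and part (i) follows the paper's own reasoning. The paper gives no separate proof of the corollary: it presents it as a consequence of Lemma~\ref{lm:dist_lower_superHolderODE}, via $\poly{q,K}\subset\holder{q-1,1}{q!,K\sum_{i=0}^{q-1}i!}$ and the implicit horizon condition, combined with $\deg(p-\tilde p)\le q-1$.

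For part (ii), your premise that the lemma ``says nothing'' and that one must ``extract more than values of $p-\tilde p$'' is mistaken. Lemma~\ref{lm:dist_lower_superHolderODE} holds for \emph{any} compact $\compactInit\subset\ball{D}$, and a finite set is compact. Taking $\compactInit=I_2$ (with $D$ chosen so that $I_2\subset\ball{D}$) gives $\normFun{p-\tilde p}{L^{\infty}(I_2)}=0$, which is exactly the $q$ root conditions you need. Your own Step~2 also extracts nothing beyond $p(x_i)=\tilde p(x_i)$. So the paper treats (i) and (ii) by one and the same argument.

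Your Step~2 is nonetheless a valid and genuinely different route. Uniqueness for the locally Lipschitz field means $\behavior[I_2]{\diffeq[p]}$ has at most $q$ elements, so the infimum is attained. This forces the $\tilde p$-trajectory from $x_i$ to equal the $p$-trajectory from $x_i$, and differentiating gives the root condition. The route bypasses the Lipschitz reduction (Lemmas~\ref{lm:holder_explode_time}, \ref{lm:holder2lip}, \ref{lm:dist_lower_LipODE_local}).

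It also yields more. Since $\dot x=p(x)=\tilde p(x)$ along the whole common trajectory, $p=\tilde p$ on its image. Hence a single initial point that is not an equilibrium already identifies $p$, and all $q$ points are needed only when they are common equilibria. The paper's route, in exchange, comes with a quantitative stability bound $\hd[I_2][\normFun{\cdot}{L^{\infty}}]{\diffeq[p]}{\diffeq[\tilde{p}]}\ge\min\{C_1\epsilon^2,C_2\epsilon\}$, where $\epsilon=\normFun{p-\tilde p}{L^{\infty}(I_2)}$, rather than only ``zero implies equal.'' Both routes need the solutions from the initial set to exist on $[0,\horizon]$, as you note.

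One correction to your aside in Step~1: you cannot avoid the horizon condition. Without it the behaviors can be empty and the hypothesis holds vacuously (with $\sup\emptyset=0$). For example, take $p(x)=x^2$, $\tilde p(x)=x^2+1$ (with $K\ge 1$), $I_1=[1,2]$ and $\horizon>1$. Every solution of either equation blows up before $\horizon$, so the Hausdorff distance is $0$ while $p\neq\tilde p$. In addition, for infinite $I_1$ the infimum need not be attained. Transferring Step~2 to $I_1$ would therefore need an extra argument using closedness of $I_1$ and continuous dependence on initial data.
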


\subsection{\textcolor{rev1}{Numerics showing bounds}}
We empirically examine the validity of lower and upper bounds on Hausdorff distance between ODEs, with a focus on first-order Linear ODEs and Lipschitz ODEs (Theorem \ref{th:dist_lu_linearODE_const} \& Theorem \ref{th:dist_lu_LipODE}). Specifically, we consider ODEs of dimension $\odedim=2$ on a fixed time horizon $t\in [0,0.1]$ and consider initial value set $\ball{1} = \{x\in \R^2\mid \norm{x}_2\leq 1\}$. For numerical purpose, we discretize the time horizon and initial value set into 
\begin{equation*}
    \begin{aligned}
        \mathcal{T} &= \{t_0,t_1,\dots, t_{20}\mid \text{where }t_i = 0.005i\}, \\
        \widehat{\ball{1}} &= \{x_{i,j}=\begin{pmatrix}
            r_i\cos (\theta_j) & r_i\sin (\theta_j)
        \end{pmatrix}^T \mid r_i = 0.25 i, \theta = \pi j/5, ~i,j\in\{0,1,\dots,4\}^2 \}.
    \end{aligned}
\end{equation*}
For each ODE and each initial value $x_0\in \widehat{\ball{1}}$, trajectories are computed on $\mathcal{T}$ using a fixed explicit integrator (RK4).
\subsubsection{Linear ODEs}
For numerical experiments on linear ODEs, we construct an ensemble of systems by first sampling 30 ``base'' matrices $\{\tilde{A}_1,\tilde{A}_2,\dots,\tilde{A}_{30}\}$ where $\tilde{A}_i$ are independent random matrices, and then for each ``base'' matrix $\tilde{A}_k,k\in \{1,2,\dots,30\}$, we generate 20 perturbed variants of the form $\tilde{A}_k + 0.02B_{k}^l$, where $B_{k}^l, l\in \{1,2,\dots,20\}$ independent random matrices. This yields a collection of 600 matrices, which we denote by $\{A_1,A_2,\dots, A_{600}\}$. 

Figure \ref{fig:hd_linear} reports the pairwise Hausdorff distance between linear ODEs $\diffeq[A_i]$ and $\diffeq[A_j]$: $\hd[\widehat{\ball{1}}][\normFun[2]{\cdot}{L^{\infty}(\mathcal{T})}]{\diffeq[A_i]}{\diffeq[A_j]}$ versus the pairwise structure distances between matrices: $\norm{A_i-A_j}_2$. Each point thus represents one pair of linear ODEs, with the x-axis measuring the size of the perturbation in matrix space and the y-axis the induced discrepancy between solution sets--Hausdorff distance between two linear ODEs. The yellow and blue lines are the predicted lower and upper bounds in line with Theorem \ref{th:dist_lu_linearODE_const}.
\begin{figure}[H]
    \centering
    \includegraphics[width=0.8\linewidth]{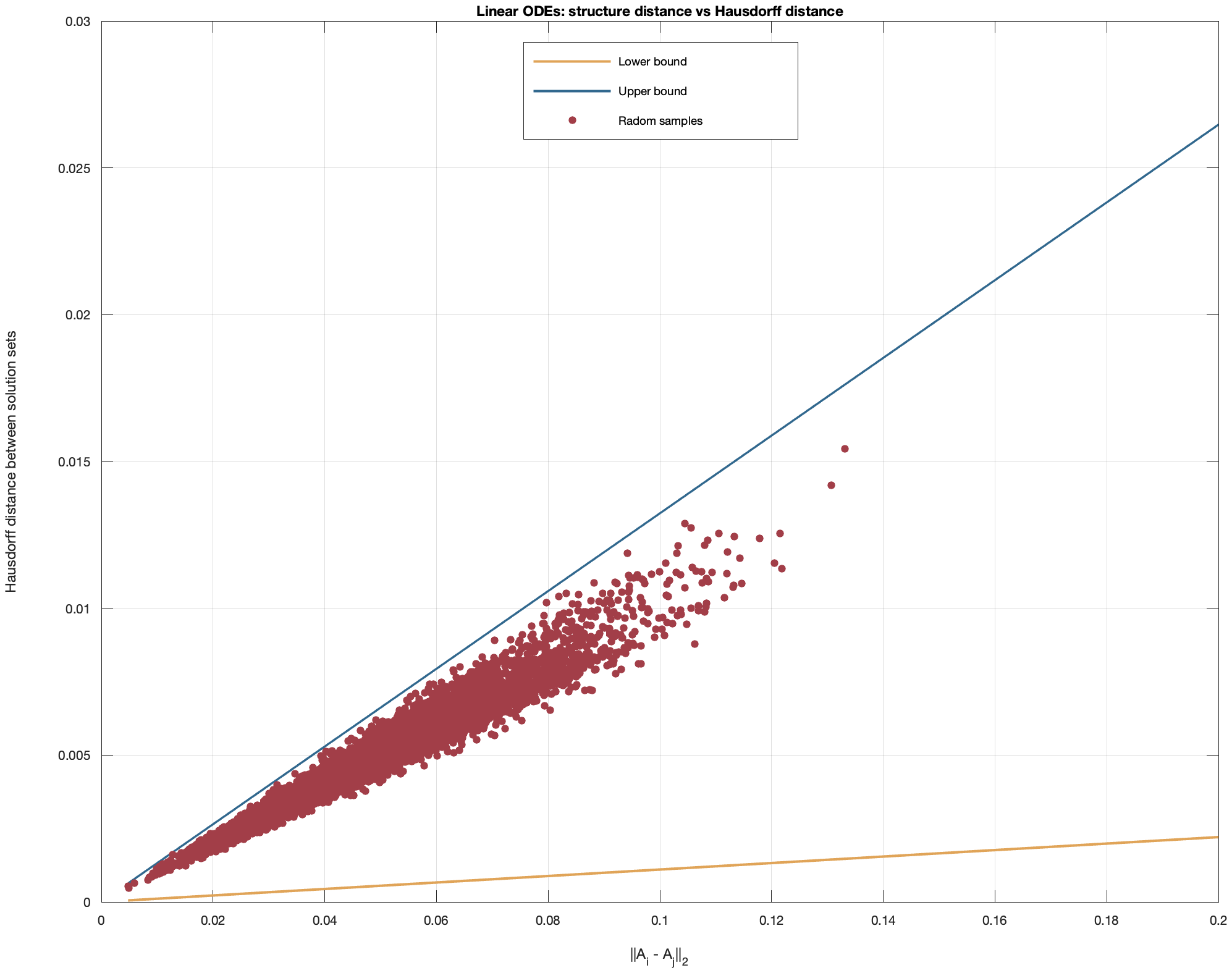}
    \caption{Red points represent the randomly generated samples. Yellow and blue lines represent the lower bound and upper bound on the Hausdorff distance between linear ODEs with respect to structure distances between matrices based on Theorem \ref{th:dist_lu_linearODE_const}.}
    \label{fig:hd_linear}
\end{figure}

\subsubsection{Lipschitz ODEs}
For the numerical experiments on Lipschitz ODEs, we generate random Lipschitz vector fields using a two-layer ReLU network parametrization
$$f(x)=A_2\operatorname{ReLU}(A_1x+b_1) +b_2.$$ The construction proceeds as follows. First, we sample 30 base parameter tuples
\(\{\tilde{A}_1^k,\tilde{A}_2^k,\tilde{b}_1^k,\tilde{b}_2^k\}_{k=1}^{30}\),
with \(\tilde{A}_1^k \in \mathbb{R}^{5\times 2}\), \(\tilde{A}_2^k \in \mathbb{R}^{2\times 5}\),
\(\tilde{b}_1^k \in \mathbb{R}^{5}\), and \(\tilde{b}_2^k \in \mathbb{R}^{2}\) drawn independently at random.
For each base index \(k\), we then generate 20 perturbed parameter sets indexed by
\(l \in \{1,\dots,20\}\) via
\[
\begin{aligned}
A_1^{k,l} &= \tilde{A}_1^k + B_1^{k,l}, \quad &&B_1^{k,l} \in \mathbb{R}^{5\times 2},\\
A_2^{k,l} &= \tilde{A}_2^k + B_2^{k,l}, \quad &&B_2^{k,l} \in \mathbb{R}^{2\times 5},\\
b_1^{k,l} &= \tilde{b}_1^k + c_1^{k,l}, \quad &&c_1^{k,l} \in \mathbb{R}^{5},\\
b_2^{k,l} &= \tilde{b}_2^k + c_2^{k,l}, \quad &&c_2^{k,l} \in \mathbb{R}^{2},
\end{aligned}
\]
where \(B_1^{k,l}, B_2^{k,l}, c_1^{k,l}, c_2^{k,l}\) are independent random perturbations.
This yields an ensemble of 600 Lipschitz vector fields \(\{f_1,\dots,f_{600}\}\), where each \(f_i\)
corresponds to some quadruple \((A_1^{i},A_2^{i},b_1^{i},b_2^{i})\defeq (A_1^{k,l},A_2^{k,l},b_1^{k,l},b_2^{k,l})\) and is written as
\[
f_i(x) = A_2^i \operatorname{ReLU}(A_1^i x + b_1^i) + b_2^i.
\]

The global Lipschitz constant \(L\) and the affine growth constant \(K\) in
Theorem~\ref{th:dist_lu_LipODE} are determined from this ensemble as
\[
L = \max_{i = 1,\dots,600} \bigl\{\norm{A_2^i}_2\norm{A_1^i}_2\bigr\}, 
\qquad
K = \max_{i = 1,\dots,600} \bigl\{\norm{A_2^i}_2\norm{b_1^i}_2 + \norm{b_2^i}_2\bigr\}.
\]

Figure~\ref{fig:hd_lip} reports, for all indices \(i,j \in \{1,\dots,600\}\), the pairwise Hausdorff distance between Lipschitz ODEs $\diffeq[f_i]$ and $\diffeq[f_j]$: $\hd[\widehat{\ball{1}}][\normFun[2]{\cdot}{L^{\infty}(\mathcal{T})}]{\diffeq[f_i]}{\diffeq[f_j]}$ (on a logrithmic scale) versus the pairwise structure distances between Lipschitz functions: $\normFun[2]{f_i-f_j}{L^{\infty}}$. Each point in the figure thus represents one
pair of Lipschitz ODEs, with the horizontal axis encoding the size of the perturbation in function
space and the vertical axis the induced discrepancy between their solution sets--Hausdorff distance between two Lipschitz ODEs. The yellow and blue lines are the predicted lower and upper bounds in line with Theorem \ref{th:dist_lu_LipODE}.

\begin{figure}[H]
    \centering
    \includegraphics[width=0.8\linewidth]{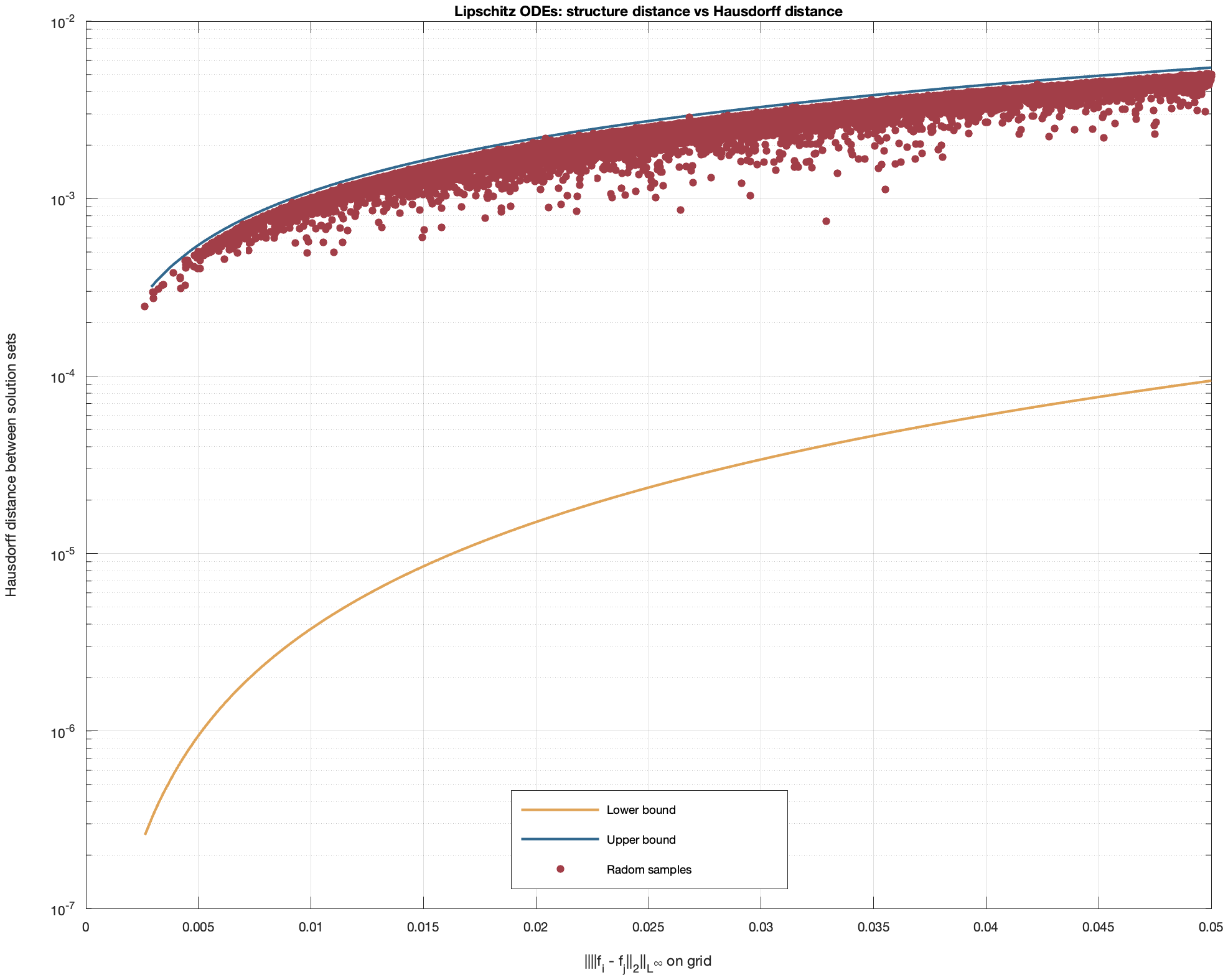}
    \caption{Red points represent the randomly generated samples. Yellow and blue lines represent the lower bound and upper bound on the Hausdorff distance between Lipschitz ODEs with respect to structure distances between Lipschitz functions based on Theorem \ref{th:dist_lu_LipODE}.}
    \label{fig:hd_lip}
\end{figure}
It is worth noting that our numerical experiments suggest that the identification lower bound derived for Lipschitz ODEs is not tight: the observed separation rates indicate room for improvement at the level of the exponents. Establishing sharp bounds for this class is left as an open problem for future work.

\section{Complexity in \textcolor{subj}{learning governing equations of ODEs}}\label{sec:complexity}
In this section, we analyze the complexity in learning of structure equations for linear and Lipschitz ODEs based on Hausdorff-distance-based identification results from previous sections. Specifically, we study
\begin{itemize}
    \item how ``rich'' the class of ODEs is in terms of Hausdorff distance based on metric entropy \cite{tikhomirovEEntropyECapacitySets1993} (Section \ref{sec:metric_ent_ode}),
    \item and how many solution samples $\{(t_i,x_k(t_i))_{i=1}^{M}\}_{k=1}^{N}$ we need to learn the structure equation (Section \ref{sec:sample_complexity}).
\end{itemize}
\textcolor{cc}{To the best of our knowledge, this work is the first to study the metric entropy and sample complexity in learning ODEs.} In what follows, we restrict the discussions to first-order ODEs, as we can see from Section \ref{sec:higher_order} that higher-order ODEs are merely a generalization of first-order ODEs.

\subsection{Information-theoretic complexity in \textcolor{subj}{learning governing equations of ODEs}}\label{sec:metric_ent_ode}
In this section, we study the ``richness of'' classes of first-order linear ODEs based on metric entropy. 

\textcolor{cc}{To motivate the analysis in this section, we briefly recall several notions related to metric entropy. Originating in the work of Kolmogorov, metric entropy was introduced as a way to quantify the complexity of compact subsets of metric spaces via covering or packing numbers \cite{tikhomirovEEntropyECapacitySets1993}.}

\textcolor{cc}{Closely related but conceptually distinct is Kolmogorov–Sinai (KS) entropy, developed in ergodic theory and dynamical systems to measure the rate at which complexity is generated along the temporal evolution of a single dynamical system \cite{kolgomorov1959entropyds,sinai1959notion}. Although dynamical systems are closely related to ODEs, the perspective in our paper differs fundamentally from that of KS entropy. Rather than quantifying the growth of complexity along trajectories of a fixed system, we are interested in using metric entropy to analyze the complexity of an entire class of ODEs by viewing it as a subset of an appropriate metric space.}

\textcolor{cc}{In this spirit, several works have studied metric entropy for families of dynamical systems by representing each system as an input–output operator and analyzing the resulting operator class \cite{hutter2022metric,pan2024metric,zames1977metric}.}

\textcolor{cc}{The most closely related work to the present setting is \cite{zhu2020odesolutioncovering}, which investigates the metric entropy of the solution class induced by a class of ODEs: solutions are aggregated into a set, whose covering complexity is then bounded in terms of the complexity of the underlying set of governing vector fields (or equations) \cite{zhu2020odesolutioncovering}.}

\textcolor{cc}{But unlike \cite{zhu2020odesolutioncovering}, we are not mixing the solutions of different ODEs in a set. Instead, we consider the set of solution sets induced by the class of ODEs. Each element in the set is the entire solution set of a specific ODE. And the metric used in analyzing the complexity is the Hausdorff distance. This unique and innovative way allows us to obtain identification results, as shown in Section \ref{sec:hd_bound_ode}. In this section, we are interested in how the prescribed perspective allows us to study the ``richness of'' classes of first-order linear ODEs based on metric entropy.}


To this end, we start by noting that the Hausdorff distance is indeed a well-defined metric for the classes of linear ODEs.
\begin{lemma}
    Fix $D>0$ and consider $\ball{D}\defeq \{x\in \R^{\odedim}\mid \norm{x}_2\leq D\}$. The Hausdorff distance $\hd[\ball{D}][\normFun{\cdot}{L^{\infty}}]{\cdot}{\cdot}$ is a well-defined metric for the class of linear ODEs $\odeClass{\lin[\R^{\odedim},\R^\odedim ]{K}}$.
\end{lemma}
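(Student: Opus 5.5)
We verify the metric axioms for $\hd[\ball{D}][\normFun{\cdot}{L^{\infty}}]{\cdot}{\cdot}$ on $\odeClass{\lin[\R^{\odedim},\R^\odedim ]{K}}$, identifying each ODE $\diffeq[A]$ with its structure matrix $A$ (the ODE is determined by $A$, so no quotient is needed). The first step is finiteness and non-negativity. Non-negativity is immediate from the definition as a supremum of infima of norms. Finiteness follows from Lemma \ref{lm:upper_lin_ode}: $\hd[\ball{D}][\normFun{\cdot}{L^{\infty}}]{\diffeq[A]}{\diffeq[\tilde{A}]}\leq D\horizon e^{K\horizon}\norm{A-\tilde{A}}_2\leq 2KD\horizon e^{K\horizon}<\infty$. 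Alternatively, every trajectory satisfies $\norm{e^{At}x_0}_2\leq De^{K\horizon}$.

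Symmetry holds because the definition \eqref{eq:diff_eq_hd} takes the maximum of $\sv{\diffeq[A]}{\diffeq[\tilde{A}]}$ and $\sv{\diffeq[\tilde{A}]}{\diffeq[A]}$, which is symmetric in the two arguments.

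For the triangle inequality, I would reduce to the set-level statement. By Definition \ref{def:hd}, the distance equals the Hausdorff distance between the behaviors $\behavior[\ball{D}]{\diffeq[A]}$, viewed as subsets of the normed space $((C([0,\horizon]))^\odedim,\normFun[2]{\cdot}{L^\infty})$. The Hausdorff distance satisfies the triangle inequality on arbitrary bounded nonempty subsets of a metric space. Explicitly, for $u\in U$ and any $v\in V$,
\[
\inf_{w\in W}\normFun{u-w}{}\leq \normFun{u-v}{}+\inf_{w\in W}\normFun{v-w}{}.
\]
Taking $\inf_v$ and then $\sup_u$ gives $\sup_{u\in U}\inf_{w\in W}\normFun{u-w}{}\leq \sup_{u\in U}\inf_{v\in V}\normFun{u-v}{}+\sup_{v\in V}\inf_{w\in W}\normFun{v-w}{}$. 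The reverse direction is handled the same way. The behaviors are nonempty and bounded, as noted above, so all quantities are finite.

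The only substantive step, and the main obstacle, is the identity of indiscernibles: $\hd[\ball{D}][\normFun{\cdot}{L^{\infty}}]{\diffeq[A]}{\diffeq[\tilde{A}]}=0$ must imply $\diffeq[A]=\diffeq[\tilde{A}]$. Note that the Hausdorff distance is in general only a pseudometric, since the example with sets $G$ not spanning $\R^\odedim$ shows that distinct equations can have distance zero. Here this is resolved by the lower bound of Lemma \ref{lm:lower_lin_eq} (equivalently Theorem \ref{th:dist_lu_linearODE_const}), which gives $0=\hd[\ball{D}][\normFun{\cdot}{L^{\infty}}]{\diffeq[A]}{\diffeq[\tilde{A}]}\geq c\norm{A-\tilde{A}}_2$ with $c>0$ depending only on $\horizon,K,D$. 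Hence $A=\tilde{A}$. Conversely, $A=\tilde{A}$ gives identical behaviors and therefore distance zero. This completes the metric axioms.
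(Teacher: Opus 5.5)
Your proposal is correct and follows the same route as the paper. Symmetry and the triangle inequality come straight from the definition of the Hausdorff distance, and positivity comes from the lower bound of Lemma~\ref{lm:lower_lin_eq} (Theorem~\ref{th:dist_lu_linearODE_const}). You also add a finiteness check via Lemma~\ref{lm:upper_lin_ode} and an explicit set-level triangle-inequality argument, both of which the paper leaves implicit.
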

\begin{proof}
    Basically, we need to prove that $\hd[\ball{D}][\normFun{\cdot}{L^{\infty}}]{\cdot}{\cdot}$ satisfies the metric axioms on $\odeClass{\lin[\R^{\odedim},\R^\odedim ]{K}}$. The symmetry, triangle inequality and $\hd[\ball{D}][\normFun{\cdot}{L^{\infty}}]{\diffeq}{\diffeq}=0$ immediately follow from Definition \ref{def:hd}. For linear ODEs, we know from Theorem \ref{th:dist_lu_linearODE_const} that for every $A_1,A_2\in \lin[\R^{\odedim},\R^\odedim ]{K}$, $$\hd[\ball{D}][\normFun{\cdot}{L^{\infty}}]{\diffeq[A_1]}{\diffeq[A_2]}=0 ~\Leftrightarrow 
    ~A_1=A_2~\Leftrightarrow ~\diffeq[A_1]=\diffeq[A_2],$$
    which proves positivity.
\end{proof}
Then, we have the following results regarding the metric entropy of the class of linear ODEs.
\begin{theorem}\label{th:ent_lin_ode}
    Fix $D>0$ and consider $\ball{D}\defeq \{x\in \R^{\odedim}\mid \norm{x}_2\leq D\}$. For all $\epsilon>0$, the metric entropy of the class of linear ODEs $\lin[\R^{\odedim},\R^\odedim ]{K}$ satisfies 
    \begin{equation}
        \log\covering (\epsilon;\odeClass{\lin[\R^{\odedim},\R^\odedim ]{K}},\hd[\ball{D}][\normFun{\cdot}{L^{\infty}}]{\cdot}{\cdot}))\asymp \odedim^2\log (\epsilon^{-1}).
    \end{equation}
\end{theorem}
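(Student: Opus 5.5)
The plan is to transfer the metric entropy of the matrix ball $\{A\in\R^{\odedim\times\odedim}\mid \norm{A}_2\leq K\}$ (with the operator norm) to the ODE class. The tool is the bi-Lipschitz equivalence in Theorem \ref{th:dist_lu_linearODE_const}. That theorem says the map $A\mapsto \diffeq[A]$ satisfies
\[
c\norm{A-\tilde{A}}_2\leq \hd[\ball{D}][\normFun{\cdot}{L^{\infty}}]{\diffeq[A]}{\diffeq[\tilde{A}]}\leq C\norm{A-\tilde{A}}_2
\]
on $\lin[\R^{\odedim},\R^\odedim]{K}$, with $c,C$ depending only on $\horizon,K,D$. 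By the preceding lemma this map is a bijection onto $\odeClass{\lin[\R^{\odedim},\R^\odedim]{K}}$, and the Hausdorff distance is a genuine metric there.

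\textbf{Upper bound.} Take a minimal $\epsilon/C$-covering $\{A_1,\dots,A_M\}$ of the spectral-norm ball $\mathcal{K}\defeq\{A\mid\norm{A}_2\leq K\}$ with centres in $\mathcal{K}$; if needed, work with an internal covering at radius $\epsilon/(2C)$. By the upper bound, $\{\diffeq[A_i]\}$ is an $\epsilon$-covering of the ODE class. Hence
\[
\log\covering(\epsilon;\odeClass{\cdot},\mathrm{hd})\leq \log\covering(\epsilon/C;\mathcal{K},\norm{\cdot}_2).
\]

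\textbf{Lower bound.} Take an $\epsilon/c$-covering $\{\diffeq[A_i]\}$ of the ODE class, or more cleanly a $2\epsilon/c$-packing of $\mathcal{K}$. By the lower bound, distinct packing elements map to ODEs at Hausdorff distance greater than $2\epsilon$. Since $\epsilon$-balls contain at most one point of a $2\epsilon$-separated set, the standard comparison between packing and covering numbers gives
\[
\log\covering(\epsilon;\odeClass{\cdot},\mathrm{hd})\geq \log\mathcal{M}(2\epsilon/c;\mathcal{K},\norm{\cdot}_2)\geq \log\covering(2\epsilon/c;\mathcal{K},\norm{\cdot}_2).
\]

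\textbf{Entropy of the matrix ball.} It remains to show
\[
\log\covering(\delta;\mathcal{K},\norm{\cdot}_2)\asymp \odedim^2\log(\delta^{-1})\quad\text{as }\delta\to0.
\]
$\mathcal{K}$ is the unit ball of a norm on the $\odedim^2$-dimensional space $\R^{\odedim\times\odedim}$, scaled by $K$. The standard volumetric argument for a norm ball $B$ in $\R^n$ gives
\[
\left(\frac{K}{\delta}\right)^{n}\leq \covering(\delta;B)\leq \left(1+\frac{2K}{\delta}\right)^{n},\qquad n=\odedim^2.
\]
The lower bound comes from comparing $\mathrm{vol}(\mathcal{K})$ with $\covering\cdot\mathrm{vol}(\delta$-ball$)$. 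The upper bound comes from a maximal packing with disjoint $\delta/2$-balls inside $(K+\delta/2)$-balls. Taking logarithms yields $\odedim^2\log(K/\delta)\leq\log\covering\leq \odedim^2\log(1+2K/\delta)$. Substituting $\delta=\epsilon/C$ and $\delta=2\epsilon/c$, the constants $\log C$ and $\log(c/2)$ are absorbed as $\epsilon\to0$, which gives $\asymp\odedim^2\log(\epsilon^{-1})$.

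The main care point is bookkeeping rather than a real obstacle. The covering centres must lie in the class itself, since the ODE class only contains $\diffeq[A]$ with $\norm{A}_2\leq K$. This is handled by using internal coverings or packings, which changes radii only by a factor of 2. The volumetric bounds need no such care, because they hold for any norm via translation invariance of Lebesgue measure on $\R^{\odedim^2}$. Finally, $\asymp$ is understood as $\epsilon\to0$, consistent with the paper's notation; for large $\epsilon$ the covering number is 1, so the statement is asymptotic.
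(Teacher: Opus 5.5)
Your proposal is correct and follows the paper's proof. Both arguments establish $\log\covering(\delta;\{A\mid\norm{A}_2\leq K\},\norm{\cdot}_2)\asymp\odedim^2\log(\delta^{-1})$ and transfer it to the ODE class through the bi-Lipschitz map $A\mapsto\diffeq[A]$ of Theorem~\ref{th:dist_lu_linearODE_const}; the paper cites Lemma~\ref{lm:covering_relation_equiv_norm} where you write out the covering/packing comparison explicitly.

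The only variation is in the matrix-ball lemma. The paper vectorizes, sandwiches the spectral-norm ball between Euclidean balls of radii $K$ and $\sqrt{\odedim}K$ in $\R^{\odedim^2}$, and then converts Frobenius to spectral norm. You apply the volumetric argument directly to the spectral-norm ball, which is shorter and avoids the $\sqrt{\odedim}$ losses inside the logarithm.
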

\begin{proof}
    See Appendix \ref{app:ent_lin_ode}.
\end{proof}
This means that the metric entropy of linear ODEs is \textcolor{cc}{essentially determined by the metric entropy of structure matrices characterizing the ODEs}

Things become tricky for Lipschitz ODEs. According to Theorem \ref{th:dist_lu_LipODE}, the Hausdorff distance of two Lipschitz ODEs with respect to a compact initial set $\compactInit$ being zero only implies structural equivalence restricted to $\compactInit$. Conversely, structural equivalence restricted to $\compactInit$ is not sufficient for the Hausdorff distance to be zero. \textcolor{cc}{Therefore, for the purposes of analysis, we restrict the scope of this section to a special subclass of Lipschitz ODEs—namely, ODEs with compactly supported Lipschitz vector fields. This assumption is mainly technical, but it does not undermine the reasonableness of the perspective taken here.}

\textcolor{cc}{First, metric-entropy arguments for Lipschitz functions are typically stated for functions defined on a bounded domain. For ODEs, however, working with a vector field that is only locally defined is inconvenient: even if the dynamics are well-defined in a neighborhood, the trajectory may exit that neighborhood, and the solution map is then no longer globally meaningful over the time horizon of interest.}

\textcolor{cc}{Second, while true governing vector fields are rarely compactly supported, compact support can be imposed without changing the dynamics on the trajectories under consideration. Indeed, if the initial conditions are restricted to a bounded set, then solutions of a Lipschitz ODE remain bounded on any finite time interval. Consequently, all relevant trajectories lie in some (possibly large) bounded region of state space. One may therefore modify the vector field outside this region—e.g., by multiplying it with a smooth cutoff that equals one on the trajectory-containing set—thereby obtaining a compactly supported vector field that agrees with the original dynamics along all trajectories considered in this paper.}

\textcolor{cc}{In the following, we introduce the class of Lipschitz ODEs with compactly supported vector fields.} For simplicity, we restrict our scope to ODEs with state dimension of $1$.
\begin{definition}
    Fix $D>0$ and consider $\ball{D}\defeq \{x\in \R\mid |x|\leq D\}$. Consider the class of Lipschitz ODEs with compact support 
    \begin{equation*}
        \mathcal{F}_{L,K,D} \defeq \{f:\R\rightarrow \R\mid \operatorname{Lip}(f)\leq L,|f(0)|\leq K,\operatorname{supp}(f)\in \ball{D}\}\subset \lip[\R,\R]{L,K}.
    \end{equation*}
\end{definition}
Next, we show that $\hd[\ball{D}][\normFun{\cdot}{L^{\infty}}]{\cdot}{\cdot}$ is a well-define metric for $\odeClass{\mathcal{F}_{L,K,D}}$.
\begin{lemma}
    Fix $D>0$ and consider $\ball{D}\defeq \{x\in \R^{\odedim}\mid \norm{x}_2\leq D\}$. The Hausdorff distance $\hd[\ball{D}][\normFun{\cdot}{L^{\infty}}]{\cdot}{\cdot}$ is a well-defined metric for the class of Lipschitz ODEs $\odeClass{\mathcal{F}_{L,K,D}}$.    
\end{lemma}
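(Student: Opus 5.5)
We verify the metric axioms for $\hd[\ball{D}][\normFun{\cdot}{L^{\infty}}]{\cdot}{\cdot}$ on $\odeClass{\mathcal{F}_{L,K,D}}$, following the same outline as the analogous lemma for linear ODEs. Symmetry, vanishing on the diagonal and the triangle inequality are inherited directly from Definition \ref{def:hd}, since the Hausdorff distance between sets (Definition \ref{def:hd_set}) satisfies them.

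Finiteness deserves a line. Each $f\in\mathcal{F}_{L,K,D}$ is globally Lipschitz, so every IVP with initial value in $\ball{D}$ has a unique solution on $[0,\horizon]$. The Grönwall argument in the proof of Lemma \ref{lm:lip_ode_upper} bounds all such solutions by $(K\horizon+D)e^{L\horizon}$. Hence the behaviors are bounded subsets of $C([0,\horizon])$, and the distance is finite. Alternatively, finiteness follows from the upper bound $C_3\normFun{f-\tilde f}{L^\infty(\widehat{\compactInit})}$ in Theorem \ref{th:dist_lu_LipODE}.

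The substantive part is positivity: $\hd[\ball{D}][\normFun{\cdot}{L^{\infty}}]{\diffeq[f]}{\diffeq[\tilde f]}=0$ must imply $f=\tilde f$ on all of $\R$, not only on $\ball{D}$. Apply Theorem \ref{th:dist_lu_LipODE} with the compact set $\compactInit=\ball{D}$. If the Hausdorff distance vanishes, then
\begin{equation*}
\min\left\{C_1\normFun{f-\tilde f}{L^\infty(\ball{D})}^2,\; C_2\normFun{f-\tilde f}{L^\infty(\ball{D})}\right\}=0 .
\end{equation*}
Since $C_1,C_2>0$, this forces $f=\tilde f$ on $\ball{D}$.

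To extend equality off $\ball{D}$, use the compact support assumption $\operatorname{supp}(f),\operatorname{supp}(\tilde f)\subset\ball{D}$. Both functions vanish outside $\ball{D}$, so $f=\tilde f$ on $\R\setminus\ball{D}$ as well. If $\operatorname{supp}$ is read as the set where the function is nonzero, no closure is needed. If it is read as a closed support, the conclusion is the same. Therefore $f\equiv\tilde f$ and $\diffeq[f]=\diffeq[\tilde f]$.

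Conversely, if $f=\tilde f$, the two behaviors coincide, so the distance is $0$. This gives the equivalence
\begin{equation*}
\hd[\ball{D}][\normFun{\cdot}{L^{\infty}}]{\diffeq[f]}{\diffeq[\tilde f]}=0\;\Leftrightarrow\; f=\tilde f\;\Leftrightarrow\;\diffeq[f]=\diffeq[\tilde f].
\end{equation*}

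The main point to handle carefully is this positivity step. The identification bound only controls $f-\tilde f$ on the initial set, and it is the support hypothesis that closes the gap outside it. The remaining steps are routine.
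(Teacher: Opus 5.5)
Your proof is correct and follows the same route as the paper: symmetry, vanishing on the diagonal and the triangle inequality are inherited from the set Hausdorff distance. Positivity comes from the lower bound in Theorem~\ref{th:dist_lu_LipODE}, with the initial set taken to be the ball of radius $D$, combined with the support condition, which turns equality on that ball into equality on all of $\mathbb{R}$. Your explicit finiteness check via the Grönwall bound is a small addition that the paper leaves implicit.
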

\begin{proof}
    We need to prove that $\hd[\ball{D}][\normFun{\cdot}{L^{\infty}}]{\cdot}{\cdot}$ satisfies the metric axioms on $\odeClass{\mathcal{F}_{L,K,D}}$. The symmetry, triangle inequality and $\hd[\ball{D}][\normFun{\cdot}{L^{\infty}}]{\diffeq}{\diffeq}=0$ immediately follow from Definition \ref{def:hd}. Applying Theorem \ref{th:dist_lu_LipODE} for $\odedim=1$ and noting that the supporting sets of functions in $\mathcal{F}_{L,K,D}$ are susbets of $\ball{D}$, we obtain 
    \begin{equation*}
    \begin{aligned}
        \min\left\{C_1\normFun{f-\tilde{f}}{L^{\infty}(\R)}^2, C_2\normFun{f-\tilde{f}}{L^{\infty}(\R)}\right\}
        \leq& \hd[\ball{D}][\normFun{\cdot}{C^{\odeorder-1}}]{\diffeq[f]}{\diffeq[\tilde{f}]}\\&\leq C_3\normFun{f-\tilde{f}}{L^{\infty}(\R)}.
    \end{aligned}
    \end{equation*}
    This implies for every $f,\tilde{f}\in \mathcal{F}_{L,K,D}$, $$\hd[\ball{D}][\normFun{\cdot}{L^{\infty}}]{\diffeq[f]}{\diffeq[\tilde{f}]}=0 ~\Leftrightarrow 
    ~f=\tilde{f}~\Leftrightarrow ~\diffeq[f]=\diffeq[\tilde{f}],$$
    which proves positivity.
\end{proof}
Finally, we have the following results regarding the metric entropy of the class of Lipschitz ODEs $\odeClass{\mathcal{F}_{L,K,D}}$.
\begin{theorem}\label{th:ent_lip_ode}
    For all $\epsilon>0$, the metric entropy of the class of Lipschitz ODEs $\odeClass{\mathcal{F}_{L,K,D}}$ satisfies 
    \begin{equation}
       \log^{(2)}\covering (\epsilon;\odeClass{\mathcal{F}_{L,K,D}},\hd[\ball{D}][\normFun{\cdot}{L^{\infty}}]{\cdot}{\cdot})\asymp \log (\epsilon^{-1}).
    \end{equation}
\end{theorem}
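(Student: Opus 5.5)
The plan is to transfer known covering bounds for the Lipschitz class $\mathcal{F}_{L,K,D}$ in the sup norm to the Hausdorff metric. For this we use the two-sided comparison from Theorem~\ref{th:dist_lu_LipODE} with $\odedim=1$ and $\compactInit=\ball{D}$, in the form already written in the preceding lemma:
\[
\min\{C_1\|f-\tilde f\|_{L^\infty(\R)}^2,C_2\|f-\tilde f\|_{L^\infty(\R)}\}\le \hd[\ball{D}][\normFun{\cdot}{L^{\infty}}]{\diffeq[f]}{\diffeq[\tilde f]}\le C_3\|f-\tilde f\|_{L^\infty(\R)} .
\]
Both sides use the full sup norm, because every $f\in\mathcal{F}_{L,K,D}$ vanishes outside $\ball{D}$; in particular $\|f-\tilde f\|_{L^\infty(\widehat{\compactInit})}=\|f-\tilde f\|_{L^\infty(\R)}$.

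\textbf{Classical ingredient.} $\mathcal{F}_{L,K,D}$, viewed as a subset of $L^\infty(\ball{D})$, consists of $L$-Lipschitz functions on an interval that vanish at $\pm D$, so they are uniformly bounded by $LD$. By the Kolmogorov--Tikhomirov estimate,
\[
\log \covering(\eta;\mathcal{F}_{L,K,D},\|\cdot\|_{L^\infty})\asymp \eta^{-1}.
\]
For the upper bound use the standard piecewise-linear net on a grid of mesh $\eta/L$, with slopes $\pm L$, giving $\approx 3^{2LD/\eta}$ elements; one then has to check that these can be taken inside the class, or handle this by a factor-2 covering/packing argument. For the lower bound use a packing by sums of hat functions of height $\eta$ with random signs, supported in $\ball{D}$; they lie in the class as long as $\eta$ is small relative to $K$ (or zero at $0$).

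\textbf{Upper bound on $\log^{(2)}\covering$.} If $\{f_i\}$ is an $\epsilon/C_3$-cover in sup norm, the upper comparison shows $\{\diffeq[f_i]\}$ is an $\epsilon$-cover in the Hausdorff metric. Hence $\log\covering(\epsilon)\lesssim C_3/\epsilon$, and so $\log^{(2)}\covering(\epsilon)\le \log(\epsilon^{-1})+O(1)$.

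\textbf{Lower bound on $\log^{(2)}\covering$.} Take a sup-norm $\eta$-packing $\{f_j\}$ of size $\exp(c/\eta)$. For $\eta$ small the minimum in the lower comparison is attained by the quadratic term, so any two ODEs in the family are at Hausdorff distance at least $C_1\eta^2$. Choosing $\eta=\sqrt{2\epsilon/C_1}$ gives a $2\epsilon$-packing in the Hausdorff metric, hence a covering number of at least $\exp(c'\epsilon^{-1/2})$. Then $\log^{(2)}\covering\ge \tfrac12\log(\epsilon^{-1})-O(1)$. The exponent mismatch between $1$ and $1/2$ is absorbed into $\asymp$ at the double-log level, which is exactly why the statement is phrased with $\log^{(2)}$.

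\textbf{Main obstacle.} The delicate step is building the packing inside $\mathcal{F}_{L,K,D}$ with all the constraints at once: Lipschitz constant $L$, support in $\ball{D}$, $|f(0)|\le K$, and the sup-norm separation, uniformly for small $\eta$. Hat functions of slope $\pm L$ and height $\eta$ on disjoint intervals of length $2\eta/L$ inside $(-D,D)$ handle this. There are about $LD/\eta$ bumps, and since $\eta<K$ the constraint on $f(0)$ holds automatically. A Gilbert--Varshamov or simply all-sign-pattern choice then gives separation $\eta$ in sup norm. Large $\epsilon$ is trivial, as the statement is asymptotic as $\epsilon\to0$.
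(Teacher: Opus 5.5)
Your proposal is correct and follows essentially the paper's route. You use the two-sided comparison from Theorem~\ref{th:dist_lu_LipODE}, with sup norms over $\mathbb{R}$ thanks to the support condition, to transfer the $\log^{(2)}\asymp\log(\epsilon^{-1})$ entropy of $\mathcal{F}_{L,K,D}$ to the Hausdorff metric, losing only a factor $\tfrac12$ from the quadratic lower bound. The differences are cosmetic: the paper gets the sup-norm entropy of $\mathcal{F}_{L,K,D}$ via isometric extension/restriction maps to the Kolmogorov--Tikhomirov classes $\mathcal{F}_{L,C}[a,b]$ rather than your explicit nets and hat-function packings, and it proves the lower bound by pulling a Hausdorff covering back through $\|f-\tilde f\|_{L^\infty}\le C_4\max\{h,\sqrt{h}\}$ instead of pushing a sup-norm packing forward.
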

\begin{proof}
    See Appendix \ref{app:ent_lip_ode}.
\end{proof}
\textcolor{cc}{This implies that the metric entropy of Lipschitz ODEs has asymptotically the same growth rate as that of Lipschitz functions on the log scale.}

\subsection{Sample complexity in \textcolor{subj}{learning governing equations of ODEs}}\label{sec:sample_complexity}
In this section, we analyze the sample complexity in learning linear and Lipschitz ODEs. \textcolor{cc}{We would like to mention that the study of sample complexity for learning dynamical systems—whether for learning input–output maps or for identifying (typically linear) state-space models—is a widely researched topic \cite{oymak2019sclti,campi2002finitesamplesi}. By contrast, the corresponding sample-complexity analysis in identifying the structure equations of general (including nonlinear) ODEs remains comparatively less explored—which we shall discuss in the remainder of this section.} 

Specifically, we consider samples $\{(t_i,x_k(t_i))_{i=1}^{M}\}_{k=1}^{N}$ generated through experiments 
or numerical simulations. Furthermore, the measurement errors in sample generation—including random experimental noise, numerical errors, and truncation errors arising from simulations—are assumed to admit a deterministic uniform upper bound $\error[meas]$. We summarize the above ideas in Lemma \ref{lm:general_train_hd} and show how to learn the structure equation within a prescribed hypothesis class in a general principle.
\textcolor{cc}{
\begin{lemma}\label{lm:general_train_hd}
    Given a function class $\mathcal{F}$ with norm (or semi-norm) $\normFun{\cdot}{\mathcal{F}}$, the corresponding class of ODEs $\odeClass{\mathcal{F}}$ as per Definition \ref{def:ode_classes} and an initial value set $\initset$ for ODEs in $\odeClass{\mathcal{F}}$, we assume the following:
    \begin{enumerate}[label=(\arabic*)]
        \item \label{item:sc-id}\textbf{Identification guarantee}: There exists a function $\phi:\R_+\rightarrow \R_+$ with $\lim_{a\rightarrow0^+}\phi(a)=0$ and $\phi(0)=0$, such that for every two ODEs $\diffeq[f_1], \diffeq[f_2]\in \odeClass{\mathcal{F}}$, with corresponding $f_1,f_2\in \mathcal{F}$, the following holds 
    \begin{equation}\label{eq:lm_f1f2}
         \normFun{f_1-f_2}{\mathcal{F}}\leq \phi\left(\hd[\initset][\normFun{\cdot}{L^{\infty}}]{\diffeq[f_1]}{\diffeq[f_2]}\right).
    \end{equation}
    \item\label{item:sc-fs-meas} \textbf{Finite-sampling and bounded measurement error:} Given a function $f\in \mathcal{F}$, we can sample $N$ solutions $X\defeq \{x_k(t)\}_{k=1}^N\subset \mathcal{F}$ at time $0= t_1\leq t_2\leq \dots\leq t_M=\horizon$, leading to a sampling set $\dis{X}\defeq \{(t_i,\hat{x}_{k,i})_{i=1}^{M}\}_{k=1}^{N}$ such that    \begin{equation}
        \hd{X}{\behavior[\initset]{\diffeq[f]}}\leq \error[sampling](N),
    \end{equation}
    where $\error[sampling](N)$ is the sampling error depending on the number of solution samples $N$, and
    \begin{equation}
       \norm{\hat{x}_{k,i}-x_k(t_i)}_2\leq \error[meas],\quad \text{for all }k=1,2,\dots,N, \text{ and }i=1,2,\dots,M,
    \end{equation}
    where $\error[meas]$ is the measurement error. Here, $\dis{X}$ can be interpreted as the set of sampled solutions with respect to ODE $\diffeq[f]$ on finite times resolution.
    \item \label{item:sc-num}\textbf{Numerical error:} We can numerically construct (e.g., by interpolation) a function set $\cont{\dis{X}}\defeq\{\hat{x}_k\}_{k=1}^N\subset \mathcal{F}$, where $\hat{x}_k(t)$ depends on $(t_i,\hat{x}_{k,i})_{i=1}^{M}$, such that 
    \begin{equation}
        \normFun[2]{\hat{x}_k-x_k}{L^{\infty}([0,\horizon])}\leq \error[num](\error[meas],M),
    \end{equation}
    where $\error[num]$ is the numerical error depending on both the measurement errors $\error[meas]$ and number of time steps $M$ used in sampling. Here, $\cont{\dis{X}}$ is the approximation of the set of sampled solutions with respect to ODE $\diffeq[f]$ based on reconstructions from $\dis{X}$.
    \item \label{item:sc-train}\textbf{Training error:} We can find a function $\tilde{f}$ (typically by training a model) such that 
    \begin{equation}
        \hdset[\normFun{\cdot}{L^{\infty}}]{\behavior[\initset]{\diffeq[\tilde{f}]}}{\cont{\dis{X}}}\leq \error[train].
    \end{equation}
    \end{enumerate}
    Then, we have 
    \begin{equation}
        \normFun{f-\tilde{f}}{\mathcal{F}}\leq \phi\left(\error[total]\right),
    \end{equation}
    where 
    \begin{equation}
        \error[total] = \error[sampling](N)+\error[num](\error[meas],M)+\error[train].
    \end{equation}
\end{lemma}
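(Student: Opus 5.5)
The plan is a triangle-inequality chain in the Hausdorff metric on subsets of $(C([0,\horizon]))^\odedim$ with the sup norm, followed by one application of the identification guarantee \ref{item:sc-id}. Concretely, I would show that
\begin{equation*}
\hd[\initset][\normFun{\cdot}{L^{\infty}}]{\diffeq[f]}{\diffeq[\tilde{f}]}=\hdset[\normFun{\cdot}{L^{\infty}}]{\behavior[\initset]{\diffeq[f]}}{\behavior[\initset]{\diffeq[\tilde{f}]}}\leq \error[total],
\end{equation*}
and then apply \eqref{eq:lm_f1f2} with $f_1=f$ and $f_2=\tilde{f}$. This gives $\normFun{f-\tilde{f}}{\mathcal{F}}\leq\phi(\hd{\diffeq[f]}{\diffeq[\tilde{f}]})$. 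To pass from $\hd{\diffeq[f]}{\diffeq[\tilde{f}]}\le\error[total]$ to $\phi(\hd)\le\phi(\error[total])$ I need $\phi$ to be nondecreasing. This is implicit in the statement, and all the $\phi$'s obtained from the earlier lower bounds, e.g.\ inverses of Theorem \ref{th:dist_lu_LipODE}, are monotone. If necessary, one can also replace $\phi$ by $\bar\phi(a)=\sup_{0\le b\le a}\phi(b)$, which is still a valid identification function.

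\textbf{Step 1.} I first record that the Hausdorff distance of Definition \ref{def:hd_set} satisfies the triangle inequality for arbitrary nonempty sets, possibly extended-valued. For $u\in U$, $v\in V$, $w\in W$ we have $\inf_{w}\|u-w\|\le \|u-v\|+\inf_w\|v-w\|$. Taking the infimum over $v$ and then the supremum over $u$ gives $\sup_u\inf_w\le \sup_u\inf_v+\sup_v\inf_w$, and symmetrizing gives the claim.

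\textbf{Step 2.} The chain has three links.
\begin{itemize}
\item[(a)] $\hdset{\behavior[\initset]{\diffeq[f]}}{X}\le\error[sampling](N)$, by assumption \ref{item:sc-fs-meas}.
\item[(b)] $\hdset{X}{\cont{\dis{X}}}\le \error[num](\error[meas],M)$. Both sets are indexed by $k=1,\dots,N$, so pairing each $x_k$ with its reconstruction $\hat{x}_k$ gives $\sup_k\inf_j\|x_k-\hat x_j\|\le\sup_k\|x_k-\hat x_k\|\le\error[num]$ by \ref{item:sc-num}. The same pairing bounds the other direction.
\item[(c)] $\hdset{\cont{\dis{X}}}{\behavior[\initset]{\diffeq[\tilde{f}]}}\le\error[train]$, by assumption \ref{item:sc-train}.
\end{itemize}
Summing these with Step 1 gives the bound $\error[total]$ on the distance between the two behaviors, which is exactly the ODE Hausdorff distance by Definition \ref{def:hd}.

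The only real subtlety is bookkeeping, not analysis. The statement writes $X\subset\mathcal{F}$ and $\cont{\dis{X}}\subset\mathcal{F}$, but what is meant is that these are sets of trajectories in the same function space $(C([0,\horizon]))^\odedim$ as the behaviors. I would state this identification explicitly so that all Hausdorff distances are taken in one metric space. Beyond that, the main point to flag is the monotonicity of $\phi$ in the final step, which I would handle by the monotone-envelope remark above.
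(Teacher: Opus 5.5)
Your proposal is correct and matches the paper's proof: the same three-term triangle inequality for the Hausdorff distance through $X$ and $\cont{\dis{X}}$, the middle term bounded by pairing each $x_k$ with $\hat{x}_k$, then one application of \eqref{eq:lm_f1f2}. You are right that this last step needs $\phi$ to be nondecreasing, which the paper uses without saying; note, however, that your monotone-envelope fallback only gives the bound with $\bar\phi\geq\phi$ in place of $\phi$, so the statement as written needs monotonicity of $\phi$ itself.
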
}
\begin{proof}
    Note that
    \begin{equation*}
        \begin{aligned}
            \hd[\initset][\normFun{\cdot}{L^{\infty}}]{\diffeq[f]}{\diffeq[\tilde{f}]} \overset{\text{Definition \ref{def:hd}}}&{=} \hdset[\normFun{\cdot}{L^{\infty}}]{\behavior[\initset]{\diffeq[f]}}{\behavior[\initset]{\diffeq[\tilde{f}]}}\\
            \overset{\text{triangle inequality}}&{\leq} \hdset[\normFun{\cdot}{L^{\infty}}]{\behavior[\initset]{\diffeq[f]}}{X}+\hdset[\normFun{\cdot}{L^{\infty}}]{X}{\cont{\dis{X}}}\\
            & \qquad\quad +\hdset[\normFun{\cdot}{L^{\infty}}]{\cont{\dis{X}}}{\behavior[\initset]{\diffeq[\tilde{f}]}}\\
            &\leq  \max\left\{\sup_{x\in X}\inf_{\hat{x}\in \cont{\dis{X}}} \normFun[2]{x-\hat{x}}{L^{\infty}},\sup_{\hat{x}\in \cont{\dis{X}}}\inf_{x\in X} \normFun[2]{x-\hat{x}}{L^{\infty}}\right\}\\
            & \quad+\error[sampling](N)+\error[train]\\
            &\leq \sup_{k\in \{1,2,\dots, N\}}\normFun[2]{\hat{x}_k-x_k}{L^{\infty}}+ \error[sampling](N)+\error[train]\\
            &\leq \error[sampling](N)+\error[num](\error[meas],M)+\error[train].
        \end{aligned}
    \end{equation*}
    Applying \eqref{eq:lm_f1f2} concludes the proof.
\end{proof}
\textcolor{cc}{The lemma has some practical meanings. Most importantly, it provides a principled basis for the design of a numerical learning objective for recovering the governing differential equation. Specifically, if 
\begin{itemize}
    \item the ODE class $\odeClass{\mathcal{F}}$ is identifiable according to \ref{item:sc-id} in Lemma \ref{lm:general_train_hd},
    \item we can sample the solution set of the ODE $\diffeq[f]\in \odeClass{\mathcal{F}}$ up to some sampling, numerical, and measurement error and collect them in $\cont{\dis{X}}$
\end{itemize}
then it is natural to train a model $\tilde{f}$—potentially a neural-network model, or a sparse superposition over a prescribed library of candidate functions as in SINDy—by minimizing the loss function $$ \operatorname{Loss}\defeq \hdset[\normFun{\cdot}{L^{\infty}}]{\behavior[\compactInit]{\diffeq[\tilde{f}]}}{\cont{\dis{X}}},$$
which is the Hausdorff distance between the solution set of our model and the sampled solution set. Lemma \ref{lm:general_train_hd} guarantees that any model that achieves a small training loss necessarily yields a small error in the recovered dynamics—the learning error between $f$ and $\tilde{f}$ can be bounded above by the sampling, numerical, measurement, and training error. Note that this idea works for any ODEs class that satisfies the identification guarantee in Lemma \ref{lm:general_train_hd}}

\textcolor{cc}{Now, we apply Lemma \ref{lm:general_train_hd} to ODE classes in Section \ref{sec:hd_bound_ode}, for which the identification guarantee actually holds—namely, linear and Lipschitz ODEs.} Since linear ODEs are merely a special case of Lipschitz ODEs, we first present the result on the sample complexity in learning Lipschitz ODEs. 

\begin{lemma}[Sampling results for Lipschitz ODEs]\label{lm:sample_learn_lip} 
Arbitrarily fix $N_{x},N_{t}\in \Nplus$, $D>0$ and consider $\compactInit \defeq [-D,D]^\odedim$. Given a target ODE $\diffeq[f]\in \odeClass{\lip[\R^{\odedim},\R^\odedim ]{L,K}}$ with corresponding $f\in \lip[\R^{\odedim},\R^\odedim ]{L,K}$, assume we can generate solutions samples from $\behavior[\compactInit]{\diffeq[f]}$ according to 
\begin{equation}
    X = \left\{x_{\vecindex}\in \behavior[\compactInit]{\diffeq[f]}\mid x_{\vecindex}(0) = \frac{D}{N_x}\vecindex, \text{ where }\vecindex \in \{-N_x,-N_x+1,\dots, N_x\}^\odedim\right\},
\end{equation}
and for each of the solution $x_{\vecindex} \in X$, we can measure the solution uniformly in time according to 
\begin{equation}
    \dis{X}\defeq \left\{(t_{\tau},\hat{x}_{\vecindex,\tau})\mid t_{\tau} =  \frac{\horizon}{N_t}\tau, \hat{x}_{\vecindex,\tau} = x_{\vecindex}\left(t_{\tau}\right) + w_{\vecindex,\tau}, x_{\vecindex}\in X,\tau\in\{0,1,\dots,N_t\}\right\},
\end{equation}
where $w_{\vecindex,\tau}$ is the measurement error admitting uniform bound $\norm{w_{\vecindex,\tau}}_2\leq \error[meas]$. Then, we construct 
\begin{equation}
\begin{aligned}
    \cont{\dis{X}}\defeq \left\{ \hat{x}_{\vecindex}:[0,\horizon]\rightarrow \R^{\odedim}\mid \right.&\hat{x}_{\vecindex} \text{ is a piecewise linear interpolation such that }\\
    &\left.\hat{x}_{\vecindex}\left(\frac{T}{N_t}\tau\right)=\hat{x}_{\vecindex,\tau}\text{ for } x_\vecindex\in X,\tau\in\{0,1,\dots,N_t\}\right\}
\end{aligned}
\end{equation}
as the (training) solution data. Now, suppose we can find a function $\tilde{f}\in  \lip[\R^{\odedim},\R^\odedim ]{L,K}$ (typically by training a model) such that 
    \begin{equation}
        \hdset[\normFun{\cdot}{L^{\infty}}]{\behavior[\compactInit]{\diffeq[\tilde{f}]}}{\cont{\dis{X}}}\leq \error[train].
    \end{equation}
Then, we have the learning bound
\begin{equation}
\begin{aligned}
   &\normFun[2]{f-\tilde{f}}{L^{\infty}(\compactInit)}\\
   &\leq 2\sqrt{\odedim}(2+L\horizon)\max\left\{\frac{\error[total]}{\horizon},\sqrt{\left(2\sqrt{\odedim}L\left(2\sqrt{\odedim}D L\horizon e^{L\horizon}  + KL\horizon^2 e^{L\horizon} + 2K\right)\right)\error[total]}\right\},    
\end{aligned}
\end{equation}
where 
\begin{equation}
    \error[total]=\error[meas]+\error[train]+\frac{\sqrt{\odedim}De^{L\horizon}}{N_x}+\frac{\left( K+L(D+K\horizon)e^{L\horizon}\right)\horizon}{4N_t}.
\end{equation}
\end{lemma}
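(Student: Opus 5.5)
The plan is to instantiate Lemma \ref{lm:general_train_hd} with $\initset=\compactInit$, $\normFun{\cdot}{\mathcal{F}}=\normFun[2]{\cdot}{L^{\infty}(\compactInit)}$ and the identification function $\phi$ obtained by inverting the lower bound of Lemma \ref{lm:dist_lower_LipODE_local}. We then bound the three error terms $\error[sampling](N)$, $\error[num](\error[meas],M)$ and $\error[train]$ separately.

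\emph{Step 1: the function $\phi$.} Note $\compactInit=[-D,D]^{\odedim}\subset\ball{\sqrt{\odedim}D}$, so we apply Lemma \ref{lm:dist_lower_LipODE_local} with $D$ replaced by $\sqrt{\odedim}D$; this accounts for the factor $2\sqrt{\odedim}D$ inside the constant. Write $a=\normFun[2]{f-\tilde f}{L^\infty(\compactInit)}$, $h=\hd[\compactInit][\normFun{\cdot}{L^{\infty}}]{\diffeq[f]}{\diffeq[\tilde f]}$ and $c_0=2\sqrt{\odedim}L(2\sqrt{\odedim}DL\horizon e^{L\horizon}+KL\horizon^2e^{L\horizon}+2K)$. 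The lemma gives
\begin{equation*}
h\geq \min\{a/c_0,\horizon\}\,\frac{a}{2\sqrt{\odedim}(2+L\horizon)}.
\end{equation*}
We split into two cases. If the minimum is $\horizon$, then $a\le 2\sqrt{\odedim}(2+L\horizon)h/\horizon$. Otherwise $a^2\le 2\sqrt{\odedim}(2+L\horizon)c_0h$, hence $a\le 2\sqrt{\odedim}(2+L\horizon)\sqrt{c_0h}$. In this second case we use the crude inequality $2\sqrt{\odedim}(2+L\horizon)\ge1$ to pull the prefactor outside the square root. Either way $a\le \phi(h)$, with $\phi(h)=2\sqrt{\odedim}(2+L\horizon)\max\{h/\horizon,\sqrt{c_0h}\}$. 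This $\phi$ is monotone increasing, so it suffices to bound $h\le\error[total]$.

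\emph{Step 2: sampling error.} Every $x\in\behavior[\compactInit]{\diffeq[f]}$ starts at some $x_0\in[-D,D]^\odedim$. Some grid point $\frac{D}{N_x}\vecindex$ lies within Euclidean distance $\sqrt{\odedim}D/N_x$ of $x_0$; a rounding to the nearest grid point even gives half of that, but we keep the stated bound. By Grönwall with $\operatorname{Lip}(f)\le L$, the two trajectories stay within $\sqrt{\odedim}De^{L\horizon}/N_x$ on $[0,\horizon]$. Since $X\subset\behavior[\compactInit]{\diffeq[f]}$, the other one-sided distance is zero, so
\begin{equation*}
\hd{X}{\behavior[\compactInit]{\diffeq[f]}}\le \sqrt{\odedim}De^{L\horizon}/N_x.
\end{equation*}

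\emph{Step 3: numerical error.} Compare $\hat x_\vecindex$ with the piecewise linear interpolant $I x_\vecindex$ of the exact samples. First, $\normFun[2]{\hat x_\vecindex-Ix_\vecindex}{L^\infty}\le\error[meas]$, since the interpolant of the errors is a convex combination of the $w_{\vecindex,\tau}$. Second, by the Grönwall estimate in the proof of Lemma \ref{lm:lip_ode_upper}, $\norm{x_\vecindex(t)}_2\le (K\horizon+\sqrt{\odedim}D)e^{L\horizon}$. Hence the velocity $\norm{\dot x_\vecindex}_2\le K+L\norm{x_\vecindex}_2$ is bounded by a constant $V$. On each subinterval of length $\horizon/N_t$, a $V$-Lipschitz path differs from its chord by at most $V\horizon/(4N_t)$: at distance $s$ from one end and $\ell-s$ from the other, the deviation is at most $\min$-weighted $2Vs(\ell-s)/\ell\le V\ell/2$. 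Getting exactly $1/4$ is the step I expect to be most delicate. The intended route is the bound $\norm{x(t)-[(1-\lambda)x(t_\tau)+\lambda x(t_{\tau+1})]}\le 2V\ell\lambda(1-\lambda)\le V\ell/2$. To recover the stated $1/4$, I would instead check the paper's convention: either the optimal estimate or a slightly different constant $V$ (the stated $V$ uses $D$ rather than $\sqrt{\odedim}D$). I will match the stated expression, adjusting the geometric argument if needed.

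\emph{Step 4: conclusion.} Summing Steps 2 and 3 with $\error[train]$ gives $h\le\error[total]$ via the triangle-inequality chain in the proof of Lemma \ref{lm:general_train_hd}. Applying the monotone $\phi$ from Step 1 yields the claimed learning bound.
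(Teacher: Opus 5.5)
Your Steps 1, 2 and 4 follow the paper's proof exactly. They use the same inversion of Lemma~\ref{lm:dist_lower_LipODE_local} with radius $\sqrt{d}D$, including the use of $2\sqrt{d}(2+LT)\ge 1$ to pull the prefactor out of the square root. They also use the same Grönwall sampling bound and the same triangle-inequality chain of Lemma~\ref{lm:general_train_hd}. The gap is Step 3, which you leave open with the plan to ``match the stated expression''. That plan cannot be carried out.

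With $\ell=T/N_t$, what your argument proves is
\[
\epsilon_{\mathrm{num}}=\epsilon_{\mathrm{meas}}+\bigl(K+L(\sqrt{d}D+KT)e^{LT}\bigr)T/(2N_t),
\]
and both differences from the stated term are real:
\begin{itemize}
\item From a speed bound $V$ alone, $V\ell/2$ is sharp. The scalar tent path $s\mapsto V\min\{s,\ell-s\}$ deviates from its chord by exactly $V\ell/2$ at the midpoint.
\item Grid points at the corners of $[-D,D]^d$ have norm $\sqrt{d}D$, not $D$.
\end{itemize}

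The paper reaches the stated term only through two slips in \eqref{eq:eps_num_form} and the lines just before it. Write $x$ for a sampled trajectory.
\begin{itemize}
\item The cross terms $\frac{t-t_{\tau-1}}{\ell}\operatorname{Lip}(x)(t_\tau-t)$ and $\frac{t_\tau-t}{\ell}\operatorname{Lip}(x)(t-t_{\tau-1})$ sum to $2\operatorname{Lip}(x)(t_\tau-t)(t-t_{\tau-1})/\ell$. The paper drops the factor $2$.
\item The Grönwall bound on $\norm{x(t)}_2$ starts from $D$ instead of $\sqrt{d}D$.
\end{itemize}

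The second slip makes the paper's intermediate claim false. Take $f(x)=x$ (so $L=1$ and any $K>0$), $x(0)=D(1,\dots,1)^T$, $d=8$, $T=4$, $N_t=1$ and no noise. The interpolation error at $t=2$ is $\sqrt{8}\,D(e^2-1)^2/2\approx 57.7D$. The claimed bound $K+(D+4K)e^4$ tends to $e^4D\approx 54.6D$ as $K\to 0$.

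So do not try to reproduce the stated constant. State the lemma with the last term of $\epsilon_{\mathrm{total}}$ replaced by $\bigl(K+L(\sqrt{d}D+KT)e^{LT}\bigr)T/(2N_t)$. The rest of your argument goes through unchanged, since $\phi$ is increasing.

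The factor $1/4$ can be recovered in two special cases:
\begin{itemize}
\item For $d=1$, solutions of autonomous scalar Lipschitz ODEs are monotone. This sharpens the chord estimate to $V\ell\lambda(1-\lambda)\le V\ell/4$, and $\sqrt{d}D=D$, so the stated numerical-error term is correct there.
\item For general $d$, $\dot x=f(x)$ is $LV$-Lipschitz in $t$. This bounds the chord deviation by $LV\ell^2/8\le V\ell/4$ once $N_t\ge LT/2$, though the $\sqrt{d}D$ in $V$ remains.
\end{itemize}
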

\begin{proof}
    We follow the procedure outlined in Lemma \ref{lm:general_train_hd}. 
    \begin{enumerate}
        \item[(i).] Determine the form of $\phi$. From Lemma \ref{lm:dist_lower_LipODE_local}, we know that $$\compactInit \subset \ball{\sqrt{\odedim}D} \defeq  \{x\in \R^{\odedim}\mid \norm{x}_2\leq \sqrt{\odedim}D\}$$ and thus for every two ODEs $\diffeq[f_1], \diffeq[f_2]\in \odeClass{\lip[\R^{\odedim},\R^\odedim ]{L,K}}$, with corresponding $f_1,f_2\in \lip[\R^{\odedim},\R^\odedim ]{L,K}$, the following holds 
    \begin{equation*}
    \begin{aligned}
        \hd[\compactInit][\normFun{\cdot}{L^{\infty}}]{\diffeq[f_1]}{\diffeq[f_2]}\geq &\min\left\{\frac{\normFun[2]{f_1-f_2}{L^{\infty}(\compactInit)}}{2\sqrt{\odedim}L\left(2\sqrt{\odedim}D L\horizon e^{L\horizon}  + KL\horizon^2 e^{L\horizon} + 2K\right)}, \horizon\right\}\\
        &\cdot\frac{\normFun[2]{f_1-f_2}{L^{\infty}(\compactInit)}}{2\sqrt{\odedim}(2+L\horizon)},   
    \end{aligned}
    \end{equation*}
    which implies that 
    \begin{equation}\label{eq:lm_f1f2_proof}
         \normFun[2]{f_1-f_2}{L^{\infty}(\compactInit)}\leq \phi\left(\hd[\initset][\normFun{\cdot}{L^{\infty}}]{\diffeq[f_1]}{\diffeq[f_2]}\right),
    \end{equation}
    with 
    \begin{equation}\label{eq:phi_form}
        \phi(\delta)\defeq 2\sqrt{\odedim}(2+L\horizon)\max\left\{\frac{\delta}{\horizon},\sqrt{\left(2\sqrt{\odedim}L\left(2\sqrt{\odedim}D L\horizon e^{L\horizon}  + KL\horizon^2 e^{L\horizon} + 2K\right)\right)\delta}\right\}.
    \end{equation}
    \item[(ii).] Determine the sampling error $\error[sampling]$. For arbitrarily $x\in \behavior[\compactInit]{\diffeq[f]}$, we can find $\vecindex\in \{-N_x,-N_x+1,\dots, N_x\}^\odedim $, such that $x_{\vecindex}\in X$, $\norm{x_{\vecindex}(0)-x(0)}_2\leq \frac{\sqrt{\odedim}D}{N_x}$. Since $x_{\vecindex}-x$ satisfy 
    \begin{equation*}
        \begin{aligned}
            \norm{x_{\vecindex}(t)-x(t)}_2 &= \norm{x_{\vecindex}(0)-x(0) + \int_{0}^t\left(f(x(s))-f(y(s))ds\right)}_2\\
            &\leq \norm{x_{\vecindex}(0)-x(0)}_2 + \int_{0}^t\norm{f(x(s))-f(y(s))}_2ds\\
            &\leq \norm{x_{\vecindex}(0)-x(0)}_2 + L\int_{0}^t\norm{x(s)-y(s)}_2ds,
        \end{aligned}
    \end{equation*}
    by Grönwall's inequality, we have 
    \begin{equation*}
        \norm{x_{\vecindex}(t)-x(t)}_2\leq \norm{x_{\vecindex}(0)-x(0)}_2  e^{Lt}\leq \frac{\sqrt{\odedim}De^{Lt}}{N_x}.
    \end{equation*}
    Taking supreme over $t$, we have 
    \begin{equation}
        \inf_{y\in X}\normFun[2]{y-x}{L^{\infty}([0,T])}\leq \normFun[2]{x_{\vecindex}-x}{L^{\infty}([0,T])}\leq \frac{\sqrt{\odedim}De^{L\horizon}}{N_x}.
    \end{equation}
    Noting that the above inequality holds for arbitrary $x\in \behavior[\compactInit]{\diffeq[f]}$, we obtain
    \begin{equation}\label{eq:Bf_X}
        \sup_{x\in \behavior[\compactInit]{\diffeq[f]}}\inf_{y\in X}\normFun[2]{y-x}{L^{\infty}([0,T])}\leq \frac{\sqrt{\odedim}De^{L\horizon}}{N_x}.
    \end{equation}
    Conversely, for arbitrary $x_\vecindex \in X$, thanks to the fact that $x_\vecindex \in \behavior[\compactInit]{\diffeq[f]}$, we obtain 
    \begin{equation}\label{eq:X_Bf}
        \sup_{y\in X}\inf_{x\in \behavior[\compactInit]{\diffeq[f]}}\normFun[2]{y-x}{L^{\infty}([0,T])}=0.
    \end{equation}
    Combing \eqref{eq:Bf_X} and \eqref{eq:X_Bf} gives 
    \begin{equation}\label{eq:eps_samp_form}
        \hd{X}{\behavior[\initset]{\diffeq[f]}}\leq \frac{\sqrt{\odedim}De^{L\horizon}}{N_x}\eqdef \error[sampling].
    \end{equation}
    \item[(iii).] Determine the numerical error $\error[num]$. For every $x_\vecindex \in X$, we have 
    \begin{equation*}
    \begin{aligned}
        \norm{x_\vecindex(t)}_2&= \norm{x_\vecindex(0) + \int_{s=0}^t f(x_\vecindex(s)ds)}_2\\       
        &\leq D+ \int_{0}^t (K+L\norm{x_\vecindex(s)}_2)ds\\
        &\leq D+K\horizon + L\int_{0}^t \norm{x_\vecindex(s)}_2ds
    \end{aligned}
    \end{equation*}
    which, by Grönwall's inequality, implies that 
    \begin{equation*}
        \norm{x_\vecindex(t)}_2\leq (D+K\horizon)e^{Lt},
    \end{equation*}
    and thus
    \begin{equation*}
        \normFun[2]{x_\vecindex}{L^{\infty}([0,\horizon])}\leq (D+K\horizon)e^{L\horizon}.
    \end{equation*}
    Then we can bound the Lipschitz constants of $x_\vecindex$ according to 
    \begin{equation}\label{eq:lip_xi}
    \begin{aligned}
        \operatorname{Lip}(x_\vecindex)&\leq \normFun[2]{\dot{x_\vecindex}}{L^{\infty}([0,\horizon])} = \normFun[2]{f(x_\vecindex)}{L^{\infty}([0,\horizon])}  \\
        &\leq K+L\normFun[2]{x_\vecindex}{L^{\infty}([0,\horizon])}\leq K+L(D+K\horizon)e^{L\horizon}.
    \end{aligned}
    \end{equation}
    This gives that for $\hat{x}_i\in \cont{\dis{X}}$, every $\tau\in \{0,1,\dots,N_t\}$, and every $$t\in \left[\frac{\horizon}{N_t}(\tau-1),\frac{\horizon}{N_t}\tau\right]\eqdef \left[t_{\tau-1},t_{\tau}\right],$$ 
    \begin{equation}\label{eq:eps_num_form}
        \begin{aligned}
            \norm{\hat{x}_\vecindex(t) - x_\vecindex(t)}_2 \overset{h\defeq\frac{\horizon}{N_t}}&{=} \norm{\frac{t-t_{\tau-1}}{h}(\hat{x}_i(t_{\tau})-x(t)) + \frac{t_{\tau}-t}{h}(\hat{x}_i(t_{\tau-1})-x(t))}_2\\
            &\leq \frac{t-t_{\tau-1}}{h} \left(\error[meas]+\operatorname{Lip}(x_\vecindex)(t_{\tau}-t)\right) + \frac{t_{\tau}-t}{h} \left(\error[meas]+\operatorname{Lip}(x_\vecindex)(t-t_{\tau-1})\right)\\
            \overset{\eqref{eq:lip_xi}}&{\leq} \error[meas] +\left( K+L(D+K\horizon)e^{L\horizon}\right)\frac{(t_{\tau}-t)(t-t_{\tau-1})}{h}\\
            &\leq \error[meas] +\left( K+L(D+K\horizon)e^{L\horizon}\right)\frac{h}{4}\\
            & = \error[meas] + \frac{\left( K+L(D+K\horizon)e^{L\horizon}\right)\horizon}{4N_t} \eqdef \error[num].
        \end{aligned}
    \end{equation}
    \end{enumerate}
    Finally, combining \eqref{eq:lm_f1f2_proof},\eqref{eq:phi_form},\eqref{eq:eps_samp_form},\eqref{eq:eps_num_form} and applying Lemma \ref{lm:general_train_hd} conclude the proof.
\end{proof}

\begin{remark}
    According to Lemma \ref{lm:sample_learn_lip}, when measurement error and training error are sufficiently small, in order to achieve the learning error $\bigo{\epsilon}$, we need at least $N=\bigo{N_x^d}=\bigo{-\epsilon^{2d}}$ number of samples and \textcolor{cc}{the sampling rate when measuring solutions should be $\bigo{N_t^{-1}}=\bigo{\epsilon^{2}}$.}
\end{remark}
Since linear ODEs are a special case of Lipschitz ODEs, we immediately obtain the following sampling results for linear ODEs by adjusting the proof of Lemma \ref{lm:sample_learn_lip}.
\begin{lemma}[Sampling results for linear ODEs]\label{lm:sample_learn_lin} 
Arbitrarily fix $\odedim, N_{x},N_{t}\in \Nplus$, $D>0$. Consider $\initset \defeq \{g_1,g_2,\dots,g_\odedim\}\subset\R^\odedim$ such that $\operatorname{span}\{g_1,g_2,\dots,g_\odedim\}=\R^\odedim$, $\{\tilde{g}_1,\tilde{g}_2,\dots,\tilde{g}_\odedim\}$ is its corresponding canonical dual frame and there exists $0<F_1\leq F_2$,
    \begin{equation*}
        \begin{aligned}
        &F_1\norm{x}_2^2\leq\sum_{k=1}^{N}\left|\langle x,g_k\rangle\right|^2\leq F_2\norm{x}_2^2,\\
        &\frac{1}{F_2}\norm{x}_2^2\leq\sum_{k=1}^{N}\left|\langle x,\tilde{g}_k\rangle\right|^2\leq\frac{1}{F_1}\norm{x}_2^2.
        \end{aligned}
    \end{equation*}
Further assume $\sup_{j\in \{1,2,\dots,\odedim\}}\norm{g_j}_2\leq D$. \\
Now, given a target ODE $\diffeq[A]\in \odeClass{\lin[\R^{ \odedim},\R^\odedim ]{K}}$ with corresponding $A\in \R^{ \odedim\times \odedim} $, assume we can generate solutions samples from $\behavior[\initset]{\diffeq[A]}$ according to 
\begin{equation}
    X = \left\{x_{j}\in \behavior[\initset]{\diffeq[A]}\mid x_{j}(0) = g_j, \text{ where }j \in \{1,2,\dots,\odedim\}\right\},
\end{equation}
and for each of the solution $x_{j} \in X$, we can measure the solution uniformly in time according to 
\begin{equation}
    \dis{X}\defeq \left\{(t_{\tau},\hat{x}_{j,\tau})\mid t_{\tau} =  \frac{\horizon}{N_t}\tau, \hat{x}_{j,\tau} = x_{j}\left(t_{\tau}\right) + w_{j,\tau}, x_{j}\in X,\tau\in\{0,1,\dots,N_t\}\right\},
\end{equation}
where $w_{j,\tau}$ is the measurement error admitting uniform bound $\norm{w_{j,\tau}}_2\leq \error[meas]$. Then, we construct 
\begin{equation}
\begin{aligned}
    \cont{\dis{X}}\defeq \left\{ \hat{x}_{j}:[0,\horizon]\rightarrow \R^{\odedim}\mid \right.&\hat{x}_{j} \text{ is a piecewise linear interpolation such that }\\
    &\left.\hat{x}_{j}\left(\frac{T}{N_t}\tau\right)=\hat{x}_{j,\tau}\text{ for } x_j\in X,\tau\in\{0,1,\dots,N_t\}\right\}
\end{aligned}
\end{equation}
as the (training) solution data. Now, suppose we can find matrix $\tilde{A}\in \R^{\odedim\times\odedim}$ such that $\norm{\tilde{A}}_2\leq K$ and 
    \begin{equation}
        \hdset[\normFun{\cdot}{L^{\infty}}]{\behavior[\initset]{\diffeq[\tilde{A}]}}{\cont{\dis{X}}}\leq \error[train].
    \end{equation}
Then, we have the learning bound
\begin{equation}
    \norm{A-\tilde{A}}_2\leq \frac{\sqrt{2\odedim}}{\left(2-e^{\frac{1}{2}}\right)\sqrt{F_1}\min\left\{\horizon,\frac{1}{2K}\right\}} \left(\frac{e^{-5\min\left\{\horizon,\frac{1}{2K}\right\}K}}{1+e^{-5\min\left\{\horizon,\frac{1}{2K}\right\}K}}\right)^{-\frac{1}{2}}\error[total],
\end{equation}
where 
\begin{equation}
    \error[total]=\error[meas]+\error[train]+\frac{KDe^{K\horizon}\horizon}{4N_t}.
\end{equation}
\end{lemma}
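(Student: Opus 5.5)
We follow the template of Lemma \ref{lm:general_train_hd}, exactly as in the proof of Lemma \ref{lm:sample_learn_lip}, with three modifications: the identification function $\phi$ comes from the frame lower bound of Lemma \ref{lm:lower_lin_eq_local}, there is no sampling error, and the numerical error uses the Lipschitz bound of linear trajectories.

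\textbf{Step (i): the identification function $\phi$.} Lemma \ref{lm:lower_lin_eq_local} with $N=\odedim$ and $G=\initset$ gives, for every $A_1,A_2$ with $\norm{A_i}_2\leq K$,
\[
\hd[\initset][\normFun{\cdot}{L^{\infty}}]{\diffeq[A_1]}{\diffeq[A_2]}\geq c_{\initset}\norm{A_1-A_2}_2,
\]
where
\[
c_{\initset}=\frac{(2-e^{1/2})\sqrt{F_1}\min\{\horizon,\frac{1}{2K}\}}{\sqrt{2\odedim}}\left(\frac{e^{-5\min\{\horizon,\frac{1}{2K}\}K}}{1+e^{-5\min\{\horizon,\frac{1}{2K}\}K}}\right)^{1/2}.
\]
Hence we take the linear function $\phi(\delta)=\delta/c_{\initset}$, which is exactly the prefactor in the claimed bound. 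The hypothesis $\norm{\tilde A}_2\leq K$ is what makes $\diffeq[\tilde A]$ lie in the class, so the lemma applies.

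\textbf{Step (ii): sampling error.} The initial set $\initset$ is finite, and $X$ contains the unique solution from every $g_j$. Therefore $X=\behavior[\initset]{\diffeq[A]}$ and $\error[sampling]=0$. This explains why no $N_x$ term appears in $\error[total]$.

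\textbf{Step (iii): numerical error.} Each $x_j(t)=e^{At}g_j$ satisfies
\[
\norm{x_j(t)}_2\leq e^{K\horizon}\norm{g_j}_2\leq De^{K\horizon}.
\]
Since $\dot x_j=Ax_j$, this gives $\operatorname{Lip}(x_j)\leq KDe^{K\horizon}$. We then repeat the piecewise-linear interpolation estimate \eqref{eq:eps_num_form} verbatim, with the Lipschitz constant $K+L(D+K\horizon)e^{L\horizon}$ replaced by $KDe^{K\horizon}$. With $h=\horizon/N_t$, the convex combination of the two endpoint errors yields
\[
\norm{\hat x_j(t)-x_j(t)}_2\leq \error[meas]+KDe^{K\horizon}\frac{(t_\tau-t)(t-t_{\tau-1})}{h}\leq \error[meas]+\frac{KDe^{K\horizon}\horizon}{4N_t}.
\]

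\textbf{Step (iv): conclusion.} With the training hypothesis, the triangle-inequality chain in the proof of Lemma \ref{lm:general_train_hd} gives
\[
\hd[\initset][\normFun{\cdot}{L^{\infty}}]{\diffeq[A]}{\diffeq[\tilde A]}\leq 0+\error[num]+\error[train]=\error[total].
\]
Applying $\phi$ then yields the stated learning bound.

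\textbf{Main obstacle.} The only delicate point is Step (i): checking that Lemma \ref{lm:lower_lin_eq_local} applies with the frame consisting of exactly the $\odedim$ spanning vectors, so that $N=\odedim$ in the $\sqrt{2N}$ factor. One also has to confirm that the frame-bound normalization used there (through the dual frame and $F_1$) produces precisely the constant $c_{\initset}$. A related subtlety is that $\hd[\initset]$ here is taken over the finite set $\initset$, not a ball. Since the frame lemma is stated for exactly this situation, it should go through directly, and the remaining steps are routine.
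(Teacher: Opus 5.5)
Your steps (i), (ii) and (iv) are exactly the paper's proof. The identification function $\phi$ is the linear map from Lemma~\ref{lm:lower_lin_eq_local} with $N=\odedim$. The sampling error vanishes because $X=\behavior[\initset]{\diffeq[A]}$. The numerical error comes from rerunning \eqref{eq:eps_num_form} with Lipschitz constant $KDe^{K\horizon}$; the paper gets this constant by viewing $x\mapsto Ax$ as a Lipschitz field with constant $K$ and value $0$ at the origin. Step (i), which you single out as delicate, is a direct substitution. The real problem is in step (iii).

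\textbf{The gap in step (iii).} There you reproduce an arithmetic slip that is also present in \eqref{eq:eps_num_form}. Set $\lambda=(t-t_{\tau-1})/h$ and $\Lambda=KDe^{K\horizon}$. The convex combination then gives
\[
\norm{\hat x_j(t)-x_j(t)}_2\le\lambda\left(\error[meas]+\Lambda(1-\lambda)h\right)+(1-\lambda)\left(\error[meas]+\Lambda\lambda h\right)=\error[meas]+2\Lambda\lambda(1-\lambda)h .
\]
That is, you get $2\Lambda\frac{(t_\tau-t)(t-t_{\tau-1})}{h}$, not $\Lambda\frac{(t_\tau-t)(t-t_{\tau-1})}{h}$. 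Your argument therefore only yields $\error[meas]+\frac{KDe^{K\horizon}\horizon}{2N_t}$.

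No Lipschitz-only argument can do better. A tent function with slopes $\pm\Lambda$ that vanishes at both nodes has interpolation error $\Lambda h/2$ at the midpoint.

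\textbf{How to recover the constant $\frac14$.} You must use that $x_j(t)=e^{At}g_j$ is smooth, and split into two cases.

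If $Kh\le 2$, apply the scalar $C^2$ interpolation estimate to $u^Tx_j$ for every unit vector $u$. This bounds the error (besides $\error[meas]$) by
\[
\frac{h^2}{8}\sup_t\norm{A^2x_j(t)}_2\le\frac{h^2}{8}K^2De^{K\horizon}\le\frac{KDe^{K\horizon}h}{4}.
\]

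If $Kh>2$, use $\norm{x_j(t)-x_j(s)}_2\le D\,|e^{Kt}-e^{Ks}|$ together with $t_{\tau-1}\le\horizon-h$. With $v=Kh$, the claim reduces to
\[
\lambda+(1-2\lambda)e^{-v(1-\lambda)}-(1-\lambda)e^{-v}\le \frac{v}{4}\quad\text{for }\lambda\in[0,1],\ v\ge 2 .
\]
This holds by an elementary check. For $\lambda\le\frac12$ the left side is at most $\max\{e^{-v/2},\frac12\}$. For $\lambda\ge\frac12$ its $v$-derivative is below $\frac18+\frac12e^{-2}<\frac14$, and the inequality holds at $v=2$ (the maximum there is about $0.46$).

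Alternatively, keep your argument as it stands and replace $4N_t$ by $2N_t$ in $\error[total]$.
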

\begin{proof}
Note that the function $f(x)=Ax$ is an Lipschitz function with $\operatorname{Lip}(f)\leq K$ and $f(0)=0$, i.e., $f\in \lip[\R^{\odedim},\R^\odedim ]{K,0}$. We thus follow the proof of Lemma \ref{lm:sample_learn_lip} to determine the numerical error $$\error[num]=\error[meas]+\frac{KDe^{K\horizon}\horizon}{4N_t}.$$
Since the sampling procedure is precise, i.e., $X = \behavior[\initset]{\diffeq[A]}$, except for the measurement error already considered in the numerical error, we have $\error[sampling]=0$. Finally, we apply Lemma \ref{lm:lower_lin_eq_local} (replacing the the function class $\mathcal{F}$ with the class of matrices does not affect the result) to determine 
$$\phi(a) = \frac{\sqrt{2\odedim}}{\left(2-e^{\frac{1}{2}}\right)\sqrt{F_1}\min\left\{\horizon,\frac{1}{2K}\right\}} \left(\frac{e^{-5\min\left\{\horizon,\frac{1}{2K}\right\}K}}{1+e^{-5\min\left\{\horizon,\frac{1}{2K}\right\}K}}\right)^{-\frac{1}{2}} a.$$
\end{proof}
\begin{remark}
    According to Lemma \ref{lm:sample_learn_lin}, when measurement error and training error are sufficiently small, in order to achieve the learning error $\bigo{\epsilon}$, we need at least $d$ number of samples and \textcolor{cc}{the sampling rate when measuring solutions should be $\bigo{N_t^{-1}}=\bigo{\epsilon}$.}
\end{remark}

\clearpage

\begin{appendices}
\section{Metric entropy}
\begin{definition}[Covering number\cite{tikhomirovEEntropyECapacitySets1993}]\label{def:both_covering_numbers}
    Let $(\setX,\genMetric)$ be a metric space and $\entSet \subset \setX$ compact. The set $\{x_1,x_2,\dots,x_N\}\subset \entSet$ \emph{({\strut}respectively $\{x_1,x_2,\dots,x_N\}\subset \setX$){\strut}} is an $\epsilon$-covering \emph{(respectively $\epsilon$-net)} for $(\entSet, \genMetric)$ if, for each $x\in \entSet$, there exists an $i \in \{1,2,\dots, N\}$ so that $\genMetric(x,x_i)\leq \epsilon$.
    The $\epsilon$-covering number $\covering (\epsilon;\entSet,\genMetric)$ \emph{({\strut}respectively the exterior $\epsilon$-covering number $\covering^{\text{ext}} (\epsilon;\entSet,\genMetric)$){\strut}} is the cardinality of a smallest $\epsilon$-covering \emph{(respectively smallest $\epsilon$-net)} for $(\entSet, \genMetric)$. 
\end{definition}

\begin{definition}[Packing number\cite{tikhomirovEEntropyECapacitySets1993}]
\label{def:packing}
Let $(\setX,\genMetric)$ be a metric space and $\entSet \subset \setX$ compact. An $\epsilon$-packing for $(\entSet, \genMetric)$ is a set $\{x_1,x_2,\dots,x_N\}\subset \entSet$ such that $\genMetric(x_i,x_j) > \epsilon$, for all distinct $i,j$. The $\epsilon$-packing number $\packing (\epsilon;\entSet,\genMetric)$ is the cardinality of a largest $\epsilon$-packing for $(\entSet, \genMetric)$.
\end{definition}

\begin{lemma}
[\hspace{1sp}\cite{tikhomirovEEntropyECapacitySets1993}, Theorem IV] 
\label{lm:number_relation}
Let $(\setX, \genMetric)$ be a metric space and $\entSet \subset \setX$ compact. For all $\epsilon >0$, we have
\begin{equation}\label{eq:number_relation}
    \packing (2\epsilon;\entSet,\genMetric) \leq \covering^{\text{ext}} (\epsilon;\entSet,\genMetric) \leq \covering (\epsilon;\entSet,\genMetric) \leq \packing (\epsilon;\entSet,\genMetric).
\end{equation}
\end{lemma}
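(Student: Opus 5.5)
This chain is the classical comparison between packing and covering numbers. I would prove the three inequalities separately, using only the definitions of $\epsilon$-coverings, $\epsilon$-nets and $\epsilon$-packings.

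\emph{Middle inequality.} Every $\epsilon$-covering $\{x_1,\dots,x_N\}\subset\entSet$ is in particular an $\epsilon$-net, since $\entSet\subset\setX$. A smallest net can therefore be no larger than a smallest covering, which gives $\covering^{\text{ext}}(\epsilon;\entSet,\genMetric)\leq\covering(\epsilon;\entSet,\genMetric)$.

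\emph{Right inequality.} I would take a maximal $\epsilon$-packing $\{x_1,\dots,x_N\}\subset\entSet$. Finiteness of packings follows from compactness: cover $\entSet$ by finitely many balls of radius $\epsilon/2$; two points in one such ball lie within distance $\epsilon$, so no ball contains two packing points. Maximality means that for any $x\in\entSet$, either $x$ equals some $x_i$, or adding $x$ would destroy the packing property. In both cases $\genMetric(x,x_i)\leq\epsilon$ for some $i$. Hence the maximal packing is an $\epsilon$-covering contained in $\entSet$. Choosing a largest packing, which is automatically maximal, gives $\covering(\epsilon;\entSet,\genMetric)\leq\packing(\epsilon;\entSet,\genMetric)$.

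\emph{Left inequality.} Let $\{x_1,\dots,x_N\}$ be a $2\epsilon$-packing, so $\genMetric(x_i,x_j)>2\epsilon$ for $i\neq j$, and let $\{y_1,\dots,y_M\}\subset\setX$ be an $\epsilon$-net. Each $x_i$ admits some $y_{k(i)}$ with $\genMetric(x_i,y_{k(i)})\leq\epsilon$. Suppose $k(i)=k(j)$ for some $i\neq j$. The triangle inequality then gives $\genMetric(x_i,x_j)\leq2\epsilon$, a contradiction. So $i\mapsto k(i)$ is injective and $N\leq M$. Taking $N$ maximal and $M$ minimal yields $\packing(2\epsilon;\entSet,\genMetric)\leq\covering^{\text{ext}}(\epsilon;\entSet,\genMetric)$.

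\emph{Main obstacle.} The only delicate point is well-definedness: all quantities must be finite, which is where compactness is used. Finiteness of coverings and nets comes from the finite-subcover property. Finiteness of packings comes from the $\epsilon/2$-ball argument above. Beyond this, every step is a direct counting argument.
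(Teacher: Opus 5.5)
Your proof is correct. It is the standard counting argument: the triangle inequality makes the map from packing points to net points injective (left inequality), every covering is a net (middle), a largest and hence maximal $\epsilon$-packing is an $\epsilon$-covering (right), and compactness is used only for finiteness. The paper gives no proof of its own and simply cites Kolmogorov--Tikhomirov (Theorem IV), whose argument is essentially this one.
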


\begin{lemma}[\hspace{1sp}\cite{tikhomirovEEntropyECapacitySets1993}, p. 93]\label{lm:ent_isomorphism} 
Let $(\setX, \genMetric_{\setX})$ and $(\mathcal{Y}, \genMetric_{\mathcal{Y}})$ be metric spaces and consider the compact sets $\entSet_{\setX}\subset \setX$ and $\entSet_{\mathcal{Y}}\subset \mathcal{Y}$. Assume that there exists an isometric isomorphism $f : \entSet_{\setX} \rightarrow \entSet_{\mathcal{Y}}$, i.e., $f$ is bijective and for every pair $a,b\in \entSet_{\setX}$, one has $\genMetric_{\mathcal{Y}}(f(a),f(b))=\genMetric_{\setX}(a,b)$. Then, 
\begin{align}
    \covering(\epsilon;\entSet_{\setX},\genMetric_{\setX}) =\covering(\epsilon;\entSet_{\mathcal{Y}},\genMetric_{\mathcal{Y}}) \quad \textrm{and} \quad
    \packing(\epsilon;\entSet_{\setX},\genMetric_{\setX}) =\packing(\epsilon;\entSet_{\mathcal{Y}},\genMetric_{\mathcal{Y}}).
\end{align}
\end{lemma}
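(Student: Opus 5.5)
This is the standard fact that an isometry between two compact sets carries coverings and packings to coverings and packings of the same cardinality. The plan is to show two inequalities, $\covering(\epsilon;\entSet_{\setX},\genMetric_{\setX})\le \covering(\epsilon;\entSet_{\mathcal{Y}},\genMetric_{\mathcal{Y}})$ and its reverse, and the same pair for packing numbers. The reverse inequalities then come for free by applying the same argument to $f^{-1}$. Since $f$ is bijective and distance preserving, $f^{-1}:\entSet_{\mathcal{Y}}\to\entSet_{\setX}$ is again a bijective isometry, because $\genMetric_{\setX}(f^{-1}(u),f^{-1}(v))=\genMetric_{\mathcal{Y}}(u,v)$.

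\emph{Covering numbers.} Let $\{y_1,\dots,y_N\}\subset\entSet_{\mathcal{Y}}$ be a smallest $\epsilon$-covering of $(\entSet_{\mathcal{Y}},\genMetric_{\mathcal{Y}})$, so $N=\covering(\epsilon;\entSet_{\mathcal{Y}},\genMetric_{\mathcal{Y}})$. Put $x_i\defeq f^{-1}(y_i)\in\entSet_{\setX}$. For any $x\in\entSet_{\setX}$, the point $f(x)$ lies in $\entSet_{\mathcal{Y}}$, so some $i$ satisfies $\genMetric_{\mathcal{Y}}(f(x),y_i)\le\epsilon$. The isometry property gives
\[
\genMetric_{\setX}(x,x_i)=\genMetric_{\mathcal{Y}}(f(x),f(x_i))=\genMetric_{\mathcal{Y}}(f(x),y_i)\le\epsilon .
\]
Hence $\{x_1,\dots,x_N\}\subset\entSet_{\setX}$ is an $\epsilon$-covering. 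Its cardinality is at most $N$; it is exactly $N$ by injectivity of $f^{-1}$, though only the upper bound is needed. This proves $\covering(\epsilon;\entSet_{\setX},\genMetric_{\setX})\le\covering(\epsilon;\entSet_{\mathcal{Y}},\genMetric_{\mathcal{Y}})$.

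\emph{Packing numbers.} Take a largest $\epsilon$-packing $\{x_1,\dots,x_N\}\subset\entSet_{\setX}$. Its image $\{f(x_1),\dots,f(x_N)\}$ lies in $\entSet_{\mathcal{Y}}$ and satisfies $\genMetric_{\mathcal{Y}}(f(x_i),f(x_j))=\genMetric_{\setX}(x_i,x_j)>\epsilon$ for $i\neq j$. In particular the images are pairwise distinct, so this is an $\epsilon$-packing of size $N$ in $\entSet_{\mathcal{Y}}$. This gives $\packing(\epsilon;\entSet_{\setX},\genMetric_{\setX})\le\packing(\epsilon;\entSet_{\mathcal{Y}},\genMetric_{\mathcal{Y}})$.

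Applying both arguments to $f^{-1}$ yields the reverse inequalities and hence the two equalities. There is no real obstacle. The only points needing care are that the covering sets must lie inside the compact sets themselves (not just in the ambient spaces), which holds because $f$ maps $\entSet_{\setX}$ onto $\entSet_{\mathcal{Y}}$, and that the covering and packing numbers are finite, which follows from compactness.
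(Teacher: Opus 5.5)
Your proof is correct. You pull a smallest $\epsilon$-covering back through $f^{-1}$, push a largest $\epsilon$-packing forward through $f$, and symmetrize using the bijective isometry $f^{-1}$; this is the standard argument, and the paper gives no proof of its own beyond citing Kolmogorov--Tikhomirov. Your packing step uses only distance preservation, not surjectivity, so it also proves the paper's Lemma~\ref{lm:isometry_packing_relation}, which the paper likewise states without proof.
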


\begin{lemma}\label{lm:isometry_packing_relation}
    Let $(\setX, \genMetric_{\setX})$ and $(\mathcal{Y}, \genMetric_{\mathcal{Y}})$ be metric spaces and consider the compact set $\entSet_{\setX}\subset \setX$ and $\entSet_{\mathcal{Y}}\subset \mathcal{Y}$. Assume that there exists an isometry $f : \entSet_{\setX} \rightarrow \entSet_{\mathcal{Y}}$, i.e., for every pair $a,b\in \entSet_{\setX}$, one has $\genMetric_{\mathcal{Y}}(f(a),f(b))=\genMetric_{\setX}(a,b)$. Then, 
    \begin{equation}
    \packing(\epsilon;\entSet_{\setX},\genMetric_{\setX}) \leq \packing(\epsilon;\entSet_{\mathcal{Y}},\genMetric_{\mathcal{Y}}).
    \end{equation}
\end{lemma}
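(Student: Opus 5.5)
The claim is that an isometry $f:\entSet_{\setX}\to\entSet_{\mathcal{Y}}$, with no surjectivity assumed, gives $\packing(\epsilon;\entSet_{\setX},\genMetric_{\setX}) \leq \packing(\epsilon;\entSet_{\mathcal{Y}},\genMetric_{\mathcal{Y}})$. The plan is to push any $\epsilon$-packing of $\entSet_{\setX}$ forward through $f$ and observe that its image is an $\epsilon$-packing of $\entSet_{\mathcal{Y}}$ of the same cardinality. This avoids Lemma \ref{lm:ent_isomorphism}, which requires bijectivity.

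First, I would fix an arbitrary $\epsilon$-packing $\{x_1,\dots,x_N\}\subset\entSet_{\setX}$ in the sense of Definition \ref{def:packing}. By definition, $\genMetric_{\setX}(x_i,x_j)>\epsilon$ for all distinct $i,j$.

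Second, I would set $y_i\defeq f(x_i)\in\entSet_{\mathcal{Y}}$. The isometry property gives $\genMetric_{\mathcal{Y}}(y_i,y_j)=\genMetric_{\setX}(x_i,x_j)>\epsilon>0$ for $i\neq j$. This inequality does two jobs at once. Because the distance is strictly positive, the $y_i$ are pairwise distinct (an isometry is automatically injective), so the set $\{y_1,\dots,y_N\}$ has exactly $N$ elements. It also shows that this set is an $\epsilon$-packing of $(\entSet_{\mathcal{Y}},\genMetric_{\mathcal{Y}})$. Hence $N\leq\packing(\epsilon;\entSet_{\mathcal{Y}},\genMetric_{\mathcal{Y}})$.

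Third, since the packing in $\entSet_{\setX}$ was arbitrary, I would take the supremum over all such packings. This yields the claimed inequality, and it also covers the case where either side is infinite.

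There is no real obstacle. The only point needing care is the cardinality count, namely that the image points are distinct, and the strict inequality in the packing definition settles it.
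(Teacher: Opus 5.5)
Your proof is correct and complete. Pushing an $\epsilon$-packing forward through $f$ preserves every pairwise distance, and the strict inequality then gives both that the image points are distinct and that they form an $\epsilon$-packing of the target set, which yields the cardinality bound.

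The paper states this lemma without proof, so there is no competing argument to compare against. Your push-forward argument is the standard justification the paper tacitly relies on.

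Your remark that surjectivity is not needed is the relevant point. The lemma is later applied to non-surjective isometries: the inclusion map in Corollary \ref{col:subset_packing_relation}, and the extension and restriction maps $E$ and $I$ in the proof of Lemma \ref{lm:ent_lip_FLKD}. Lemma \ref{lm:ent_isomorphism}, which assumes bijectivity, would not cover these cases.
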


An immediate result from Lemma \ref{lm:isometry_packing_relation} is the following.
\begin{corollary}\label{col:subset_packing_relation}
    Let $(\setX, \genMetric_{\setX})$ be a metric space and consider the compact sets $\entSet_{1}\subset\entSet_{2} \subset\setX$. Then, 
    \begin{equation}
    \packing(\epsilon;\entSet_{1},\genMetric_{\setX}) \leq \packing(\epsilon;\entSet_{2},\genMetric_{\setX}).
    \end{equation}
\end{corollary}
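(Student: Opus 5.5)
The statement is Corollary~\ref{col:subset_packing_relation}: for compact $\entSet_1\subset\entSet_2\subset\setX$, the packing number of $\entSet_1$ is at most that of $\entSet_2$. I would deduce it directly from Lemma~\ref{lm:isometry_packing_relation}, using the inclusion map as the isometry.

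Define $\iota:\entSet_1\rightarrow\entSet_2$ by $\iota(a)=a$. This is well defined because $\entSet_1\subset\entSet_2$. Both sets sit in the same metric space $(\setX,\genMetric_{\setX})$, so for all $a,b\in\entSet_1$ we have $\genMetric_{\setX}(\iota(a),\iota(b))=\genMetric_{\setX}(a,b)$. Thus $\iota$ is an isometry in the sense of Lemma~\ref{lm:isometry_packing_relation}, and that lemma does not require surjectivity.

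Apply Lemma~\ref{lm:isometry_packing_relation} with $\mathcal{Y}=\setX$, $\genMetric_{\mathcal{Y}}=\genMetric_{\setX}$, $\entSet_{\setX}=\entSet_1$, $\entSet_{\mathcal{Y}}=\entSet_2$ and $f=\iota$. This yields
\[
\packing(\epsilon;\entSet_1,\genMetric_{\setX})\leq\packing(\epsilon;\entSet_2,\genMetric_{\setX})
\]
immediately.

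For completeness, the same conclusion can be checked from first principles. Any $\epsilon$-packing $\{x_1,\dots,x_N\}\subset\entSet_1$ is also a subset of $\entSet_2$. Its pairwise distances are unchanged, so it is an $\epsilon$-packing of $\entSet_2$. Hence the largest packing of $\entSet_2$ has at least $N$ elements.

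No step is a genuine obstacle. The only care needed is to confirm that the lemma's hypotheses are satisfied: compactness is given, and only distance preservation, not bijectivity, is required.
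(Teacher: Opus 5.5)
Your proposal is correct and matches the paper's proof, which likewise applies Lemma~\ref{lm:isometry_packing_relation} with $\mathcal{Y}$ equal to the ambient space and the identity (inclusion) map from the smaller set into the larger one. Your first-principles check, that any packing of the smaller set is also a packing of the larger set, is a valid self-contained argument that does not depend on the unproved Lemma~\ref{lm:isometry_packing_relation}.
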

\begin{proof}
    This follows from setting $(\setX, \genMetric_{\setX})=(\mathcal{Y}, \genMetric_{\mathcal{Y}})$, $\entSet_{\setX}=\entSet_{1}$, $\entSet_{\mathcal{Y}}= \entSet_{2}$, $f$ to be an identity mapping, and finally applying Lemma \ref{lm:isometry_packing_relation}.
\end{proof}

\begin{lemma}\label{lm:covering_relation_equiv_norm}
Let $(\setX, \genMetric_{\setX})$ and $(\mathcal{Y}, \genMetric_{\mathcal{Y}})$ be metric spaces and consider the compact sets $\entSet_{\setX}\subset \setX$ and $\entSet_{\mathcal{Y}}\subset \mathcal{Y}$. Assume that there exist a surjective $f : \entSet_{\setX} \rightarrow \entSet_{\mathcal{Y}}$ and constants $C_1\geq C_2>0$ such that for all $a,b\in \entSet_{\setX}$,
\begin{equation}
    C_2 \genMetric_{\setX}(a,b)\leq \genMetric_{\mathcal{Y}}(f(a),f(b))\leq C_1 \genMetric_{\setX}(a,b),
\end{equation}
then 
\begin{equation}
\covering(\epsilon/C_2;\entSet_{\setX},\genMetric_{\setX}) \leq \covering(\epsilon;\entSet_{\mathcal{Y}},\genMetric_{\mathcal{Y}}) \leq \covering(\epsilon/C_1;\entSet_{\setX},\genMetric_{\setX}) .
\end{equation}
\end{lemma}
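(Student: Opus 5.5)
The statement is Lemma \ref{lm:covering_relation_equiv_norm}, and the plan is to move coverings back and forth through $f$. Each inequality uses one side of the two-sided bi-Lipschitz bound. Surjectivity is what lets points of $\entSet_{\mathcal{Y}}$ be pulled back to points of $\entSet_{\setX}$. Throughout I use that $\covering$ denotes the minimal cardinality of a covering whose centers lie in the set itself, as in Definition \ref{def:both_covering_numbers}.

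\emph{Upper bound.} Let $\{a_1,\dots,a_N\}\subset\entSet_{\setX}$ be a minimal $(\epsilon/C_1)$-covering of $(\entSet_{\setX},\genMetric_{\setX})$, so $N=\covering(\epsilon/C_1;\entSet_{\setX},\genMetric_{\setX})$. I claim $\{f(a_1),\dots,f(a_N)\}\subset\entSet_{\mathcal{Y}}$ is an $\epsilon$-covering of $\entSet_{\mathcal{Y}}$. Given $y\in\entSet_{\mathcal{Y}}$, surjectivity gives some $a\in\entSet_{\setX}$ with $f(a)=y$. Choose $i$ with $\genMetric_{\setX}(a,a_i)\le\epsilon/C_1$. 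Then $\genMetric_{\mathcal{Y}}(y,f(a_i))\le C_1\genMetric_{\setX}(a,a_i)\le\epsilon$. Hence $\covering(\epsilon;\entSet_{\mathcal{Y}},\genMetric_{\mathcal{Y}})\le N$; duplicate images only decrease the count.

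\emph{Lower bound.} Let $\{y_1,\dots,y_M\}\subset\entSet_{\mathcal{Y}}$ be a minimal $\epsilon$-covering of $\entSet_{\mathcal{Y}}$. By surjectivity, pick preimages $b_j\in\entSet_{\setX}$ with $f(b_j)=y_j$. For any $a\in\entSet_{\setX}$, some $j$ satisfies $\genMetric_{\mathcal{Y}}(f(a),y_j)\le\epsilon$. The lower Lipschitz bound then gives $\genMetric_{\setX}(a,b_j)\le C_2^{-1}\genMetric_{\mathcal{Y}}(f(a),f(b_j))\le\epsilon/C_2$. So $\{b_j\}$ is an $(\epsilon/C_2)$-covering of $\entSet_{\setX}$ with centers in $\entSet_{\setX}$, and therefore $\covering(\epsilon/C_2;\entSet_{\setX},\genMetric_{\setX})\le M$.

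There is no substantive obstacle; the only points needing care are the following.
\begin{enumerate}
\item The lower bound requires choosing preimages $b_j$ that lie in $\entSet_{\setX}$, so that we obtain an interior covering rather than an exterior net. Surjectivity onto $\entSet_{\mathcal{Y}}$ guarantees this.
\item The lower Lipschitz bound makes $f$ injective. This is not needed for the argument, but it confirms consistency.
\item Minimal coverings exist and are finite, by compactness.
\end{enumerate}
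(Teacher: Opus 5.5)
Your proof is correct and complete. You push a minimal $(\epsilon/C_1)$-covering forward through $f$ to get the right-hand inequality, and you pull a minimal $\epsilon$-covering back via surjectivity, then apply the lower constant $C_2$, to get the left-hand one; all centers stay inside the respective compact sets, as the interior covering number requires. The paper states this lemma without proof, treating it as standard, and your argument is the standard one that fills that gap.
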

An immediate result from Lemma \ref{lm:covering_relation_equiv_norm} is the following.
\begin{corollary}\label{col:covering_relation_equiv_norm_same_set}
    Let $(\setX, \genMetric_{1})$ be a metric space and consider a compact set $\entSet_{\setX}\subset \setX$. Assume that for another metric $\genMetric_{2}$ on $\setX$, there exist constants $C_1\geq C_2>0$ such that for all $a,b\in \entSet_{\setX}$,
\begin{equation}
    C_2 \genMetric_{1}(a,b)\leq \genMetric_{2}(a,b)\leq C_1 \genMetric_{1}(a,b),
\end{equation}
then $\entSet_{\setX}$ is also compact under $\genMetric_{2}$. Moreover, we have
\begin{equation}
\covering(\epsilon/C_2;\entSet_{\setX},\genMetric_{1}) \leq \covering(\epsilon;\entSet_{\setX},\genMetric_{2}) \leq \covering(\epsilon/C_1;\entSet_{\setX},\genMetric_{2}) .
\end{equation}
\end{corollary}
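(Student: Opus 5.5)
The plan is to deduce the corollary from Lemma \ref{lm:covering_relation_equiv_norm} by taking the identity map. Set $(\mathcal{Y},\genMetric_{\mathcal{Y}})\defeq(\setX,\genMetric_2)$, $\entSet_{\mathcal{Y}}\defeq\entSet_{\setX}$, and let $f:\entSet_{\setX}\rightarrow\entSet_{\setX}$ be the identity. Then $f$ is trivially surjective, and the hypothesis $C_2\genMetric_1(a,b)\leq\genMetric_2(a,b)\leq C_1\genMetric_1(a,b)$ is exactly the two-sided bound required in Lemma \ref{lm:covering_relation_equiv_norm}.

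Before invoking the lemma, I will check that $\entSet_{\setX}$ is compact under $\genMetric_2$, since the lemma assumes compactness on both sides. The upper bound $\genMetric_2\leq C_1\genMetric_1$ makes the identity map from $(\entSet_{\setX},\genMetric_1)$ to $(\entSet_{\setX},\genMetric_2)$ $C_1$-Lipschitz, hence continuous. A continuous image of a compact set is compact, so $\entSet_{\setX}$ is compact in $\genMetric_2$. Equivalently, every sequence has a $\genMetric_1$-convergent subsequence, and that subsequence also converges in $\genMetric_2$.

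With this in place, the lemma gives $\covering(\epsilon/C_2;\entSet_{\setX},\genMetric_1)\leq\covering(\epsilon;\entSet_{\setX},\genMetric_2)\leq\covering(\epsilon/C_1;\entSet_{\setX},\genMetric_1)$.

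The right-most term in the corollary as stated carries $\genMetric_2$. I read this as a typo for $\genMetric_1$, which is what the lemma produces. The $\genMetric_1$ version can also be checked directly without the lemma:
\begin{itemize}
\item \emph{Upper bound.} An $\epsilon/C_1$-covering in $\genMetric_1$ is an $\epsilon$-covering in $\genMetric_2$, because $\genMetric_2\leq C_1\genMetric_1$.
\item \emph{Lower bound.} An $\epsilon$-covering in $\genMetric_2$ is an $\epsilon/C_2$-covering in $\genMetric_1$, because $\genMetric_1\leq\genMetric_2/C_2$.
\end{itemize}
The literal $\genMetric_2$ version, $\covering(\epsilon;\entSet_{\setX},\genMetric_2)\leq\covering(\epsilon/C_1;\entSet_{\setX},\genMetric_2)$, does not follow from the lemma. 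It holds only when $C_1\geq1$, by monotonicity of covering numbers in $\epsilon$.

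The only real obstacle is this mismatch in the statement; the argument itself is a direct specialization of Lemma \ref{lm:covering_relation_equiv_norm}.
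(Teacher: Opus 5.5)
Your proposal is correct and matches the paper's proof, which likewise takes $(\mathcal{Y},\genMetric_{\mathcal{Y}})=(\setX,\genMetric_2)$ with the identity map and applies Lemma~\ref{lm:covering_relation_equiv_norm}. Your $C_1$-Lipschitz argument for compactness under $\genMetric_2$ supplies a step the paper leaves implicit, and you are right that the right-most term should carry $\genMetric_1$ rather than $\genMetric_2$, since that is what the lemma yields.
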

\begin{proof}
    This follows by setting $(\setX, \genMetric_{\setX}) =  (\setX, \genMetric_{1})$, $(\mathcal{Y}, \genMetric_{\mathcal{Y}}) =  (\setX, \genMetric_{2})$, $\entSet_{\mathcal{Y}}=\entSet_{\setX}$, $f$ to be an identity mapping, and finally applying Lemma \ref{lm:covering_relation_equiv_norm}
\end{proof}

\begin{lemma}\label{lm:ent_ball}
    Let $\odedim\in \Nplus$ and $C>0$. Consider a ball in $\R^\odedim$ 
    $$\ball{C}^d \defeq  \{x\in \R^{\odedim}\mid \norm{x}_2\leq C\}.$$
    Then, we have 
    \begin{equation}
        \log\covering(\epsilon;\ball{C}^d, \norm{\cdot}_2)\asymp d\log\left(\epsilon^{-1}\right).
    \end{equation}
\end{lemma}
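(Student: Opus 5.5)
The estimate $\log\covering(\epsilon;\ball{C}^d,\norm{\cdot}_2)\asymp d\log(\epsilon^{-1})$ follows from the standard volumetric argument, using the covering/packing chain of Lemma \ref{lm:number_relation}. Write $\ball{r}^d(x)$ for the closed Euclidean ball of radius $r$ centred at $x$, and $\operatorname{vol}$ for Lebesgue measure on $\R^\odedim$. Recall the homogeneity $\operatorname{vol}(\ball{r}^d(x)) = r^d\operatorname{vol}(\ball{1}^d)$. We prove an upper and a lower bound on the covering number, both for $\epsilon$ small (say $\epsilon<C$). The asymptotic statement is only about $\epsilon\to 0$, so this restriction is harmless.

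\emph{Upper bound.} By Lemma \ref{lm:number_relation}, $\covering(\epsilon;\ball{C}^d,\norm{\cdot}_2)\leq \packing(\epsilon;\ball{C}^d,\norm{\cdot}_2)$, so it suffices to bound the packing number. Let $\{x_1,\dots,x_N\}$ be an $\epsilon$-packing of $\ball{C}^d$, so $\norm{x_i-x_j}_2>\epsilon$ for $i\neq j$.

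1. The balls $\ball{\epsilon/2}^d(x_i)$ are pairwise disjoint, since any common point would force $\norm{x_i-x_j}_2\le\epsilon$.
2. Each of these balls lies inside $\ball{C+\epsilon/2}^d$.
3. Comparing volumes gives
\begin{equation*}
N(\epsilon/2)^d\leq (C+\epsilon/2)^d,\qquad\text{i.e.}\qquad N\leq\left(1+\frac{2C}{\epsilon}\right)^d .
\end{equation*}
4. For $\epsilon<C$ we have $1+2C/\epsilon\le 3C/\epsilon$, hence $\log\covering\leq d\log(3C/\epsilon)$.

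Since $\log(3C/\epsilon)=\log(\epsilon^{-1})+\log(3C)$, this is $\lesssim d\log(\epsilon^{-1})$.

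\emph{Lower bound.} Let $\{x_1,\dots,x_N\}$ be an $\epsilon$-covering of $\ball{C}^d$.

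1. By definition, $\ball{C}^d\subset\bigcup_i \ball{\epsilon}^d(x_i)$.
2. Subadditivity of volume gives $C^d\operatorname{vol}(\ball{1}^d)\leq N\epsilon^d\operatorname{vol}(\ball{1}^d)$.
3. Hence $N\geq (C/\epsilon)^d$, and so $\log\covering\geq d\log(C/\epsilon)$.

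This is $\gtrsim d\log(\epsilon^{-1})$ as $\epsilon\to0$, because the constant $d\log C$ is negligible against $d\log(\epsilon^{-1})$.

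Combining the two bounds gives the claim. The only delicate point is bookkeeping of the constants: the additive terms $d\log(3C)$ and $d\log C$ must be absorbed as $\epsilon\to 0$. With that, both ratios $\log\covering/(d\log\epsilon^{-1})$ have finite $\limsup$ and positive $\liminf$, which is exactly what $\asymp$ requires in the paper's notation.
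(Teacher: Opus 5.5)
Your proof is correct and follows essentially the same route as the paper. The paper only cites Wainwright's volume-ratio bound (Lemma 5.7 and Example 5.8), rescaled from the unit ball to radius $C$. You write out that volumetric comparison explicitly, using disjoint $\epsilon/2$-balls around a packing for the upper bound and subadditivity of volume over an $\epsilon$-covering for the lower bound, and you correctly absorb the additive $d\log(3C)$ and $d\log C$ terms as $\epsilon\to 0$.
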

\begin{proof}
    It follows by adjusting the proof of 
    \cite[Lemma 5.7, Example 5.8]{wainwright2019high} from unit balls to balls with radius $C$.
\end{proof}

\section{Matrix analysis}

\begin{lemma}\label{lm:mat_F_2}
    Let $A\in \R^{n\times n}$. We have 
    \begin{equation}
        \norm{A}_2\leq \norm{A}_F\leq \sqrt{n}\norm{A}_2.
    \end{equation}
\end{lemma}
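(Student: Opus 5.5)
The plan is to prove the two inequalities separately, both by comparing $\norm{A}_F^2=\sum_{i,j}|A_{ij}|^2$ with quantities defined through the action of $A$ on vectors.

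\emph{Left inequality $\norm{A}_2\leq\norm{A}_F$.} I will fix $x\in\R^n$ with $\norm{x}_2=1$ and write the $i$-th entry of $Ax$ as $(Ax)_i=\sum_j A_{ij}x_j=\langle a_i,x\rangle$, where $a_i$ denotes the $i$-th row of $A$. Cauchy--Schwarz gives $|(Ax)_i|^2\leq \norm{a_i}_2^2\norm{x}_2^2=\norm{a_i}_2^2$. Summing over $i$ yields $\norm{Ax}_2^2\leq\sum_i\norm{a_i}_2^2=\norm{A}_F^2$. Taking the supremum over unit vectors $x$ gives $\norm{A}_2\leq\norm{A}_F$.

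\emph{Right inequality $\norm{A}_F\leq\sqrt{n}\norm{A}_2$.} I will decompose the Frobenius norm by columns: $\norm{A}_F^2=\sum_{j=1}^n\norm{Ae_j}_2^2$, where $e_j$ is the $j$-th standard basis vector. Each $e_j$ is a unit vector, so $\norm{Ae_j}_2\leq\norm{A}_2$. Hence $\norm{A}_F^2\leq n\norm{A}_2^2$, and taking square roots finishes the proof. An equivalent route uses singular values: $\norm{A}_F^2=\operatorname{tr}(A^TA)=\sum_i\sig_i(A)^2$ and $\norm{A}_2=\sig_1(A)$, which gives both inequalities at once. I will use the elementary route, since it avoids invoking the SVD.

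Neither step presents a real obstacle. The only point needing care is the identification $\norm{A}_F^2=\sum_j\norm{Ae_j}_2^2=\sum_i\norm{a_i}_2^2$, which follows directly from the definition of $\norm{\cdot}_F$ given in the notation section.
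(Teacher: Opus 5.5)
The paper lists this lemma in its matrix-analysis appendix as a standard fact and gives no proof, so there is no argument of the paper's to compare yours against. Your proof is correct and complete. The row-wise Cauchy--Schwarz bound gives $\norm{A}_2\leq\norm{A}_F$, and the column decomposition $\norm{A}_F^2=\sum_j\norm{Ae_j}_2^2$ with $\norm{Ae_j}_2\leq\norm{A}_2$ gives $\norm{A}_F\leq\sqrt{n}\norm{A}_2$. The singular-value route you mention works equally well and sharpens the right-hand constant to $\sqrt{\operatorname{rank}(A)}$, but the paper never needs that refinement.
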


\begin{lemma}\label{lm:con_num}
    Let $A\in \R^{n\times n}$. We have 
    \begin{equation}
        \conNum{A}{\norm{\cdot}_2} = \frac{\sig_1(A)}{\sig_n(A)}.
    \end{equation}
    Moreover, if $A$ is a normal matrix, then
    \begin{equation}
        \conNum{A}{\norm{\cdot}_2} = \frac{|\eig_1(A)|}{|\eig_n(A)|}.
    \end{equation}
\end{lemma}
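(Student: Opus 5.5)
The plan is to use the singular value decomposition for the general statement and the spectral theorem for the normal case. By definition $\conNum{A}{\norm{\cdot}_2}=\norm{A}_2\norm{A^{-1}}_2$, and $A$ is implicitly assumed nonsingular, since otherwise the condition number is not defined.

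\textbf{General case.} I would write $A=U\Sigma V^T$ with $U,V$ orthogonal and $\Sigma=\operatorname{diag}(\sig_1(A),\dots,\sig_n(A))$. Orthogonal invariance of $\norm{\cdot}_2$ gives $\norm{A}_2=\norm{\Sigma}_2=\sig_1(A)$. Since $A$ is nonsingular, $\sig_n(A)>0$, and
\begin{equation*}
A^{-1}=V\Sigma^{-1}U^T .
\end{equation*}
The largest diagonal entry of $\Sigma^{-1}$ is $1/\sig_n(A)$, so $\norm{A^{-1}}_2=1/\sig_n(A)$. Multiplying the two norms yields the first identity.

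\textbf{Normal case.} A normal matrix is unitarily diagonalizable over $\mathbb{C}$: $A=Q\Lambda Q^*$ with $Q$ unitary and $\Lambda$ diagonal. Then $A^*A=Q|\Lambda|^2Q^*$, so the singular values of $A$ are the moduli $|\eig_i(A)|$. With the paper's ordering by modulus, this gives $\sig_1(A)=|\eig_1(A)|$ and $\sig_n(A)=|\eig_n(A)|$. The second identity then follows from the first.

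\textbf{Main obstacle.} The only subtle point is that the SVD and the unitary diagonalization of a real normal matrix may require complex unitaries. I would note that the spectral norm of a real matrix coincides with its norm as an operator on $\mathbb{C}^n$. To see this, split $z=x+iy$; then $\norm{Az}_2^2=\norm{Ax}_2^2+\norm{Ay}_2^2$ and $\norm{z}_2^2=\norm{x}_2^2+\norm{y}_2^2$. Hence unitary invariance may be used freely.
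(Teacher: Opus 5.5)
Your proof is correct. The paper states this lemma in the appendix as a standard matrix-analysis fact without proof, and your argument (the SVD for the general identity, then unitary diagonalization of a normal matrix to identify its singular values with the eigenvalue moduli) is the standard justification it implicitly relies on.

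One small correction to your ``main obstacle'': a real matrix always has a real SVD with orthogonal $U,V$, so complex unitaries arise only in the normal case. Even there the norm-extension argument is unnecessary. The singular values are the square roots of the eigenvalues of $A^TA$, and eigenvalues are unchanged under the complex unitary similarity $A^TA = Q|\Lambda|^2Q^*$.
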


\begin{lemma}\label{lm:spec_radius_lower}
    Let $A\in \R^{n\times n}$. For all consistent matrix norms $\norm{\cdot}$, we have 
    \begin{equation}
        \norm{A}\geq \specRad{A}.
    \end{equation}
    Moreover, if $A$ is a symmetric matrix, then 
    \begin{equation}
        \norm{A}_2= \specRad{A}.
    \end{equation}    
\end{lemma}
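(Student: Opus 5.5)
The plan is to take an eigenpair $(\eig,v)$ of $A$ realising the spectral radius, with $\norm{v}_2=1$ and $v\in\mathbb{C}^n$, and use sub-multiplicativity of the consistent norm $\norm{\cdot}$. Consider the rank-one matrix $V\defeq v\,\mathbf{1}^T$, whose columns all equal $v$. From $Av=\eig v$ we get $AV=\eig V$. Consistency then gives
\[
|\eig|\,\norm{V}=\norm{AV}\leq\norm{A}\,\norm{V},
\]
and since $V\neq 0$ we may divide by $\norm{V}$ to obtain $\norm{A}\geq|\eig|=\specRad{A}$.

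The one delicate point is that $v$ may be complex while the norm is defined on real matrices. I would first try to handle this by working with $AV=\eig V$ in $\mathbb{C}^{n\times n}$, which requires the norm to extend consistently to complex matrices. If that extension is not available, I would fall back on Gelfand's formula, $\specRad{A}=\lim_k\norm{A^k}^{1/k}$. Combined with $\norm{A^k}\leq\norm{A}^k$, this again gives $\specRad{A}\leq\norm{A}$ without any complex eigenvectors. I expect this real-versus-complex issue to be the main obstacle; everything else is routine.

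For symmetric $A$, use the spectral theorem: $A=Q\Lambda Q^T$ with $Q$ orthogonal and $\Lambda$ real diagonal. Since $\norm{\cdot}_2$ is invariant under orthogonal transformations, $\norm{A}_2=\norm{\Lambda}_2=\max_i|\eig_i|=\specRad{A}$. Equivalently, the singular values of a symmetric matrix are the absolute values of its eigenvalues, so $\norm{A}_2=\sig_1(A)=|\eig_1(A)|$.
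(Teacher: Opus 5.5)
The paper gives no proof of this lemma; it lists it in the appendix as a standard matrix-analysis fact. Your argument is correct and is the usual textbook one. For the first claim you use $V=v\mathbf{1}^T$ with $AV=\lambda V$ together with sub-multiplicativity. For the symmetric case you use the spectral theorem and the orthogonal invariance of $\|\cdot\|_2$.

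Your handling of the real-versus-complex issue is also sound. Citing Gelfand's formula for an arbitrary norm is heavier than needed, though, since most texts derive that formula from the very inequality you are proving. Equivalence of norms on $\mathbb{R}^{n\times n}$ is enough:

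\begin{itemize}
\item Pick $c>0$ with $\|M\|_2\le c\|M\|$ for all real $M$.
\item Then $\rho(A)^k=\rho(A^k)\le\|A^k\|_2\le c\|A^k\|\le c\|A\|^k$, so $\rho(A)\le c^{1/k}\|A\|\to\|A\|$.
\item The step $\rho(A^k)\le\|A^k\|_2$ is your eigenvector argument applied to the spectral norm. A complex eigenvector is harmless there, because a real matrix has the same spectral norm acting on $\mathbb{R}^n$ as on $\mathbb{C}^n$.
\end{itemize}

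If you prefer to avoid complex vectors altogether, you can stay real:

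\begin{itemize}
\item Write a non-real eigenpair as $\lambda=re^{i\varphi}$, $v=x+iy$.
\item The real matrix $X=(x\ y\ 0\ \cdots\ 0)$ satisfies $A^kX=r^kXR_{k\varphi}$, where $R_\theta$ is a rotation in the first two coordinates.
\item Set $m=\min_\theta\|XR_\theta\|$. This is positive by compactness, since $X\neq0$ and each $R_\theta$ is invertible.
\item Sub-multiplicativity gives $r^km\le\|A\|^k\|X\|$, and taking $k$-th roots yields $r\le\|A\|$ again.
\end{itemize}
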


\begin{lemma}[Weyl's inequality]\label{lm:weyl_eig}
    Let $A\in \R^{n\times n}$, then we have 
    \begin{equation}
        |\eig_1(A)\eig_2(A) \dots\eig_k(A)|\leq \sig_1(A)\sig_2(A)\dots \sig_k(A) 
    \end{equation}
    for all $k=1,2,\dots,n$.
\end{lemma}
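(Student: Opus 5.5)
The plan is to use the complex Schur decomposition, which turns the product of the $k$ largest-modulus eigenvalues into a determinant of a $k\times k$ block. That determinant is then bounded by singular values using unitary invariance and a compression (interlacing) argument.

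\textbf{Reduction.} Since $A$ may have complex eigenvalues, I work over $\mathbb{C}$ throughout. By Schur's theorem there is a unitary $U\in\mathbb{C}^{n\times n}$ with $A=UTU^*$, where $T$ is upper triangular. The diagonal of $T$ consists of the eigenvalues of $A$, and the ordering of the Schur form can be chosen freely. I choose it so that $T_{ii}=\eig_i(A)$ with $|\eig_1(A)|\geq\dots\geq|\eig_n(A)|$, matching the paper's notation. Because $U$ is unitary, $T^*T=U^*A^*AU$ is unitarily similar to $A^*A$, so $\sig_i(T)=\sig_i(A)$ for all $i$.

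\textbf{Determinant identity.} Fix $k$ and let $P=\begin{pmatrix}\indmat{k}\\ 0\end{pmatrix}\in\mathbb{C}^{n\times k}$. The leading principal block $T_k\defeq P^*TP$ is upper triangular with diagonal $\eig_1(A),\dots,\eig_k(A)$. Hence
\[
|\eig_1(A)\cdots\eig_k(A)|=|\det T_k|=\sig_1(T_k)\cdots\sig_k(T_k).
\]
The second equality holds because $|\det B|^2=\det(B^*B)$ is the product of the squared singular values of $B$.

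\textbf{Compression bound (the key step).} It remains to show $\sig_i(P^*TP)\leq\sig_i(T)$ for $i=1,\dots,k$. I will use the Courant--Fischer characterization
\[
\sig_i(B)=\max_{\dim S=i}\ \min_{x\in S,\ \norm{x}_2=1}\norm{Bx}_2 .
\]
Take the optimal $i$-dimensional subspace $S\subset\mathbb{C}^k$ for $T_k$. Since $P$ is an isometry, $PS$ is an $i$-dimensional subspace of $\mathbb{C}^n$. For unit $x\in S$, $Px$ is a unit vector and
\[
\norm{T_kx}_2=\norm{P^*TPx}_2\leq\norm{P^*}_2\,\norm{T(Px)}_2\leq\norm{T(Px)}_2 .
\]
Therefore $\sig_i(T_k)\leq\min_{y\in PS,\ \norm{y}_2=1}\norm{Ty}_2\leq\sig_i(T)=\sig_i(A)$.

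Multiplying these inequalities over $i=1,\dots,k$ gives the claim. The only delicate points are justifying the max--min formula for singular values (standard, via the eigenvalues of $B^*B$) and the freedom to order the Schur diagonal. I expect this compression step to be the main obstacle, and the argument above handles it. An alternative is to apply $\norm{\cdot}_2\geq\specRad{\cdot}$ (Lemma \ref{lm:spec_radius_lower}) to the $k$-th exterior power $\Lambda^kA$, whose spectral radius is $|\eig_1\cdots\eig_k|$ and whose norm is $\sig_1\cdots\sig_k$. I would keep this only as a fallback, since it needs more setup.
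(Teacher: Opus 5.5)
Your proof is correct. Each step holds:
\begin{itemize}
\item the Schur triangulation can be taken with the diagonal ordered by decreasing modulus;
\item $\sigma_i(T)=\sigma_i(A)$ by unitary invariance;
\item $|\lambda_1\cdots\lambda_k|=|\det T_k|=\prod_{i\le k}\sigma_i(T_k)$;
\item the compression bound $\sigma_i(P^*TP)\le\sigma_i(T)$ follows correctly from the max--min characterization, using that $P$ is an isometry and $\|P^*\|_2=1$.
\end{itemize}
Together these form the standard textbook proof of Weyl's product inequality.

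The paper gives no proof of this lemma. It is simply listed in the matrix-analysis appendix as a known fact, so there is no competing argument to compare against. Your exterior-power fallback, $\|\Lambda^k A\|_2=\sigma_1\cdots\sigma_k\ge\rho(\Lambda^k A)=|\lambda_1\cdots\lambda_k|$, is the other classical proof and is equally valid.

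The paper only ever uses the case $k=1$, in the proof of Lemma~\ref{lm:bound_sigN_EATEA}. There the statement reduces to $\sigma_1(A)=\|A\|_2\ge\rho(A)$, which is Lemma~\ref{lm:spec_radius_lower}. So your full compression argument proves more than the paper actually needs.
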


\begin{lemma}\label{lm:exp_norm}
    For all consistent matrix norm $\norm{\cdot}$ and all $A\in \R^{n\times n}$, we have 
    \begin{equation}
        \norm{e^A}\leq e^{\norm{A}}.
    \end{equation}
\end{lemma}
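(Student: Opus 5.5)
We prove the inequality $\norm{e^A}\leq e^{\norm{A}}$ directly from the power series of the matrix exponential, using only the two defining features of a consistent matrix norm: the triangle inequality and submultiplicativity $\norm{AB}\leq\norm{A}\,\norm{B}$.

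\emph{Expand and apply the triangle inequality.} Write $e^A=\sum_{k=0}^\infty \frac{A^k}{k!}$, which converges absolutely in any norm on the finite-dimensional space $\R^{n\times n}$. Let $S_N=\sum_{k=0}^N \frac{A^k}{k!}$ be the partial sums. The triangle inequality gives $\norm{S_N}\leq \sum_{k=0}^N \frac{\norm{A^k}}{k!}$.

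\emph{Bound the powers.} Submultiplicativity and induction on $k$ give $\norm{A^k}\leq \norm{A}^k$ for every $k\geq 1$.

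\emph{Handle the $k=0$ term.} This is the only delicate point. For a consistent norm, $\norm{I}=\norm{I\cdot I}\leq \norm{I}^2$, so $\norm{I}\geq 1$. Hence we cannot in general claim $\norm{A^0}\leq 1$ for an arbitrary consistent norm.

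If $\norm{\cdot}$ is an induced (operator) norm, then $\norm{I}=1$ and the argument goes through directly: $\norm{S_N}\leq \sum_{k=0}^N \frac{\norm{A}^k}{k!}\leq e^{\norm{A}}$.

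For a general consistent norm, such as the Frobenius norm with $\norm{I}_F=\sqrt{n}$, the clean bound is $\norm{e^A}\leq \norm{I}-1+e^{\norm{A}}$. The stated inequality then holds exactly when $\norm{I}=1$. I would therefore either read "consistent" in the induced/operator sense, or add a remark restricting the lemma to norms with $\norm{I}=1$. This covers the paper's uses, which involve $\norm{\cdot}_2$.

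\emph{Pass to the limit.} Norms are continuous, so $\norm{e^A}=\lim_{N\to\infty}\norm{S_N}\leq e^{\norm{A}}$.

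The main obstacle is the identity term just discussed; every other step is routine.
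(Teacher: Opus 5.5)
The paper lists this lemma among standard matrix-analysis facts and gives no proof. Your power-series argument (triangle inequality, $\|A^k\|\le\|A\|^k$ for $k\ge 1$ by submultiplicativity, and continuity of the norm) is the standard one, and it is correct whenever $\|I\|=1$, e.g.\ for every induced norm.

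Your caveat is also right and worth stating explicitly: with $A=0$ and the Frobenius norm (which is consistent), $\|e^{0}\|_F=\sqrt{n}>1=e^{\|0\|_F}$ for $n\ge 2$, so the lemma as literally stated is false. This does no harm to the paper, because the lemma is only used inside Lemma~\ref{lm:bound_sigN_EATEA}, and that lemma is only applied with $\|\cdot\|=\|\cdot\|_2$. Adding the hypothesis $\|I\|=1$ therefore costs nothing.
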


\begin{lemma}\label{lm:power_diff_norm_upper}
    For all consistent matrix norm $\norm{\cdot}$ and all $A,B\in \R^{n\times n}$, we have
    \begin{equation}
        \norm{B^k-A^k} \leq k\norm{B-A}\max\{\norm{A},\norm{B}\}^{k-1}.
    \end{equation}
\end{lemma}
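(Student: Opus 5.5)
Fix $k\geq 1$; for $k=1$ the inequality is trivial with equality, so assume $k\geq 2$. The plan is to prove Lemma~\ref{lm:power_diff_norm_upper} with a telescoping identity and the submultiplicativity of the consistent norm.

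First we write the difference of powers as a telescoping sum. For $k\geq 1$ we have
\begin{equation*}
    B^k-A^k=\sum_{j=0}^{k-1} B^{j}(B-A)A^{k-1-j}.
\end{equation*}
This holds without assuming $A$ and $B$ commute. To check it, note that the $j$-th summand equals $B^{j+1}A^{k-1-j}-B^{j}A^{k-j}$. Summing over $j$, consecutive terms cancel, and only $B^k-A^k$ survives.

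Next we take norms and apply the triangle inequality to the sum. Consistency (submultiplicativity) then bounds each summand by
\begin{equation*}
    \norm{B}^{j}\,\norm{B-A}\,\norm{A}^{k-1-j}.
\end{equation*}
With $M\defeq\max\{\norm{A},\norm{B}\}$, each summand is at most $M^{k-1}\norm{B-A}$. Summing the $k$ terms gives the claim.

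The only point needing care is the non-commutative telescoping identity, specifically keeping the order of factors $B^j(\cdot)A^{k-1-j}$ correct. A naive binomial-type factorization would fail here. Alternatively, one can run an induction using $B^{k}-A^{k}=B(B^{k-1}-A^{k-1})+(B-A)A^{k-1}$, which gives $\norm{B^k-A^k}\leq M(k-1)\norm{B-A}M^{k-2}+\norm{B-A}M^{k-1}$. Either route is routine, and no step is a real obstacle.
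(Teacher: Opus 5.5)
Your proposal is correct and follows the paper's proof: the non-commutative telescoping identity $B^k-A^k=\sum_{j=0}^{k-1}B^j(B-A)A^{k-1-j}$, then the triangle inequality and submultiplicativity, which bound each of the $k$ summands by $\norm{B-A}\max\{\norm{A},\norm{B}\}^{k-1}$. Your per-term bound $\norm{B}^{j}\norm{B-A}\norm{A}^{k-1-j}$ is the correct one; the paper's display writes $\norm{B}^{k-1-l}$ for the last factor, which is a harmless typo.
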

\begin{proof}
Note that 
    \begin{equation*}
        \begin{aligned}
            \norm{B^k-A^k}&= \norm{\sum_{l=0}^{k-1}B^l(B-A)A^{k-1-l}} \leq \sum_{l=0}^{k-1} \norm{B^l(B-A)A^{k-1-l}}\\
            &\leq \sum_{l=0}^{k-1} \norm{B}^l\norm{B-A}\norm{B}^{k-1-l}\leq k\norm{B-A}\max\{\norm{A},\norm{B}\}^{k-1}.
        \end{aligned}
    \end{equation*}    
\end{proof}

\begin{lemma}\label{lm:exp_diff_norm_upper}
    For all consistent matrix norm $\norm{\cdot}$ and all $A,B\in \R^{n\times n}$, we have
    \begin{equation}
        \norm{e^A-e^B} \leq \norm{A-B}e^{\max\{\norm{A},\norm{B}\}}.
    \end{equation}
\end{lemma}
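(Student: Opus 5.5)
The plan is to write $e^A-e^B$ as an integral of a derivative along a path. I will use the standard Duhamel-type identity
\begin{equation*}
    e^A-e^B=\int_0^1 e^{sA}(A-B)e^{(1-s)B}\,ds .
\end{equation*}
To verify it, define $g(s)\defeq e^{sA}e^{(1-s)B}$ for $s\in[0,1]$. Differentiating gives $g'(s)=e^{sA}(A-B)e^{(1-s)B}$, because $A$ commutes with $e^{sA}$ and $B$ commutes with $e^{(1-s)B}$. Integrating $g'$ from $0$ to $1$ yields $g(1)-g(0)=e^A-e^B$.

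Next I take norms and apply the triangle inequality for the (Bochner/Riemann) integral together with submultiplicativity of the consistent norm. This gives
\begin{equation*}
    \norm{e^A-e^B}\leq\int_0^1\norm{e^{sA}}\,\norm{A-B}\,\norm{e^{(1-s)B}}\,ds .
\end{equation*}
By Lemma \ref{lm:exp_norm}, $\norm{e^{sA}}\leq e^{s\norm{A}}$ and $\norm{e^{(1-s)B}}\leq e^{(1-s)\norm{B}}$. The product of these two bounds is at most $e^{s M+(1-s)M}=e^{M}$, where $M\defeq\max\{\norm{A},\norm{B}\}$. Integrating over $s\in[0,1]$ then gives the claim.

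An alternative route avoids calculus. Write $e^A-e^B=\sum_k (A^k-B^k)/k!$ and apply Lemma \ref{lm:power_diff_norm_upper}. This gives $\norm{e^A-e^B}\leq\norm{A-B}\sum_{k\geq1} kM^{k-1}/k! = \norm{A-B}e^{M}$, which is the same bound and even more elementary. I would present this series version, since it reuses the preceding lemma directly. Its only technical point is justifying the termwise estimate, which is immediate from absolute convergence of the exponential series in any norm.

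Neither route has a real obstacle. The only subtlety is that Lemma \ref{lm:exp_norm} and submultiplicativity both require the norm to be consistent, and this is assumed in the statement.
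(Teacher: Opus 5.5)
Your series argument is exactly the paper's proof: expand $e^A-e^B=\sum_{k\geq 1}(A^k-B^k)/k!$, bound each term with Lemma~\ref{lm:power_diff_norm_upper}, and resum $\sum_{k\geq 1}kM^{k-1}/k!=e^{M}$ with $M=\max\{\norm{A},\norm{B}\}$. You were right to prefer it over the Duhamel route, because the step $\norm{e^{sA}}\leq e^{s\norm{A}}$ taken from Lemma~\ref{lm:exp_norm} really needs $\norm{I}\leq 1$ (it fails for the consistent Frobenius norm, where $\norm{e^{0}}_F=\sqrt{n}>1$ for $n\geq 2$), whereas the series expansion never produces a bare identity factor and so works for every consistent norm.
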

\begin{proof}
    Applying Lemma \ref{lm:power_diff_norm_upper}, we obtain
    \begin{equation*}
    \begin{aligned}
        \norm{e^A-e^B} &\leq \sum_{k=0}^{\infty} \frac{1}{k!}\norm{B^k-A^k}\leq \sum_{k=0}^{\infty} k\frac{1}{k!}\norm{B-A}\max\{\norm{A},\norm{B}\}^{k-1} \\
        &= \norm{A-B}e^{\max\{\norm{A},\norm{B}\}} .      
    \end{aligned}
    \end{equation*}
\end{proof}

\begin{lemma}\label{lm:bound_sigN_EATEA}
    Let $A\in \R^{n\times n}$. Assume $\norm{\cdot}$ is a consistent matrix norm and $\max\{\norm{A},\norm{A^T}\}\leq K$. Then we have $e^{A^T}e^A$ is a symmetric positive definite matrix with 
    \begin{equation}
        \eig_n(e^{A^T}e^A) \geq e^{-5K}.
    \end{equation}
\end{lemma}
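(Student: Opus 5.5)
Since $e^{A^T}e^A = (e^A)^T e^A$ is a Gram matrix, it is clearly symmetric. It is also positive semidefinite, because $y^T e^{A^T}e^A y = \norm{e^A y}_2^2\geq 0$. Moreover $e^A$ is invertible with inverse $e^{-A}$, so the quadratic form vanishes only at $y=0$, and the matrix is positive definite. Its smallest eigenvalue therefore equals the Rayleigh minimum,
\begin{equation*}
    \eig_n(e^{A^T}e^A) = \min_{\norm{y}_2=1}\norm{e^A y}_2^2 = \sig_n(e^A)^2 = \norm{e^{-A}}_2^{-2}.
\end{equation*}
The task thus reduces to upper bounding $\norm{e^{-A}}_2$ in terms of $K$.

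If $\norm{\cdot}$ is the spectral norm, Lemma \ref{lm:exp_norm} immediately gives $\norm{e^{-A}}_2\leq e^{\norm{A}_2}\leq e^{K}$. Hence $\eig_n\geq e^{-2K}\geq e^{-5K}$, and we are done.

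For a general consistent norm we cannot compare with $\norm{\cdot}_2$ directly, so we pass through spectral quantities.

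\begin{enumerate}
    \item Observe that $e^{A^T}e^{A}$ has all eigenvalues positive, and that its largest one satisfies, by Lemma \ref{lm:spec_radius_lower},
    \begin{equation*}
        \eig_1(e^{A^T}e^A) = \specRad{e^{A^T}e^A}\leq \norm{e^{A^T}e^A}\leq \norm{e^{A^T}}\,\norm{e^A}\leq e^{2K},
    \end{equation*}
    using consistency and Lemma \ref{lm:exp_norm} applied to both $A$ and $A^T$.
    \item Use the determinant:
    \begin{equation*}
        \det(e^{A^T}e^A) = \det(e^A)^2 = e^{2\operatorname{tr}(A)}.
    \end{equation*}
    Since $|\operatorname{tr}A|\leq n\specRad{A}\leq n\norm{A}\leq nK$, this gives the lower bound $\prod_i \eig_i\geq e^{-2nK}$.
    \item Dividing by the bound on the remaining $n-1$ eigenvalues yields
    \begin{equation*}
        \eig_n\geq e^{-2nK}/e^{2(n-1)K} = e^{-(4n-2)K}.
    \end{equation*}
    This is dimension dependent and does not match the stated $e^{-5K}$.
\end{enumerate}

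\textbf{Main obstacle: obtaining the dimension-free constant $5$.} The cleanest route is the one above, which bounds $\eig_n(e^{A^T}e^A)$ through the inverse:
\begin{equation*}
    \eig_n(e^{A^T}e^A) = 1/\specRad{e^{-A}e^{-A^T}}\geq 1/\bigl(\norm{e^{-A}}\,\norm{e^{-A^T}}\bigr)\geq e^{-2K}.
\end{equation*}
Here we use $(e^{A^T}e^{A})^{-1} = e^{-A}e^{-A^T}$, Lemma \ref{lm:spec_radius_lower} for the consistent norm $\norm{\cdot}$, and Lemma \ref{lm:exp_norm} together with $\norm{-A}=\norm{A}\leq K$ and $\norm{-A^T}\leq K$. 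This gives $e^{-2K}\geq e^{-5K}$ directly, and it is the argument I would write out.

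The remaining care point is to note that the eigenvalues of the symmetric positive definite matrix $M = e^{A^T}e^A$ are the reciprocals of those of $M^{-1}$. Consequently the smallest eigenvalue of $M$ equals $1/\specRad{M^{-1}}$, which is exactly the identity used above.
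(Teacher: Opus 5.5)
Your final argument is correct, and it takes a shorter route than the paper. The paper bounds the condition number $\eig_1(M)/\eig_n(M)\le e^{4K}$ of $M=e^{A^T}e^A$, which requires bounding both $\norm{M}_2$ and $\norm{M^{-1}}_2$. It then separately lower-bounds $\eig_1(M)$ through Weyl's inequality (Lemma \ref{lm:weyl_eig}) and the eigenvalues $e^{\eig_i(A)}$ of $e^A$. You observe that $\eig_n(M)=1/\specRad{M^{-1}}$, so only the $M^{-1}$ half of that computation is needed, and you obtain $e^{-2K}$.

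This matters for correctness, not just length. The paper's lower-bound step writes $\eig_1(e^{A^T}e^A)=\sig_1(e^A)$, but in fact $\eig_1(e^{A^T}e^A)=\sig_1(e^A)^2$; for $A=-KI$ one gets $\eig_1(M)=e^{-2K}<e^{-K}$. Once corrected, the paper's route only gives $e^{-2K}\cdot e^{-4K}=e^{-6K}$, which does not reach the stated $e^{-5K}$. Your argument is what actually proves the lemma, and it would also improve the factor $e^{-5t_0K}$ in Lemma \ref{lm:lower_lin_eq} to $e^{-2t_0K}$. It also avoids the paper's step $\norm{e^A}_2\le\norm{e^A}$, which Lemma \ref{lm:spec_radius_lower} does not justify for the non-symmetric matrix $e^A$. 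You only apply $\specRad{\cdot}\le\norm{\cdot}$ to $M^{-1}$.

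One caveat is shared with the paper. Lemma \ref{lm:exp_norm} fails for consistent norms with $\norm{I}>1$; for the Frobenius norm, $\norm{e^{0}}_F=\sqrt{n}>1$. So your step $\norm{e^{-A}}\le e^{\norm{A}}$ is not valid for every consistent norm. This is harmless for the paper's application, which uses $\norm{\cdot}_2$. Contrary to your remark that one cannot compare with $\norm{\cdot}_2$ directly, the hypothesis on $\norm{A^T}$ is exactly what allows it: $\norm{A}_2^2=\specRad{A^TA}\le\norm{A^TA}\le\norm{A^T}\,\norm{A}\le K^2$. Hence $\norm{A}_2\le K$, and the general case reduces to the spectral-norm case you already handled, $\eig_n(M)=\norm{e^{-A}}_2^{-2}\ge e^{-2K}$. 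In a write-up you can drop the determinant detour entirely.
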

\begin{proof}
    Since $e^{A^T}e^A$ is a symmetric positive definite matrix, we have $\eig_i(e^{A^T}e^A)\in \R$, $\eig_i(e^{A^T}e^A)> 0$ for all $i\in \{1,2,\dots,n\}$, and 
    \begin{equation}\label{eq:bound_con_EATEA}
    \begin{aligned}
        \frac{\eig_1(e^{A^T}e^A)}{\eig_n(e^{A^T}e^A)} &\overset{\text{Lemma \ref{lm:con_num}}}{=} \conNum{e^{A^T}e^A}{\norm{\cdot}_2}   \\
        & = \norm{e^{A^T}e^A}_2\norm{e^{-A}e^{-A^T}}_2\\
        &\leq \norm{e^{A^T}}_2\norm{e^{A}}_2\norm{e^{-A}}_2\norm{e^{-A^T}}_2\\
        &\overset{\text{Lemma \ref{lm:spec_radius_lower}}}{\leq} \norm{e^{A^T}}\norm{e^{A}}\norm{e^{-A}}\norm{e^{-A^T}}\\
        & \overset{\text{Lemma \ref{lm:exp_norm}}}{\leq} e^{\norm{A^T}}e^{\norm{A}}e^{\norm{-A}}e^{\norm{-A^T}}\\
        &\leq e^{4K}.
    \end{aligned}
    \end{equation}
    Note that  
    \begin{equation*}
        |\re{\eig_i(A)}|\leq |\eig_i(A)| \leq \rho(A)\leq \norm{A}\leq K\quad \text{for all }i\in \{1,2,\dots,n\}.
    \end{equation*}
    Since $\eig_1(e^A) = e^{\eig_i(A)}$ for some $i\in \{1,2,\dots,n\}$, we have 
    \begin{equation*}
        |\eig_1(e^A)| = |e^{\eig_i(A)}| =e^{\re{\eig_i(A)}} \geq e^{-K}.
    \end{equation*}
    Applying Lemma \ref{lm:weyl_eig}, we obtain
    \begin{equation}\label{eq:bound_sig1_EATEA}
        \eig_1(e^{A^T}e^A) = \sig_1(e^A)\geq |\eig_1(e^A)|\geq e^{-K}.
    \end{equation}
    Finally, combining \eqref{eq:bound_con_EATEA} and \eqref{eq:bound_sig1_EATEA} gives 
    \begin{equation}
        \eig_n(e^{A^T}e^A)  = \eig_1(e^{A^T}e^A) \left(\frac{\eig_1(e^{A^T}e^A)}{\eig_n(e^{A^T}e^A)}\right)^{-1}\geq e^{-5K}.
    \end{equation}
\end{proof}

\begin{lemma}\label{lm:frame_lin}
    Suppose $\{g_1,g_2,\dots,g_N\}$ is a frame for $\R^n$ and $\{\tilde{g}_1,\tilde{g}_2,\dots,\tilde{g}_N\}$ is its corresponding canonical dual frame, such that there exists $0<F_1\leq F_2$,
    \begin{equation*}
        \begin{aligned}
        &F_1\norm{x}_2^2\leq\sum_{k=1}^{N}\left|\langle x,g_k\rangle\right|^2\leq F_2\norm{x}_2^2,\\
        &\frac{1}{F_2}\norm{x}_2^2\leq\sum_{k=1}^{N}\left|\langle x,\tilde{g}_k\rangle\right|^2\leq\frac{1}{F_1}\norm{x}_2^2.
        \end{aligned}
    \end{equation*}
    Then, for every matrix $A\in \R^{n\times n}$, we have 
    \begin{equation}
        \sup_i \norm{Ag_i}_2 \geq \sqrt{\frac{F_1}{N}} \norm{A}_2.
    \end{equation}
\end{lemma}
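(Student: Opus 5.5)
The plan is to use the frame reconstruction formula with the canonical dual frame, so that $A$ is controlled through its action on the $g_k$. Every $x\in\R^n$ satisfies
\begin{equation*}
    x=\sum_{k=1}^{N}\langle x,\tilde{g}_k\rangle g_k ,
\end{equation*}
so by linearity $Ax=\sum_{k=1}^{N}\langle x,\tilde{g}_k\rangle Ag_k$. This reduces bounding $\norm{A}_2=\sup_{\norm{x}_2=1}\norm{Ax}_2$ from above to bounding the vectors $Ag_k$ together with the coefficient sequence $(\langle x,\tilde{g}_k\rangle)_k$.

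Next, fix $x$ with $\norm{x}_2=1$ and apply the triangle inequality followed by Cauchy--Schwarz in $\R^N$:
\begin{equation*}
    \norm{Ax}_2\leq \sum_{k=1}^{N}|\langle x,\tilde{g}_k\rangle|\,\norm{Ag_k}_2\leq \Big(\sum_{k=1}^{N}|\langle x,\tilde{g}_k\rangle|^2\Big)^{\frac12}\Big(\sum_{k=1}^{N}\norm{Ag_k}_2^2\Big)^{\frac12}.
\end{equation*}
For the first factor, the upper dual frame bound assumed in the statement gives at most $F_1^{-1/2}$. For the second factor, we bound it by $\sqrt{N}\sup_i\norm{Ag_i}_2$. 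Taking the supremum over unit vectors $x$ then yields
\begin{equation*}
    \norm{A}_2\leq \sqrt{\frac{N}{F_1}}\,\sup_i\norm{Ag_i}_2,
\end{equation*}
which is exactly the claimed inequality after rearranging.

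There is no real obstacle here. The only point needing care is to use the dual frame in the synthesis position and the primal frame $g_k$ on the side where $A$ acts, since that pairing produces the constant $F_1$ in the bound. The reconstruction identity is the standard property of the canonical dual frame, and the two frame inequalities are supplied by the hypotheses.
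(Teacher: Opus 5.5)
Your proposal is correct and follows the paper's proof: frame reconstruction via the canonical dual, the triangle inequality, and Cauchy--Schwarz together with the upper dual frame bound $1/F_1$. The only difference is the order of steps. The paper first pulls out $\sup_i\norm{Ag_i}_2$ and then uses $\sum_k|\langle x,\tilde{g}_k\rangle|\le\sqrt{N}\bigl(\sum_k|\langle x,\tilde{g}_k\rangle|^2\bigr)^{1/2}$, whereas you pass through the slightly sharper intermediate bound $\norm{A}_2\le F_1^{-1/2}\bigl(\sum_k\norm{Ag_k}_2^2\bigr)^{1/2}$; both arrive at the same constant.
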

\begin{proof}
    For every $x\in \R^n$, we have the following frame decomposition 
    \begin{equation}
        x = \sum_{k=1}^N \inp{x}{\tilde{g}_k}g_k,
    \end{equation}
    and thus we have 
    \begin{equation}
        \begin{aligned}
            \norm{Ax}_2&= \norm{A\sum_{k=1}^N \inp{x}{\tilde{g}_k}g_k}_2\\
            &\leq \sup_i\norm{Ag_i}_2 \sum_{k=1}^N |\inp{x}{\tilde{g}_k}|\\
            &\leq \sup_i\norm{Ag_i}_2 \sqrt{N} \sqrt{\sum_{k=1}^N |\inp{x}{\tilde{g}_k}|^2}\\
            & \leq \sqrt{\frac{N}{F_1}} \norm{x}_2 \sup_i\norm{Ag_i}_2.
        \end{aligned}
    \end{equation}
    This implies that 
    \begin{equation}
        \sup_i\norm{Ag_i}_2 \geq \sqrt{\frac{F_1}{N}} \frac{\norm{Ax}_2}{\norm{x}_2},\quad \text{for all }x\in \R^n\setminus\{0\}
    \end{equation}
    and thus 
    \begin{equation}
        \sup_i\norm{Ag_i}_2 \geq \sqrt{\frac{F_1}{N}} \norm{A}_2
    \end{equation}
\end{proof}

\section{Ordinary differential equations}
\begin{lemma}[Gronwall–Bellman–Pachpatte inequality \cite{pachpatte2006integral}]\label{lm:gronwall_pach}
    Let $u(t),a(t),a'(t)\in C(\R_+,\R_+)$,$k(t,\sigma),\frac{\partial }{\partial t}k(t,\sigma) \in C(D,\R_+)$ where $D = \{(t,\sigma)\in \R_+^2:0\leq \sigma\leq t<\infty\}$. Let $g\in C(\R_+,\R_+)$ be a non-decreasing function, $g(u)>0$ on $(0,\infty)$. If 
    \begin{equation}
        u(t)\leq a(t)+\int_{0}^{t}k(t,\sigma)g(u(\sigma))d\sigma,
    \end{equation}
    for $t\in \R_+$, then for $0\leq t\leq t_1$,
    \begin{equation}
        u(t)\leq a(t)+\int_{0}^{t}k(t,\sigma)g(u(\sigma))d\sigma \leq G^{-1}\left[G(a(t))+\int_{0}^{t}A(s)ds\right],
    \end{equation}
    where 
    \begin{equation*}
        \begin{aligned}
            A(t) &=k(t,t) + \int_{0}^{t}\frac{\partial }{\partial t}k(t,\sigma) d\sigma,\\
            G(r)&=\int_{r_0}^{r}\frac{ds}{g(s)}, \quad r>0,
        \end{aligned}
    \end{equation*}
    $r_0$ is arbitrary and $G^{-1}$ is the inverse of $G$ and $t_1\in \R_+$ is chosen so that 
    \begin{equation*}
        G(a(t))+\int_{0}^{t}A(s)ds\in \operatorname{Dom}(G^{-1}),
    \end{equation*}
    for all $0\leq t\leq t_1$.
\end{lemma}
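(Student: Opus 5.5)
The plan is the classical comparison argument: bound the right-hand side by an auxiliary function that satisfies a separable differential inequality, then integrate that inequality through $G$. Two facts from the hypotheses drive everything. Since $a'\ge 0$, the function $a$ is nondecreasing. Since $\frac{\partial}{\partial t}k\ge 0$, the kernel $k(\cdot,\sigma)$ is nondecreasing in its first argument.

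Fix an arbitrary $T\in[0,t_1]$. For $0\le t\le T$, define the frozen-in-$a$ comparison function
\begin{equation*}
w(t)\defeq a(T)+\int_0^t k(t,\sigma)g(u(\sigma))\,d\sigma .
\end{equation*}
Because $a(t)\le a(T)$, the hypothesis gives $u(t)\le w(t)$ on $[0,T]$. Differentiating under the integral (Leibniz rule, justified by the continuity assumptions on $k$ and $\partial_t k$) gives
\begin{equation*}
w'(t)=k(t,t)g(u(t))+\int_0^t \frac{\partial}{\partial t}k(t,\sigma)\,g(u(\sigma))\,d\sigma\ \ge 0 .
\end{equation*}
Hence $w$ is nondecreasing, and so $u(\sigma)\le w(\sigma)\le w(t)$ for all $\sigma\le t$. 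Since $g$ is nondecreasing, this yields $w'(t)\le A(t)\,g(w(t))$.

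Next, assume first that $a(T)>0$, so that $w(t)\ge a(T)>0$ and therefore $g(w(t))>0$. Then
\begin{equation*}
\frac{d}{dt}G(w(t))=\frac{w'(t)}{g(w(t))}\le A(t).
\end{equation*}
Integrating from $0$ to $t$ and using $w(0)=a(T)$ gives $G(w(t))\le G(a(T))+\int_0^t A(s)\,ds$. Set $t=T$. The function $G$ is strictly increasing because $g>0$ on $(0,\infty)$, so $G^{-1}$ is increasing on its domain. The condition defining $t_1$ guarantees that the right-hand side lies in $\operatorname{Dom}(G^{-1})$. Applying $G^{-1}$ therefore gives
\begin{equation*}
u(T)\le a(T)+\int_0^T k(T,\sigma)g(u(\sigma))\,d\sigma = w(T)\le G^{-1}\left[G(a(T))+\int_0^T A(s)\,ds\right].
\end{equation*}
Since $T$ was arbitrary in $[0,t_1]$, this is the claimed chain of inequalities. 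The choice of $r_0$ plays no role, since changing $r_0$ shifts $G$ by a constant and leaves $G^{-1}[G(\cdot)+c]$ unchanged.

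The main technical obstacle is the degenerate case $a(T)=0$, where $g(w)$ may vanish and the quotient $w'/g(w)$ is undefined. I would handle it by replacing $a$ with $a+\varepsilon$ for $\varepsilon>0$, which preserves all hypotheses. The argument above then applies, and I would let $\varepsilon\to 0^+$ using the continuity and monotonicity of $G^{-1}$. Some care is needed to keep the perturbed argument inside $\operatorname{Dom}(G^{-1})$, possibly shrinking $t_1$ slightly. A minor secondary point is verifying the differentiability of $w$, which follows from the stated continuity of $k$, $\partial_t k$, $u$ and $g$.
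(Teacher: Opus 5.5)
Your argument is correct and is the standard comparison proof of this inequality. The paper does not prove the lemma itself; it quotes it from Pachpatte, so there is no in-paper argument to compare against.

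Because you freeze $a$ at $a(T)$, you only use that $a$ is nondecreasing; differentiability of $a$ is never needed. The alternative route differentiates $z(t)=a(t)+\int_0^t k(t,\sigma)g(u(\sigma))\,d\sigma$ directly and uses $a'(t)/g(z(t))\le a'(t)/g(a(t))$, which is where $a'\ge 0$ enters explicitly.

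Two small remarks on the degenerate case:
\begin{itemize}
\item The statement defines $G$ only on $(0,\infty)$. So when $a(T)=0$ (equivalently $a\equiv 0$ on $[0,T]$), $G(a(T))$ must be read as $\lim_{r\to 0^+}G(r)\in[-\infty,\infty)$, with $G^{-1}(-\infty)=0$.
\item You do not need to shrink $t_1$. $\operatorname{Dom}(G^{-1})$ is an interval open at its upper end, and $G(\varepsilon)$ tends to that limit as $\varepsilon\to 0^+$. Hence at the fixed $T$ the perturbed value $G(\varepsilon)+\int_0^T A(s)\,ds$ lies in $\operatorname{Dom}(G^{-1})$ for all sufficiently small $\varepsilon>0$.
\end{itemize}
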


\section{Proofs}
\subsection{Proof of Lemma \ref{lm:lower_lin_eq_local}}\label{app:proof_lin_local}
    For arbitrarily fixed $t_0\in (0,\horizon]$ and $x_0,\tilde{x}_0\in B_D$, set $B\defeq At_0$, $\tilde{B}\defeq\tilde{A}t_0$, $y\defeq x_0-\tilde{x}_0$ and $b\defeq (e^{B}-e^{\tilde{B}})\tilde{x}_0$. Then, $\norm{B}_2=\norm{B^T}_2\leq t_0K$. Note that for arbitrarily fixed $\tilde{x}_0\in G$, we have  
    \begin{equation}\label{eq:lower_lti_1_ext}
        \begin{aligned}
        &\inf_{x_0\in G }\normFun[2]{e^{A\cdot}x_0-e^{\tilde{A}\cdot}\tilde{x}_0}{L^\infty}\geq \inf_{x_0\in G} \max\left\{\norm{x_0-\tilde{x}_0}_2, \norm{e^{At_0}x_0-e^{\tilde{A}t_0}\tilde{x}_0}_2\right\}\\
        &\geq \frac{1}{\sqrt{2}}\inf_{x_0\in \R^{\odedim}}\left(\norm{x_0-\tilde{x}_0}_2^2+\norm{e^{B}x_0-e^{\tilde{B}}\tilde{x}_0}_2^2\right)^{\frac{1}{2}}\\
        &= \frac{1}{\sqrt{2}}\inf_{y\in \R^{\odedim}}\left(\norm{y}_2^2+\norm{e^{B}y+b}_2^2\right)^{\frac{1}{2}}\\
        &= \frac{1}{\sqrt{2}} \inf_{y\in \R^{\odedim}}\left(y^T(I+e^{B^T}e^{B})y+2b^Te^B y + b^Tb\right)^{\frac{1}{2}}\\
        &= \frac{1}{\sqrt{2}}\left( b^T \left(I-e^B(I+e^{B^T}e^{B})^{-1}e^{B^T}\right) b\right)^{\frac{1}{2}},
        \end{aligned}
    \end{equation}
    where the last equality holds since we are minimizing a quadratic form and $I+e^{B^T}e^{B}$ is a symmetric positive definite matrix. Note that $$I-e^B(I+e^{B^T}e^{B})^{-1}e^{B^T} = I - \left(I+e^{-B^T}e^{-B}\right)^{-1}$$ is a symmetric matrix and thus only has real eigenvalues. Moreover, $e^{-B^T}e^{-B}$ is a symmetric positive definite matrix and by Lemma
    \ref{lm:bound_sigN_EATEA} (replacing $A$ with $-B$), 
    \begin{equation}\label{eq:lower_sigN_EBTEB_ext}
        \eig_n(e^{-B^T}e^{-B}) \geq e^{-5t_0K}.
    \end{equation}
    Now, note that 
    \begin{equation*}
        \eig_n\left(I -\left(I+e^{-B^T}e^{-B}\right)^{-1}\right) = 1-\left(1+\eig_i\left(e^{-B^T}e^{-B}\right)\right)^{-1}\quad \text{for some }i\in \{1,2,\dots,n\},
    \end{equation*}
    which means
    \begin{equation}\label{eq:lower_lti_2_ext}
    \begin{aligned}
        \eig_n\left(I -\left(I+e^{-B^T}e^{-B}\right)^{-1}\right)&\geq 1-\left(1+\eig_n\left(e^{-B^T}e^{-B}\right)\right)^{-1}\\
        &\overset{\eqref{eq:lower_sigN_EBTEB_ext}}{\geq} 1-\left(1+e^{-5t_0K}\right)^{-1} \\
        & = \frac{e^{-5t_0K}}{1+e^{-5t_0K}}.
    \end{aligned}
    \end{equation}
    Thus, we have 
    \begin{equation}\label{eq:lower_lti_3_ext}
        \begin{aligned}
        \sv[G][\normFun{\cdot}{L^{\infty}}]{\diffeq[A]}{\diffeq[\tilde{A}]} &= \sup_{\tilde{x}\in \behavior[G]{\diffeq[\tilde{A}]}}\inf_{x\in \behavior[G]{\diffeq[A]}}\normFun[2]{x-\tilde{x}}{L^{\infty}([0,\horizon])}\\
        &= \sup_{\tilde{x}_0 \in G}\inf_{x_0 \in G}\normFun[2]{e^{A\cdot}x_0-e^{\tilde{A}\cdot}\tilde{x}_0}{L^{\infty}([0,\horizon])}\\
         \overset{\eqref{eq:lower_lti_1_ext}}&{\geq} \frac{1}{\sqrt{2}}\sup_{\tilde{x}_0 \in G}\left( b^T \left(I-e^B(I+e^{B^T}e^{B})^{-1}e^{B^T}\right) b\right)^{\frac{1}{2}}\\
        \overset{\eqref{eq:lower_lti_2_ext}}&{\geq} \frac{1}{\sqrt{2}} \left(\frac{e^{-5t_0K}}{1+e^{-5t_0K}}\right)^{\frac{1}{2}}\sup_{i}\norm{(e^{At_0}-e^{\tilde{A}t_0})g_i}_2\\
        \overset{\textbf{Lemma \ref{lm:frame_lin}}}&{\geq}\sqrt{\frac{F_1}{2N}} \left(\frac{e^{-5t_0K}}{1+e^{-5t_0K}}\right)^{\frac{1}{2}}\norm{e^{At_0}-e^{\tilde{A}t_0}}_2.\\
        \end{aligned}
    \end{equation}
    Next, we set out to find a $t_0$ such that $\norm{e^{At_0}-e^{\tilde{A}t_0}}_2$ can be lower-bounded by a scaling of $\norm{A-\tilde{A}}_2$. Note that 
    \begin{equation}
    \begin{aligned}
        \norm{e^{At_0}-e^{\tilde{A}t_0}}_2 &= \norm{\sum_{k=0}^{\infty} \frac{1}{k!}\left((At_0)^k -(\tilde{A}t_0)^k \right)}_2   \\
        &\geq \norm{A-\tilde{A}}_2t_0 - \sum_{k=2}^{\infty} \norm{A^k-\tilde{A}^k}_2 t_0^k\\
        \overset{\text{Lemma \ref{lm:power_diff_norm_upper}}}&{\geq} \norm{A-\tilde{A}}_2t_0 - \sum_{k=2}^{\infty} \frac{k}{k!}\norm{A-\tilde{A}}_2\max\left\{\norm{A}_2,\norm{\tilde{A}}_2\right\}^{k-1}t_0^k\\
        &\geq \norm{A-\tilde{A}}_2 t_0 - \sum_{k=2}^{\infty} \frac{k}{k!}\norm{A-\tilde{A}}_2 K^{k-1}t_0^k\\
        & = \norm{A-\tilde{A}}_2 t_0 \left(2-e^{Kt_0}\right)
    \end{aligned}
    \end{equation}
    If we choose $t_0 = \min\left\{\horizon,\frac{1}{2K}\right\}$, then 
    \begin{equation}\label{eq:lower_lti_4_ext}
        \norm{e^{At_0}-e^{\tilde{A}t_0}}_2 \geq \min\left\{\horizon,\frac{1}{2K}\right\} \left(2-e^{\frac{1}{2}}\right)\norm{A-\tilde{A}}_2.
    \end{equation}
    Combining \eqref{eq:lower_lti_3_ext} and \eqref{eq:lower_lti_4_ext}, we have 
    \begin{equation*}
        \sv[G][\normFun{\cdot}{L^{\infty}}]{\diffeq[A]}{\diffeq[\tilde{A}]} \geq \frac{\left(2-e^{\frac{1}{2}}\right)\sqrt{F_1}\min\left\{\horizon,\frac{1}{2K}\right\}}{\sqrt{2N}} \left(\frac{e^{-5\min\left\{\horizon,\frac{1}{2K}\right\}K}}{1+e^{-5\min\left\{\horizon,\frac{1}{2K}\right\}K}}\right)^{\frac{1}{2}}\norm{A-\tilde{A}}_2.
    \end{equation*}
    By symmetry, this gives the desired bound.

\subsection{Proof of Theorem \ref{th:dist_lu_linearODE_high}}\label{app:dist_lu_linearODE_high}
    We apply state transformation $y = \begin{pmatrix}
    x^T&(x^{(1)})^T&\dots&(x^{(\odeorder-1)})^T\end{pmatrix}^T$. Then, $x\in \behavior[\ball{D}]{\diffeq[A]}$ with $\diffeq[A]\in \odeClass[\odeorder ]{\lin[\R^{\odeorder \odedim},\R^\odedim ]{K}}$ of the form \eqref{eq:mth_ode} if and only if $y \in \behavior[\ball{D}]{\diffeq[F(A)]}$ with $\diffeq[F(A)]\in \odeClass{\lin[\R^{\odeorder \odedim},\R^{\odeorder\odedim} ]{K}}$ of the form \eqref{eq:1st_y_ode}, where 
    \begin{equation}
        F(A) \defeq \begin{pmatrix}
             0& \indmat{\odedim} &0 &\dots&0\\
             0& 0 &\indmat{\odedim} &\dots&0\\
             \vdots &\vdots &\vdots&&\vdots\\
             0 &0 &0&\dots&\indmat{\odedim}\\
             -A_0& -A_1 & -A_2&\dots &-A_{\odeorder-1}
             \end{pmatrix}.
    \end{equation}
    We can upper bound the $2$-norm of $F(A)$ according to
    \begin{equation}\label{eq:upper_norm_FA}
    \begin{aligned}
        \norm{F(A)}_2 &= \norm{\sum_{i=1}^{\odeorder-1}E_{i,i+1}\otimes\indmat{\odedim} - \sum_{j=0}^{\odeorder-1}E_{\odedim,j+1}\otimes A_j}_2\\
        &\leq \sum_{i=1}^{\odeorder-1}\norm{E_{i,i+1}\otimes\indmat{\odedim}}_2 + \sum_{j=0}^{\odeorder-1}\norm{E_{\odedim,j+1} \otimes A_j}_2\\
        &= \sum_{i=1}^{\odeorder-1}\norm{E_{i,i+1}}_2\norm{\indmat{\odedim}}_2 + \sum_{j=0}^{\odeorder-1}\norm{E_{\odedim,j+1}}_2 \norm{A_j}_2\\
        &\leq \odeorder(K+1).
    \end{aligned}
    \end{equation}
    Note that 
    \begin{equation}
    \begin{aligned}\label{eq:y_x_bounds}
        \normFun[2]{y}{L^{\infty}[0,\horizon]}&= \norm{\left(\sum_{i=0}^{\odeorder-1}\norm{x^{(i)}}_2^2\right)^{\frac{1}{2}}}_{L^{\infty}[0,\horizon]}\leq \norm{\sum_{i=0}^{\odeorder-1}\norm{x^{(i)}}_2}_{L^{\infty}[0,\horizon]}\\
        &\leq \sum_{i=0}^{\odeorder-1}\norm{\norm{x^{(i)}}_2}_{L^{\infty}[0,\horizon]} = \normFun[2]{x}{C^{\odeorder-1}[0,\horizon]}\\
        &\leq \odeorder\norm{\left(\sum_{i=0}^{\odeorder-1}\norm{x^{(i)}}_2^2\right)^{\frac{1}{2}}}_{L^{\infty}[0,\horizon]} = \odeorder \normFun[2]{y}{L^{\infty}[0,\horizon]},
    \end{aligned}
    \end{equation}
    which implies that 
    \begin{equation}
        \hd[\ball{D}][\normFun{\cdot}{L^{\infty}}]{\diffeq[F(A)]}{\diffeq[F(\tilde{A})]}\leq \hd[\ball{D}][\normFun{\cdot}{C^{\odeorder-1}}]{\diffeq[A]}{\diffeq[\tilde{A}]} \leq \odeorder\hd[\ball{D}][\normFun{\cdot}{L^{\infty}}]{\diffeq[F(A)]}{\diffeq[F(\tilde{A})]}.
    \end{equation}
    Thus, we can upper bound the Hausdorff distance according to 
    \begin{equation}\label{eq:upper_lin_higher}
        \begin{aligned}
        \hd[\ball{D}][\normFun{\cdot}{C^{\odeorder-1}}]{\diffeq[A]}{\diffeq[\tilde{A}]} &\leq \odeorder\hd[\ball{D}][\normFun{\cdot}{L^{\infty}}]{\diffeq[F(A)]}{\diffeq[F(\tilde{A})]} \\
        \overset{\text{Lemma \ref{lm:upper_lin_ode}},\eqref{eq:upper_norm_FA}}&{\leq} \odeorder D\horizon e^{\odeorder(K+1)\horizon}\norm{F(A)-F(\tilde{A})}_2\\
        &= \odeorder D\horizon e^{\odeorder(K+1)\horizon}\norm{\sum_{j=0}^{\odeorder-1}E_{\odedim,j+1}\otimes (A_j-\tilde{A}_j)}_2\\
        &\leq \odeorder D\horizon e^{\odeorder(K+1)\horizon}\sum_{j=0}^{\odeorder-1} \norm{E_{\odedim,j+1}}_2 \norm{(A_j-\tilde{A}_j)}_2\\
        &= \odeorder D\horizon e^{\odeorder(K+1)\horizon}\sum_{j=0}^{\odeorder-1} \norm{(A_j-\tilde{A}_j)}_2.
        \end{aligned}
    \end{equation}
Similarly, we can lower bound the Hausdorff distance according to 
\begin{equation}\label{eq:lower_lin_higher}
    \begin{aligned}
        &\hd[\ball{D}][\normFun{\cdot}{C^{\odeorder-1}}]{\diffeq[A]}{\diffeq[\tilde{A}]} \geq \hd[\ball{D}][\normFun{\cdot}{L^{\infty}}]{\diffeq[F(A)]}{\diffeq[F(\tilde{A})]} \\
        \overset{\text{Lemma \ref{lm:lower_lin_eq}},\eqref{eq:upper_norm_FA}}&{\geq} \frac{\left(2-e^{\frac{1}{2}}\right)D\min\left\{\horizon,\frac{1}{2\odeorder(K+1)}\right\}}{\sqrt{2}} \left(\frac{e^{-5\min\left\{\horizon,\frac{1}{2\odeorder(K+1)}\right\}\odeorder(K+1)}}{1+e^{-5\min\left\{\horizon,\frac{1}{2\odeorder(K+1)}\right\}\odeorder(K+1)}}\right)^{\frac{1}{2}}\norm{F(A)-F(\tilde{A})}_2\\
        \overset{\text{Lemma \ref{lm:mat_F_2}}}&{\geq}\frac{\left(2-e^{\frac{1}{2}}\right)D\min\left\{\horizon,\frac{1}{2\odeorder(K+1)}\right\}}{\sqrt{2\odeorder\odedim}} \left(\frac{e^{-5\min\left\{\horizon,\frac{1}{2\odeorder(K+1)}\right\}\odeorder(K+1)}}{1+e^{-5\min\left\{\horizon,\frac{1}{2\odeorder(K+1)}\right\}\odeorder(K+1)}}\right)^{\frac{1}{2}}\norm{F(A)-F(\tilde{A})}_F\\
        &=\frac{\left(2-e^{\frac{1}{2}}\right)D\min\left\{\horizon,\frac{1}{2\odeorder(K+1)}\right\}}{\sqrt{2\odeorder\odedim}} \left(\frac{e^{-5\min\left\{\horizon,\frac{1}{2\odeorder(K+1)}\right\}\odeorder(K+1)}}{1+e^{-5\min\left\{\horizon,\frac{1}{2\odeorder(K+1)}\right\}\odeorder(K+1)}}\right)^{\frac{1}{2}}\sum_{i=0}^{\odeorder-1}\norm{A_i-\tilde{A}_i}_F\\
        \overset{\text{Lemma \ref{lm:mat_F_2}}}&{\geq} \frac{\left(2-e^{\frac{1}{2}}\right)D\min\left\{\horizon,\frac{1}{2\odeorder(K+1)}\right\}}{\sqrt{2\odeorder\odedim}} \left(\frac{e^{-5\min\left\{\horizon,\frac{1}{2\odeorder(K+1)}\right\}\odeorder(K+1)}}{1+e^{-5\min\left\{\horizon,\frac{1}{2\odeorder(K+1)}\right\}\odeorder(K+1)}}\right)^{\frac {1}{2}}\sum_{i=0}^{\odeorder-1}\norm{A_i-\tilde{A}_i}_2.
    \end{aligned}
\end{equation}
Combining \eqref{eq:upper_lin_higher} and \eqref{eq:lower_lin_higher} concludes the proof.

\subsection{Proof of Theorem \ref{th:dist_lu_lipODE_high}}\label{app:dist_lu_lipODE_high}
We apply state transformation $y = \begin{pmatrix}
    x^T&(x^{(1)})^T&\dots&(x^{(\odeorder-1)})^T\end{pmatrix}^T$. 
Then, $x\in \behavior[\compactInit]{\diffeq[f]}$ with $\diffeq[f]$ of the form \eqref{eq:mth_ode} if and only if $y\in \behavior[\compactInit]{\diffeq[F(f)]}$ with $\diffeq[F(f)]$ of the form \eqref{eq:1st_y_ode}, where
\begin{equation}
    F(f)(y) \defeq \begin{pmatrix}
        y_2\\y_3\\\vdots\\
        y_\odeorder\\
        f(y_1,y_2,\dots,y_\odeorder)
    \end{pmatrix}.
\end{equation}
Note that 
\begin{equation}\label{eq:F_f_relation}
    \begin{aligned}
        & \operatorname{Lip}(F(f)) \leq \sqrt{\left(\operatorname{Lip}(f)\right)^2+1}\leq  \sqrt{L^2+1}   \\
        &\norm{F(f)(0)}_2 = \norm{f(0)}_2\leq K,\\
        & \normFun[2]{F(f)-F(\tilde{f})}{L^{\infty}(\compactInit)} =  \normFun[2]{f-\tilde{f}}{L^{\infty}(\compactInit)},\\
        & \normFun[2]{F(f)-F(\tilde{f})}{L^{\infty}(\widehat{\compactInit})} =  \normFun[2]{f-\tilde{f}}{L^{\infty}(\widehat{\compactInit})},
        \end{aligned}
\end{equation}
Moreover, by the same spirit of \eqref{eq:y_x_bounds}, we can prove that 
    \begin{equation}
        \hd[\ball{D}][\normFun{\cdot}{L^{\infty}}]{\diffeq[F(f)]}{\diffeq[F(\tilde{f})]}\leq \hd[\ball{D}][\normFun{\cdot}{C^{\odeorder-1}}]{\diffeq[f]}{\diffeq[\tilde{f}]} \leq \odeorder\hd[\ball{D}][\normFun{\cdot}{L^{\infty}}]{\diffeq[F(f)]}{\diffeq[F(\tilde{f})]}.
\end{equation}
    Thus, we can upper bound the Hausdorff distance according to 
    \begin{equation}\label{eq:upper_lip_higher}
        \begin{aligned}
        \hd[\ball{D}][\normFun{\cdot}{C^{\odeorder-1}}]{\diffeq[f]}{\diffeq[\tilde{f}]} &\leq \odeorder\hd[\ball{D}][\normFun{\cdot}{L^{\infty}}]{\diffeq[F(f)]}{\diffeq[F(\tilde{f})]} \\
        \overset{\text{Lemma \ref{lm:lip_ode_upper}},\eqref{eq:F_f_relation}}&{\leq} \odeorder \horizon e^{\sqrt{L^2+1}\horizon}\normFun[2]{F(f)-F(\tilde{f})}{L^{\infty}(\widehat{\compactInit})}\\
        \overset{\eqref{eq:F_f_relation}}&{=} \odeorder \horizon e^{\sqrt{L^2+1}\horizon}\normFun[2]{f-\tilde{f}}{L^{\infty}(\widehat{\compactInit})}.
        \end{aligned}
    \end{equation}
Similarly, we can lower bound the Hausdorff distance according to 
\begin{equation}\label{eq:lower_lip_higher}
    \begin{aligned}
        &\hd[\ball{D}][\normFun{\cdot}{C^{\odeorder-1}}]{\diffeq[f]}{\diffeq[\tilde{f}]} \geq \hd[\ball{D}][\normFun{\cdot}{L^{\infty}}]{\diffeq[F(f)]}{\diffeq[F(\tilde{f})]} \\
        \overset{\text{Lemma \ref{lm:dist_lower_LipODE_local}},\eqref{eq:F_f_relation}}&{\geq} \min\left\{\frac{\normFun[2]{F(f)-F(\tilde{f})}{L^{\infty}(\compactInit)}}{2\sqrt{\odedim}\hat{L}\left(2D \hat{L}\horizon e^{\hat{L}\horizon}  + K\hat{L}\horizon^2 e^{\hat{L}\horizon} + 2K\right)}, \horizon\right\}\frac{\normFun[2]{F(f)-F(\tilde{f})}{L^{\infty}(\compactInit)}}{2\sqrt{\odedim}(2+\hat{L}\horizon)}\\
        \overset{\eqref{eq:F_f_relation}}&{=}\min\left\{\frac{\normFun[2]{f-\tilde{f}}{L^{\infty}(\compactInit)}}{2\sqrt{\odedim}\hat{L}\left(2D \hat{L}\horizon e^{\hat{L}\horizon}  + K\hat{L}\horizon^2 e^{\hat{L}\horizon} + 2K\right)}, \horizon\right\}\frac{\normFun[2]{f-\tilde{f}}{L^{\infty}(\compactInit)}}{2\sqrt{\odedim}(2+\hat{L}\horizon)},
    \end{aligned}
\end{equation}
where $\hat{L}=\sqrt{L^2+1}$. Combining \eqref{eq:upper_lip_higher} and \eqref{eq:lower_lip_higher} concludes the proof.

\subsection{Proof of Lemma \ref{lm:dist_lower_sublinear_ODE_local}}\label{app:dist_lower_sublinear_ODE_local}
    First, we pick $x_0 \in \compactInit$ such that $$|f(x_0)-\tilde{f}(x_0)| = \normFun{f-\tilde{f}}{L^{\infty}(\compactInit)}$$
    thanks to the continuity of $f,\tilde{f}$ and compactness of $\compactInit$. Now, pick $\tilde{x}\in \behavior[\compactInit]{\diffeq[\tilde{f}]}$ such that $\tilde{x}(0)=x_0$. We then have 
    \begin{equation}
        \tilde{x}(t) = x_0+\int_{0}^{t} \tilde{f}(\tilde{x}(s))ds.
    \end{equation}
    Taking the absolute value and noting that $\tilde{f}\in \holder[\R,\R]{0,\alpha}{L,K}$, we have 
    \begin{equation}\label{eq:holder_int_bound_1}
    |\tilde{x}(t)-x_0| \leq \int_{0}^t |f(\tilde{x}(s))|ds \leq Kt+\int_{0}^t L|\tilde{x}(s)|^{\alpha}ds,
\end{equation}
    which implies that 
    \begin{equation*}
        |\tilde{x}(t)|\leq |x_0|+ Kt+\int_{0}^t L|\tilde{x}(s)|^{\alpha}ds.
    \end{equation*}
    By Lemma \ref{lm:gronwall_pach}, we have
\begin{equation}\label{eq:holder_int_bound_2}
    |\tilde{x}(t)|\leq|x_0|+ Kt+\int_{0}^t L|\tilde{x}(s)|^{\alpha} \leq \left((|x_0|+Kt)^{1-\alpha}+(1-\alpha)Lt\right)^{\frac{1}{1-\alpha}}.
\end{equation}
Using \eqref{eq:holder_int_bound_2} in \eqref{eq:holder_int_bound_1}, we obtain
\begin{equation}
    |\tilde{x}(t)-x_0| \leq \left((|x_0|+Kt)^{1-\alpha}+(1-\alpha)Lt\right)^{\frac{1}{1-\alpha}} - |x_0|.
\end{equation}
Now, set 
\begin{equation}
    \phi(t)= \left((|x_0|+Kt)^{1-\alpha}+(1-\alpha)Lt\right)^{\frac{1}{1-\alpha}} - |x_0|.
\end{equation}
Then $\phi(0)=0$ and for all $\xi\in (0,\horizon)$,
\begin{equation}
    \begin{aligned}
        \dot{\phi}(\xi) &= \frac{1}{1-\alpha}\left((|x_0|+K\xi)^{1-\alpha}+(1-\alpha)L\xi\right)^{\frac{\alpha}{1-\alpha}} \left(\frac{(1-\alpha)K}{(|x_0|+K\xi)^{\alpha}}+(1-\alpha)L\right)\\
        &=\frac{1}{1-\alpha}\left(1+\frac{(1-\alpha)L\xi}{(|x_0|+K\xi)^{1-\alpha}}\right)^{\frac{\alpha}{1-\alpha}} \left((1-\alpha)K+(1-\alpha)L(|x_0|+K\xi)^{\alpha}\right)\\
        &\leq \frac{1}{1-\alpha}\left(1+\frac{(1-\alpha)L\xi^\alpha}{K^{1-\alpha}}\right)^{\frac{\alpha}{1-\alpha}} \left((1-\alpha)K+(1-\alpha)L(|x_0|+K\xi)^{\alpha}\right)\\
        &\leq \frac{1}{1-\alpha}\left(1+\frac{(1-\alpha)L\horizon^\alpha}{K^{1-\alpha}}\right)^{\frac{\alpha}{1-\alpha}} \left((1-\alpha)K+(1-\alpha)L(D+K\horizon)^{\alpha}\right)\\
        &\eqdef c(\alpha, L,K,\horizon,D).
    \end{aligned}
\end{equation}
By Lagrange's mean value theorem, we therefore have for all $t\in (0,\horizon]$,
\begin{equation*}
    0\leq \phi(t) \overset{\exists\xi\in (0,t)}{=} \dot{\phi}(\xi)t\leq c(\alpha, L,K,\horizon,D)t,
\end{equation*}
which implies 
\begin{equation}
    |\tilde{x}(t)-x_0|\leq c(\alpha, L,K,\horizon,D)t.
\end{equation}
Now, set 
    \begin{equation}\label{eq:t_0_holder}
        t_0\defeq \min\left\{\frac{\normFun{f-\tilde{f}}{L^{\infty}(\compactInit)}^{\frac{1}{\alpha}}}{(4L)^{\frac{1}{\alpha}}c(\alpha, L,K,\horizon,D)} ,T\right\},
    \end{equation}
we obtain 
    \begin{equation}
        |\tilde{x}(t)-x_0|\leq \frac{\normFun{f-\tilde{f}}{L^{\infty}(\compactInit)}^{\frac{1}{\alpha}}}{(4L)^{\frac{1}{\alpha}}}, \quad \text{for }t\in [0,t_0].
    \end{equation}
    Define $h\defeq f-\tilde{f}$, then $h\in \holder{0,\alpha}{2L,2K}$. We have 
    \begin{equation}\label{eq:hxt_lower_ext_holder}
        |h(\tilde{x}(t))| \geq |h(x_0)| - 2L|\tilde{x}(t)-x_0|^{\alpha}\geq \frac{\normFun{f-\tilde{f}}{L^{\infty}(\compactInit)}}{2}, \quad \text{for }t\in [0,t_0].
    \end{equation}
    Now, for arbitrarily fixed $x\in \behavior[\compactInit]{\diffeq[f]}$, define $y = x-\tilde{x}$. Then, $y$ satisfy the ODE
    \begin{equation}
        \dot{y}(t) = f(y(t)+\tilde{x}(t)) - \tilde{f}(\tilde{x}(t)).
    \end{equation}
    Integrating from $0$ to $t$, we have 
    \begin{equation}
        y(t) = y(0) + \int_{0}^{t}\left(f(y(s)+\tilde{x}(s)) - f(\tilde{x}(s))\right)ds + \int_{0}^{t}\left(f(\tilde{x}(s)) - \tilde{f}(\tilde{x}(s))\right)ds.
    \end{equation}
    Taking the absolute value, applying triangle inequality, and noticing $f\in \mathcal{H}(0,\alpha,L,K)$ gives 
    \begin{equation}
    \begin{aligned}
        \normFun{y}{L^{\infty}([0,\horizon])}&\geq |y(t)|\geq \left|\int_{0}^{t}\left(f(\tilde{x}(s)) - \tilde{f}(\tilde{x}(s))\right)ds\right| - |y(0)| - L\int_{0}^{t}|y(s)|^{\alpha}ds\\
        &\geq \left|\int_{0}^{t}\left(f(\tilde{x}(s)) - \tilde{f}(\tilde{x}(s))\right)ds\right| - \normFun{y}{L^{\infty}([0,\horizon])} - L\horizon\normFun{y}{L^{\infty}([0,\horizon])}^{\alpha},
    \end{aligned}
    \end{equation}
    which in turn gives 
    \begin{equation}
        \normFun{y}{L^{\infty}([0,\horizon])} \geq  \min\left\{\frac{1}{4}\left|\int_{0}^{t}h(\tilde{x}(s))ds\right|,\left(\frac{1}{2L\horizon}\left|\int_{0}^{t}h(\tilde{x}(s))ds\right|\right)^{\frac{1}{\alpha}}\right\},~ \text{for all }t\in[0,\horizon].
    \end{equation}
    Since $h\circ \tilde{x}$ is continuous, by \eqref{eq:hxt_lower_ext_holder}, $h(\tilde{x}(t))$ must remain positive or negative for $t\in [0,t_0]$. In either case, we have 
\begin{equation*}
    \begin{aligned}
        \normFun{y}{L^{\infty}([0,\horizon])} &\geq  \min\left\{\frac{1}{4}\left|\int_{0}^{t_0}h(\tilde{x}(s))ds\right|,\left(\frac{1}{2L\horizon}\left|\int_{0}^{t_0}h(\tilde{x}(s))ds\right|\right)^{\frac{1}{\alpha}}\right\}\\
        \overset{\eqref{eq:hxt_lower_ext_holder}}&{\geq} \min\left\{\frac{t_0}{8}\normFun{f-\tilde{f}}{L^{\infty}(\compactInit)},\left(\frac{t_0}{4L\horizon}\normFun{f-\tilde{f}}{L^{\infty}(\compactInit)}\right)^{\frac{1}{\alpha}}\right\}\\
        \overset{\eqref{eq:t_0_holder}}&{\geq} C\min\left\{\normFun{f-\tilde{f}}{L^{\infty}(\compactInit)},\normFun{f-\tilde{f}}{L^{\infty}(\compactInit)}^{\frac{\alpha+1}{\alpha}},\normFun{f-\tilde{f}}{L^{\infty}(\compactInit)}^{\frac{1}{\alpha}},\normFun{f-\tilde{f}}{L^{\infty}(\compactInit)}^{\frac{\alpha+1}{\alpha^2}}\right\},
    \end{aligned}
\end{equation*}    
    where $C>0$ is a constant depending only on $\alpha, L,K,\horizon,D$. Since this holds for arbitrary $x\in \behavior[\compactInit]{\diffeq[f]}$, we obtain
\begin{equation*}
\begin{aligned}
    &\inf_{x\in \behavior[\compactInit]{\diffeq[f]}}\normFun{x-\tilde{x}}{L^{\infty}([0,\horizon])} \\
    &\geq C\min\left\{\normFun{f-\tilde{f}}{L^{\infty}(\compactInit)},\normFun{f-\tilde{f}}{L^{\infty}(\compactInit)}^{\frac{\alpha+1}{\alpha}},\normFun{f-\tilde{f}}{L^{\infty}(\compactInit)}^{\frac{1}{\alpha}},\normFun{f-\tilde{f}}{L^{\infty}(\compactInit)}^{\frac{\alpha+1}{\alpha^2}}\right\}.    
\end{aligned}
\end{equation*}
Taking the supremum over $\behavior[\compactInit]{\diffeq[\tilde{f}]}$ and by symmetry, we arrive at the desired result.

\subsection{Proof of Lemma \ref{lm:dist_lower_superHolderODE}}\label{app:dist_lower_superHolderODE}
Before proving Lemma \ref{lm:dist_lower_superHolderODE}, we need two auxiliary lemmata. The first lemma shows that under the condition for $\horizon$ in Lemma \ref{lm:dist_lower_superHolderODE}, solutions of each superlinear Hölder ODE in $\odeClass{\holder{k,\alpha}{L,K}}$ is indeed well-defined on $[0,\horizon]$.
\begin{lemma}\label{lm:holder_explode_time}
    For $D,L,K>0$, $k\in \Nplus$, $\alpha\in (0,1]$, let $$\horizon< ((K+L)e)^{-1}\phi_{D,k,\alpha}^{-1}\left(\frac{1}{k+\alpha-1}\right),$$ where $\phi_{D,k,\alpha}(r) = r(r+D)^{k+\alpha-1}$, for $r\in \R_+$. Consider a compact set $\compactInit\subset \ball{D} \defeq  \{x\in \R\mid |x|\leq D\}$. Then, for each $\diffeq[f]\in \odeClass{\holder{k,\alpha}{L,K}}$ with $f\in \holder{k,\alpha}{L,K}$, the solutions to $\diffeq[f]$ with respect to initial values in $\compactInit$ are well defined on $[0,\horizon]$. Moreover, for all $x\in \behavior[\compactInit]{\diffeq[f]}$, we have 
   \begin{equation}
       |x(t)| \leq (D+(K+L)e\horizon)\left(1-(k+\alpha-1)(K+L)e(D+(K+L)e\horizon)^{k+\alpha-1}\right)^{-\frac{1}{k+\alpha-1}},
   \end{equation}
   for all $t\in [0,\horizon]$.
\end{lemma}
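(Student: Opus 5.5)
The plan is a growth bound for $f$ followed by a comparison argument with a scalar Bernoulli-type ODE. \textbf{Step 1 (growth of $f$).} Set $\beta\defeq k+\alpha$. I want $|f(x)|\leq (K+L)e\,(1+|x|)^{\beta}$, or a similar bound of the form $c_1+c_2|x|^{\beta}$ with $c_1,c_2\leq (K+L)e$. Taylor expansion of $f$ around $0$ to order $k-1$, with the remainder written through $f^{(k)}$, combined with the Hölder condition $|f^{(k)}(x)-f^{(k)}(0)|\leq L|x|^\alpha$ and $\sum_{i\le k}|f^{(i)}(0)|\leq K$, gives
\begin{equation*}
|f(x)|\leq \sum_{i=0}^{k}\frac{|f^{(i)}(0)|}{i!}|x|^i+\frac{L}{k!}|x|^{k+\alpha}.
\end{equation*}
For $|x|\geq 1$ every term is at most $(K+L)|x|^{\beta}$. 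For $|x|\le1$ the whole expression is at most $K+L$. The factor $e$ in the statement presumably comes from the bound $\sum 1/i!\le e$, and I will carry it along in that role.

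\textbf{Step 2 (local existence and comparison).} Since $f\in C^k$ with $k\ge1$, $f$ is locally Lipschitz. Hence solutions exist and are unique locally, and they extend as long as they stay bounded. For a solution $x$ with $|x(0)|\le D$, I set $u(t)=|x(t)|$. Then
\begin{equation*}
u(t)\le D+(K+L)e\,t+(K+L)e\int_0^t u(s)^{\beta}\,ds
\end{equation*}
on the maximal interval, after splitting the growth bound as $|f(x)|\le (K+L)e(1+|x|^{\beta})$.

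\textbf{Step 3 (Bihari bound).} I apply Lemma \ref{lm:gronwall_pach} with $g(u)=u^{\beta}$, $k(t,\sigma)=(K+L)e$ and $a(t)=D+(K+L)e\,t$. Since $a$ is nondecreasing, I bound it by $a(\horizon)=D+(K+L)e\horizon$. Here $G(r)=-r^{1-\beta}/(\beta-1)$, and inversion gives
\begin{equation*}
u(t)\le a(\horizon)\Big(1-(\beta-1)(K+L)e\,t\,a(\horizon)^{\beta-1}\Big)^{-1/(\beta-1)}.
\end{equation*}
This holds as long as the bracket is positive. At $t=\horizon$ that is the bound claimed in the statement, up to whether the time factor $\horizon$ appears inside the bracket.

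When $\beta=1$, that is $k=1$ and $\alpha=0$, the case is excluded since $\alpha>0$, so $\beta>1$ always holds.

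\textbf{Step 4 (horizon condition), the main obstacle.} I must check that the hypothesis on $\horizon$ makes the bracket positive on all of $[0,\horizon]$. With $r\defeq (K+L)e\horizon$, positivity reads
\begin{equation*}
(\beta-1)\,r\,(r+D)^{\beta-1}<1, \quad\text{i.e.}\quad \phi_{D,k,\alpha}(r)<\tfrac1{\beta-1}.
\end{equation*}
Because $\phi_{D,k,\alpha}$ is strictly increasing on $\R_+$, this is equivalent to $r<\phi_{D,k,\alpha}^{-1}(1/(\beta-1))$, which is exactly the assumption. The displayed bound in the statement drops a factor $\horizon$ inside the bracket. I will match it by absorbing that factor through the normalization implicit in the definition of $\phi_{D,k,\alpha}$, or by reconciling the constants at this point.

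Once the bracket is positive, the a priori bound keeps $u$ finite, so the maximal solution cannot blow up before $\horizon$. This gives existence on $[0,\horizon]$, for every solution, together with the uniform bound.
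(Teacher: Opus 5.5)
Your argument is correct and is essentially the paper's proof. It uses the same Taylor-plus-Hölder growth bound $|f(x)|\le (K+L)e\,(1+|x|^{k+\alpha})$ and the same application of Lemma~\ref{lm:gronwall_pach} with $g(u)=u^{k+\alpha}$. It also uses the same monotonicity of $\phi_{D,k,\alpha}$ to keep the bracket positive on $[0,\horizon]$. Freezing $a(t)$ at $a(\horizon)$ and spelling out the local-existence and continuation argument are harmless refinements of what the paper does.

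Drop, however, the plan to ``absorb'' the missing factor $\horizon$. The paper's own proof ends with the bracket $1-(k+\alpha-1)\tilde{C}\horizon\,(D+\tilde{C}\horizon)^{k+\alpha-1}$, exactly as you derived, so the display in the statement is a typo. No normalization can repair it. For example, take $k=\alpha=1$, $D=1$, $K=L=1/2$, which is admissible for any $\horizon<(\sqrt{5}-1)/(2e)$. The literal bracket is then $1-e-e^{2}\horizon<0$, so the displayed right-hand side is negative and the stated inequality is false. The bound you actually proved, with $\horizon$ inside the bracket, is the correct one.
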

\begin{proof}
    For each $\diffeq[f]\in \odeClass{\holder{k,\alpha}{L,K}}$ and $x\in \behavior[\compactInit]{\diffeq[f]}$ with $x(0)=x_0 \in \compactInit$, we have
    \begin{equation*}
        x(t) = x_0 + \int_{0}^t f(x(s))ds,
    \end{equation*}
where $f \in \holder{k,\alpha}{L,K}$. Taking the absolute value, we obtain
\begin{equation*}
\begin{aligned}
    |x(t)| &\leq  |x_0| + \int_{0}^t |f(x(s))|ds\\
    \overset{\exists \xi(s)\in (0,x(s))}&{\leq} |x_0| + \int_{0}^t \left(\sum_{i=0}^{k-1}\frac{|f^{(i)}(0)|}{i!} |x(s)|^i + \frac{|f^{(k)}(\xi(s))|}{k!}|x(s)|^k\right)ds\\
    \overset{f \in \holder{k,\alpha}{L,K}}&{\leq} |x_0| + \int_{0}^t \left(\sum_{i=0}^{k-1}\frac{|f^{(i)}(0)|}{i!} |x(s)|^i + \frac{|f^{(k)}(0)|+L|\xi(s)|^{\alpha}}{k!}|x(s)|^k\right)ds\\
    &\leq |x_0| + \int_{0}^t \left(\sum_{i=0}^{k}\frac{|f^{(i)}(0)|}{i!} |x(s)|^i + \frac{L}{k!}|x(s)|^{k+\alpha}\right)ds\\
    &\leq |x_0| + \int_{0}^t (K+L)\left(\sum_{i=0}^{k}\frac{1}{i!} \right)(1+|x(s)|^{k+\alpha})ds\\
    &\overset{\tilde{C}\defeq (K+L)e}{\leq} (|x_0|+\tilde{C}t) + \tilde{C}\int_{0}^t|x(s)|^{k+\alpha}ds.
\end{aligned}
\end{equation*}
By Lemma \ref{lm:gronwall_pach}, we have 
\begin{equation*}
\begin{aligned}
    |x(t)| &\leq (|x_0|+\tilde{C}t)\left(1-(k+\alpha-1)\tilde{C}t(|x_0|+\tilde{C}t)^{k+\alpha-1}\right)^{-\frac{1}{k+\alpha-1}}\\
    &\leq (D+\tilde{C}\horizon)\left(1-(k+\alpha-1)\tilde{C}\horizon(D+\tilde{C}\horizon)^{k+\alpha-1}\right)^{-\frac{1}{k+\alpha-1}}<\infty,
\end{aligned}
\end{equation*}
for $0\leq t\leq \horizon < \tilde{C}^{-1}\phi_{D,k,\alpha}^{-1}\left(\frac{1}{k+\alpha-1}\right)$, where $\phi_{D,k,\alpha}(r) = r(r+D)^{k+\alpha-1}$.
\end{proof}
The next lemma shows that if the solutions to superlinear Hölder ODEs in $\odeClass{\holder{k,\alpha}{L,K}}$ are uniformly bounded, then the superlinear Hölder ODEs are nothing but Lipschitz ODEs.
\begin{lemma}\label{lm:holder2lip}
    Assume $D,L,K>0$, $k\in \Nplus$, $\alpha\in (0,1]$, and $B>0$. For all $f\in \mathcal{H}(k,\alpha,L,K)$, we have 
    \begin{equation}
        \operatorname{Lip}(f|_{[-B,B]}) \leq \max\{L,1\}(B+K)(B+1)^{k-1}.
    \end{equation}
\end{lemma}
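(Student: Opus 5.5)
The plan is to bound $\operatorname{Lip}(f|_{[-B,B]})$ by $\sup_{|x|\le B}|f'(x)|$, using the mean value theorem on the convex set $[-B,B]$; this is legitimate since $k\ge 1$ gives $f\in C^1$. We then control $f'$ by a Taylor expansion about $0$ of the right order, so that only the data in the definition of $\holder{k,\alpha}{L,K}$ enter: the values $f^{(i)}(0)$, with $\sum_{i=0}^k|f^{(i)}(0)|\le K$, and the Hölder seminorm of $f^{(k)}$, which is at most $L$.

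\textbf{Step 1 (Taylor with Lagrange remainder for $f'$).} Since $f'\in C^{k-1}$, for each $|x|\le B$ there is $\xi$ between $0$ and $x$ with
\begin{equation*}
  f'(x)=\sum_{i=1}^{k-1}\frac{f^{(i)}(0)}{(i-1)!}x^{i-1}+\frac{f^{(k)}(\xi)}{(k-1)!}x^{k-1}.
\end{equation*}
When $k=1$ this reads $f'(x)=f'(\xi)$ with $\xi=x$.

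\textbf{Step 2 (Hölder control of the top derivative).} Write $f^{(k)}(\xi)=f^{(k)}(0)+\bigl(f^{(k)}(\xi)-f^{(k)}(0)\bigr)$ and use $|f^{(k)}(\xi)-f^{(k)}(0)|\le L|\xi|^\alpha\le LB^\alpha$. Dropping the factorials (they are at least $1$) gives
\begin{equation*}
  |f'(x)|\le \sum_{i=1}^{k}|f^{(i)}(0)|\,B^{i-1}+L\,B^{\alpha+k-1}.
\end{equation*}

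\textbf{Step 3 (matching the stated form).} For the first sum, use $B^{i-1}\le (B+1)^{k-1}$ for $1\le i\le k$ together with $\sum_i|f^{(i)}(0)|\le K$. This bounds it by $K(B+1)^{k-1}$.

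For the Hölder term we aim for $B^{\alpha+k-1}\le B(B+1)^{k-1}$ and split into cases.
\begin{itemize}
  \item If $B\ge 1$, then $B^\alpha\le B$ and $B^{k-1}\le (B+1)^{k-1}$.
  \item If $B<1$ and $k\ge 2$, then $B^{\alpha+k-1}\le B^{k-1}\le B\le B(B+1)^{k-1}$.
\end{itemize}
Adding the two pieces gives
\begin{equation*}
  |f'(x)|\le K(B+1)^{k-1}+LB(B+1)^{k-1}\le \max\{L,1\}(B+K)(B+1)^{k-1},
\end{equation*}
which is the claim.

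\textbf{Main obstacle: $k=1$, $B<1$.} Here Step 2 only yields $|f'(0)|+LB^\alpha$, and $B^\alpha>B$. The stated bound is then genuinely not implied. For example, take $f'(x)=L\,\mathrm{sgn}(x)|x|^\alpha$ with $f(0)=0$, which is admissible for any $K>0$. Then $\operatorname{Lip}(f|_{[-B,B]})=LB^\alpha$, while $\max\{L,1\}(B+K)\to \max\{L,1\}B<LB^\alpha$ as $K\to 0$ when $L\ge 1$.

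So in the write-up I would do two things. First, prove the statement exactly as worded in all cases except this one. Second, for $k=1$, $B<1$, state that the argument requires $B\ge 1$, or equivalently the factor $B$ replaced by $\max\{B,1\}$. This costs nothing in the application to Lemma \ref{lm:dist_lower_superHolderODE}: there $B$ is the a priori bound from Lemma \ref{lm:holder_explode_time}, and monotonicity of $\operatorname{Lip}(f|_{[-B,B]})$ in $B$ allows us to enlarge $B$ to $\max\{B,1\}$ before invoking Lemma \ref{lm:dist_lower_LipODE_local}.
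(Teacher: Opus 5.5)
Your argument is correct in every case where the statement is true, and it is essentially the paper's. The paper bounds $\sup_{[-B,B]}|f^{(j)}|$ by descending mean-value steps from $j=k-1$ down to $j=1$, each step costing a factor $(B+1)$. Unrolled, this is your Taylor bound with the factorials discarded.

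Your diagnosis of $k=1$ is also right, and it exposes a hole in the paper's proof. For $k=1$ the descending chain is empty: its first step bounds $f^{(k-1)}=f$ itself rather than $f'=f^{(k)}$. So the final inequality for $|f^{(1)}|$ is never derived. The only available bound, $|f'|\le K+LB^\alpha$, does not give the claim when $\alpha<1$ and $B<1$. For $\alpha=1$ your Step 3 goes through unchanged because $B^\alpha=B$. The genuinely exceptional regime is therefore exactly $k=1$, $\alpha<1$, $B<1$.

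One correction to your counterexample: $f'(x)=L\,\mathrm{sgn}(x)|x|^\alpha$ is not admissible. The map $x\mapsto\mathrm{sgn}(x)|x|^\alpha$ has $\alpha$-Hölder seminorm $2^{1-\alpha}>1$; taking $y=-x$, the quotient is $2|x|^\alpha/(2|x|)^\alpha$. Use the even choice $f'(x)=L|x|^\alpha$ instead, i.e.\ $f(x)=\frac{L}{1+\alpha}\,\mathrm{sgn}(x)|x|^{1+\alpha}$. Then:
\begin{itemize}
\item $\bigl||x|^\alpha-|y|^\alpha\bigr|\le\bigl||x|-|y|\bigr|^\alpha\le|x-y|^\alpha$, so the seminorm is exactly $L$;
\item $f(0)=f'(0)=0$, so the constraint at $0$ holds for every $K>0$;
\item $\operatorname{Lip}(f|_{[-B,B]})=LB^\alpha$.
\end{itemize}
The rest of your comparison then stands. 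For instance, $L=1$, $\alpha=\tfrac12$, $B=\tfrac14$, $K=0.01$ gives $\tfrac12>0.26$.

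Your repair of replacing $B$ by $\max\{B,1\}$ is sound for the downstream use in Lemma \ref{lm:dist_lower_superHolderODE}, since any larger radius is still an a priori bound on the solutions. Equivalently, $B^\alpha\le B+1$ gives a bound valid in all cases: $\max\{L,1\}(B+1+K)(B+1)^{k-1}$.
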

\begin{proof}
    For all $|x|\leq B$, we have 
    \begin{equation*}
    \begin{aligned}
        |f^{(k-1)}(x)|&\leq |f^{(k-1)}(0)|+|f^{(k)}(\xi_x)| B\\
        &\leq K+ B(|f^{(k)}(\xi_x)-f^{(k)}(0)| + |f^{(k)}(0)|)\\
        &\leq K+BK + B^{1+\alpha}L\\
        &\leq K+BK + B(1+B)L\\
        &\leq \max\{L,1\}(B+K)(B+1).
    \end{aligned}
    \end{equation*}
    This further implies 
    \begin{equation*}
        \begin{aligned}
        |f^{(k-2)}(x)|&\leq |f^{(k-1)}(0)|+|f^{(k-1)}(\xi_x)| B\\
        &\leq K + \max\{L,1\}(B+K)(B+1) B\\
        &\leq \max\{L,1\}(B+K)(B+1)^2, \quad \text{for all }|x|\leq B.
        \end{aligned}
    \end{equation*}
    Vice versa, we finally obtain
    \begin{equation*}
        |f^{(1)}(x)|\leq \max\{L,1\}(B+K)(B+1)^{k-1}, \quad \text{for all }|x|\leq B,
    \end{equation*}
    which implies the desired result.
\end{proof}

\textit{Proof of Lemma \ref{lm:dist_lower_superHolderODE}:}     Lemma \ref{lm:holder_explode_time} implies that the absolute values of solutions to superlinear Hölder ODEs in $\odeClass{\holder{k,\alpha}{L,K}}$ are universally upper-bounded by 
    \begin{equation*}
        B\defeq (D+(K+L)eT)\left(1-(k+\alpha-1)(K+L)e(D+(K+L)eT)^{k+\alpha-1}\right)^{-\frac{1}{k+\alpha-1}}.
    \end{equation*}
    Then, Lemma \ref{lm:holder2lip} implies that $f$ acting on $x(t)$ is a Lipschitz function with 
    \begin{equation*}
        \operatorname{Lip}(f|_{[-B,B]}) \leq \max\{L,1\}(B+K)(B+1)^{k-1}.
    \end{equation*}
    Thus, we can apply Lemma \ref{lm:dist_lower_LipODE_local} with $\odedim=1$, $\tilde{L}=\max\{L,1\}(B+K)(B+1)^{k-1}$ to obtain the desired results.

\subsection{Proof Theorem \ref{th:ent_lin_ode}}\label{app:ent_lin_ode}
To prove Theorem \ref{th:ent_lin_ode}, we first need an auxiliary result on the metric entropy of bounded balls in the space of matrices. 
\begin{lemma}\label{lm:mat_ent}
    Define 
    \begin{equation}
        M^{\odedim,\odedim}_K\defeq \{A\in \R^{\odedim\times \odedim}\mid \norm{A}_2\leq K\}.
    \end{equation}
    Then, for all $\epsilon>0$, 
    \begin{equation}
        \log\covering (\epsilon;M^{\odedim,\odedim}_K,\norm{\cdot}_2)\asymp \odedim^2\log (\epsilon^{-1}).
    \end{equation}
\end{lemma}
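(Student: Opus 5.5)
The plan is to reduce the statement to the standard volumetric entropy estimate for balls in a finite-dimensional normed space. I would identify $\R^{\odedim\times\odedim}$ with $\R^{\odedim^2}$ via vectorization. This identification is an isometric isomorphism from $(\R^{\odedim\times\odedim},\norm{\cdot}_F)$ onto $(\R^{\odedim^2},\norm{\cdot}_2)$. By Lemma \ref{lm:mat_F_2} we have $\norm{A}_2\leq\norm{A}_F\leq\sqrt{\odedim}\norm{A}_2$, which yields the sandwich
\begin{equation*}
    \ball{K}^{\odedim^2}\subset \operatorname{vec}\left(M^{\odedim,\odedim}_K\right)\subset \ball{\sqrt{\odedim}K}^{\odedim^2}.
\end{equation*}
The first inclusion holds because $\norm{A}_F\leq K$ implies $\norm{A}_2\leq K$. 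The second holds because $\norm{A}_2\leq K$ implies $\norm{A}_F\leq\sqrt{\odedim}K$. Moreover, Lemma \ref{lm:mat_F_2} also shows that the metrics $\norm{\cdot}_2$ and $\norm{\cdot}_F$ on matrices are equivalent with constants $1$ and $\sqrt{\odedim}$. By Corollary \ref{col:covering_relation_equiv_norm_same_set}, this equivalence changes covering numbers only by a rescaling $\epsilon\mapsto\epsilon/\sqrt{\odedim}$. Such a rescaling is absorbed into $\log(\epsilon^{-1})$ asymptotically, since $\odedim$ is fixed.

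For the upper bound, I would compare covering numbers in the Frobenius metric. Since $M^{\odedim,\odedim}_K$ sits inside the Frobenius ball of radius $\sqrt{\odedim}K$, I would bound its covering number by the packing number of that ball, using Lemma \ref{lm:number_relation} and Corollary \ref{col:subset_packing_relation}. That packing number is at most the covering number of the ball at scale $\epsilon/2$. Lemma \ref{lm:ent_ball} (together with Lemma \ref{lm:ent_isomorphism} for the vectorization isometry) then gives an upper bound of order $\odedim^2\log(\epsilon^{-1})$. Converting back to $\norm{\cdot}_2$ costs only the factor $\sqrt{\odedim}$ in $\epsilon$.

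For the lower bound, I would argue symmetrically. Covering numbers are $\geq$ packing numbers at scale $2\epsilon$ by Lemma \ref{lm:number_relation}. Packing numbers of $M^{\odedim,\odedim}_K$ dominate those of the inner Frobenius ball $\ball{K}^{\odedim^2}$ by Corollary \ref{col:subset_packing_relation}. The inner ball's packing number is in turn $\geq$ its covering number, which is of order $\odedim^2\log(\epsilon^{-1})$ by Lemma \ref{lm:ent_ball}. Again the change between $\norm{\cdot}_F$ and $\norm{\cdot}_2$ is handled by Corollary \ref{col:covering_relation_equiv_norm_same_set}.

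The main obstacle is bookkeeping rather than substance. Covering numbers, unlike packing numbers, are not monotone under inclusion, because interior coverings must use centers inside the set. I would therefore route every inclusion step through packing numbers or exterior covering numbers, as in Lemma \ref{lm:number_relation}. I would also track carefully which constants depend on $\odedim$ and $K$; these only shift $\log(\epsilon^{-1})$ by additive constants and so do not affect the $\asymp$ relation as $\epsilon\to 0$.
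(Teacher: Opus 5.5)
Your proposal is correct and follows essentially the same route as the paper. Both arguments vectorize to get an isometry between $(M^{d,d}_K,\|\cdot\|_F)$ and a subset of $(\mathbb{R}^{d^2},\|\cdot\|_2)$, sandwich that subset between Euclidean balls of radii $K$ and $\sqrt{d}K$, pass through packing numbers (Lemma \ref{lm:number_relation}, Corollary \ref{col:subset_packing_relation}) to get monotonicity under inclusion, invoke Lemma \ref{lm:ent_ball}, and finally switch from $\|\cdot\|_F$ to $\|\cdot\|_2$ via Corollary \ref{col:covering_relation_equiv_norm_same_set}.
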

\begin{proof}
    We define a mapping $V:M^{\odedim,\odedim}_K\rightarrow \R^{\odedim^2}$ such that 
    \begin{equation}
        (V(A))_{(i-1)\odedim+j} = A_{i,j} \text{ for }i,j\in\{1,2,\dots,\odedim\}.
    \end{equation}
    Then, we have 
    \begin{equation}\label{eq:bounds_VA_A}
        \norm{A}_2 \overset{\text{Lemma} \ref{lm:mat_F_2}}{\leq} \norm{V(A)}_2 = \norm{A}_F \overset{\text{Lemma} \ref{lm:mat_F_2}}{\leq} \sqrt{\odedim} \norm{A}_2
    \end{equation}
    and thus $V$ is an isometric isomorphism between $\left(M^{\odedim,\odedim}_K,\norm{\cdot}_F\right)$ and $\left(V(M^{\odedim,\odedim}_K),\norm{\cdot}_2\right)$. Now, for arbitrary $C>0$, define 
    $$\ball{C}^{\odedim^2} \defeq \{x\in \R^{\odedim^2}\mid \norm{x}_2\leq  C\}.$$
    By \eqref{eq:bounds_VA_A}, we obtain 
    \begin{equation}\label{eq:subset_A_VA}
        \ball{K}^{\odedim^2} \subset V(M^{\odedim,\odedim}_K) \subset \ball{\sqrt{\odedim}K}^{\odedim^2},
    \end{equation}
    which implies 
    \begin{equation}\label{eq:ent_M_F}
        \begin{aligned}
            \odedim^2\log\left(\epsilon^{-1}\right) \overset{\text{Lemma \ref{lm:ent_ball}}}&{\asymp}\log\packing(2\epsilon;\ball{K}^{\odedim^2},\norm{\cdot}_2)\\
            \overset{\text{\eqref{eq:subset_A_VA}, Corollary \ref{col:subset_packing_relation}}}&{\leq}\log\packing(2\epsilon;V(M^{\odedim,\odedim}_K),\norm{\cdot}_2)\\
            \overset{\text{Lemma \ref{lm:number_relation}}}&{\leq} \log\covering(\epsilon;V(M^{\odedim,\odedim}_K),\norm{\cdot}_2)  \\
            \overset{\text{Lemma \ref{lm:ent_isomorphism}}}&{=} \log\covering(\epsilon;M^{\odedim,\odedim}_K,\norm{\cdot}_F)\\
            \overset{\text{Lemma \ref{lm:number_relation}}}&{\leq}  \log\packing(\epsilon;V(M^{\odedim,\odedim}_K),\norm{\cdot}_2)\\
            \overset{\text{\eqref{eq:subset_A_VA}, Corollary \ref{col:subset_packing_relation}}}&{\leq}\log\packing(\epsilon;\ball{\sqrt{\odedim}K}^{\odedim^2},\norm{\cdot}_2)\\
            \overset{\text{Lemma \ref{lm:ent_ball}}}&{\asymp}\odedim^2\log\left(\epsilon^{-1}\right).
        \end{aligned}
    \end{equation}
    Finally, combining by \eqref{eq:bounds_VA_A}, \eqref{eq:ent_M_F}, and applying Corollary \ref{col:covering_relation_equiv_norm_same_set}, we have 
    \begin{equation*}
        \log\covering(\epsilon;M^{\odedim,\odedim}_K,\norm{\cdot}_F)\asymp\odedim^2\log\left(\epsilon^{-1}\right) \Rightarrow \log\covering(\epsilon;M^{\odedim,\odedim}_K,\norm{\cdot}_2)\asymp\odedim^2\log\left(\epsilon^{-1}\right).
    \end{equation*}
\end{proof}

\textit{Proof of Theorem \ref{th:ent_lin_ode}:} Consider the mapping 
\begin{equation*}
    \begin{aligned}
        \mathcal{G}: M^{\odedim,\odedim}_K &\rightarrow \odeClass{\lin[\R^{\odedim},\R^\odedim ]{K}}\\
         A &\rightarrow \diffeq[A]
    \end{aligned}.
\end{equation*}
By Theorem \ref{th:dist_lu_linearODE_const}, there exist constants $0<c\leq C$ depending only on $\horizon,K,D$, such that for every $A_1,A_2\in  M^{\odedim,\odedim}_K$,we have 
\begin{equation*}
    c\norm{A_1-A_2}_2\leq \hd[\ball{D}][\normFun{\cdot}{L^{\infty}}]{\mathcal{G}(A_1)}{\mathcal{G}(A_2)}\leq C \norm{A_1-A_2}_2.
\end{equation*}
Then, applying Lemma \ref{lm:mat_ent} and \ref{lm:covering_relation_equiv_norm} gives the desired result.

\subsection{Proof of Theorem \ref{th:ent_lip_ode}}\label{app:ent_lip_ode}
To prove Theorem \ref{th:ent_lip_ode}, we need a few auxiliary results on the metric entropy of the class of Lipschitz functions. First, we introduce the following class of Lipschitz functions
\begin{equation}
    \mathcal{F}_{L,C}[a,b]\defeq \{f:[a,b]\rightarrow \R\mid \operatorname{Lip}(f)\leq L;|f(x)|\leq C, \forall x\in [a,b]\}.
\end{equation}
\begin{lemma}[{\cite[Equation (11), Chapter 7]{tikhomirovEEntropyECapacitySets1993}}]\label{lm:ent_lip_LC}
Arbitrarily fix $L,C>0$, $b>a$. We have 
\begin{equation}
    \log^{(2)}\covering(\epsilon;\mathcal{F}_{L,C}[a,b],\normFun{\cdot}{L^{\infty}([a,b])})\asymp \log\left(\epsilon^{-1}\right).
\end{equation}
\end{lemma}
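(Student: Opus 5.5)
The plan is to show the stronger two-sided statement $\log\covering(\epsilon;\mathcal{F}_{L,C}[a,b],\normFun{\cdot}{L^{\infty}([a,b])})\asymp \epsilon^{-1}$ for all sufficiently small $\epsilon$. Taking one more logarithm then gives the claimed $\log^{(2)}\covering\asymp\log(\epsilon^{-1})$, because $\log(c\epsilon^{-1})=\log(\epsilon^{-1})+\log c$ and $\log c$ is negligible as $\epsilon\to 0$. The upper and lower bounds are obtained separately, and they are linked through Lemma \ref{lm:number_relation}: covering is at most packing, and $2\epsilon$-packing is at most covering.

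\textbf{Upper bound.} Set $h\defeq \epsilon/L$ and $n\defeq\lceil (b-a)/h\rceil$, with grid points $x_i=a+ih$ (the last one clipped to $b$).
\begin{itemize}
\item For each $f\in\mathcal{F}_{L,C}[a,b]$, choose quantized values $y_i\in\epsilon\mathbb{Z}$ with $|y_i-f(x_i)|\leq\epsilon/2$.
\item Since $|f(x_{i+1})-f(x_i)|\leq Lh=\epsilon$, the increment $y_{i+1}-y_i$ lies in $\{-2\epsilon,-\epsilon,0,\epsilon,2\epsilon\}$, so there are only five choices per step.
\item The starting value $y_0$ has at most $2C/\epsilon+2$ possibilities.
\item Let $g$ be the piecewise linear interpolant of the $(x_i,y_i)$. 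On each cell, $|f-g|\leq \epsilon/2+Lh+2\epsilon\leq 4\epsilon$, since $f$ varies by at most $Lh$ and $g$ by at most $2\epsilon$ from the endpoint values.
\end{itemize}
This yields a $4\epsilon$-net of cardinality at most $(2C/\epsilon+2)\,5^{n}$. Hence $\log\covering^{\text{ext}}(4\epsilon)\lesssim \epsilon^{-1}+\log(\epsilon^{-1})\lesssim\epsilon^{-1}$. A net point need not lie in the class, so I will pass from the exterior covering number to the interior one via the standard relation $\covering(2\epsilon)\leq\covering^{\text{ext}}(\epsilon)$. Rescaling $\epsilon$ then gives $\log\covering(\epsilon)\lesssim\epsilon^{-1}$.

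\textbf{Lower bound.} I will build a packing from signed bumps.
\begin{itemize}
\item Take $m\defeq\lfloor L(b-a)/(4\epsilon)\rfloor$ disjoint subintervals of $[a,b]$, each of length $4\epsilon/L$.
\item On the $j$-th subinterval place a hat $\psi_j$ of slope $\pm L$ and height $2\epsilon$, assuming $\epsilon\leq C/2$ so that the height is at most $C$.
\item For each sign vector $s\in\{-1,1\}^m$, set $f_s=\sum_j s_j\psi_j$. Each $f_s$ is $L$-Lipschitz and bounded by $C$, so $f_s\in\mathcal{F}_{L,C}[a,b]$.
\item If $s\neq s'$, they differ at some index $j$, so $\normFun{f_s-f_{s'}}{L^\infty}\geq 4\epsilon>2\epsilon$.
\end{itemize}
This is a $2\epsilon$-packing of size $2^m$. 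By Lemma \ref{lm:number_relation}, $\log\covering(\epsilon)\geq\log\packing(2\epsilon)\geq m\gtrsim\epsilon^{-1}$.

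Combining the two bounds finishes the proof. I expect the main obstacle to be the bookkeeping in the upper bound: the quantization must keep the number of increment choices bounded independently of $\epsilon$, and one must remember that the net points are not themselves in the class, so the net has to be converted to an interior covering. This conversion only changes constants and is therefore harmless at the $\log^{(2)}$ scale.
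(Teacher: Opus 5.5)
Your proof is correct. The paper gives no argument of its own for this lemma: it imports the result from Kolmogorov--Tikhomirov, who establish the first-order asymptotics of $\log\covering(\epsilon)$ itself (of order $L(b-a)/\epsilon$). That is considerably more than the $\log^{(2)}$-level statement used later in Lemma \ref{lm:ent_lip_FLKD} and Theorem \ref{th:ent_lip_ode}.

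Your route is a self-contained, deliberately coarse version of the same classical scheme:
\begin{itemize}
\item Quantizing the grid values so that only five increments per cell of length $\epsilon/L$ can occur gives $\log\covering(\epsilon)\lesssim\epsilon^{-1}$.
\item The $2^m$ sign patterns of disjoint hats give $\log\packing(2\epsilon)\gtrsim\epsilon^{-1}$.
\end{itemize}
Both steps check out. The cell error is $\epsilon/2+Lh+2\epsilon\leq 4\epsilon$. The signed hat sums have slopes in $\{-L,0,L\}$ and height $2\epsilon\leq C$, so they lie in the class. Differing signs give separation $4\epsilon>2\epsilon$.

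What the citation buys is the sharp constant. What your argument buys is independence from the external reference. The constants you lose (five increments per cell rather than an optimal encoding, and the final rescaling of $\epsilon$) are erased by the outer logarithm, so nothing needed downstream is lost.

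One small point: the relation $\covering(2\epsilon)\leq\covering^{\text{ext}}(\epsilon)$ that you call standard does not need to be imported. It already follows from Lemma \ref{lm:number_relation} via $\covering(2\epsilon)\leq\packing(2\epsilon)\leq\covering^{\text{ext}}(\epsilon)$.
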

Based on Lemma \ref{lm:ent_lip_LC}, we obtain the following results for the metric entropy of $\mathcal{F}_{L,K,D}$.
\begin{lemma}\label{lm:ent_lip_FLKD}
    Arbitrarily fix $L,K,D>0$. We have 
    \begin{equation}
        \log^{(2)}\covering(\epsilon;\mathcal{F}_{L,K,D},\normFun{\cdot}{L^{\infty}(\R)})\asymp \log\left(\epsilon^{-1}\right).
    \end{equation}
\end{lemma}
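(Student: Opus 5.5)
We prove Lemma \ref{lm:ent_lip_FLKD} by sandwiching $\mathcal{F}_{L,K,D}$ between two classes to which Lemma \ref{lm:ent_lip_LC} applies, and comparing covering numbers in the sup norm.

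\emph{Upper bound.} Every $f\in\mathcal{F}_{L,K,D}$ vanishes outside $\ball{D}=[-D,D]$, so $f(\pm D)=0$ by continuity. The Lipschitz condition then gives $|f(x)|\leq L(D-|x|)\leq LD$ on $[-D,D]$. Hence the restriction map $f\mapsto f|_{[-D,D]}$ sends $\mathcal{F}_{L,K,D}$ injectively into $\mathcal{F}_{L,LD}[-D,D]$. It is an isometry from $\normFun{\cdot}{L^\infty(\R)}$ to $\normFun{\cdot}{L^\infty([-D,D])}$, because the differences vanish outside $[-D,D]$. Covering and packing numbers are monotone under isometric embedding (Lemma \ref{lm:isometry_packing_relation} together with Lemma \ref{lm:number_relation}). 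This gives
\[
\covering(\epsilon;\mathcal{F}_{L,K,D})\leq \packing(\epsilon;\mathcal{F}_{L,LD}[-D,D])\leq \covering(\epsilon/2;\mathcal{F}_{L,LD}[-D,D]).
\]
Applying Lemma \ref{lm:ent_lip_LC} yields $\log^{(2)}\covering\lesssim\log(2\epsilon^{-1})\asymp\log(\epsilon^{-1})$.

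\emph{Lower bound.} We embed a full class of Lipschitz functions into $\mathcal{F}_{L,K,D}$. Take a small interval $I=[a,b]\subset(-D,D)$, say $I=[D/4,D/2]$, chosen so that $0\notin I$ and $I$ stays away from $\pm D$. Fix $C>0$ small. For $g\in\mathcal{F}_{L/2,C}[a,b]$, define $\Phi(g)=g\cdot\chi$ extended by $0$, where $\chi$ is a piecewise-linear cutoff. Simpler is a McShane-type extension: extend $g$ from $I$ by $\tilde g(x)=\operatorname{sign}\cdot\max\{|g(\text{nearest point of }I)|-L|x-\text{proj}_I(x)|,0\}$, i.e. decay linearly at slope $L$ away from $I$. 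With $C\leq L\cdot\min\{a,D-b\}$, the extension vanishes at $0$ and outside $(-D,D)$. It is $L$-Lipschitz, so the image lies in $\mathcal{F}_{L,K,D}$, and $f(0)=0\leq K$. The restriction back to $I$ recovers $g$, so $\normFun{\Phi(g)-\Phi(h)}{L^\infty(\R)}\geq\normFun{g-h}{L^\infty(I)}$. Distinct points of an $\epsilon$-packing of $\mathcal{F}_{L/2,C}[a,b]$ therefore map to an $\epsilon$-packing of $\mathcal{F}_{L,K,D}$. By Lemma \ref{lm:number_relation} and Lemma \ref{lm:ent_lip_LC},
\[
\log^{(2)}\covering(\epsilon;\mathcal{F}_{L,K,D})\geq\log^{(2)}\packing(2\epsilon;\mathcal{F}_{L,K,D})\geq\log^{(2)}\packing(2\epsilon;\mathcal{F}_{L/2,C}[a,b])\gtrsim\log(\epsilon^{-1}).
\]

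\emph{Main obstacle.} The main care point is the extension step. A signed function $g$ has to be extended so that it stays $L$-Lipschitz, respects both the support and $f(0)$ constraints, and still dominates distances. Note that $\mathcal{F}_{L,K,D}$ is not itself of the form $\mathcal{F}_{L,C}[a,b]$, so this construction is genuinely needed.

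The cleanest version avoids the sign issue. Use $\Phi(g)=\big(\psi\cdot(g+C)\big)$ with $\psi$ a trapezoidal bump equal to $1$ on $I$ and supported in $(0,D)$. Take $C$ and the slope of $\psi$ small enough that $\operatorname{Lip}(\psi(g+C))\leq \|\psi\|_\infty\operatorname{Lip}(g)+2C\operatorname{Lip}(\psi)\leq L$. Distances are preserved on $I$, which is all the lower bound needs.

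Compactness of the classes, needed for the lemmas in the appendix, follows from Arzelà–Ascoli.
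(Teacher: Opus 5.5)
Your proof is correct and follows essentially the same route as the paper. The upper bound uses the isometric restriction to $[-D,D]$, the packing/covering comparisons and Lemma~\ref{lm:ent_lip_LC}. The lower bound embeds a standard Lipschitz class on a subinterval into $\mathcal{F}_{L,K,D}$ by an extension map that preserves packings.

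The differences are cosmetic. Your amplitude bound $LD$ is sharper than the paper's $K+LD$. You also choose the interval $[D/4,D/2]$ away from the origin, which forces $f(0)=0$; the paper instead uses $[0,D/2]$ with amplitude cap $\min\{LD/2,K\}$ and linear interpolation to zero at $\pm D$.

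Your soft-threshold (McShane-type) extension already handles signs correctly, so the $\psi\cdot(g+C)$ variant is unnecessary. As written, that variant would also first require extending $g$ beyond $I$, e.g.\ by constants.
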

\begin{proof}
    Consider the linear mapping 
    \begin{equation*}
    \begin{aligned}
        E: \mathcal{F}_{L,\min\left\{\frac{LD}{2},K\right\}}\left[0,\frac{D}{2}\right]&\rightarrow \mathcal{F}_{L,K,D}\\
        f &\rightarrow E(f),
    \end{aligned}   
    \end{equation*}
    where 
    \begin{equation*}
        E(f)(x) = \left\{ \begin{array}{cc}
            \frac{f(0)}{D}(x+D) & x\in [-D,0]\\
             f(x)& x\in \left[0,\frac{D}{2}\right]  \\
            -\frac{2f(\frac{D}{2})}{D}(x-D) & x\in \left[\frac{D}{2},D\right]\\
            0 & \text{else}
        \end{array}.\right.
    \end{equation*}
    We have 
    \begin{equation*}
        \begin{aligned}
            &|E(f)(0)| = |f(0)|\leq \min\left\{\frac{LD}{2},K\right\}\leq K,\\
            &\operatorname{Lip(E(f))}\leq \max\left\{\operatorname{Lip}(f), \left|\frac{f(0)}{D}\right|,\left|\frac{2f(\frac{D}{2})}{D}\right| \right\}\leq \max\left\{L, \frac{\min\left\{2\frac{LD}{2},K\right\}}{D} \right\}\leq L,\\
            &\operatorname{supp}(E(f))\subset[-D,D],\\
            & \normFun{E(f)}{L^{\infty}(\R)} = \normFun{f}{L^{\infty}\left(\left[0,\frac{D}{2}\right]\right)},
        \end{aligned}
    \end{equation*}
    and thus $E$ is a well-defined isometry from $\left(\mathcal{F}_{L,\min\left\{\frac{LD}{2},K\right\}}\left[0,\frac{D}{2}\right],\normFun{\cdot}{L^{\infty}\left(\left[0,\frac{D}{2}\right]\right)}\right)$ to $(\mathcal{F}_{L,K,D},\normFun{\cdot}{L^{\infty}(\R)})$. Thus, we obtain
    \begin{equation}\label{eq:lower_ent_FDL}
    \begin{aligned}
    \log^{(2)}\covering\left(\epsilon;\mathcal{F}_{L,K,D},\normFun{\cdot}{L^{\infty}(\R)}\right) \overset{\text{Lemma \ref{lm:number_relation}}}&{\geq}  \log^{(2)}\packing\left(2\epsilon;\mathcal{F}_{L,K,D},\normFun{\cdot}{L^{\infty}(\R)}\right) \\
    \overset{\text{Lemma \ref{lm:isometry_packing_relation}}}&{\geq}  \log^{(2)}\packing\left(2\epsilon;\mathcal{F}_{L,\min\left\{\frac{LD}{2},K\right\}}\left[0,\frac{D}{2}\right],\normFun{\cdot}{L^{\infty}\left(\left[0,\frac{D}{2}\right]\right)}\right)\\
    \overset{\text{Lemma \ref{lm:number_relation}}}&{\geq}\log^{(2)}\covering\left(2\epsilon;\mathcal{F}_{L,\min\left\{\frac{LD}{2},K\right\}}\left[0,\frac{D}{2}\right],\normFun{\cdot}{L^{\infty}\left(\left[0,\frac{D}{2}\right]\right)}\right)\\
    \overset{\text{Lemma \ref{lm:ent_lip_LC}}}&{\asymp} \log\left(\epsilon^{-1}\right).
    \end{aligned}
    \end{equation}
    On the other hand, consider the mapping 
    \begin{equation*}
    \begin{aligned}
        I: \mathcal{F}_{L,K,D}&\rightarrow \mathcal{F}_{L,K+LD}[-D,D]\\
        f   &\rightarrow I(f),
    \end{aligned}
    \end{equation*}
    where 
    \begin{equation*}
        I(f)= f(x) \quad \text{ for }x\in [-D,D].
    \end{equation*}
    Then $I$ is a well-defined isometry from $(\mathcal{F}_{L,K,D},\normFun{\cdot}{L^{\infty}(\R)})$ to $\left(\mathcal{F}_{L,K+LD}[-D,D], \normFun{\cdot}{L^{\infty}([-D,D])}\right)$. Thus, 
    \begin{equation}\label{eq:upper_ent_FDL}
    \begin{aligned}
    \log^{(2)}\covering\left(\epsilon;\mathcal{F}_{L,K,D},\normFun{\cdot}{L^{\infty}(\R)}\right) \overset{\text{Lemma \ref{lm:number_relation}}}&{\leq}  \log^{(2)}\packing\left(\epsilon;\mathcal{F}_{L,K,D},\normFun{\cdot}{L^{\infty}(\R)}\right) \\
    \overset{\text{Lemma \ref{lm:isometry_packing_relation}}}&{\leq} \log^{(2)}\packing\left(\epsilon;\mathcal{F}_{L,K+LD}[-D,D], \normFun{\cdot}{L^{\infty}([-D,D])}\right)\\
    \overset{\text{Lemma \ref{lm:number_relation}}}&{\leq}  \log^{(2)}\covering\left(\frac{\epsilon}{2};\mathcal{F}_{L,K+LD}[-D,D], \normFun{\cdot}{L^{\infty}([-D,D])}\right)\\
    \overset{\text{Lemma \ref{lm:ent_lip_LC}}}&{\asymp} \log\left(\epsilon^{-1}\right).
    \end{aligned}
    \end{equation}
    Combining \eqref{eq:lower_ent_FDL} and \eqref{eq:upper_ent_FDL} concludes the proof.
\end{proof}

\textit{Proof of Theorem \ref{th:ent_lip_ode}:} 
Appling Theorem \ref{th:dist_lu_LipODE} for $\odedim=1$ and setting $\compactInit=\ball{D}$, we have that for every two Lipschitz
ODEs $\diffeq[f],\diffeq[\tilde{f}]\in \odeClass{\mathcal{F}_{L,K,D}}$, there exists $C_1,C_2,C_3>0$ depending only on $\horizon,L,D,K$, such that  
\begin{equation}\label{eq:l_use_ent_LipODE}
\begin{aligned}
    \hd[\ball{D}][\normFun{\cdot}{L^{\infty}}]{\diffeq[f]}{\diffeq[\tilde{f}]}
    &\geq \min\left\{C_1\normFun{f-\tilde{f}}{L^{\infty}(\ball{D})}^2, C_2\normFun{f-\tilde{f}}{L^{\infty}(\ball{D})}\right\}\\
    \overset{\operatorname{supp}(f),\operatorname{supp}(\tilde{f})\subset \ball{D}}&{=}\min\left\{C_1\normFun{f-\tilde{f}}{L^{\infty}(\R)}^2, C_2\normFun{f-\tilde{f}}{L^{\infty}(\R)}\right\}
\end{aligned}
\end{equation}
and
    \begin{equation}\label{eq:u_use_ent_LipODE}
    \begin{aligned}
       \hd[\ball{D}][\normFun{\cdot}{L^{\infty}}]{\diffeq[f]}{\diffeq[\tilde{f}]} \overset{\widehat{\compactInit} \defeq \{x\in \R^{\odedim}\mid \norm{x}_2\leq (K\horizon+D) e^{L\horizon}\}}&{\leq} C_3\normFun{f-\tilde{f}}{L^{\infty}(\widehat{\compactInit})}\\
    \overset{\operatorname{supp}(f),\operatorname{supp}(\tilde{f})\subset \ball{D}\subset \widehat{\compactInit}}&{=}C_3\normFun{f-\tilde{f}}{L^{\infty}(\R)}.
    \end{aligned}
    \end{equation}
\eqref{eq:l_use_ent_LipODE} implies that 
\begin{equation}\label{eq:l_use_ent_LipODE_2}
    \normFun{f-\tilde{f}}{L^{\infty}(\R)} \leq C_4 \max\left\{\hd[\ball{D}][\normFun{\cdot}{L^{\infty}}]{\diffeq[f]}{\diffeq[\tilde{f}]}, \sqrt{\hd[\ball{D}][\normFun{\cdot}{L^{\infty}}]{\diffeq[f]}{\diffeq[\tilde{f}]}}\right\},
\end{equation}
for a constant $C_4>0$. Now, consider an $\epsilon$-covering $\{\diffeq[f_1],\diffeq[f_2],\dots\diffeq[f_{N}]\}$ of $(\odeClass{\mathcal{F}_{L,K,D}},\allowbreak \hd[\ball{D}][\normFun{\cdot}{L^{\infty}}]{\cdot}{\cdot})$ with 
$N=\covering(\epsilon;\odeClass{\mathcal{F}_{L,K,D}},\hd[\ball{D}][\normFun{\cdot}{L^{\infty}}]{\cdot}{\cdot})$ and corresponding $\{f_1,f_2,\dots,f_{N}\}\subset \mathcal{F}_{L,K,D}$. According to \eqref{eq:l_use_ent_LipODE_2}, this implies that for every $f\in \mathcal{F}_{L,K,D}$, we can find $i\in \{1,2,\dots,N\}$, such that 
\begin{equation*}
    \normFun{f-f_i}{L^{\infty}(\R)} \leq C_4\max \{\epsilon,\epsilon^{\frac{1}{2}}\},
\end{equation*}
which in turn gives that $\{f_1,f_2,\dots,f_N\}$ is a $(C_4\max \{\epsilon,\epsilon^{\frac{1}{2}}\})$-covering of $(\mathcal{F}_{L,K,D},\normFun{\cdot}{L^{\infty}(\R)})$ and thus 
\begin{equation}\label{eq:l_ent_LipODE_est}
    \begin{aligned}
        \log^{(2)}\covering(\epsilon;\odeClass{\mathcal{F}_{L,K,D}},\hd[\ball{D}][\normFun{\cdot}{L^{\infty}}]{\cdot}{\cdot}) &\geq \log^{(2)}\covering(C_4\max \{\epsilon,\epsilon^{\frac{1}{2}}\};\mathcal{F}_{L,K,D} \normFun{\cdot}{L^{\infty}(\R)})\\
        \overset{\text{Lemma \ref{lm:ent_lip_FLKD}}}&{\gtrsim}\frac{1}{2}\log\left(\epsilon^{-1}\right).
    \end{aligned}
\end{equation}
On the other hand, consider an $\epsilon/C_3$-covering $\{f_1,f_2\dots,f_N\}$ of $(\mathcal{F}_{L,K,D},\normFun{\cdot}{L^{\infty}(\R)})$ with $N=\covering(\epsilon/C_3,\mathcal{F}_{L,K,D},\normFun{\cdot}{L^{\infty}(\R)})$. By \eqref{eq:u_use_ent_LipODE}, the corresponding $\{\diffeq[f_1],\diffeq[f_2],\dots\diffeq[f_{N}]\}$ is an $\epsilon$-covering of $(\odeClass{\mathcal{F}_{L,K,D}},\allowbreak \hd[\ball{D}][\normFun{\cdot}{L^{\infty}}]{\cdot}{\cdot})$, which gives
\begin{equation}\label{eq:u_ent_LipODE_est}
    \begin{aligned}
        \log^{(2)}\covering(\epsilon;\odeClass{\mathcal{F}_{L,K,D}},\hd[\ball{D}][\normFun{\cdot}{L^{\infty}}]{\cdot}{\cdot}) &\leq \log^{(2)}\covering(\epsilon/C_3;\mathcal{F}_{L,K,D} \normFun{\cdot}{L^{\infty}(\R)})\\
        \overset{\text{Lemma \ref{lm:ent_lip_FLKD}}}&{\lesssim}\log\left(\epsilon^{-1}\right).
    \end{aligned}
\end{equation}
Combining \eqref{eq:l_ent_LipODE_est} and \eqref{eq:u_ent_LipODE_est} concludes the proof.

\end{appendices}

\clearpage
\printbibliography[
heading=bibintoc,
title={References}
]
\end{document}